\DeclareMathAlphabet\mathbfcal{OMS}{cmsy}{b}{n}
\newcommand*{\rom}[1]{\expandafter\@slowromancap\romannumeral #1@}
\newcommand\fro[1]{\| #1 \|_{\rm{F}}}
\newcommand{\inp}[2]{\langle #1,#2\rangle}
\newcommand{\argmax}{\mathop{\rm arg\max}}
\newcommand{\PP}{\mathbb{P}}
\def\calA{{\mathcal A}}
\def\calE{{\mathcal E}}
\def\calF{{\mathcal F}}
\def\calP{{\mathcal P}}
\def\calX{{\mathcal X}}
\def\EE{{\mathbb E}}
\def\OO{{\mathbb O}}
\def\PP{{\mathbb P}}
\def\RR{{\mathbb R}}
\def\indicator{\mathbf{1}}
\def\hat{\widehat}
\def\eps{\varepsilon}
\def\tilde{\widetilde}
\def\uoff{\textsf{u-off}}
\def\uipw{\textsf{IPW}}
\newtheorem{Theorem}{Theorem}
\newtheorem{Assumption}{Assumption}
\newtheorem{Lemma}{Lemma}
\theoremstyle{plain}
\newtheorem{Corollary}{Corollary}
\title{Online Policy Learning and Inference by Matrix Completion}
 \author{Congyuan Duan$^1$, Jingyang Li$^2$ and Dong Xia$^1$\\
  $^1${\small Department of Mathematics, Hong Kong University of Science and Technology, Hong Kong SAR}\\
  $^2${\small Department of Statistics, University of Michigan, Ann Arbor, USA} }
\date{(\today)}
\begin{document}

\maketitle


\begin{abstract}
Is it possible to make online decisions when personalized covariates are unavailable? We take a collaborative-filtering approach for decision-making based on collective preferences. By assuming low-dimensional  \emph{latent} features, we formulate the \emph{covariate-free} decision-making problem as a matrix completion bandit. 
We propose a policy learning procedure that combines an $\varepsilon$-greedy policy for decision-making with an online gradient descent algorithm for bandit parameter estimation. Our novel two-phase design balances policy learning accuracy and regret performance. For policy inference, we develop an online debiasing method based on inverse propensity weighting and establish its asymptotic normality. Our methods are applied to data from the San Francisco parking pricing project, revealing intriguing discoveries and outperforming the benchmark policy.
\end{abstract}

\section{Introduction}\label{sec:intro}

Over recent decades, the demand for personalized policy design has surged across various fields, including clinical trials and healthcare \citep{murphy2003optimal, kim2011battle, bertsimas2017personalized}, online advertising and marketing \citep{bottou2013counterfactual, he2012timing, bertsimas2020predictive}, revenue management \citep{chen2022statistical}, and online news recommendation \citep{li2010contextual, ban2019big}. Personalized policies leverage individual characteristics to incorporate covariate heterogeneity into decision-making, such as tailoring medicine dosages based on patients' lab results \citep{murphy2003optimal} or recommending news according to users' profiles \citep{li2010contextual}. However, policy learning is challenging due to its online nature, where data arrives sequentially and are collected adaptively. Online learning algorithms aim to make informed decisions that maximize rewards while continuously refining decision rules. 

The contextual bandit (CB) model \citep{langford2007epoch} offers a convenient framework for online personalized decision-making. A decision maker selects from a finite set of actions (\emph{arms}) $\calA$. At each time step $t$ within a horizon $T$, a request with context $x_t$ arrives, prompting the choice of action $a_t \in \calA$, which yields a reward $r_t$ with expectation $\EE[r_t] = f_{a_t}(x_t)$. If the reward functions $f_a(x)$ are known for all $a \in \calA$, the \emph{optimal policy} is to select $a_t^{\ast}:= \arg\max_{a \in \calA} f_a(x_t)$. However, when these functions are unknown, they must be learned from data. 
 The decision maker aims to maximize cumulative rewards \citep{bastani2020online,li2010contextual,zhang2021statistical}, learn the optimal policy \citep{agarwal2014taming,perchet2013multi,bastani2021mostly,gur2022smoothness,may2012optimistic}, and make inferences about the optimal policy \citep{hadad2021confidence,chen2022online,chen2021statistical,zhang2021statistical}. Once an action $a_t$ is chosen, the rewards for other actions remain unobserved. To achieve these goals, the decision maker must balance \emph{exploration} -  gathering data for each arm to learn the reward functions - and \emph{exploitation} - using current knowledge to maximize rewards. This balance is known as the exploration-exploitation dilemma \citep{lattimore2020bandit}.

However, in some scenarios, personalized covariates are unavailable, rendering individual-specific information inaccessible. Consequently, traditional contextual bandit methods that depend on these covariates become inapplicable. To address these \emph{covariate-free} challenges, we explore personalized decision-making by identifying \emph{latent} similarities among users or entities involved in the decision-making process. This approach is analogous to collaborative filtering \citep{su2009survey}, which generates personalized recommendations by leveraging the preferences and behaviors of a large population. It does not require detailed information about the items or users; instead, it relies on collective preferences to suggest items that individuals may find appealing. To illustrate this concept, we provide two motivating examples from transportation science and revenue management, offering practical insights into our study.

\emph{Parking pricing (\emph{SFpark}, \cite{sfpark})}. The San Francisco government plans to implement dynamic pricing for parking lots across various blocks and time periods (e.g., hourly, weekly). For simplicity, assume two pricing options: high and low. High prices aim to deter parking in overcrowded blocks, while low prices seek to attract drivers to underutilized areas. The objective is to maintain moderate occupancy rates across more blocks during most times. In this context, personalized covariates for blocks or hours are unavailable, but latent similarities exist, such as blocks in the same busy area or morning hours exhibiting similar patterns. If setting a high price in one block during morning hours effectively achieves the target occupancy, it is reasonable to apply high prices to adjacent blocks in the same area and time period to obtain comparable results.

\emph{Supermarket discount (\cite{walmart})}.A supermarket manager must decide whether to offer discounts on various products across different time periods (weekly, annually). While discounts may reduce per-unit profits, they can increase overall revenue by attracting more customers. The objective is to develop a policy that identifies the optimal timing and products for discounts to maximize total annual profits. In this context, personalized covariates for products or time periods are unavailable. However, latent similarities exist - products within the same category or time periods within the same season or holiday exhibit correlated sale trends. For example, if discounting a product during a specific holiday boosts profits, it is reasonable to extend discounts to other products in the same category during that holiday. 

To address these \emph{covariate-free} challenges, we propose the matrix completion bandit (MCB) model, which captures similarities between current and historical requests or rewards through low-rank matrices. Consider the parking pricing problem, where the objective is to set high or low prices for $d_1$ blocks across $d_2$ hours to achieve target occupancy rates. We represent the occupancy rates of high and low pricing with two unknown $d_1\times d_2$  matrices, $M_1$	and $M_0$, respectively. For each block-hour pair $(i,j)$, if $M_1(i,j)>M_0(i,j)$, a high price should be set to better reach the occupancy rate target; otherwise, a low price is preferred. We assume that both $M_0$ and $M_1$ are low-rank, indicating that blocks and hours can be described by low-dimensional latent features, such as geographical proximity or close daytime periods. 

In the MCB framework, learning the optimal policy involves estimating these underlying matrices. Moreover, to evaluate whether a high price significantly outperforms a low price in improving occupancy rates (i.e., policy inference), we perform statistical inference on $M_1(i,j)-M_0(i,j)$. This formulation links policy learning with online matrix completion and policy inference with entrywise inference. However, policy learning and inference via online matrix completion are far from straightforward. Most existing matrix completion literature focuses on independent and offline observations, whereas in online decision-making, observations arrive sequentially, and actions are typically dependent on historical data. This adaptively collected data necessitates novel estimation and inference methodologies. Furthermore, high-dimensional scenarios, where the matrix dimensions $d_1d_2$ 
are comparable to or even significantly exceed the decision horizon $T$, introduce additional technical challenges.

Balancing regret performance with inference efficiency is another major challenge in policy learning and inference. Accurate estimation and efficient inference typically require sufficient sample sizes for each arm. For example, random exploration allocates a constant fraction of samples to each arm, resulting in an effective sample size of $O(T)$ per arm and achieving optimal convergence rates. However, this approach leads to a linear regret of $O(T)$, which is highly undesirable in bandit problems. In contrast, most bandit algorithms aim to minimize  regret by selecting suboptimal arms with decreasing probability. This strategy can substantially reduce the effective sample size for suboptimal arms, resulting in convergence rates slower than $O(T^{-1/2})$. A recent work \citet{simchi2023multi} demonstrates that the product of estimation error and the square root of regret remains of constant order in the \emph{worst case}.

\subsection{Main contributions}

We address \emph{covariate-free} online personalized decision-making using a collaborative filtering approach by formulating it as a matrix completion bandit (MCB) problem. We analyze policy learning accuracy, regret performance, and inference efficiency, highlighting the trade-offs among these aspects. Our contributions are summarized as follows. 

\emph{General convergence of $\eps$-greedy and online gradient descent algorithms.} We propose an online gradient descent algorithm combined with an $\varepsilon$-greedy strategy to learn the MCB parameters, establishing its convergence under general step size and exploration probability schedules. A key contribution is the derivation of sharp entry-wise error rates for the estimated MCB parameters, demonstrating that errors are uniformly distributed across all matrix entries. By using constant exploration probability and step size (assuming a known horizon $T$) and under classical offline matrix completion conditions, our algorithm achieves statistically optimal estimators (up to logarithmic factors) in both Frobenius and sup-norms. These sup-norm error rates are essential for regret analysis and policy inference. We obtain these error rates through advanced spectral analysis and martingale techniques.

\emph{Tradeoff between policy learning and regret performance.} Using a constant exploration probability leads to trivial regret performance. We propose an \emph{exploration-then-commit} (ETC) schedule, where exploration probabilities decay geometrically in the algorithm's later stages. This approach balances policy learning and regret performance: when achieving a regret upper bound of $\widetilde{O}(T^{1-\gamma})$ with some $\gamma\in [0,1/2]$, the estimator's error rate is $\tilde{O}_p(1/\sqrt{T^{1-\gamma}})$. This finding aligns with the minimax lower bound established by \citet{simchi2023multi}. The optimal regret achievable by our algorithm is $\tilde{O}(T^{2/3})$, consistent with existing results in online high-dimensional bandit algorithms \citep{hao2020high, ma2023high}.

\emph{Online inference framework by IPW-based debiasing.} We introduce a general framework for online statistical inference of the optimal policy.  Gradient descent estimators are typically biased due to implicit regularization \citep{NEURIPS2019_c0c783b5, chen2020noisy}. Building on the IPW-based online debiasing method in \citet{han2022online}, which addresses policy inference with matrix-valued covariates and constant exploration probabilities, we extend this approach to covariate-free policy inference and allow exploration probabilities to diminish over time. We establish the asymptotic normality of the studentized estimator, enabling the construction of valid confidence intervals. 

\emph{Real data analysis.} We evaluate the practical merit of MCB using the \emph{SFpark} and supermarket discount datasets, which have been widely studied in transportation and operations management research. In the \emph{SFpark} dataset, we address the dynamic pricing problem for parking lots across various blocks and time periods to minimize over- and under-occupancy rates. Our algorithm learns optimal prices for each block and time period and makes inferences to assess policy effectiveness. Compared to benchmark and alternative pricing strategies, our method achieves a higher proportion of blocks and time periods meeting the target occupancy rates. Additionally, our method is applied to the superstore discount data, yielding interesting findings.

\subsection{Related works}


\emph{Policy evaluation and inference with adaptively collected data.}  Policy evaluation with adaptively collected data is a closely related and challenging area, as estimators often exhibit bias and inflated variance. Existing approaches address these issues through de-biasing and reweighting techniques. For instance, \citet{bottou2013counterfactual, wang2017optimal, su2020doubly} utilize importance sampling to balance the trade-off between bias and variance, while \citet{luedtke2016statistical, hadad2021confidence, zhan2021off} design specific weights for IPW-based estimators to stabilize variance. However, these works do not simultaneously consider policy inference and regret performance and are typically limited to low-dimensional contexts.

\emph{Matrix completion and inference.} \citet{candes2010matrix} pioneered exact matrix completion through convex programming, while \citet{keshavan2010matrix} introduced a computationally efficient non-convex matrix factorization method that achieves statistical optimality under sub-Gaussian noise. Over the past decade, numerous approaches have been developed, focusing on optimization algorithms and their statistical performance \citep{koltchinskii2011nuclear, hastie2015matrix, ge2016matrix, ma2018implicit}. Online matrix completion and regret analysis have been explored by \citet{jin2016provable} and \citet{cai2023online}. Statistical inference for noisy matrix completion remains challenging, with recent advancements by \citet{chen2019inference, xia2021statistical, yan2021inference, chernozhukov2023inference}. For example, \citet{chen2019inference} and \citet{yan2021inference} employ leave-one-out analysis for entry-wise inference, while \citet{xia2021statistical} utilizes double-sample debiasing and low-rank projection for linear forms inference. More recently, \citet{ma2023multiple} developed a symmetric data aggregation method to control the false discovery rate in multiple testing for noisy matrix completion. These methods generally assume independent observations. In this paper, we develop inferential methods for matrix completion with adaptively collected data, inspired by the estimation of average treatment effects in causal matrix completion \citep{athey2021matrix, bai2021matrix, xiong2023large, choi2023matrix}. 

\emph{A/B Testing.} A/B testing is a widely used technique for the online and adaptive evaluation of strategies or services. It involves assigning users to control or treatment groups, collecting outcome data, and making statistical inference to assess differences between groups. The existing literature primarily focuses on developing estimators of average treatment effect that minimize bias and variance and on characterizing their asymptotic distributions for statistical inference \citep{tang2010overlapping, johari2017peeking, yang2017framework, kohavi2020trustworthy, shi2023dynamic, wu2024nonstationary}. However, A/B testing typically does not address policy learning algorithms, thereby avoiding the study of the trade-off between regret performance and inference efficiency.

\emph{Low-rank matrix bandit}.  \cite{han2022online} investigates the low-rank matrix bandit problem under the assumption that the covariate matrix $X_t$ consists of i.i.d. $N(0,1)$ entries, a scenario referred to as \emph{matrix sensing} (\cite{candes2011tight}). In contrast, we focus on \emph{covariate-free} decision-making through collaborative filtering, formulated as a \emph{}matrix completion problem (\cite{candes2012exact}). The methodologies, theories, and assumptions underlying these two problems are strikingly different.
For instance, deriving the highly non-trivial sup-norm error rate and establishing the incoherence property are uniquely critical yet challenging for analyzing regret performance and policy inference in matrix completion. Furthermore, we design new algorithms to explore the trade-off between regret performance and inference efficiency. Our novel two-phase design provides a straightforward approach to balancing these two objectives.

\section{Collaborative Filtering and Matrix Completion Bandits}\label{sec:CF-policy}
Let $\|\cdot\|$ denote the $\ell_2$-norm for vectors and the spectral norm for matrices, and $\|\cdot\|_{\mathrm{F}}$ represent the Frobenius norm. We use $\|\cdot\|_{\max}$ for the sup-norm, defined as the maximum absolute entry, and define the 2-max norm of a matrix $A$ by $\|A\|_{2,\max} := \max_{j} \|e_j^{\top}A\|$, representing the maximum row-wise $\ell_2$-norm. Let $\OO^{d \times r} := \big\{U \in \mathbb{R}^{d \times r} : U^{\top}U = I_{r}\big\}$, where $I_{r}$ is the identity matrix of dimension $r\times r$. For a matrix $U \in \OO^{d \times r}$, we denote by $U_{\perp} \in \OO^{d \times (d-r)}$ its orthogonal complement, such that $(U, U_{\perp})$ forms a $d\times d$ orthogonal matrix.

We model the collaborative filtering (CF) policy as a \emph{covariate-free} decision-making strategy. Consider $d_1d_2$ distinct requests arranged in a $d_1 \times d_2$ grid, indexed by $(j_1, j_2)$ where $j_1 \in [d_1]$ and $j_2 \in [d_2]$. Specially, we allow high dimensional scenario where $d_1d_2\gg T$. There are $K$ arms (actions), each associated with an unknown reward matrix $M_k\in\RR^{d_1\times d_2}$ for $k \in [K]$. Upon receiving request $(j_1, j_2)$, selecting action $k$ yields an expected reward $[M_k]_{j_1j_2}$. The optimal CF policy that maximizes expected reward is
$a^{\ast}(j_1, j_2) := \arg\max_{k\in[K]} \ [M_k]_{j_1j_2}$ for $j_1 \in [d_1], j_2 \in [d_2].$
The matrices $M_k$ are unknown and must be estimated to learn the optimal policy.  Requests arrive sequentially, and after choosing an action (say, $k$), only a single (noisy) entry from $M_k$ is observed. A fundamental assumption in collaborative filtering is that each $M_k$ is low-rank, implying a low-dimensional structure in their row and column spaces. 


We formulate the matrix completion bandit (MCB) within the trace regression framework \citep{koltchinskii2011nuclear}. For simplicity, we consider the two-armed case ($K=2$), with extensions to $K$ arms discussed in Appendix \ref{sec:karm}. Let $\{(X_t, a_t, r_t)\}_{t=1}^T$ denote a sequence of observations where each $d_1 \times d_2$ matrix $X_t$ is independently and \emph{uniformly} sampled from the orthonormal basis $\calX=\big\{e_{j_1}e_{j_2}^{\top}: j_1\in[d_1], j_2\in[d_2]\big\}$.  This framework applies to scenarios such as sequential pricing of parking blocks or weekly product discounts in a supermarket. Moreover, our model can be extended to accommodate general covariate distributions, as detailed in Appendix \ref{non-uniform}. The action $a_t\in\{0,1\}$ yields a reward according to linear function
$r_t=\langle M_{a_t}, X_t\rangle+\xi_t$, 
where $\xi_t$ is sub-Gaussian noise independent of $X_t$. The noise variance depends on the action: $\mathrm{Var}(\xi_t \mid a_t=1) = \sigma_1^2$ and $\mathrm{Var}(\xi_t \mid a_t=0) = \sigma_0^2$. Let $\mathcal{F}_t$ denote the filtration generated by $\{(X_\tau, a_\tau, r_\tau) : \tau \leq t\}$. Since the action $a_t$ is chosen based on $\mathcal{F}_{t-1}$, the data $\{(X_t, a_t, r_t)\}_{t=1}^T$ are adaptively collected and thus dependent. We assume without loss of generality that $d_1 \geq d_2$ and that both $M_1$ and $M_0$ share a common low rank $r \ll d_2$.

Let $\hat M_{1,t-1}$ and $\hat M_{0, t-1}$ denote the estimated MCB parameters up to time $t-1$. A \emph{greedy} strategy selects the action $a_t=\arg\max_{a\in\{0,1\}} \langle \hat M_{a, t-1}, X_t\rangle$ that potentially maximizes the expected reward. While this approach can maximize cumulative rewards when estimates are sufficiently accurate, it may lead to suboptimal performance when the estimators are not accurate. To address this, we employ an $\varepsilon$-greedy strategy \citep{sutton2018reinforcement, lattimore2020bandit}, which balances exploration and exploitation by introducing randomness into the decision-making process. Specifically, the action $a_t=1$ is chosen according to the probability $
\pi_t:=\PP\big(a_t=1|X_t, \calF_{t-1}\big):=(1-\eps_t)\mathbbm{1}\big(\inp{\hat M_{1, t-1}-\hat M_{0,t-1}}{X_t}>0\big) +\eps_t/2,
$
where $\varepsilon_t$ is a non-increasing sequence of exploration probabilities. This means the algorithm selects the greedy action with probability $1-\varepsilon_t/2$ (exploitation) and the alternative action with probability $\varepsilon_t/2$ (exploration). As more data is collected, $\varepsilon_t$ decreases, allowing the algorithm to increasingly favor exploitation as the parameter estimates become more accurate.

\section{Policy Learning and Regret Performance}\label{sec:policy-learning}
Our primary objective is to estimate the bandit parameters $M_1$ and $M_0$ during online decision-making, which underpins policy learning and inference. We introduce an $\varepsilon$-greedy bandit algorithm that leverages stochastic gradient descent for sequential parameter estimation. Furthermore, we establish the convergence of these online estimators and analyze the regret performance of our algorithm.

\subsection{\texorpdfstring{$\varepsilon$}{epsilon}-Greedy bandit algorithm with stochastic gradient descent}\label{sec:grad-desc}
We begin by introducing key notations and assumptions.  For each $i=0,1$, let $M_i=L_i\Lambda_i R_i^{\top}$ denote the singular value decomposition (SVD), where $L_i\in\OO^{d_1\times r}$ and $R_i\in\OO^{d_2\times r}$ contain the left and right singular vectors, respectively. The diagonal matrix $\Lambda_i={\rm diag}(\lambda_{i,1},\cdots,\lambda_{i,r})$ holds the singular values in non-increasing order. We define the signal strength as $\lambda_{\min}:=\min\{\lambda_{0,r}, \lambda_{1,r}\}$. Also denote $\lambda_{\max}:=\max\{\lambda_{0,1}, \lambda_{1,1}\}$ and the condition number $\kappa:=\lambda_{\max}/\lambda_{\min}$.  
The balanced decomposition of $M_i$ is written as $M_i=U_iV_i^{\top}$, where $U_i^{\top}U_i=V_i^{\top}V_i=\Lambda_i$, achievable by setting $U_i=L_i\Lambda_i^{1/2}$ and $V_i=R_i\Lambda_i^{1/2}$.  Matrix completion becomes ill-posed if the underlying matrix is spikied, meaning that a few entries dominate. The \emph{incoherence} parameter \citep{candes2012exact} for $M_i$ is defined as
$
\mu(M_i):=\max\big\{\sqrt{d_1/r}\|L_i\|_{2,\max},\ \sqrt{d_2/r}\|R_i\|_{2,\max} \big\}.
$
We assume that $\max\{\mu(M_0), \mu(M_1)\}\leq \mu$.  For clarity, we treat $\mu$ and $\kappa$ as bounded constants in our main results, ensuring that most entries of $M_0$ ($M_1$, similarly) have comparable magnitudes and that both matrices are well-conditioned.

At time $t$, a classical offline matrix completion method \citep{keshavan2009matrix} is to estimate $M_1$ by minimizing the sum of squares 
$
\bar{\mathscr{L}}_{1,t}(U,V):=\sum_{\tau=1}^t \ell_{1,\tau}(U,V):=\sum_{\tau=1}^t \mathbbm{1}(\{a_{\tau}=1\}) (r_{\tau}-\big<X_{\tau}, UV^{\top}\big>)^2
$
subject to $U\in\RR^{d_1\times r}$ and $V\in\RR^{d_2\times r}$. This is a non-convex program and can be locally optimized by gradient descent algorithms with guaranteed convergence and statistical performance \citep{burer2003nonlinear, zheng2016convergence} if the observations are i.i.d.. However, since data is adaptively collected under MCB, the propensities $\pi_{\tau}:=\PP(a_{\tau}=1| X_{\tau},\mathcal{F}_{\tau-1})$ are unequal.  The impact to this can be easily seen in the conditional expected loss $\EE [\ell_{1,t}(U,V)|\{\calF_{t-1},X_t, r_t\}]=\pi_t(r_t-\langle X_t, UV^{\top}\rangle)^2$, which likely places more significant weight on the entries where $M_1$ is larger than $M_0$. To mitigate the bias issue, we consider using the weighted sum of squares
\begin{align}\label{eq:MCB-weighted-L}
	\min_{U,V}\ \mathscr{L}^{\pi}_{1,t}(U,V)=\sum_{\tau=1}^t \frac{\mathbbm{1}(a_{\tau}=1)}{\pi_{\tau}}\cdot \big(r_{\tau}-\big<X_{\tau}, UV^{\top}\big>\big)^2\quad {\rm s.t.}\quad U^{\top}U=V^{\top}V,
\end{align}
where the last constraint enforces a balanced factorization and is for algorithmic stability \citep{jin2016provable}.  When estimating $M_0$, we define the loss $\mathscr{L}_{0,t}^{\pi}$ as in (\ref{eq:MCB-weighted-L}), but replace the weight $\mathbbm{1}(a_{\tau}=1)/\pi_{\tau}$ with $\mathbbm{1}(a_{\tau}=0)/(1-\pi_{\tau})$.  Re-solving this non-convex optimization problem with each new observation is computationally intensive. Therefore, we employ an online gradient descent algorithm to update the estimates of $M_1$ and 
$M_0$ incrementally as new data arrives, thereby enhancing computational efficiency and facilitating statistical inference. The detailed implementations of $\eps$-greedy policy and online  gradient decent can be found in Algorithm~\ref{alg:mcb1}.


\begin{algorithm}
		\caption{$\eps$-greedy two-arm MCB with online gradient descent}\label{alg:mcb1}
\begin{algorithmic}
\STATE{\textbf{Input}: exploration probabilities $\{\varepsilon_t\}_{t\geq 1}$; step sizes $\{\eta_t\}_{t\geq 1}$; initializations with balanced factorization ${\hat M}_{0,0}=\hat U_{0,0}\hat V_{0,0}^{\top}$, ${\hat M}_{1,0}=\hat U_{1,0}\hat V_{1,0}^{\top}$.}
\STATE{\textbf{Output}: $\hat{M}_{0,T}$, $\hat{M}_{1,T}$.}
			\FOR{$t= 1,2,\cdots, T$}  
        \STATE Observe a new request $X_t$; \\
        \STATE Calculate $\pi_t=(1-\varepsilon_t)\mathbbm{1}\big(\inp{{\hat M}_{1,t-1} - {\hat M}_{0,t-1}}{X_t}>0\big)+\frac{\varepsilon_t}{2}$; \\
        \STATE Sample an action $a_t\sim \text{Bernoulli}(\pi_t)$ and get a reward $r_t$; \\
		  \FOR{$i= 0,1$}
            \STATE Update by
            \begin{align*}
             \begin{pmatrix}
        \tilde U_{i,t} \\
        \tilde V_{i,t}
        \end{pmatrix} = \begin{pmatrix}
        \hat U_{i,t-1} \\
        \hat V_{i,t-1} 
        \end{pmatrix} - \frac{\mathbbm{1}(a_t=i)\eta_t}{i\pi_{i,t}+(1-i)(1-\pi_t)}\cdot\begin{pmatrix}
        \big(\inp{\hat U_{i,t-1}\hat V_{i,t-1}^{\top}}{X_t}-r_t\big)X_t\hat V_{i,t-1} \\
        \big(\inp{\hat U_{i,t-1}\hat V_{i,t-1}^{\top}}{X_t}-r_t)X_t^{\top}\hat U_{i,t-1}
        \end{pmatrix}.
        \end{align*}	
        Set $\hat{U}_{i,t}=\hat L_{i,t}\hat \Lambda_{i,t}^{1/2}$ and $\hat{V}_{i,t}=\hat R_{i,t}\hat \Lambda_{i,t}^{1/2}$, where $\hat L_{i,t}\hat \Lambda_{i,t}\hat R_{i,t}^{\top}$ is the thin SVD of $\hat {M}_{i,t}=\tilde U_{i,t}\tilde V_{i,t}^{\top}$. 
        \ENDFOR
        \ENDFOR  
\end{algorithmic}
\end{algorithm}

Algorithm~\ref{alg:mcb1} usually requires good initializations $\widehat{M}_{1,0}$ and $\widehat{M}_{0,0}$. In practice, these can be obtained using historical data or a preliminary forced sampling procedure to collect sufficient data for each arm. Subsequently, offline convex matrix completion algorithms, such as those proposed by \citet{mazumder2010spectral} and \citet{hastie2015matrix}, can be applied. Additionally, by generating scree plots of $\widehat{M}_{1,0}$ and $\widehat{M}_{0,0}$, the rank $r$ in Algorithm \ref{alg:mcb1} can be determined in a data-driven manner. If the scree plot does not clearly reveal the elbow point, we recommend selecting a slightly larger $r$ to prevent information loss due to underestimation of the rank.

A common choice for the exploration probability is $\varepsilon_t\asymp t^{-\gamma}$ for some $\gamma\in [0,1)$. Instead of continuously decreasing $\varepsilon_t$ over the entire decision horizon, we partition the learning process into two phases. Accurate estimation of the bandit parameters requires a sufficiently large sample size for each arm. However, rapidly decaying $\varepsilon_t$ 	
can cause the probability of selecting suboptimal arms to diminish too quickly, leading to inadequate data and biased estimates. 
In the first phase, we set $\varepsilon_t$ to a constant to ensure adequate exploration and obtain reasonably accurate estimators. In the second phase, we switch to a geometrically decaying schedule for $\varepsilon_t$, achieving non-trivial regret bounds. Moreover, this two-phase approach facilitates policy inference, as will be demonstrated in Section \ref{sec:inference}. Our method resembles the ``Explore-Then-Commit" (ETC) algorithm \citep{garivier2016explore, lattimore2020bandit,perchet2016batched}. Unlike typical ETC algorithms, which set $\varepsilon_t\equiv 0$ and adopt purely greedy decisions in the second phase, our approach continues to incorporate exploration during the second phase. This ongoing exploration is crucial for effective policy inference, as demonstrated in Section~\ref{sec:inference}.



\begin{Theorem} \label{thm:MCB-conv}
Suppose that the horizon $T\leq d_1^{100}$ and the initializations are incoherent, satisfying $\|\hat M_{0,0}-M_0\|_{\rm F}+\|\hat M_{1,0}-M_1\|_{\rm F}\leq c_0\lambda_{\min}$ for some small constant $c_0>0$.
Fix some $\gamma\in [0,1)$, $\eps\in(0, 1)$, and set $T_0:=C_0T^{1-\gamma}$ for a sufficiently large constant $C_0>0$.  When $t\leq T_0$, set $\eps_t\equiv\eps$ and $\eta_t\equiv \eta:=c_1d_1d_2\log(d_1)/(T^{1-\gamma}\lambda_{\max})$; when $T_0<t\leq T$, set $\eps_t=c_2t^{-\gamma}$ and $\eta_t=\eps_t \eta$, where $c_1, c_2>0$ are numerical constants. Suppose that the horizon and signal-to-noise ratio (SNR) satisfy
$$
T\geq C_1r^3d_1^{1/(1-\gamma)} \log^2 d_1\quad {\rm and}\quad  \frac{\lambda_{\min}^2}{\sigma_0^2+\sigma_1^2}\geq C_2\frac{rd_1^2d_2\log^2d_1}{T^{1-\gamma}},
$$
for some large constants $C_1, C_2>0$ depending on $C_0, c_0,  c_1,c_2$ only. There exist constant $C_3,C_4,C_5 >0$ such that, for both $i\in\{0,1\}$, with probability at least $1-8td_1^{-200}$, the output of Algorithm~\ref{alg:mcb1} satisfies 
    \begin{align*}
        &\big\|\hat M_{i, t} - M_i \big\|_{\rm F}^2\leq C_3\fro{\widehat{M}_{i,0}-M_i}^2 \left(1-\frac{c_1\log d_1}{4\kappa T^{1-\gamma}}\right)^t + C_4t\cdot \sigma_i^2\frac{rd_1^2d_2\log^4 d_1}{T^{2-2\gamma}}, \\
        &\big\|\hat M_{i,T} - M_i\big\|_{\max}^2\leq  C_3\frac{\lambda_{\min}^2r^3}{d_1d_2} \left(1-\frac{c_1\log d_1}{4\kappa T^{1-\gamma}}\right)^t +   C_4t\cdot \sigma_i^2\frac{rd_1\log^4 d_1}{T^{2-2\gamma}},
    \end{align*}
for all $t<T_0$.  Moreover, for all $t\geq T_0$, with probability at least $1-8d_1^{-100}$, we have 
    \begin{align*}
        \big\|\hat M_{i, t} - M_i \big\|_{\rm F}^2\leq C_5\sigma_i^2\frac{rd_1^2d_2\log^4 d_1}{T^{1-\gamma}} \quad {\rm and}\quad 
        \big\|\hat M_{i,t} - M_i\big\|_{\max}^2\leq C_5\sigma_i^2\frac{rd_1\log^4 d_1}{T^{1-\gamma}}.
    \end{align*} 
\end{Theorem}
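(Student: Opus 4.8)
The plan is to establish both phases through a single induction on $t$, run simultaneously for $i\in\{0,1\}$, whose hypothesis bundles three statements: the Frobenius bound, the max-norm bound, and an incoherence bound on the running factors $\hat L_{i,t},\hat R_{i,t}$ measured in $\|\cdot\|_{2,\max}$. Carrying incoherence inside the induction is unavoidable, since both the contraction estimate and the entrywise control are valid only while $\hat M_{i,t}$ stays in an incoherent neighborhood of $M_i$, so one must prove this neighborhood is never exited. Because the update acts on the factors $(\tilde U_{i,t},\tilde V_{i,t})$ whereas the theorem concerns the product $\hat M_{i,t}=\tilde U_{i,t}\tilde V_{i,t}^{\top}$, I would track an alignment-invariant factor distance $\mathrm{dist}^2(F,F^\star)=\min_{O\in\OO^{r\times r}}\|FO-F^\star\|_{\mathrm F}^2$ for the stacked factor $F=\binom{U}{V}$, and convert back to $\|\hat M_{i,t}-M_i\|_{\mathrm F}$ and $\|\hat M_{i,t}-M_i\|_{\max}$ at the end. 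The thin-SVD rebalancing that produces $\hat L_{i,t}\hat\Lambda_{i,t}\hat R_{i,t}^{\top}$ is absorbed by a Wedin/Davis--Kahan perturbation bound certifying that this projection is non-expansive in $\mathrm{dist}$ and preserves incoherence.

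The engine is a one-step descent lemma. The IPW weights $\mathbbm 1(a_\tau=i)/\pi_{i,\tau}$, with $\pi_{1,\tau}=\pi_\tau$ and $\pi_{0,\tau}=1-\pi_\tau$, are calibrated so that, conditioned on $\{\calF_{t-1},X_t\}$, the stochastic gradient is unbiased for the gradient of the population completion loss; the identity $\EE[\inp{X_t}{A}X_t]=(d_1d_2)^{-1}A$, which holds because $X_t$ is uniform on $\calX$, is what produces the $1/(d_1d_2)$ in the effective step size and hence the stated contraction factor $1-c_1\log d_1/(4\kappa T^{1-\gamma})$ once $\eta\asymp d_1d_2\log d_1/(T^{1-\gamma}\lambda_{\max})$ is substituted. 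Inside the incoherent basin the population loss satisfies restricted strong convexity and smoothness, yielding a deterministic contraction $\mathrm{dist}_t^2\le(1-\rho)\,\mathrm{dist}_{t-1}^2+(\text{martingale term})$ with $\rho\asymp\log d_1/(\kappa T^{1-\gamma})$, and the residual is a martingale difference adapted to $\calF_t$.

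Unrolling this recursion and bounding the accumulated martingale by a matrix Freedman/Bernstein inequality splits the error into a geometrically decaying image of the initialization plus a noise floor. For $t<T_0$ I would keep the crude estimate $\sum_{s\le t}(1-\rho)^{t-s}\le t$, which yields the displayed linear-in-$t$ noise term and is harmless because it is used only up to $t=T_0\asymp T^{1-\gamma}$, where it already coincides with the Phase-2 floor $\sigma_i^2 rd_1^2d_2\log^4 d_1/T^{1-\gamma}$; the $t$ prefactor is mirrored by the $1-8td_1^{-200}$ union bound over the first $T_0$ steps. The per-step variance carries a factor $1/\eps$ from $\EE[(\mathbbm 1(a_\tau=i)/\pi_{i,\tau})^2]\le 2/\eps$ and a factor $\|\hat V\|_{\mathrm F}^2/d_2\asymp r\lambda_{\max}/d_2$ from $\|X_t\hat V\|_{\mathrm F}^2=\|\hat V_{j_2,:}\|^2$, with the extra logarithms accumulating into $\log^4 d_1$ through the high-probability step. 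The choice $\eta_t=\eps_t\eta$ in Phase 2 is precisely what holds the floor flat: the per-step contraction scales like $\eps_t$ while the per-step variance also scales like $\eps_t$ (the $\eps_t^2$ from $\eta_t^2$ cancels one $1/\eps_t$ from the IPW second moment), so their ratio --- the equilibrium error --- is independent of $\eps_t$ even as exploration decays. The SNR hypothesis forces this floor below $c_0^2\lambda_{\min}^2$, which is exactly what certifies that the induction never leaves the basin.

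The hardest leg, and the one flagged as central in the introduction, is the max-norm bound at the sharp $(d_1d_2)^{-1}$-scaled rate. I would prove a row-wise analogue of the descent lemma controlling $\|\hat U_{i,t}O-U_i\|_{2,\max}$ and $\|\hat V_{i,t}O-V_i\|_{2,\max}$, then assemble $\|\hat M_{i,t}-M_i\|_{\max}\le\|\hat U_{i,t}\|_{2,\max}\|\hat V_{i,t}O-V_i\|_{2,\max}+\|\hat U_{i,t}O-U_i\|_{2,\max}\|V_i\|_{2,\max}$ using the incoherence bound $\|U_i\|_{2,\max}\lesssim\sqrt{r\lambda_{\max}/d_1}$. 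The obstacle is that, unlike offline completion, both the sampled slot $X_t$ and the action $a_t$ are chosen from $\calF_{t-1}$, so $\hat M_{i,t-1}$ and the fresh sample are statistically entangled and a direct entrywise bound is far too loose. I would break this dependence with an online leave-one-out device --- auxiliary trajectories $\hat M_{i,t}^{(l)}$ that discard every update touching the $l$-th row or column --- showing by a parallel induction that they remain $o(\cdot)$-close to $\hat M_{i,t}$ yet are conditionally independent of the $l$-th coordinate, and then invoking martingale concentration aligned with $\calF_t$. Proving that the leave-one-out iterates stay incoherent and that the entrywise error contracts at the same geometric rate as the Frobenius error, uniformly over all $d_1d_2$ entries and all $t\le T$ via a final union bound, is the most delicate and computation-heavy part of the argument.
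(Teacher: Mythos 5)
Your overall architecture matches the paper's: the proof in Appendix C runs exactly the induction you describe, with an event $\calE_t$ bundling the Frobenius bound, the max-norm bound, and row-wise incoherence of the iterates, a one-step conditional-contraction estimate with rate $f(\eta_t)=\eta_t\lambda_{\min}/(2d_1d_2)$ coming from $\EE[\langle X_t,A\rangle X_t]=(d_1d_2)^{-1}A$, telescoping plus martingale Bernstein for the accumulated noise, the $2/\eps_t$ bound on the IPW second moment, and the observation that $\eta_t=\eps_t\eta$ makes the statistical floor $\sum_k\eta_k^2/\eps_k$ flat in phase two while the SNR condition keeps the iterates in the incoherent basin. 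One structural difference: the paper never introduces an alignment-invariant factor distance. It tracks the product $\hat M_{i,t}=\tilde U_{i,t}\tilde V_{i,t}^{\top}$ directly, expanding the factor update into a product-level increment $\Delta_t$ and using the balancedness identity $\|(\hat U\hat V^{\top}-M)\hat V\|_{\rm F}^2+\|\hat U^{\top}(\hat U\hat V^{\top}-M)\|_{\rm F}^2\geq \tfrac{\lambda_{\min}}{2}\|\hat U\hat V^{\top}-M\|_{\rm F}^2$. This sidesteps your rebalancing worry entirely: the thin-SVD step changes the factors but leaves the product unchanged, so no Wedin ``non-expansiveness'' lemma is needed there; in your factor-distance formulation you would be creating (and then having to discharge) a complication the paper simply does not have.

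The genuine gap is in your plan for the max-norm bound. Your motivating premise --- that ``both the sampled slot $X_t$ and the action $a_t$ are chosen from $\calF_{t-1}$, so $\hat M_{i,t-1}$ and the fresh sample are statistically entangled'' --- is wrong for this model: $X_t$ is drawn independently and uniformly from $\calX$, and $\hat M_{i,t-1}$ is $\calF_{t-1}$-measurable, hence conditionally independent of the fresh triple $(X_t,a_t,r_t)$ given $\calF_{t-1}$ once the IPW weight corrects the conditional law of $a_t$ (this independence is precisely what the paper exploits for unbiasedness of the online debiasing as well). Because this is single-pass SGD --- each sample is used exactly once --- the offline-style iterate/sample entanglement that necessitates leave-one-out in, e.g., Ma et al.\ or Chen et al.\ never arises. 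The paper accordingly proves the entrywise rate without any leave-one-out: it controls the row-wise errors $\|e_l^{\top}(\hat U_t\hat V_t^{\top}-M)\|$ by the same telescoping-plus-martingale-Bernstein scheme run row by row (Lemma C.3), then converts to $\|\hat M_t-M\|_{\max}$ through the deterministic decomposition $\hat M_t-M=\hat L_t\hat L_t^{\top}(\hat M_t-M)\hat R_t\hat R_t^{\top}+(\hat L_t\hat L_t^{\top}-LL^{\top})M\hat R_t\hat R_t^{\top}+LL^{\top}M(\hat R_t\hat R_t^{\top}-RR^{\top})$, bounding the projector perturbations with Wedin's $\sin\Theta$ theorem and a two-to-infinity bound (Cape et al.). Beyond being unnecessary, your leave-one-out device faces obstacles specific to the bandit setting that the proposal does not address: an auxiliary trajectory $\hat M_{i,t}^{(l)}$ that discards updates touching row $l$ would induce a different action sequence (the policy at every slot depends on the running estimates, which are coupled across all rows through $\hat U\hat U^{\top}$, $\hat V\hat V^{\top}$ in the product update and through the global SVD rebalancing), so the standard proximity-and-independence coupling from offline matrix completion does not transfer; you would be attempting a strictly harder argument than the problem requires, and as sketched it does not close.
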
 

We assume $T\leq d_1^{100}$ solely for technical convenience, allowing the exponent to be replaced by any sufficiently large constant. For clarity, Theorem \ref{thm:MCB-conv} considers specific schedules for learning rates and exploration probabilities, while Appendix \ref{proof:completeconv} establishes convergence under more general schedules.  We also present a computational acceleration of Algorithm~\ref{alg:mcb1} in Appendix \ref{sec:practical}, demonstrating that each SVD step in Algorithm~\ref{alg:mcb1} can be executed within $O(r^3)$ flops.   

In Theorem \ref{thm:MCB-conv}, the parameter $\gamma$ determines the duration of phase one, which maintains a constant exploration probability. When the primary objective is to estimate the MCB parameters and learn the optimal policy, $\gamma$ can be set near zero. Under this setting, the horizon and signal strength conditions align with those in classical matrix completion literature \citep{keshavan2010matrix, candes2012exact}. Moreover, the final estimators $\hat M_{0,T}$ and $\hat M_{1,T}$ achieve statistical optimality up to logarithmic factors \citep{koltchinskii2011nuclear, ma2018implicit, cai2023online}. The constant exploration probability ensures $O(T)$ observations for each arm, making our results comparable to the offline case.


Algorithm~\ref{alg:mcb1} includes several tuning parameters: the initial exploration probability 
$\varepsilon$, the constants $C_0$ and $c_1$ determining the duration of and the step size in phase one, respectively. While the parameter $\gamma$ is critical for balancing regret performance and statistical inference efficiency (discussed in subsequent sections), the other parameters are relatively flexible. We recommend selecting a small $c_1$  and a sufficiently large $C_0$. As shown in Section~\ref{sec:inference}, a large 
$C_0$ may slightly impact inference efficiency and the width of confidence intervals but ensures accurate estimates at $T_0$. Similarly, a small $c_1$ reduces the estimation error of $\hat M_{i,t}$	as demonstrated in Theorem \ref{proof:completeconv} in Appendix~\ref{app:proofs_thms}. Our simulation studies in Section~\ref{sec:simulation} and Appendix~\ref{app:numerical} confirm that the method remains effective across various parameter settings.

\subsection{Regret performance} \label{sec:regret}
One of the decision maker's goals is to maximize the expected cumulative reward during the policy learning process. The maximum expected cumulative reward $\EE\big[\sum_{t\in[T]}\max_{i\in\{0,1\}}\langle M_i, X_t \rangle\big]$ is attainable from making decisions using the optimal policy $a^{\ast}$ defined in Section~\ref{sec:CF-policy}. Let $\{a_t, t\in[T]\}$ be the sequence of actions taken according to Algorithm~\ref{alg:mcb1}, which results in an expected cumulative reward $\EE\big[\sum_{t\in[T]} \langle M_{a_t}, X_t\rangle\big]$.  Here,  the expectation is taken with respect to $\calF_T$. 
The difference between the achieved expected cumulative reward and the maximum one is referred to as the \emph{regret}, formally defined by
$ R_T:=\EE\big[\sum_{t=1}^{T} \max_{i\in \{0,1\}}\inp{M_i}{X_t} - \inp{M_{a_t}}{X_t}\big].$


\begin{Theorem} \label{thm:regret}
  Suppose that  the conditions in Theorem \ref{thm:MCB-conv} hold, and denote by $\bar{m}:=\|M_0\|_{\max}+\|M_1\|_{\max}$ and $\bar\sigma:=\max\{\sigma_0, \sigma_1\}$. 
  Then there exists a numerical constant $C_6>0$ such that  the regret is upper bounded by 
\begin{align*}
        R_T\leq & C_6\bigg(\bar m r\cdot T^{1-\gamma}+\bar\sigma\cdot T^{(1+\gamma)/2}\sqrt{rd_1}\log^2 d_1 \bigg)
 \end{align*}
\end{Theorem}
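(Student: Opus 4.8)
The plan is to bound the expected per-step regret $\mathrm{reg}_t:=\EE\big[\max_{i}\inp{M_i}{X_t}-\inp{M_{a_t}}{X_t}\big]$ and sum over $t$, splitting the horizon at $T_0$. Writing $\Delta_t:=\inp{M_1-M_0}{X_t}$ for the true gap and letting $a^\ast_t:=\mathbbm{1}(\Delta_t>0)$ denote the optimal arm for request $X_t$, a step is costly only when the played arm is suboptimal, so $\mathrm{reg}_t=\EE[|\Delta_t|\,\mathbbm{1}(a_t\neq a^\ast_t)]$. The $\eps$-greedy rule splits the event $\{a_t\neq a^\ast_t\}$ into two sources: a forced exploration draw of the wrong arm (conditional probability $\eps_t/2$) and a \emph{greedy mistake}, i.e.\ the greedy action $\hat a_t:=\mathbbm{1}(\hat\Delta_{t-1}>0)$, with $\hat\Delta_{t-1}:=\inp{\hat M_{1,t-1}-\hat M_{0,t-1}}{X_t}$, being suboptimal while exploiting. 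This gives the master bound
\begin{align*}
\mathrm{reg}_t\le \tfrac{\eps_t}{2}\,\EE|\Delta_t|+\EE\big[|\Delta_t|\,\mathbbm{1}(\hat a_t\neq a^\ast_t)\big].
\end{align*}
I would control the two terms separately: the first yields the $T^{1-\gamma}$ term, the second the $T^{(1+\gamma)/2}$ term.

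For the exploration term, since $X_t$ is uniform on $\calX$, $\EE|\Delta_t|=\|M_1-M_0\|_{1,1}/(d_1d_2)$ (entrywise $\ell_1$ norm), and a direct entrywise bound gives $\EE|\Delta_t|\le\|M_1-M_0\|_{\max}\le\bar m$. On phase one ($t\le T_0=C_0T^{1-\gamma}$) I would bound $\mathrm{reg}_t\le\EE|\Delta_t|\le\bar m$ trivially, contributing $\lesssim\bar m\,T^{1-\gamma}$; on phase two I would use $\eps_t=c_2t^{-\gamma}$ together with $\sum_{t\le T}t^{-\gamma}\lesssim T^{1-\gamma}/(1-\gamma)$, again contributing $\lesssim\bar m\,T^{1-\gamma}$. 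Both are dominated by the stated first term $\bar m\,r\,T^{1-\gamma}$.

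The heart of the argument is the greedy-mistake term, and the key observation is a \emph{sign-flip inequality}: if $\hat a_t\neq a^\ast_t$ then $\Delta_t$ and $\hat\Delta_{t-1}$ carry opposite signs, whence $|\Delta_t|\le|\Delta_t-\hat\Delta_{t-1}|$. Because $X_t=e_{j_1}e_{j_2}^\top$, the right-hand side is a single entry of $(\hat M_{1,t-1}-M_1)-(\hat M_{0,t-1}-M_0)$, so
\begin{align*}
|\Delta_t|\,\mathbbm{1}(\hat a_t\neq a^\ast_t)\le\big\|\hat M_{1,t-1}-M_1\big\|_{\max}+\big\|\hat M_{0,t-1}-M_0\big\|_{\max}.
\end{align*}
This is precisely why a sup-norm (rather than Frobenius) rate is indispensable for regret. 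For $t>T_0$ I would invoke the phase-two sup-norm bound of Theorem~\ref{thm:MCB-conv}, which holds uniformly in $t$ on an event of probability at least $1-8d_1^{-100}$, to get $|\Delta_t|\,\mathbbm{1}(\hat a_t\neq a^\ast_t)\lesssim\bar\sigma\sqrt{rd_1}\,\log^2 d_1/\sqrt{T^{1-\gamma}}$; here the mistake probability is bounded crudely by $1$. Summing over the at most $T$ phase-two steps yields $\bar\sigma\sqrt{rd_1}\,\log^2 d_1\cdot T/\sqrt{T^{1-\gamma}}=\bar\sigma\sqrt{rd_1}\,\log^2 d_1\cdot T^{(1+\gamma)/2}$, the second term.

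Two bookkeeping points remain. First, to pass from the high-probability guarantee of Theorem~\ref{thm:MCB-conv} to the expectation defining $R_T$, I would bound the regret on the failure event trivially by $\bar m$ per step; its total contribution is at most $T\bar m\cdot 8d_1^{-100}\le 8\bar m$ since $T\le d_1^{100}$, and is absorbed into $C_6$. Second, the greedy action at step $t$ uses $\hat M_{i,t-1}$, so I need the uniform-in-$t$ phase-two bound applied at index $t-1\ge T_0$, while phase-one mistakes are already covered by the trivial $\EE|\Delta_t|$ bound above. The main obstacle is conceptual rather than computational: it is the recognition that greedy errors are self-limiting---a mistake on entry $(j_1,j_2)$ forces the true gap there to be no larger than the entrywise estimation error---so that exploitation regret is governed entirely by the entrywise convergence rate of Theorem~\ref{thm:MCB-conv}. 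Securing that sup-norm rate is the real difficulty, and it is supplied by the preceding theorem.
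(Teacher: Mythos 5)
Your proposal is correct and takes essentially the same route as the paper's proof: the identical exploration-plus-greedy-mistake decomposition, with greedy-mistake regret controlled by the entrywise error bound of Theorem~\ref{thm:MCB-conv} (your sign-flip inequality $|\Delta_t|\le|\Delta_t-\hat\Delta_{t-1}|$ is just the contrapositive form of the paper's argument that a mistake on an entry whose gap exceeds $2\delta_t$ forces one of the sup-norm errors to exceed $\delta_t$), and with failure events absorbed using $T\le d_1^{100}$. The only cosmetic difference is that you bound phase-one regret trivially by $\bar m$ per step, whereas the paper also sums its phase-one entrywise bound $\delta_t$ there (producing the $r$ factor in $\bar m\, r\, T^{1-\gamma}$); both treatments yield the stated bound.
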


Assuming $r=O(1)$, Theorem~\ref{thm:regret} shows that Algorithm \ref{alg:mcb1} achieves the regret upper bound $R_T=\tilde{O}\big( \bar m\cdot T^{1-\gamma}+\bar\sigma\cdot T^{(1+\gamma)/2}d_1^{1/2}\big)$, where logarithmic factors are hidden in $\tilde O(\cdot)$.  As discussed after Theorem~\ref{thm:MCB-conv}, the MCB algorithm can learn the optimal policy more accurately if the value $\gamma$ is closer to $0$.  However, setting $\gamma=0$ will lead to a trivial regret upper bound of $O(T)$. 
By choosing a value $\gamma>0$ so that $T^{1-3\gamma} = (\bar\sigma/\bar m)^2d_1$, we end up with the best regret upper bound $R_T = \tilde O\big((\bar m\bar \sigma^2)^{1/3}d_1^{1/3}T^{2/3}\big)$. 
The $O(T^{2/3})$ regret performance has been observed in \emph{online} high-dimensional bandit algorithms especially when the horizon $T$ is not too large. See, e.g., \cite{hao2020high}, \cite{ma2023high},  and references therein. 



It is worth noting that a classical ETC bandit algorithm may achieve a similar policy learning accuracy and regret performance. The classical ETC algorithm will make greedy decisions in the second phase without exploration. By setting $\eps_t=\eta_t=0$ for $T_0<t\leq T$, the estimated MCB parameters will not be further updated after phase one. The decisions made during phase two are only determined by the entry-wise difference between $\hat M_{0, T_0}$ and $\hat M_{1, T_0}$. It can be shown that even without exploration during phase two, the regret performance remains comparable to that in Theorem~\ref{thm:regret}. We adopt the proposed exploration probability schedule in Theorem~\ref{thm:MCB-conv} to facilitate policy inference, which typically requires an unbiased estimator for each arm parameter. If no exploration is made during phase two, there will be no chance to choose the estimated sub-optimal arm, then the inferential methods in Section~\ref{sec:inference} will be invalid because the debiasing procedure does not yield an unbiased estimator.

\section{Policy Inference by Inverse Propensity Weighting}\label{sec:inference}
 Suppose, in the \emph{SFpark} example, we want to evaluate whether high price (arm 1) is better than low price (arm 0) for block $i$ at hour $j$. Then, there exists a $d_1\times d_2$ matrix $Q=e_ie_j^{\top}$ such that the expected rewards under arm 0 and 1 may be written as $\langle M_0, Q\rangle$ and $\langle M_1, Q\rangle$, respectively. The policy evaluation process can be conveniently formulated as a hypothesis testing problem: 
\begin{align}\label{eq:test-H}
H_0: \langle M_1-M_0, Q\rangle= 0\quad {\rm v.s.}\quad H_1: \langle M_1-M_0, Q\rangle>0.
\end{align}
Rejecting the null hypothesis means that high price is significantly better than the low price. The matrix $Q$ can represent more general linear forms.

\subsection{Debiasing}\label{sec:debias}

Constructing a good estimator is the first step in testing hypotheses (\ref{eq:test-H}). The plug-in estimator $\langle \hat M_{1,T}-\hat M_{0, T}, Q\rangle$ suffers from potential bias caused by the implicit regularization of gradient descent, and $\hat M_{0, T}, \hat M_{1,T}$ may not be unbiased estimators.  Proper debiasing treatment is crucial for valid inference. The commonly used debiasing method \citep{chen2019inference, xia2021statistical} in \emph{offline} matrix completion considers 
\begin{align*}
\hat M_{1}^{\textsf{u-off}}:= \hat M_{1, T}+\frac{d_1d_2}{\big|\big\{t\in[T]: a_t=1 \big\}\big|}\sum_{t=1}^T \mathbbm{1}(a_t=1)\cdot \big(r_t-\langle\hat M_{1, T}, X_t \rangle\big)X_t.
\end{align*}
This method may not work for MCB because data are adaptively collected, and $\hat M_1^{\uoff}$ may still be a biased estimator of $M_{1}$. Moreover, the estimator $\hat M_1^{\uoff}$ requires access to historical data to compute $\langle \hat M_{1, T}, X_t\rangle X_t$. 

An \emph{online} debiasing method was recently proposed by \cite{han2022online} for the low-rank matrix regression bandit, which provides an unbiased estimator even if data is adaptively collected. The crucial ingredient is to include the inverse propensity weight (IPW), if modified into matrix completion setting:
\begin{align}\label{eq:debias-uon}
\hat M_{1}^{\uipw}:=\frac{1}{T}\sum_{t=1}^ T\hat M_{1, t-1}+\frac{d_1d_2}{T}\sum_{t=1}^T \frac{\mathbbm{1}(a_t=1)}{\pi_t} \big(r_t-\langle \hat M_{1, t-1}, X_t\rangle\big)X_t.
\end{align}
The estimator $\hat M_0^{\uipw}$ is similarly defined using $(1-\pi_t)^{-1}$ as the IPW. The unbiasedness of $\hat M_0^{\uipw}$ and $\hat M_1^{\uipw}$ can be easily verified by noticing the independence between $\hat M_{1,t-1}$ and $(X_t, r_t, a_t)$, conditioned on $\calF_{t-1}$. The debiasing procedure (\ref{eq:debias-uon}) is online in nature, since it does not require saving historical data. The method only needs the incremental updating of the two summands in (\ref{eq:debias-uon}) over time. Nevertheless, the debiasing procedure (\ref{eq:debias-uon}) accumulates all past estimates, but estimates from the early stage may not be sufficiently accurate, which can potentially inflates the variance.  As demonstrated in Theorem~\ref{thm:MCB-conv}, the estimates $\hat M_{1, T_0}$ and $\hat M_{0, T_0}$ obtained at the end of phase one satisfy $\|\hat M_{1, T_0}-M_1\|_{\max}+\|\hat M_{0, T_0}-M_0\|_{\max}=o_p\big(\sigma_0+ \sigma_1\big)$ under the specified horizon and SNR conditions, which is essential for statistical inference \citep{xia2021statistical}. Consequently, our IPW debiasing approach utilizes only the second phase estimates. Since phase one occupies a minimal portion of the horizon, we retain $O(T)$
 data points for constructing the debiased estimator.  Finally, we define 
\begin{align}\label{eq:debias-aw}
\hat M_0^{\uipw}:&= \frac{1}{T-T_0}\sum_{t=T_0+1}^ T\hat M_{0, t-1}+\frac{d_1d_2}{T-T_0}\sum_{t=T_0+1}^T \frac{\mathbbm{1}(a_t=0)}{1-\pi_t} \big(r_t-\langle \hat M_{0, t-1}, X_t\rangle\big)X_t\notag\\
\hat M_1^{\uipw}:&= \frac{1}{T-T_0}\sum_{t=T_0+1}^ T\hat M_{1, t-1}+\frac{d_1d_2}{T-T_0}\sum_{t=T_0+1}^T \frac{\mathbbm{1}(a_t=1)}{\pi_t} \big(r_t-\langle \hat M_{1, t-1}, X_t\rangle\big)X_t.
\end{align}

Given the low-rank assumption of MCB parameters, our final estimators are defined by the best rank-$r$ approximation of $\hat M_1^{\uipw}$ and $\hat M_0^{\uipw}$, denoted by $\hat M_1$ and $\hat M_0$, respectively. More precisely, $\hat M_1=\hat L_1\hat L_1^{\top}\hat M_1^{\uipw} \hat R_1\hat R_1^{\top}$,  where $\hat L_1$ and $\hat R_1$ consist of the left and right top-$r$ singular vectors of $\hat M_1^{\uipw}$, respectively. As explained in \cite{xia2021statistical}, spectral projection further reduces the variance at the cost of introducing a negligible bias.  Using the plug-in method, we propose $\langle \hat M_1-\hat M_0, Q\rangle$ as the point estimator for the linear form $\langle M_1- M_0, Q\rangle$.

\subsection{Asymptotic normality}

We now investigate the variances of $\langle \hat M_1, Q\rangle$ and $\langle \hat M_0, Q\rangle$, respectively, and prove that these test statistics are asymptotically normal under mild conditions. Take $\langle \hat M_1, Q\rangle$ as an example, the two key factors  determining the variance of $\langle \hat M_1, Q\rangle$ are: the properness of linear form $Q$ and the source of variances in estimating $\hat M_1$. \cite{xia2021statistical} and \cite{ma2023multiple} show that the variance of $\langle \hat M_1, Q\rangle$ is characterized by the alignment between $Q$ and $M_1$. More precisely, the quantity $\|\calP_{M_1}(Q)\|_{\rm F}$ determines the size of variance, where 
$
\calP_{M_1}(Q):=Q-L_{1\perp}L_{1\perp}^{\top}Q R_{1\perp}R_{1\perp}^{\top}
$
with $L_{1\perp}$ and $R_{1\perp}$ such that $(L_1, L_{1\perp})$ and $(R_1, R_{1\perp})$ are $d_1\times d_1$ and $d_2\times d_2$ orthogonal matrices, respectively. If $Q$ is orthogonal to $M_1$ in the sense that $L_1^{\top}Q=0$ and $QR_1=0$, statistical inference for $\langle M_1, Q\rangle$ becomes an ill-posed problem. In this case, $Q$ lies in the \emph{normal space} of the fixed-rank matrix manifold at point $M_1$, rendering the observed data non-informative for the linear form $\langle M_1, Q\rangle$. 

The proof of Theorem~\ref{thm:CLT} shows that the variance of $\inp{\widehat{M}_1-M_1}{Q}$ is mainly contributed by 
\begin{align*}
   \widehat{V}_{1}:=\frac{d_1d_2}{T-T_0}\sum_{t=T_0+1}^T \frac{\mathbbm{1}(a_t=1)}{\pi_t}\xi_t\inp{\hat L_1\hat L_1^{\top} X_t \hat R_1\hat R_1^{\top}}{Q}.
\end{align*}
The term $\widehat{V}_1$ is a \emph{weighted} sum of the samples $X_t$ assigned to arm $1$, with weights given by the inverse propensity scores $\pi_t^{-1}$. Unlike traditional offline settings where samples are i.i.d. and contribute equally to the variance, each sample here contributes differently to the overall variance. For any $\delta>0$, we define $\Omega_1(\delta):=\big\{X\in\calX: \inp{M_1-M_0}{X}>\delta \big\}$, $\Omega_0(\delta):=\big\{X\in\calX: \inp{M_0-M_1}{X} <\delta \big\}$, and $\Omega_{\emptyset}(\delta):=\big(\Omega_1(\delta)\cup\Omega_0(\delta)\big)^c$. Here, $\Omega_1(\delta)$  represents instances where arm 1 is optimal with an expected regret gap exceeding $\delta$. The parameter $\delta$ denotes the reward gap between the optimal and suboptimal arms. For simplicity, we abbreviate these sets as $\Omega_0, \Omega_1$, and $ \Omega_{\emptyset}$, respectively, omitting their dependence on $\delta$. 

The sets $\Omega_1$ and $\Omega_0$ contribute differently to the variance of $\inp{\hat M_1}{Q}$. 
After phase one, the online estimators $\widehat{M}_{1,t}$ and $\widehat{M}_{0,t}$ are sufficiently accurate so that the optimal arms for samples from $\Omega_1$ and $\Omega_0$ are identifiable. Consequently, as the exploration probability $\varepsilon_t$ diminishes, observations $X_t\in\Omega_1$ are used to construct $\hat M_1$ with propensities $\pi_t \to 1$, while $X_t\in\Omega_0$ have $\pi_t \to 0$ as $t\to\infty$. Specifically, with $\varepsilon_t\asymp t^{-\gamma}$,  each $X_t\in\Omega_1$ contributes $\asymp 1/T^2$ to the variance, and each $X_t\in\Omega_0$ contributes $\asymp t^{\gamma}/T^2$. We demonstrate these variance contributions through simulation experiments, presenting box plots of samples from $\Omega_1$ and $\Omega_0$ used in constructing $\widehat{M}_1$ and $\widehat{M}_0$, respectively, in Figure \ref{varianceplot}. 

\begin{figure} 
	\centering
		\includegraphics[scale=0.7]{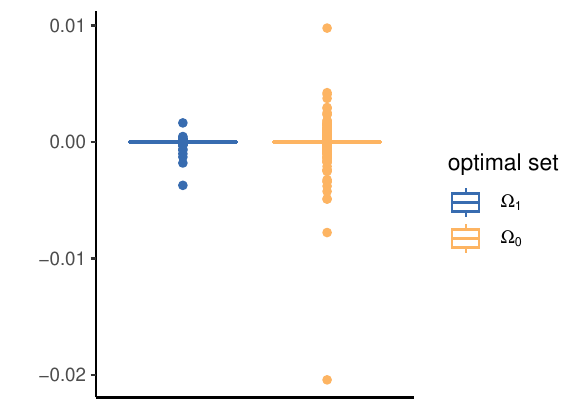}
		\includegraphics[scale=0.7]{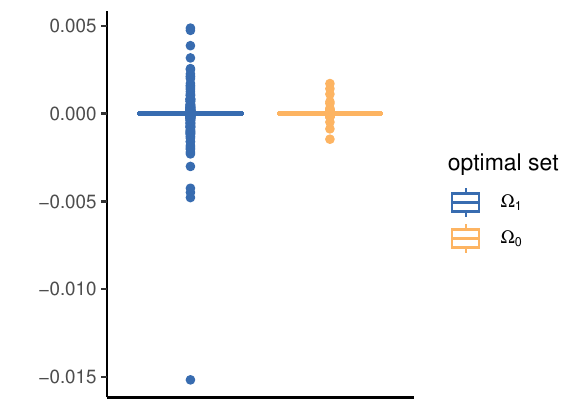}
    \caption{Box plots of $\mathbbm{1}(a_t=1)/\pi_t\cdot\xi_t\inp{\hat L_1\hat L_1^{\top}X_t\hat R_1\hat R_1^{\top}}{Q}$ (left) and $\mathbbm{1}(a_t=0)/(1-\pi_t)\cdot\xi_t\inp{\hat L_0\hat L_0^{\top}X_t\hat R_0\hat R_0^{\top}}{Q}$ (right), grouped by samples $X_t$ from $\Omega_1$ and $\Omega_0$ in a simulation with $T=60,000$ and $T_0=20,000$. Specifically, arm 1 was assigned to 18,761 samples from $\Omega_1$ and 1,281 from $\Omega_0$, while arm 0 was assigned to 1,285 samples from $\Omega_1$ and 18,673 from $\Omega_0$. This demonstrates that most samples assigned to arm $i$ originate from $\Omega_i$. However, samples from $\Omega_{1-i}$ exhibit significantly higher variance, which primarily determines the variance of $\inp{\widehat{M}_i}{Q}$. }
    \label{varianceplot} 
\end{figure}

We assume that these variances satisfy the following arm optimality conditions. Let $\calP_{\Omega_1}(M)$ denote the operator which zeros out the entries of $M$ except those in the set $\Omega_1$. 
\begin{Assumption}\label{assump:arm-opt}(Arm Optimality)
There exists a gap $\delta>0$ such that the sets $\Omega_0,  \Omega_1$, and $\Omega_{\emptyset}$ ensure 
$\fro{\calP_{\Omega_{\emptyset}}\calP_{M_i}(Q)}^2/\min\{\fro{\calP_{\Omega_{1}}\calP_{M_i}(Q)}^2, \fro{\calP_{\Omega_{0}}\calP_{M_i}(Q)}^2\}=o(1)$ as $d_1, d_2\to\infty$ for $i=0,1$.
\end{Assumption}

The arm optimality assumption is prevalent in bandit literature \citep{goldenshluger2013linear, bastani2020online}, typically requiring a constant gap between arms, holding with some probability. In contrast, Assumption~\ref{assump:arm-opt} allows the gap to diminish as $d_1, d_2\to\infty$, representing a weaker condition. Additionally, our assumption aligns with the margin condition \citep{audibert2007fast,qian2011performance} commonly used in bandit studies \citep{bastani2020online, chen2021statistical}, which requires that $\PP(|\inp{M_1-M_0}{X}|<\kappa)\leq C\kappa$ for some constant $C>0$ and all $\kappa>0$. Both conditions ensure that the proportion of samples near the decision boundary $\inp{M_1-M_0}{X}=0$ remains small, thereby reducing the risk of missing the optimal arm due to minor estimation errors in $\widehat{M}_{1,t}$ and $\widehat{M}_{0,t}$.  
Selecting the sub-optimal arms can lead to unstable variance during statistical inference, as samples near the decision boundary can contribute either $\asymp 1/T^2$ or $\asymp t^{\gamma}/T^2$ to the variance of $\inp{\widehat{M}_1}{Q}$ and $\inp{\widehat{M}_0}{Q}$, depending on the decaying propensity, even as $t\to\infty$. To ensure stable convergence of variance, we assume that the variance contributed by samples in $\Omega_{\emptyset}$ is negligible compared to those from $\Omega_0$ and $\Omega_1$. However, these samples do not impact policy learning, as they still facilitate the estimation of both  $M_1$ and $M_0$ despite potential selection of sub-optimal arms. Consequently, this assumption is not required in the earlier sections.

The following theorem instates the central limit theorem for $\langle \hat M_1, Q\rangle$ and $\langle \hat M_0, Q\rangle$. Here $\|Q\|_{\ell_1}$ denotes the vectorized $\ell_1$-norm, i.e., $\|Q\|_{\ell_1}=\sum_{X\in\calX} |\langle Q, X\rangle|$.

\begin{Theorem}\label{thm:CLT}
Suppose that the conditions in Theorem \ref{thm:MCB-conv} hold and Assumption~\ref{assump:arm-opt} holds with $\delta^2:=\delta_T^2:=C_1(\sigma_0^2+\sigma_1^2)(rd_1/T^{1-\gamma})\log^4d_1$ for some large constant $C_1>0$. Define $S_1^2=T^{-\gamma}\fro{\calP_{\Omega_1}\calP_{M_1}(Q)}^2+C_{\gamma}\fro{\calP_{\Omega_0}\calP_{M_1}(Q)}^2$ and $S_0^2=T^{-\gamma}\fro{\calP_{\Omega_0}\calP_{M_0}(Q)}^2+C_{\gamma}\fro{\calP_{\Omega_1}\calP_{M_0}(Q)}^2$, where $C_{\gamma}=2/(c_2(1+\gamma))$ and $c_2$ is defined in Theorem \ref{thm:MCB-conv}. Assume that for each $i=0,1$, 
\begin{align}\label{eq:CLT-cond}
\frac{r^2d_1\log^5d_1}{T^{1-\gamma}} + \frac{\sigma_i}{\lambda_{\min}}\sqrt{\frac{rd_1^2\log d_1}{T^{1-\gamma}}} \frac{\|Q\|_{\ell_1}}{S_i} + \frac{\sigma_i}{\lambda_{\min}}\sqrt{\frac{rd_1^2d_2\log d_1}{T^{1-\gamma}}} \rightarrow 0
\end{align}
as $d_1, d_2, T\to\infty$. Then, 
\begin{align}\label{eq:CLT}
    \frac{\langle \hat M_0, Q\rangle - \langle M_0, Q\rangle}{\sigma_0S_0 \sqrt{d_1d_2/T^{1-\gamma}}}\rightarrow N(0,1)\quad {\rm and}\quad 
    \frac{\langle \hat M_1, Q\rangle - \langle M_1, Q\rangle}{\sigma_1S_1 \sqrt{d_1d_2/T^{1-\gamma}}}\rightarrow N(0,1),
\end{align}
as $d_1,d_2, T\to\infty$. 
\end{Theorem}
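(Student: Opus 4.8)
The plan is to write the centered, spectrally-projected statistic $\langle \hat M_1 - M_1, Q\rangle$ as a leading martingale term plus remainders that are negligible against the target standard deviation $\sigma_1 S_1\sqrt{d_1d_2/T^{1-\gamma}}$, and then to invoke a triangular-array martingale CLT. Starting from the debiased estimator in (\ref{eq:debias-aw}) with $r_t=\langle M_1,X_t\rangle+\xi_t$, I would first expand. Because each $\hat M_{1,t-1}$ is $\calF_{t-1}$-measurable and independent of $(X_t,a_t,\xi_t)$ given $\calF_{t-1}$, the residuals $r_t-\langle \hat M_{1,t-1},X_t\rangle$ split into a pure-noise part $\frac{\mathbbm{1}(a_t=1)}{\pi_t}\xi_t X_t$ and an estimation part $\frac{\mathbbm{1}(a_t=1)}{\pi_t}\langle M_1-\hat M_{1,t-1},X_t\rangle X_t$. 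The debiasing identity, together with the running average $\frac{1}{T-T_0}\sum \hat M_{1,t-1}$, cancels the first-order gradient-descent bias so that the estimation part is higher order; the noise part, projected against $Q$, is the martingale $\hat V_1$, since $\EE[\mathbbm{1}(a_t=1)\xi_t/\pi_t\mid\calF_{t-1},X_t]=0$. A key point of this step is that linearizing the spectral projection $\hat M_1=\hat L_1\hat L_1^{\top}\hat M_1^{\uipw}\hat R_1\hat R_1^{\top}$ around the true subspaces converts the bilinear sensitivity $\langle \hat L_1\hat L_1^{\top}X_t\hat R_1\hat R_1^{\top},Q\rangle$ into the \emph{tangent-space} sensitivity $\langle X_t,\calP_{M_1}(Q)\rangle$: the off-diagonal blocks of $Q$ relative to $(L_1,R_1)$ are supplied by the first-order rotation of $\hat L_1,\hat R_1$, and this is precisely why $\calP_{M_1}(Q)$, rather than $L_1L_1^{\top}QR_1R_1^{\top}$, enters the limiting variance.

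The crux is the conditional-variance computation for $\hat V_1$. Conditioning on $\calF_{t-1}$ and $X_t$, the per-step conditional variance is $\sigma_1^2\pi_t^{-1}\langle X_t,\calP_{M_1}(Q)\rangle^2$, so the accumulated conditional variance is governed by the decay of $\pi_t$ across the regions $\Omega_1,\Omega_0,\Omega_{\emptyset}$. Invoking Theorem~\ref{thm:MCB-conv}, after $T_0$ the online estimators are accurate enough in sup-norm that the greedy indicator $\mathbbm{1}(\langle \hat M_{1,t-1}-\hat M_{0,t-1},X_t\rangle>0)$ correctly identifies the optimal arm on $\Omega_1\cup\Omega_0$ once the gap $\delta=\delta_T$ dominates the entrywise error; hence $\pi_t\to1$ on $\Omega_1$ and $\pi_t\asymp \eps_t/2\asymp t^{-\gamma}$ on $\Omega_0$. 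Summing $\sum_{t=T_0+1}^{T}\pi_t^{-1}$ over each region — using $\sum_t 1\asymp T$ on $\Omega_1$ and $\sum_t t^{\gamma}\asymp T^{1+\gamma}/(1+\gamma)$ on $\Omega_0$ — and averaging $\langle X,\calP_{M_1}(Q)\rangle^2$ over uniform $X$, which produces $d_1^{-1}d_2^{-1}\fro{\calP_{\Omega_{\bullet}}\calP_{M_1}(Q)}^2$ on each region, yields after the $\frac{d_1d_2}{T-T_0}$ normalization exactly the combination $T^{-\gamma}\fro{\calP_{\Omega_1}\calP_{M_1}(Q)}^2+C_{\gamma}\fro{\calP_{\Omega_0}\calP_{M_1}(Q)}^2=S_1^2$ with $C_{\gamma}=2/(c_2(1+\gamma))$. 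The Arm Optimality Assumption~\ref{assump:arm-opt}, with the stated choice $\delta^2=\delta_T^2$, is what guarantees that the $\Omega_{\emptyset}$ contribution, where $\pi_t$ need not stabilize, is asymptotically negligible relative to $S_1^2$.

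Before applying the CLT I must make rigorous the replacement of the data-dependent projections $\hat L_1,\hat R_1$ by the deterministic $L_1,R_1$; this is controlled by subspace perturbation bounds (Davis--Kahan combined with the Frobenius and sup-norm rates of Theorem~\ref{thm:MCB-conv}) and the incoherence of $M_1$, which bound $\|\hat L_1\hat L_1^{\top}-L_1L_1^{\top}\|$ and its row-wise analogue. With the true subspaces fixed, $\hat V_1=\sum_t\zeta_t$ is a genuine martingale, and I verify a conditional Lindeberg (or Lyapunov) condition for the array $\{\zeta_t\}$ using sub-Gaussianity of $\xi_t$, the incoherence bound $|\langle X_t,\calP_{M_1}(Q)\rangle|\le \|\calP_{M_1}(Q)\|_{\max}$, and the propensity decay, showing the maximal summand is $o_p$ of the standard deviation. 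Condition (\ref{eq:CLT-cond}) is calibrated to annihilate the three error sources: the first term controls the higher-order estimation remainder, the middle term (carrying $\|Q\|_{\ell_1}/S_i$) controls the cross bias between $M_1-\hat M_{1,t-1}$ and $Q$, and the last term governs the spectral-projection linearization error. Combining conditional-variance convergence with Lindeberg delivers (\ref{eq:CLT}) for $\langle \hat M_1,Q\rangle$; the argument for $\langle \hat M_0,Q\rangle$ is symmetric, swapping the roles of $\Omega_0,\Omega_1$ and replacing $\pi_t$ by $1-\pi_t$.

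The main obstacle will be the conditional-variance convergence, and within it the treatment of the $\Omega_0$ samples assigned to arm $1$: these are rare (probability $\asymp t^{-\gamma}$) yet enter with large inverse-propensity weight $\asymp t^{\gamma}$, so they determine the $C_{\gamma}$ term of $S_1^2$ while simultaneously threatening the Lindeberg condition. Establishing that the random propensities concentrate around their ideal values $1-\eps_t/2$ on $\Omega_1$ and $\eps_t/2$ on $\Omega_0$ — which rests on showing the greedy step almost never misclassifies the optimal arm on $\Omega_1\cup\Omega_0$ once $\delta_T$ dominates the entrywise estimation error — is the technically delicate part. This is exactly where the sharp sup-norm rate of Theorem~\ref{thm:MCB-conv} and the precise gap calibration $\delta^2=\delta_T^2$ become indispensable.
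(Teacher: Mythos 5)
Your proposal is correct and follows essentially the same route as the paper's proof: the same split of $\hat M_1^{\uipw}$ into the noise martingale $\widehat Z_1$ and the centered estimation term $\widehat Z_2$, the same reduction of the main term to $\langle \widehat Z_1, \calP_{M_1}(Q)\rangle$ with the projection-swap and higher-order terms shown negligible under the stated rate condition, the same conditional-variance computation over $\Omega_1,\Omega_0,\Omega_{\emptyset}$ (indicator stabilization giving $\pi_t\approx 1-\eps_t/2$ on $\Omega_1$ and $\pi_t\approx \eps_t/2$ on $\Omega_0$, hence $S_1^2$ with $C_{\gamma}=2/(c_2(1+\gamma))$), and the same martingale CLT with Lindeberg verification. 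The only cosmetic difference is that the paper makes your first-order ``linearization'' of the spectral projection exact via the spectral representation series for the dilated projector, bounding all higher-order terms at once, which is what your perturbation-bound step would have to deliver anyway.
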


Theorem~\ref{thm:CLT} shows that the variance of $\langle \hat M_1, Q\rangle$ consists of two components:
\begin{equation}\label{eq:var-two-terms}
\frac{\sigma_1^2d_1d_2}{T}\big\|\calP_{\Omega_1}\calP_{M_1}(Q)\big\|_{\rm F}^2+\frac{2\sigma_1^2d_1d_2}{c_2(1+\gamma)T^{1-\gamma}}\big\|\calP_{\Omega_0}\calP_{M_1}(Q)\big\|_{\rm F}^2,
\end{equation}
where the first term arises from exploitation with samples in $\Omega_1$, and the second from exploration with samples in $\Omega_0$. When the arm optimality sets are balanced, such that $\|\calP_{\Omega_1}\calP_{M_1}(Q)\|_{\rm F}$ and $\|\calP_{\Omega_0}\calP_{M_1}(Q)\|_{\rm F}$ are comparable, the variance is primarily determined by the second term. Consequently, the error rate of $\langle \hat M_1, Q\rangle$ decreases at the order $O\big(T^{-(1-\gamma)/2}\big)$, aligning with the convergence rate of the online estimators produced by Algorithm~\ref{alg:mcb1}.

Our method for policy inference only requires that $\|\widehat{M}_{i,t}-M_i\|_{\max}^2=o_p(\sigma_i^2)$ for $i\in\{0,1\}$. This condition is satisfied at time $T_0$ with a constant exploration probability during phase one, as established in Theorem~\ref{thm:MCB-conv}. For the learning phase after $T_0$, the exploration probability can follow any schedule $\varepsilon_t\asymp t^{-\gamma}$ with $\gamma\in[0,1)$, including a constant $\varepsilon_t$. 
Choosing an appropriate $\gamma$ to ensure the decay of $\varepsilon_t$ is crucial for achieving nontrivial regret while maintaining inference efficiency. Specifically, a constant $\varepsilon_t$ yields the most efficient, asymptotically normal estimator with variance $O(1/T)$ but results in a regret of $O(T)$, highlighting the trade-off between minimizing regret and optimizing inference efficiency. This trade-off is further supported by \cite{simchi2023multi}, which establishes a minimax lower bound indicating that the product of inference error and the square root of regret remains a constant order in the worst case: with regret $O(T^{1-\gamma})$, the estimation error will be $O(T^{-(1-\gamma)/2})$. To achieve a balance, a decaying $\varepsilon_t$ with $\gamma=1/3$ offers a regret upper bound of $O(T^{2/3})$ and a policy inference variance of $O(T^{-2/3})$.

\subsection{Policy inference}

The asymptotic normality established in Theorem~\ref{thm:CLT} depends on some unknown quantities such as $\sigma_0, \sigma_1, \|\mathcal{P}_{\Omega_1}\calP_{M_0}(Q)\|_{\rm F}, \|\mathcal{P}_{\Omega_0}\calP_{M_0}(Q)\|_{\rm F}, \|\mathcal{P}_{\Omega_1}\calP_{M_1}(Q)\|_{\rm F}$, and $\|\mathcal{P}_{\Omega_0}\calP_{M_1}(Q)\|_{\rm F}$. These quantities must be estimated for the purposes of constructing confidence intervals and testing hypothesis (\ref{eq:test-H}). Towards that end, we propose the following estimators:
\begin{align*}
    &\hat\sigma_i^2:=\frac{1}{T-T_0}\sum_{t=T_0+1}^T \frac{\mathbbm{1}(a_t=i)}{i\pi_t+(1-i)(1-\pi_t)} \big(r_t-\langle \hat M_{i, t-1}, X_t\rangle\big)^2 \\
    &\widehat{S}_i^2:= \left(1/T^{\gamma}\fro{\calP_{\widehat{\Omega}_{i,T}}\calP_{\hat M_i}(Q)}^2+C_{\gamma}\fro{\calP_{\widehat{\Omega}_{1-i,T}}\calP_{\hat M_i}(Q)}^2\right)b_T.
\end{align*}
where $\widehat{\Omega}_{i,T}=\{X\in \mathcal{X}: \inp{\widehat{M}_{i,T}-\widehat{M}_{1-i,T}}{X}>0\}$ for $i=0,1$ and $b_T=T/(T-T_0)$. A larger $T_0$ would result in slightly wider confidence intervals; however, as $T$ approaches infinity, its impact becomes negligible since $b_T$ converges to one. In the supplementary materials, we show that all these estimators are consistent, and the asymptotic normality still holds when replacing the standard deviations by its data-driven estimates. 

\begin{Theorem}
\label{thm:studentized-CLT}
Suppose that the conditions in Theorem~\ref{thm:CLT} hold. Then,  
\begin{align*}
    \frac{\langle \hat M_0, Q\rangle - \langle M_0, Q\rangle}{\hat\sigma_0 \widehat{S}_0 \sqrt{d_1d_2/T^{1-\gamma}}}\rightarrow N(0,1)\quad {\rm and}\quad 
    \frac{\langle \hat M_1, Q\rangle - \langle M_1, Q\rangle}{\hat\sigma_1 \widehat{S}_1 \sqrt{d_1d_2/T^{1-\gamma}}}\rightarrow N(0,1),
\end{align*}
as $d_1,d _2, T\to\infty$.  
\end{Theorem}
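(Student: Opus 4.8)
The plan is to deduce Theorem~\ref{thm:studentized-CLT} from the non-studentized central limit theorem of Theorem~\ref{thm:CLT} via Slutsky's theorem. Writing the studentized statistic for $i\in\{0,1\}$ as
\[
\frac{\langle \hat M_i, Q\rangle - \langle M_i, Q\rangle}{\hat\sigma_i \widehat{S}_i \sqrt{d_1d_2/T^{1-\gamma}}} = \frac{\langle \hat M_i, Q\rangle - \langle M_i, Q\rangle}{\sigma_i S_i \sqrt{d_1d_2/T^{1-\gamma}}}\cdot \frac{\sigma_i S_i}{\hat\sigma_i \widehat{S}_i},
\]
the first factor already converges to $N(0,1)$ by Theorem~\ref{thm:CLT}. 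Hence it suffices to show that $\hat\sigma_i/\sigma_i\to 1$ and $\widehat{S}_i/S_i\to 1$ in probability. Since the factor $b_T=T/(T-T_0)$ tends to $1$ (as noted before the theorem statement, because phase one is asymptotically negligible, $T_0/T\to 0$), the latter reduces to showing the unweighted version of $\widehat{S}_i^2$ is consistent for $S_i^2$.

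For the noise-variance estimator, I would expand the residual on the event $\{a_t=i\}$ as $r_t-\langle \hat M_{i,t-1}, X_t\rangle=\xi_t+\langle M_i-\hat M_{i,t-1}, X_t\rangle$ and square it. Conditioning on $\calF_{t-1}$ and using the IPW identity $\EE[\mathbbm{1}(a_t=i)/(i\pi_t+(1-i)(1-\pi_t))\mid\calF_{t-1},X_t]=1$ together with the uniform sampling of $X_t$ over $\calX$, the squared-bias contribution has conditional mean $\fro{\hat M_{i,t-1}-M_i}^2/(d_1d_2)$, which is $o(\sigma_i^2)$ by the Frobenius rate in Theorem~\ref{thm:MCB-conv}, while the cross term has conditional mean zero. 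The leading IPW-weighted average of $\xi_t^2$ is a martingale-driven sum with conditional mean $\sigma_i^2$, so a Freedman/Bernstein martingale concentration inequality yields $\hat\sigma_i^2\to\sigma_i^2$ in probability.

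For $\widehat{S}_i^2$ I would establish two approximations and combine them. First, projection consistency: because Theorem~\ref{thm:MCB-conv} gives $\fro{\hat M_i-M_i}=o_p(\lambda_{\min})$ against the eigengap $\lambda_{\min}$, a Davis--Kahan $\sin\Theta$ argument shows the estimated singular subspaces $\hat L_i,\hat R_i$ are close to $L_i,R_i$, so that $\fro{\calP_{\hat M_i}(Q)-\calP_{M_i}(Q)}$ is negligible relative to $\fro{\calP_{M_i}(Q)}$. Second, set consistency: the estimated optimality set $\widehat{\Omega}_{i,T}=\{X:\inp{\hat M_{i,T}-\hat M_{1-i,T}}{X}>0\}$ can misclassify an entry $X$ only when $|\inp{M_1-M_0}{X}|$ lies below the sup-norm error of $\hat M_{1,T}-\hat M_{0,T}$. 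Choosing the gap $\delta=\delta_T$ equal to that squared sup-norm rate forces every misclassified entry into $\Omega_{\emptyset}$, so that $\widehat{\Omega}_{i,T}\,\triangle\,\Omega_i\subseteq\Omega_{\emptyset}$ and its Frobenius mass $\fro{\calP_{\Omega_{\emptyset}}\calP_{M_i}(Q)}^2$ is negligible compared with $\min\{\fro{\calP_{\Omega_1}\calP_{M_i}(Q)}^2,\fro{\calP_{\Omega_0}\calP_{M_i}(Q)}^2\}$ by Assumption~\ref{assump:arm-opt}. Substituting both approximations into the definitions of $S_i^2$ and $\widehat{S}_i^2$, and recalling $b_T\to 1$, gives $\widehat{S}_i^2/S_i^2\to 1$ in probability.

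The main obstacle is the set-consistency step: one must convert the entry-wise accuracy of the adaptively trained online estimators into correct recovery of the arm-optimality sets, and quantify the Frobenius contribution of the boundary region $\Omega_{\emptyset}$. This is delicate because $\widehat{\Omega}_{i,T}$ is random and statistically coupled to the adaptively collected data, and because the gap $\delta_T$ is permitted to vanish. The argument hinges on matching $\delta_T$ in Assumption~\ref{assump:arm-opt} exactly to the sup-norm rate of Theorem~\ref{thm:MCB-conv}, so that misclassification is confined to $\Omega_{\emptyset}$, and on combining this with the projection-perturbation bound to pass uniformly from $\calP_{M_i}(Q)$ restricted to $\Omega_i$ to $\calP_{\hat M_i}(Q)$ restricted to $\widehat{\Omega}_{i,T}$ over the relevant index sets.
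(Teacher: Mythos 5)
Your overall architecture is the same as the paper's: reduce to Slutsky's theorem; prove $\hat\sigma_i^2\overset{p}{\to}\sigma_i^2$ by expanding the residual into squared-bias, cross, and noise terms and applying martingale Bernstein inequalities; and prove consistency of the set part of $\widehat{S}_i^2$ by matching the gap $\delta_T$ in Assumption~\ref{assump:arm-opt} to the sup-norm rate of Theorem~\ref{thm:MCB-conv}, so that misclassified entries are confined to $\Omega_{\emptyset}$ and contribute negligibly (this is exactly the paper's Lemma~\ref{indicator}). Those components are fine.

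The genuine gap is in your projection-consistency step. You propose to control $\fro{\calP_{\hat M_i}(Q)-\calP_{M_i}(Q)}$ by an operator-norm Davis--Kahan $\sin\Theta$ bound and claim it is negligible relative to $\fro{\calP_{M_i}(Q)}$. An operator-norm bound only gives $\fro{\calP_{\hat M_i}(Q)-\calP_{M_i}(Q)}\lesssim \|\sin\Theta\|\,\fro{Q}$, and for the localized linear forms this theorem targets (e.g.\ $Q=e_{j_1}e_{j_2}^{\top}$) one has $\fro{Q}/\fro{\calP_{M_i}(Q)}\asymp\sqrt{d_2/(\mu r)}$, so your route requires $\|\sin\Theta\|\ll\sqrt{\mu r/d_2}$. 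Under the hypotheses of Theorem~\ref{thm:CLT} one can only guarantee $\|\sin\Theta\|\lesssim \frac{\sigma_i}{\lambda_{\min}}\sqrt{d_1^2d_2\log d_1/T^{1-\gamma}}$, which is far too large: the requirement $\|\sin\Theta\|\ll\sqrt{\mu r/d_2}$ amounts to an SNR condition stronger by a factor of order $\sqrt{d_2/r}$ than (\ref{eq:CLT-cond}), so this step fails under the stated assumptions. The paper avoids this loss by bounding $\fro{(LL^{\top}-\hat L\hat L^{\top})Q}\le\|Q\|_{\ell_1}\,\|LL^{\top}-\hat L\hat L^{\top}\|_{2,\max}$ and invoking the row-wise perturbation bound $\|LL^{\top}-\hat L\hat L^{\top}\|_{2,\max}\lesssim \frac{\sigma_i}{\lambda_{\min}}\sqrt{\mu r d_1d_2\log d_1/T^{1-\gamma}}$ of Lemma~\ref{induction:new}, which gains the factor $\sqrt{\mu r/d_1}$ over the operator norm; this is precisely why condition (\ref{eq:CLT-cond}) contains the term $\|Q\|_{\ell_1}/S_i$. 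A related inaccuracy compounds the problem: the subspaces entering $\widehat{S}_i$ are those of the IPW-debiased estimator $\hat M_i^{\uipw}$, not of the online iterate $\hat M_{i,T}$, so the perturbation is driven by $\hat Z=\hat M_i^{\uipw}-M_i$ and must be controlled by the matrix-martingale arguments of Lemmas~\ref{Zbound:new} and~\ref{induction:new}, not by the Frobenius rate of Theorem~\ref{thm:MCB-conv} that you cite.
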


Finally,  we need to study the distribution of the point estimator $\langle \hat M_1-\hat M_0, Q\rangle=\langle \hat M_1, Q\rangle-\langle \hat M_0, Q\rangle$ and its studentized version to test between the two hypotheses in (\ref{eq:test-H}). A simple fact is that $\langle \hat M_1, Q\rangle$ and $\langle \hat M_0, Q\rangle$ are uncorrelated, and the following asymptotic normality holds. This immediately enables the classic Z-test and the construction of confidence intervals.

\begin{Corollary}\label{cor:m1-m0}
Suppose that the conditions in Theorem~\ref{thm:CLT} hold. Then, 
 \begin{align*}
      \frac{\langle \hat M_1 -\hat M_0, Q\rangle - \langle M_1-M_0, Q\rangle}{\sqrt{\big(\hat\sigma_0^2\widehat{S}_0^2+\hat\sigma_1^2\widehat{S}_1^2 \big)d_1d_2/T^{1-\gamma}}} \longrightarrow N(0,1),
 \end{align*}
as $d_1,d _2, T\to\infty$. 
\end{Corollary}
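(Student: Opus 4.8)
The plan is to reduce Corollary~\ref{cor:m1-m0} to the marginal central limit theorems already established in Theorems~\ref{thm:CLT} and~\ref{thm:studentized-CLT}, exploiting the exact uncorrelatedness of the two linear forms. Writing the numerator as a difference,
\[
\inp{\hat M_1 - \hat M_0}{Q} - \inp{M_1 - M_0}{Q} = \big(\inp{\hat M_1}{Q} - \inp{M_1}{Q}\big) - \big(\inp{\hat M_0}{Q} - \inp{M_0}{Q}\big),
\]
I would first invoke the decomposition from the proof of Theorem~\ref{thm:CLT}, which shows that for each $i\in\{0,1\}$ the fluctuation $\inp{\hat M_i}{Q}-\inp{M_i}{Q}$ equals the dominant martingale term $\widehat{V}_i$ up to a remainder that is $o_p$ of the standard deviation $\sigma_i S_i\sqrt{d_1d_2/T^{1-\gamma}}$. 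Hence it suffices to prove joint asymptotic normality of the pair $(\widehat{V}_1, \widehat{V}_0)$ and then pass to their difference.

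For the joint limit I would use the Cram\'er--Wold device. Fix scalars $\alpha,\beta$ and consider $\alpha\widehat{V}_1 + \beta\widehat{V}_0$, which is itself a martingale with respect to $\{\calF_t\}$ whose increment at time $t$ is
\[
\frac{d_1d_2}{T-T_0}\,\xi_t\left(\alpha\,\frac{\mathbbm{1}(a_t=1)}{\pi_t}\inp{\hat L_1\hat L_1^{\top}X_t\hat R_1\hat R_1^{\top}}{Q} + \beta\,\frac{\mathbbm{1}(a_t=0)}{1-\pi_t}\inp{\hat L_0\hat L_0^{\top}X_t\hat R_0\hat R_0^{\top}}{Q}\right).
\]
The crucial observation is that at each time $t$ exactly one action is selected, so $\mathbbm{1}(a_t=1)\,\mathbbm{1}(a_t=0)=0$; the cross term in the predictable quadratic variation vanishes identically, and the conditional variance of the increment splits as $\alpha^2$ times the quadratic variation of $\widehat{V}_1$ plus $\beta^2$ times that of $\widehat{V}_0$. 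Applying the martingale central limit theorem to this single combined martingale, exactly as in the proof of Theorem~\ref{thm:CLT}, yields that $\alpha\widehat{V}_1+\beta\widehat{V}_0$ is asymptotically normal with variance $(\alpha^2\sigma_1^2 S_1^2+\beta^2\sigma_0^2 S_0^2)\,d_1d_2/T^{1-\gamma}$. By Cram\'er--Wold, $(\widehat{V}_1,\widehat{V}_0)$ converges jointly to a bivariate normal with diagonal covariance, so the two coordinates are asymptotically independent.

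Specializing to $\alpha=1,\ \beta=-1$ gives that $\widehat{V}_1-\widehat{V}_0$ is asymptotically $N\big(0,(\sigma_1^2 S_1^2+\sigma_0^2 S_0^2)\,d_1d_2/T^{1-\gamma}\big)$, and the $o_p$ remainders in the two decompositions add to something still negligible relative to this combined scale, so the unstudentized ratio converges to $N(0,1)$. To pass to the studentized statement I would apply Slutsky's theorem together with the consistency of $\hat\sigma_i^2$ and $\widehat{S}_i^2$ for $\sigma_i^2$ and $S_i^2$, established in the course of proving Theorem~\ref{thm:studentized-CLT}; since the denominator $\sqrt{(\hat\sigma_0^2\widehat{S}_0^2+\hat\sigma_1^2\widehat{S}_1^2)\,d_1d_2/T^{1-\gamma}}$ is a continuous function of these consistent estimators and its ratio to the true combined standard deviation tends to one, the limit $N(0,1)$ is preserved. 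The only point requiring genuine care---and the main obstacle---is verifying the Lindeberg condition and the convergence of the predictable quadratic variation for the combined martingale; but because the cross term is exactly zero, that predictable variation is simply $\alpha^2$ times the variation of $\widehat{V}_1$ plus $\beta^2$ times that of $\widehat{V}_0$, so this reduces to the two marginal verifications already completed in Theorem~\ref{thm:CLT}, and no essentially new estimate is needed.
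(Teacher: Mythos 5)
Your proposal is correct and follows essentially the same route as the paper: the same reduction to the difference of marginal fluctuations, the same key observation that $\mathbbm{1}(a_t=1)\mathbbm{1}(a_t=0)=0$ makes the cross term in the variance vanish (so the asymptotic variances add), the same treatment of the remainders via the negligibility lemmas of Theorem~\ref{thm:CLT}, and the same Slutsky step using the consistency of $\hat\sigma_i^2$ and $\widehat{S}_i^2$ from Theorem~\ref{thm:studentized-CLT}. Your Cram\'er--Wold packaging of the two martingale sums into a single combined martingale is a slightly more explicit formalization of the step the paper handles by the uncorrelatedness computation, but it is the same argument in substance.
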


\section{Numerical Experiments}\label{sec:simulation}
In this section, we present several numerical studies to evaluate the practical performance of our methods.  Due to space constraints,  additional simulation results are provided in Appendix~\ref{app:numerical}.  

\subsection{Statistical inference}
We evaluate the performance of online inference using true matrices $M_1$ and $M_0$, each with dimensions $d_1=d_2=300$ and rank $r=2$. Matrix $M_1$ is generated by applying SVD to a random matrix with entries uniformly drawn from $U(-100,100)$. To create $M_0$, we add a perturbation matrix with entries from $U(-2,2)$ to the aforementioned matrix and then apply SVD to obtain a low-rank matrix. The largest and smallest singular values are approximately $\lambda_{\max}=1981.3$ and $\lambda_{\min}=1950.5$, respectively. The noise levels are set to $\sigma_1=\sigma_0=1$. Initial estimates $\widehat{M}_{1,0}$ and $\widehat{M}_{0,0}$ are obtained by applying the Soft-Impute algorithm \citep{mazumder2010spectral} from randomly sampled offline data.

\emph{Different decaying exploration probabilities.} We evaluate the inference procedure across four scenarios: (1) $\inp{M_1}{e_1e_5^{\top}}$, (2) $\inp{M_0}{e_1e_5^{\top}}$, (3)$\inp{M_0-M_1}{e_1e_5^{\top}}$  and (4) $\inp{M_0}{e_1e_5^{\top} - e_2e_2^{\top}}$. For each scenario, we perform 1,000 independent runs under two settings: $\gamma=\frac{1}{3}$ and $\gamma=\frac{1}{4}$, with a horizon $T=60,000$ and fixed $T_0=20,000$. During phase $\uppercase\expandafter{\romannumeral1}$, the exploration parameter is set to $\varepsilon=0.6$, and in phase $\uppercase\expandafter{\romannumeral2}$, it decays as $\varepsilon_t=10t^{-\gamma}$. The stepsize is $\eta=0.025d_1d_2\log d_1/(T^{1-\gamma}\lambda_{\max})$ for phase $\uppercase\expandafter{\romannumeral1}$ and $\eta_t=\varepsilon_t\eta$ for phase $\uppercase\expandafter{\romannumeral2}$.
Figure~\ref{gamma4} presents the density histograms of $(\inp{\widehat{M}_i}{Q} - \inp{M_i}{Q})/\widehat{\sigma}_i\widehat{S}_i\sqrt{d_1d_2/T^{1-\gamma}}$ for scenarios (1),(2),(4), and $((\inp{\widehat{M}_0}{Q} - \inp{\widehat{M}_1}{Q}) - (\inp{M_0}{Q} - \inp{M_1}{Q}))/\sqrt{(\widehat{\sigma}_1^2\widehat{S}_1^2+\widehat{\sigma}_0^2\widehat{S}_0^2)d_1d_2/T^{1-\gamma}}$ for scenario (3) under $\gamma=1/3$ (top four plots) and $\gamma=1/4$ (bottom four plots). The empirical distributions closely match the standard normal distribution (red curve).

\begin{figure} [h]
	\centering
	\begin{subfigure} {0.24\linewidth} 
		\includegraphics[width=1.1\linewidth]{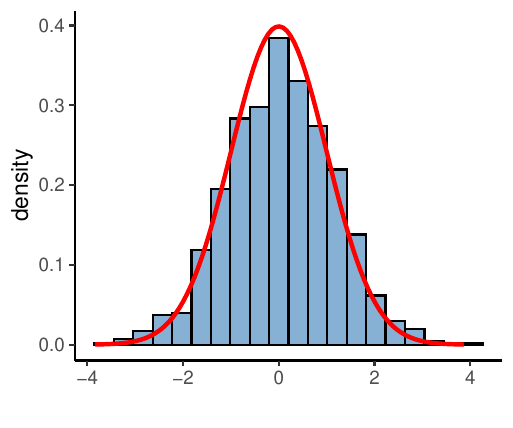}
        \caption{$\inp{M_1}{e_1e_5^{\top}}$}
    \end{subfigure}
    \begin{subfigure} {0.24\linewidth} 
		\includegraphics[width=1.1\linewidth]{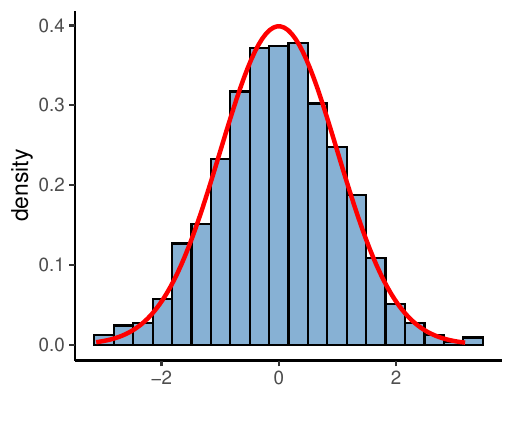}
        \caption{$\inp{M_0}{e_1e_5^{\top}}$}
    \end{subfigure}
    \begin{subfigure} {0.24\linewidth} 
		\includegraphics[width=1.1\linewidth]{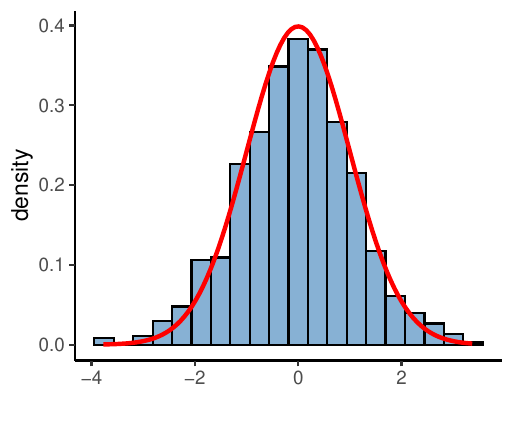}
        \caption{$\inp{M_0-M_1}{e_1e_5^{\top}}$}
    \end{subfigure}
    \begin{subfigure} {0.24\linewidth} 
		\includegraphics[width=1.1\linewidth]{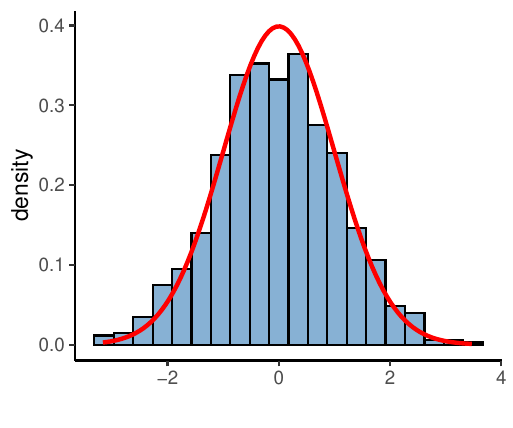}
        \caption{$\inp{M_0}{e_1e_5^{\top} - e_2e_2^{\top}}$}
    \end{subfigure}
	
	\begin{subfigure} {0.24\linewidth} 
		\includegraphics[width=1.1\linewidth]{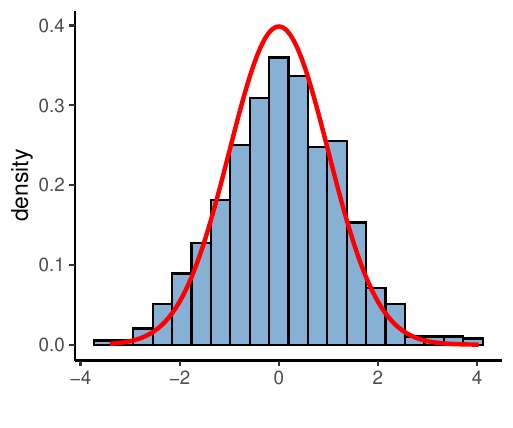}
        \caption{$\inp{M_1}{e_1e_5^{\top}}$}
    \end{subfigure}
    \begin{subfigure} {0.24\linewidth} 
		\includegraphics[width=1.1\linewidth]{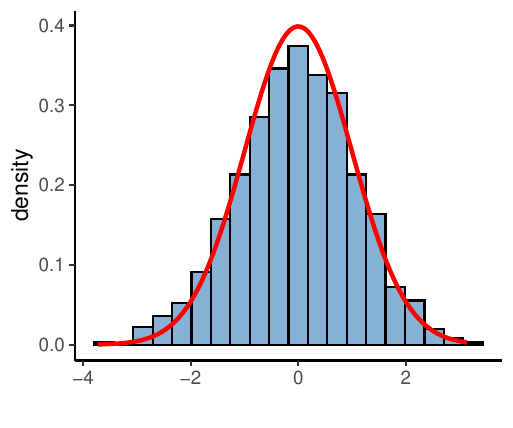}
        \caption{$\inp{M_0}{e_1e_5^{\top}}$}
    \end{subfigure}
    \begin{subfigure} {0.24\linewidth} 
		\includegraphics[width=1.1\linewidth]{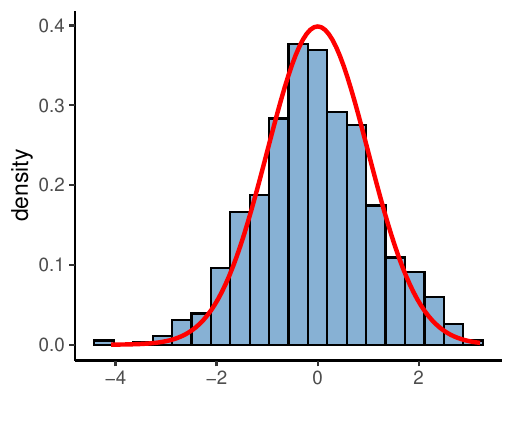}
        \caption{$\inp{M_0-M_1}{e_1e_5^{\top}}$}
    \end{subfigure}
    \begin{subfigure} {0.24\linewidth} 
		\includegraphics[width=1.1\linewidth]{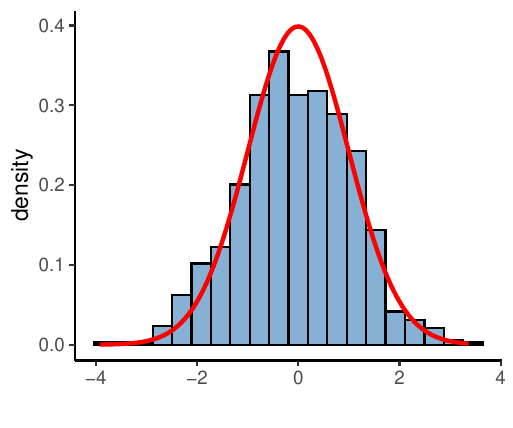}
        \caption{$\inp{M_0}{e_1e_5^{\top} - e_2e_2^{\top}}$}
    \end{subfigure}
	
    \caption{Empirical distributions under $\gamma=1/3$ (top) and $\gamma=1/4$ (bottom) with $T=60,000$ and $T_0=20,000$. The red curve represents the p.d.f. of standard normal distributions.}
    \label{gamma4} 
\end{figure}

\emph{Impact of phase I duration.} We assess the effect of phase I duration by setting $\gamma=1/3$  and a time horizon  $T=30,000$. Phase I lengths were fixed at either $T_0=8,000$ or $T_0=5,000$. As shown in Figure~\ref{T05}, both result in accurate normal approximation. 

\begin{figure} [h]
	\centering
	\begin{subfigure} {0.24\linewidth} 
		\includegraphics[width=1.1\linewidth]{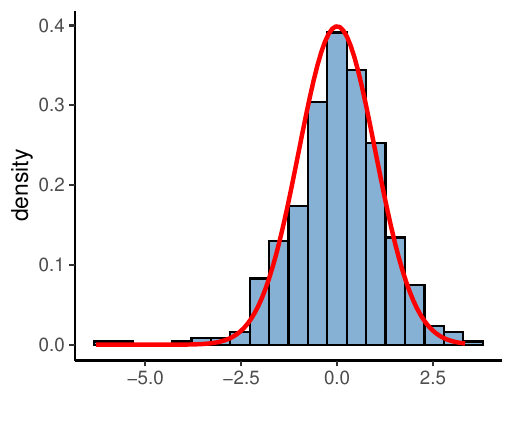}
        \caption{$\inp{M_1}{e_1e_5^{\top}}$}
    \end{subfigure}
    \begin{subfigure} {0.24\linewidth} 
		\includegraphics[width=1.1\linewidth]{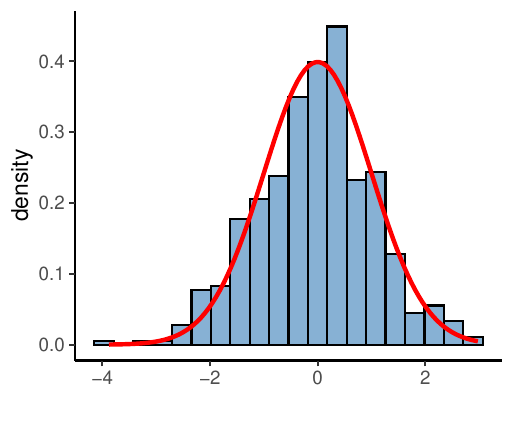}
        \caption{$\inp{M_0}{e_1e_5^{\top}}$}
    \end{subfigure}
    \begin{subfigure} {0.24\linewidth} 
		\includegraphics[width=1.1\linewidth]{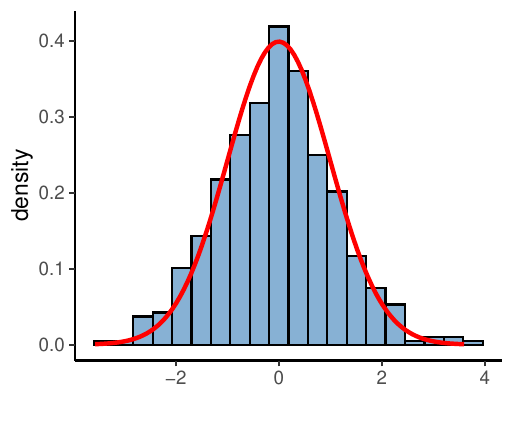}
        \caption{$\inp{M_0-M_1}{e_1e_5^{\top}}$}
    \end{subfigure}
    \begin{subfigure} {0.24\linewidth} 
		\includegraphics[width=1.1\linewidth]{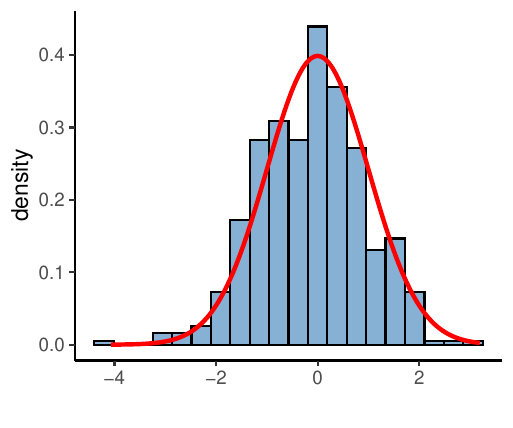}
        \caption{$\inp{M_0}{e_1e_5^{\top} - e_2e_2^{\top}}$}
    \end{subfigure}

	\centering
	\begin{subfigure} {0.24\linewidth} 
		\includegraphics[width=1.1\linewidth]{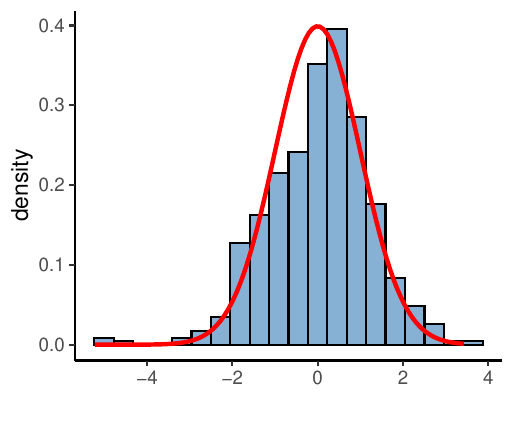}
        \caption{$\inp{M_1}{e_1e_5^{\top}}$}
    \end{subfigure}
    \begin{subfigure} {0.24\linewidth} 
		\includegraphics[width=1.1\linewidth]{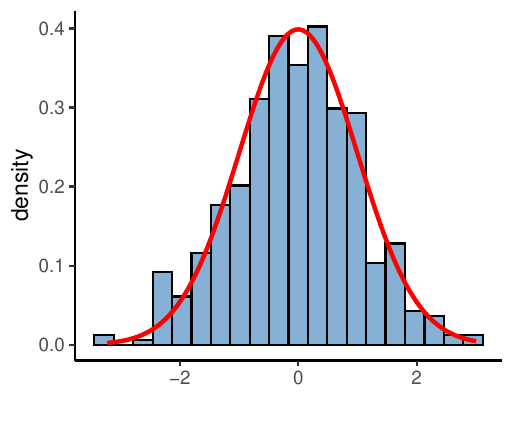}
        \caption{$\inp{M_0}{e_1e_5^{\top}}$}
    \end{subfigure}
    \begin{subfigure} {0.24\linewidth} 
		\includegraphics[width=1.1\linewidth]{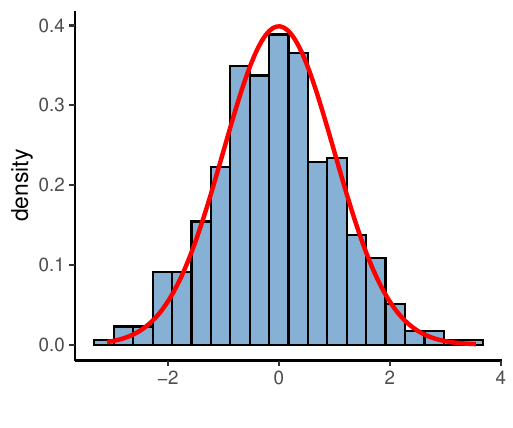}
        \caption{$\inp{M_0-M_1}{e_1e_5^{\top}}$}
    \end{subfigure}
    \begin{subfigure} {0.24\linewidth} 
		\includegraphics[width=1.1\linewidth]{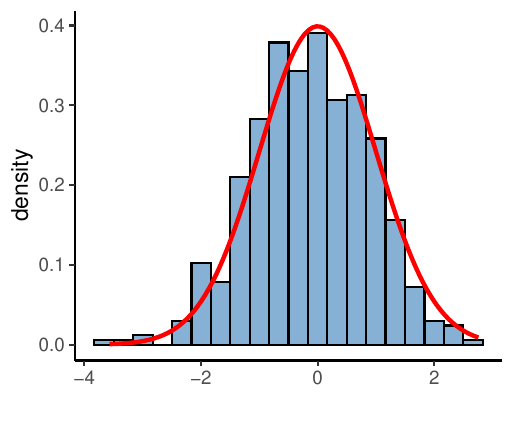}
        \caption{$\inp{M_0}{e_1e_5^{\top} - e_2e_2^{\top}}$}
    \end{subfigure}
	
    \caption{Empirical distributions under $\gamma=1/3$ with $T=30,000$, $T_0=8,000$ (top), and $T_0=5,000$ (bottom). The red curve represents the p.d.f. of standard normal distributions.}
    \label{T05} 
\end{figure}

\subsection{Regret analysis}
Next, we assess the regret of the matrix completion bandit algorithm under identical settings. For $\gamma=1/3$, the time horizon $T$ ranges from 40,000 to 80,000 in increments of 5,000, with a fixed $T_0=13.5T^{1-\gamma}$. Similarly, for $\gamma=\frac{1}{4}$, $T$ varies from 20,000 to 60,000 in increments of 5,000, and $T_0=4.5T^{1-\gamma}$ for each $T$. We conduct 100 simulation trials and plot the average cumulative regret against $T^{2/3}$ when $\gamma=1/3$ and $T^{3/4}$ for $\gamma=1/4$. As shown in Figure~\ref{regretplot}, the cumulative regret exhibits linear scaling, following an $O(T^{2/3})$ trend for $\gamma=1/3$ and an $O(T^{3/4})$ trend for $\gamma=1/4$. These results align with our theoretical findings in Theorem~\ref{thm:regret}.
\begin{figure} [h]
	\centering
		\includegraphics[scale=0.82]{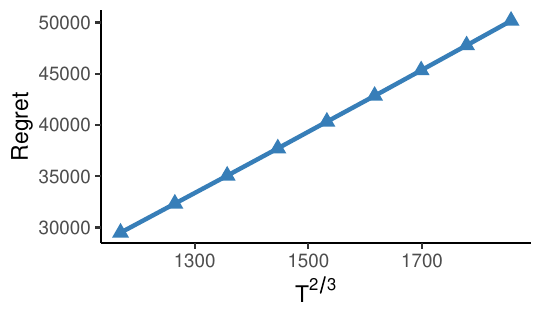}
		\includegraphics[scale=0.82]{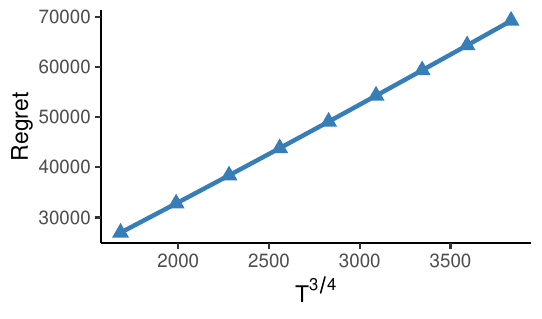}
    \caption{Average empirical cumulative regret against $T^{2/3}$ for $\gamma=\frac{1}{3}$ (left) and $T^{3/4}$ for $\gamma=\frac{1}{4}$ (right) under 100 trials.}
    \label{regretplot} 
\end{figure}

\section{Real Data Analysis}
\subsection{\emph{SFpark} parking pricing project} \label{data:sfpark}

Starting in April 2011, San Francisco Municipal Transportation Agency (SFMTA) adopted the \emph{SFpark} pilot project, which marked the largest pricing reform for on-street parking in San Francisco. The objective of this project was to effectively manage parking towards availability targets. Prior to that, parking meters in San Francisco charged a single hourly price regardless of the time of day or location within a zone. Consequently, certain blocks experienced severe overcrowding especially during peak hours, while others remained underutilized. Recognizing this issue, extensive literature  has identified the on-street parking pricing as the dominant factor that determines parking behavior (\cite{aljohani2021survey}). In general, higher prices encourage a few drivers to move away from the most crowded blocks, while lower prices attract the drivers to the underoccupied blocks. 

\emph{SFpark} thus allocates parking spaces more efficiently than uniform prices can by adjusting the price varied by time of day, day of week and from block to block. The occupancy rate of a block of a given hour is 
defined to be  divided by 3600 seconds multiplied by the number of spaces in the block. SFMTA established the desired target occupancy rate for every block at each hour at between 60\% and 80\%. Numerous previous studies have evaluated the effectiveness of \emph{SFpark} (\cite{pierce2013getting,millard2014curb}). We would like to apply our matrix completion bandit algorithm on \emph{SFpark} data, and determine the optimal parking prices for each block in a specific hour, with the aim of achieving the target occupancy rates for a greater number of blocks and time periods.  

We collected the data from \cite{sfpark_evaluation}. The dataset includes the hourly occupancy rate and price for each block at every hour, covering seven pilot zones and two control zones, across each day throughout the study period from April 2011 to July 2013. The pilot zones implemented a new pricing policy and involved significant data collection, while the control zones had no new policy but still underwent substantial data collection. During the study period, SFMTA executed ten demand-responsive parking price adjustments in the pilot zones, where price for each block at every hour was adjusted gradually and periodically according to the following rules: if the average occupancy rate after last adjustment was between 60\% and 80\%, the hourly price would remain unchanged; if the average occupancy rate was larger than 80\%, the hourly price would be raised; if the occupancy rate was less than 60\%, the hourly price would be lowered. In the following analysis, we will focus on the data from Downtown, which is one of the designated pilot zones, covering the time period from April 1st, 2011, to March 27th, 2012, as an illustrative example. We choose to include only one year of data because the occupancy rate after 2012 can be influenced by confounding factors such as changes in garage policies and overall city traffic developments. During the selected time period, a total of four pricing adjustments occurred on the following dates: August 1st, 2011; October 11th, 2011; December 13th, 2011; and February 14th, 2012. These adjustments allow us to divide the entire time period into five distinct periods for further analysis. 

Most parking meters in Downtown are operational from 7am to 6pm, and the traffic conditions differ between weekdays and weekends. To capture this variation, we define $d_2=22$, representing each hour from 7 am to 6 pm on both weekdays and weekends. Here the action 1 corresponds to a high-price policy, where the hourly price is larger than or equal to \$3.5. Conversely, action 0 represents a low-price policy, with an hourly price below \$3.5. We identified $d_1=34$ blocks that had at least one hour for both actions during the entire time period. The underlying matrices $M_1$ and $M_0$ represent the expected deviation of the real occupancy rate from the target range $[0.6, 0.8]$ across different blocks and hours, under high price and low price, respectively. For instance, when making a decision for block $j_1\in[d_1]$ at hour $j_2\in[d_1]$, the contextual matrix can be represented as $X_t=e_{j_1}e_{j_2}^{\top}$. If we further choose $a_t=1$ for the high price and observe an occupancy rate of 0.7, the reward $r_t$ would be calculated as $r_t=0$; if the actual occupancy rate is 0.5 or 0.9, then $r_t=-0.1$. To simulate the online data collection procedure, we evaluate each observation by comparing the decision made by our algorithm with the actual observed action. If they align, we retain the sample to fit the model and update the online estimators. Otherwise, we discard it and proceed to the next sample. Each hour for every block on each day throughout the entire time period is treated as an observation, resulting in a time horizon $T=105,825$. Given that we have a nearly equal amount of data for each block and hour, we can consider the context $X_t$ to be uniformly distributed, meeting our requirements. We obtain the initial estimates $\widehat{M}_{1,0}$ and $\widehat{M}_{0,0}$ by randomly selecting a subset of samples. Subsequently, scree plots were generated from these estimates to determine the optimal low rank, resulting in $r=5$, as illustrated in Figure~\ref{screesfpark}.

\begin{figure} [h]
	\centering
		\includegraphics[scale=0.7]{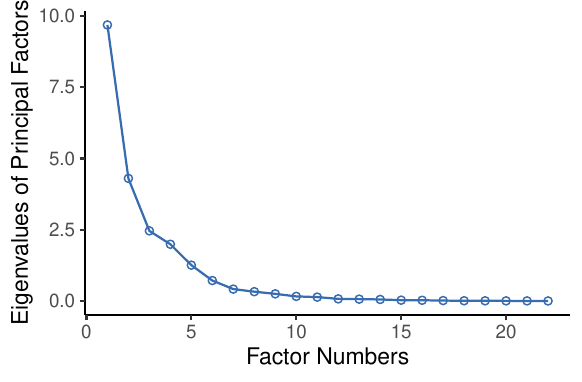}
		\includegraphics[scale=0.7]{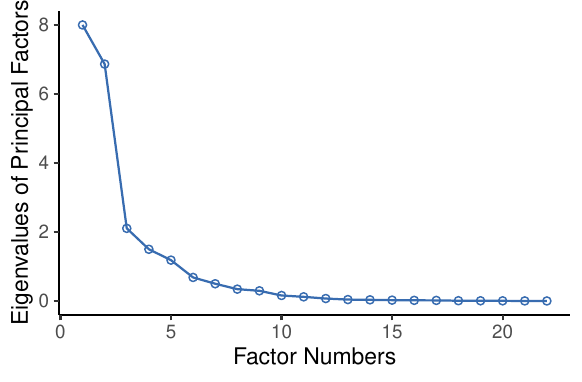}
    \caption{\emph{SFpark}: Scree plots of eigenvalues against factor numbers for $\widehat{M}_{1,0}$ (left) and $\widehat{M}_{0,0}$ (right).}
    \label{screesfpark}
\end{figure}

We applied our learning and inference strategy to two representative blocks: 02ND ST 200 on weekdays and BATTERY ST 400 on weekends, to determine optimal hourly pricing and assess its impact on occupancy rates. Figure~\ref{periodplot} illustrates the average hourly occupancy rates across different periods under the original \emph{SFpark} policy. For comparison, we employed an offline method using all available data to obtain initial estimates for $M_1$ and $M_0$ via the SoftImpute algorithm \citep{mazumder2010spectral}, followed by the debiasing and inference technique from \cite{ma2023multiple}. The results are summarized in Tables~\ref{No2} and \ref{No3}.

\begin{table} [h]
    \caption{\label{No2} The p-values at 95\% confidence level of $H_0:\inp{M_0-M_1}{e_{j_1}e_{j_2}^{\top}}\leq 0 \  vs\  H_1:\inp{M_0-M_1}{e_{j_1}e_{j_2}^{\top}}> 0$ (top three rows) or $H_0:\inp{M_1-M_0}{e_{j_1}e_{j_2}^{\top}}\leq 0 \  vs\  H_1:\inp{M_1-M_0}{e_{j_1}e_{j_2}^{\top}}> 0$ (bottom three rows), where $j_1$ denotes block 02ND ST 200 and $j_2$ denotes weekday 7:00 to 11:00 (top three rows) or  12:00 to 17:00 (bottom three rows).}
    \centering
    \begin{tabular}{ccccccc} 
        \toprule 
        \textbf{Hour} & 7 & 8 & 9 & 10 & 11 \\  
        \midrule 
        p-value(MCBantit) & 0.264 & $<$0.001 & 0.001 & 0.002 & 0.410 \\
        p-value(Offline) & $<$0.001 & $<$0.001 & $<$0.001 & $<$0.001 & $<$0.001 \\
        \hline\hline
        \textbf{Hour} & 12 & 13 & 14 & 15 & 16 & 17 \\  
        \midrule 
        p-value(MCBantit) & $<$0.001 & $<$0.001 & $<$0.001 & 0.783 & $<$0.001 & 0.983 \\
        p-value(Offline) & 0.999 & 0.999 & 0.999 & 0.265 & 0.035 & $<$0.001 \\
        \bottomrule
    \end{tabular}%
\end{table}

\begin{table}  [h]
    \caption{\label{No3} The p-values at 95\% confidence level of $H_0:\inp{M_0-M_1}{e_{j_1}e_{j_2}^{\top}}\leq 0 \quad vs \quad H_1:\inp{M_0-M_1}{e_{j_1}e_{j_2}^{\top}}> 0$, where $j_1$ denotes block BATTERY ST 400 and $j_2$ denotes weekend 7:00 to 11:00 (top three rows) or 12:00 to 17:00 (bottom three rows).}
    \centering
     \begin{tabular}{ccccccc}  
        \toprule 
        \textbf{Hour} & 7 & 8 & 9 & 10 & 11 \\  
        \midrule 
        p-value(MCBantit) & 0.002 & $<$0.001 & $<$0.001 & 0.001 & $<$0.001 \\
        p-value(Offline) & 0.050 & 0.001 & $<$0.001 & $<$0.001 & $<$0.001 \\
        \hline\hline
        \textbf{Hour} & 12 & 13 & 14 & 15 & 16 & 17 \\  
        \midrule 
        p-value(MCBantit) & $<$0.001 & 0.006 & $<$0.001 & $<$0.001 & $<$0.001 & $<$0.001 \\
        p-value(Offline) & $<$0.001 & $<$0.001 & $<$0.001 & $<$0.001 & $<$0.001 & 0.006 \\
        \bottomrule
    \end{tabular}%
\end{table}

At 02ND ST 200, occupancy rates initially exceeded 80\%, particularly after 12 pm. The \emph{SFpark} policy maintained high prices throughout five periods, effectively reducing occupancy rates. However, early morning rates consistently remained below 60\%, as shown in Figure~\ref{periodplot}. Table~\ref{No2} reveals that both our matrix completion bandit method and the offline approach identified action 0 (low pricing) as significant during these hours, suggesting that lower prices before noon could optimize parking space utilization. For hours after noon, our results differ from the offline method for some hours. However,
we consider our results more reliable, as Figure \ref{periodplot} demonstrates that the high price adjustment
successfully reduced the occupancy rate from over 80\% to nearly 60\% from period 1 to 5.

Conversely, BATTERY ST 400 exhibited consistently low occupancy rates below 60\% on weekends. Implementing low prices starting in period 2 led to increasing occupancy rates from periods 2 to 5, as depicted in Figure~\ref{periodplot}. Table~\ref{No3} confirms that both our method and the offline approach found action 0 statistically significant for all weekend hours, validating the effectiveness of low-price adjustments in enhancing parking space utilization.

\begin{figure} [h]
	\centering
		\includegraphics[scale=0.78]{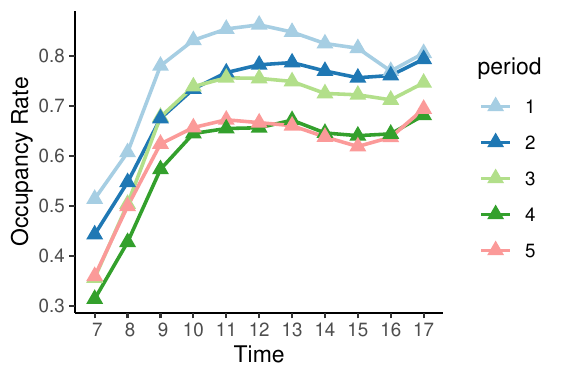}
		\includegraphics[scale=0.78]{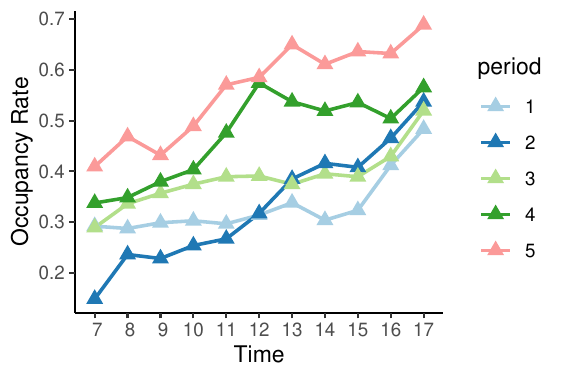}
    \caption{Average occupancy rates of parking space in different time of day during 5 periods. Left: weekday of 02ND ST 200; right: weekend of BATTERY ST 400.}
    \label{periodplot}
\end{figure}

Finally, we evaluate the overall performance of matrix completion bandit. This time,
to generate the online data, we follow the same method as described earlier, keeping the
samples where our decision coincides with the actual action. However, in cases where the
actual action contradicts our decision, we utilize the occupancy rate of the current block and time with the matching action from the nearest date to fit the model, instead of simply dropping the sample. We only discard a sample when no other samples are available for the same actions at the given block and time.
For comparison, we apply both the original \emph{SFpark} policy and the machine learning pricing strategy \citep{simhon2017smart}, referred to as the \emph{neighborhood approach}. The latter accounts for the impact of price adjustments in one block on the demand in adjacent blocks by optimizing prices across all blocks simultaneously. This approach aims to minimize the root mean square error (RMSE) between predicted and target occupancy rates for all blocks.

Figure \ref{compare} displays the percentage of all the selected time and blocks that achieves the target occupancy rate in each period under \emph{SFpark} policy, our matrix completion bandit and the neighborhood approach. Across all time periods, there is an overall improvement in achieving the target occupancy rate by all the methods, and our matrix completion bandit almost consistently outperforms the other two methods. The original \emph{SFpark} policy explored too much and may adopt sub-optimal policies in the beginning, sometimes maintaining unsuitable prices for more than one period. In contrast, the matrix completion bandit could adjust the price as soon as it detects that the current policy is not optimal enough.

\begin{figure} [h]
    \centering
    \includegraphics[scale=0.9]{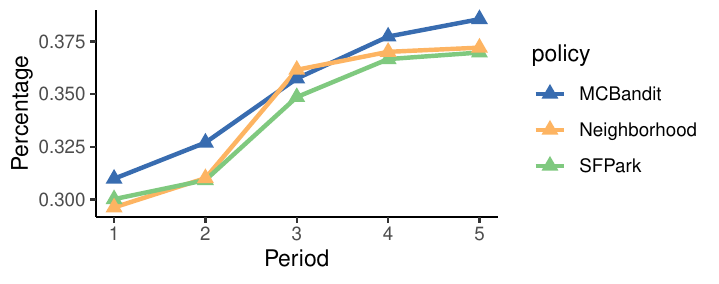}
     \caption{Comparison of percentage of blocks and times achieving target occupancy rate over periods by matrix completion bandit, \emph{SFpark} policy, and neighborhood approach. }
     \label{compare}
\end{figure}

\subsection{Superstore discount strategy}\label{data:discount}
Now we apply our matrix completion bandit approach to another dataset related to superstore discount strategies. The data is collected from \cite{vivek468_superstore}. It originates from a superstore where numerous customers order various products from the website daily. When a customer orders a product, the superstore decides whether to offer a discount. Providing a discount reduces the unit profit but might increase the quantity ordered. Conversely, not offering a discount results in higher unit profit but may dampen the customer's interest in the product. Therefore, the objective is to determine the appropriate discount for each product and customer to maximize overall profit.

After data cleaning—such as removing products that receive excessive discounts leading to negative profit—the dataset comprises $d_1=790$ customers and $d_2=1196$ products, with a total of $T=6393$ orders. This presents a high-dimensional feature space where $d_1d_2\gg T$. The data includes information on discounts, quantities, sales prices, and profits for each order. We set arm 1 as offering a discount on the product to the customer, while arm 0 represents not providing a discount. Furthermore, we define the reward as the profit rate multiplied by the sales quantity, i.e., the profit divided by the total sales price times the sales quantity. Similarly as Section \ref{data:sfpark}, to simulate the online data collection process, for each observation, if the decision made by our algorithm matches the actual action, we retain the observation for model fitting; otherwise, we discard it and move to the next sample. 


Taking a customer who ordered a total of 7 products as an example, we construct the point estimators and confidence intervals for the expected difference in reward between offering a discount and not offering a discount for each of these products. To evaluate our bandit algorithm, we compare our results with offline matrix completion method, which involves first applying the SoftImpute method from \cite{mazumder2010spectral} to fit all the offline data, followed by the approach in \citep{ma2023multiple} to obtain the debiased estimator and construct the corresponding confidence intervals. As shown in Figure \ref{resultdiscount}, it is evident that for most products, our estimated values closely align with those from the offline estimator. The confidence intervals constructed by the matrix completion bandit estimator are slightly wider than those from the offline method due to the influence of IPW, as indicated by our theoretical findings.

Among the 7 products, products OFF-BI-10000088 and OFF-BI-10000545 show negative significance in the expected reward difference for both methods, suggesting that not offering a discount would generate more profit for these products with this customer. In the dataset, the actual actions for these two products also involved not offering a discount, resulting in reward profits of 0.98 and 6.5, respectively. For the remaining 5 products, neither method shows positive or negative significance, indicating that offering or not offering a discount would not lead to a significant difference in the final reward profit.

\begin{figure}[H]
    \centering
    \includegraphics[scale=0.2]{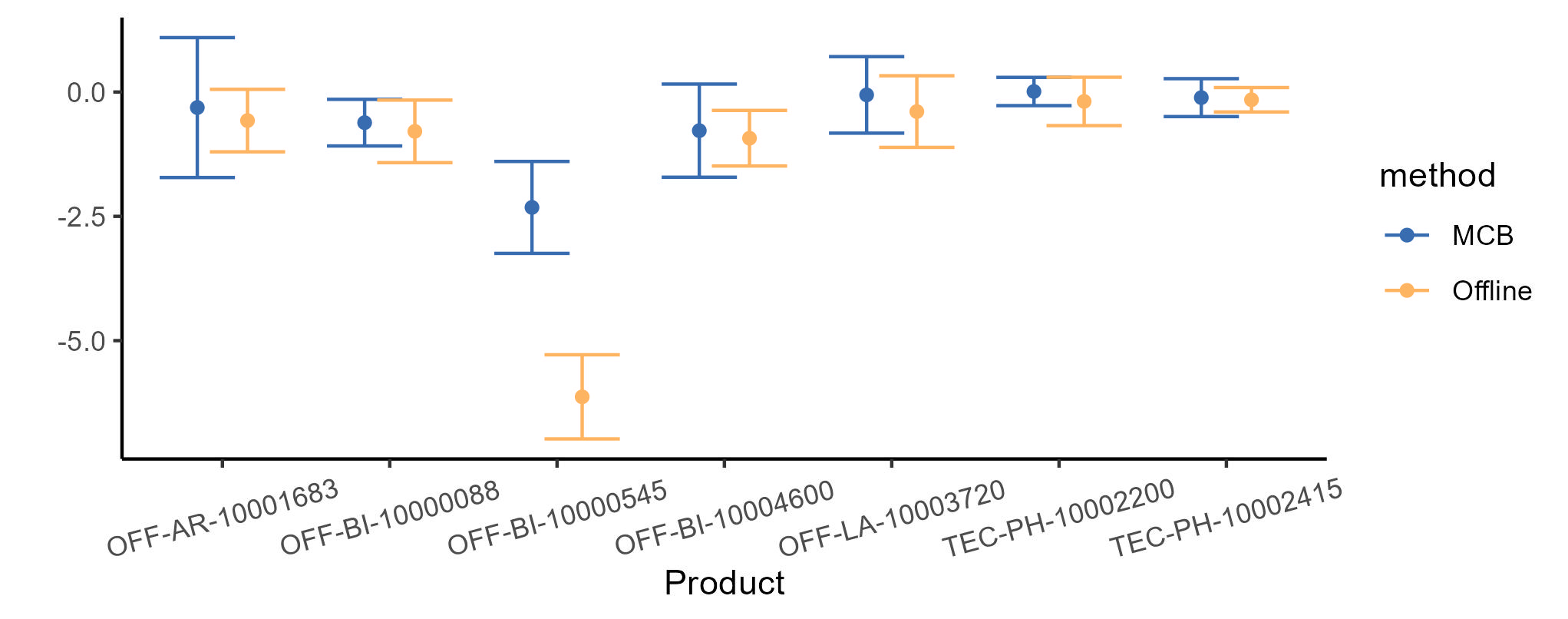}
     \caption{Superstore discount: point estimators and 95\% confidence intervals for the expected difference in reward between offering a discount and not offering a discount for a specific user and 7 products.}
     \label{resultdiscount}
\end{figure}

\section*{Acknowledgment}
Congyuan Duan and Dong Xia’s research was partially supported by Hong Kong RGC Grant GRF 16302323.

\section*{Supplementary Materials}
We provide additional numerical simulation results in Appendix A. The extension to $k$-armed bandits, algorithmic acceleration using efficient SVD implementation, adaptation to non-uniform sampling distributions, and discussions of the upper confidence bound (UCB) and Thompson sampling (TS) algorithms are presented in Appendix B. All proofs and technical lemmas are included in Appendices C and D.

\bibliography{reference}
\bibliographystyle{plainnat}


\appendix
\setcounter{figure}{0}
\renewcommand{\thefigure}{A.\arabic{figure}}
\newpage

\begin{center}
{\bf\LARGE Supplement to ``Online Policy Learning and Inference by Matrix Completion "}

\smallskip

Congyuan Duan$^1$, Jingyang Li$^2$, and Dong Xia$^1$

\smallskip

$^1$ Department of Mathematics, Hong Kong University of Science and Technology\\
$^2$ Department of Statistics, University of Michigan, Ann Arbor
\end{center}
\smallskip

\section{Additional Simulation Experiments}\label{app:numerical}

\subsection{Varying model and algorithmic parameters}
We investigate the effects of varying the matrix rank $r$, phase-I exploration probability $\varepsilon$, and noise levels $\sigma_1^2$ and $\sigma_0^2$. Unless otherwise specified, matrices are generated as described in Section~\ref{sec:simulation} with parameters $r=2$, $d_1=d_2=300$, $T=30000$, $T_0=10000$, $\gamma=1/3$ and $\sigma_1^2=\sigma_0^2=1$.For each scenario, we assess the asymptotic normality of: (1) $\inp{M_1}{e_1e_5^{\top}}$, (2) $\inp{M_0}{e_1e_5^{\top}}$, (3)$\inp{M_0-M_1}{e_1e_5^{\top}}$ and (4) $\inp{M_0}{e_1e_5^{\top} - e_2e_2^{\top}}$ across 500 independent runs. The results are presented in Figures~\ref{r10}-\ref{sigma05}.
\begin{figure} [H]
	\centering
	\begin{subfigure} {0.24\linewidth} 
		\includegraphics[width=1.1\linewidth]{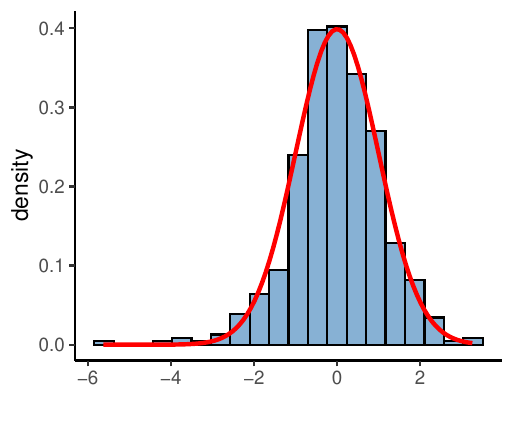}
        \caption{$\inp{M_1}{e_1e_5^{\top}}$}
    \end{subfigure}
    \begin{subfigure} {0.24\linewidth} 
		\includegraphics[width=1.1\linewidth]{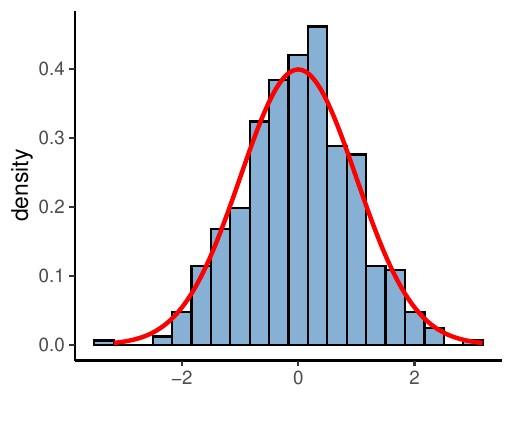}
        \caption{$\inp{M_0}{e_1e_5^{\top}}$}
    \end{subfigure}
    \begin{subfigure} {0.24\linewidth} 
		\includegraphics[width=1.1\linewidth]{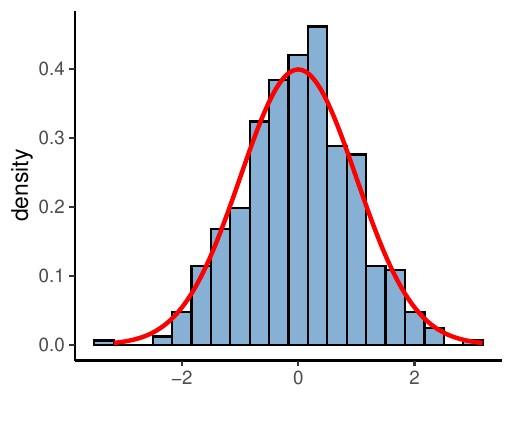}
        \caption{$\inp{M_0-M_1}{e_1e_5^{\top}}$}
    \end{subfigure}
    \begin{subfigure} {0.24\linewidth} 
		\includegraphics[width=1.1\linewidth]{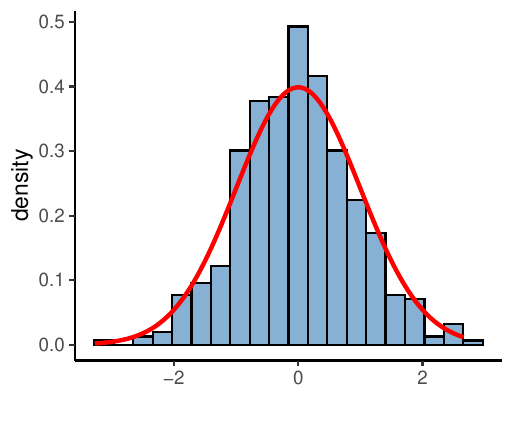}
        \caption{$\inp{M_0}{e_1e_5^{\top} - e_2e_2^{\top}}$}
    \end{subfigure}

	\begin{subfigure} {0.24\linewidth} 
		\includegraphics[width=1.1\linewidth]{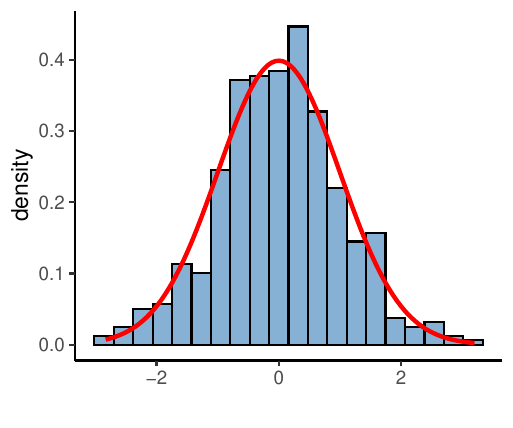}
        \caption{$\inp{M_1}{e_1e_5^{\top}}$}
    \end{subfigure}
    \begin{subfigure} {0.24\linewidth} 
		\includegraphics[width=1.1\linewidth]{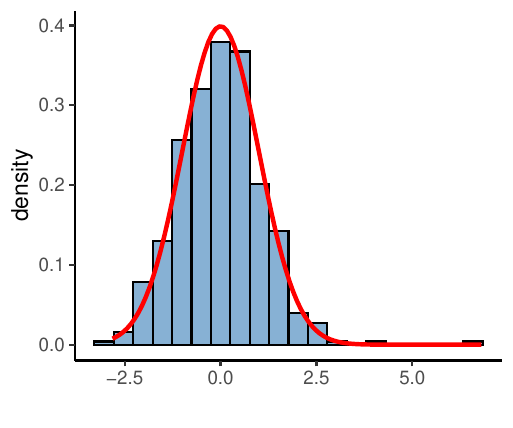}
        \caption{$\inp{M_0}{e_1e_5^{\top}}$}
    \end{subfigure}
    \begin{subfigure} {0.24\linewidth} 
		\includegraphics[width=1.1\linewidth]{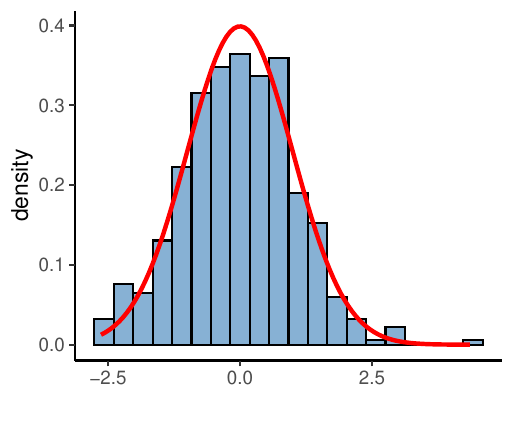}
        \caption{$\inp{M_0-M_1}{e_1e_5^{\top}}$}
    \end{subfigure}
    \begin{subfigure} {0.24\linewidth} 
		\includegraphics[width=1.1\linewidth]{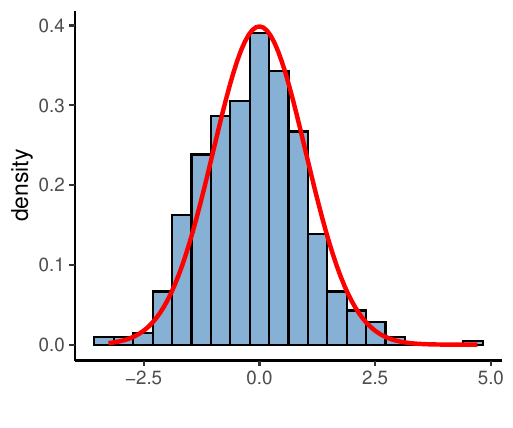}
        \caption{$\inp{M_0}{e_1e_5^{\top} - e_2e_2^{\top}}$}
    \end{subfigure}
	
    \caption{(\textbf{Varying rank $r$}) Empirical distributions under $r=5$ (top four plots) and $r=10$ (bottom four plots). The red curve represents the p.d.f. of standard normal distributions.}
    \label{r10} 
\end{figure}

\begin{figure} [H]
	\centering
	\begin{subfigure} {0.24\linewidth} 
		\includegraphics[width=1.1\linewidth]{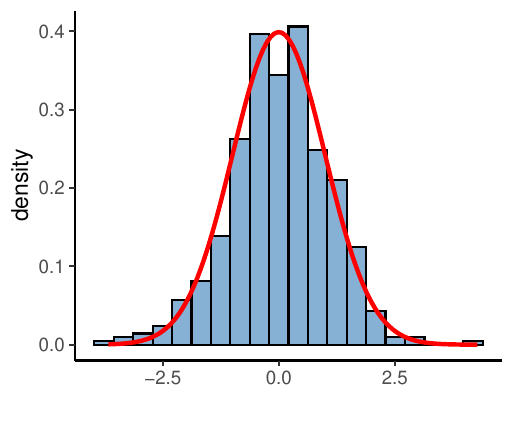}
        \caption{$\inp{M_1}{e_1e_5^{\top}}$}
    \end{subfigure}
    \begin{subfigure} {0.24\linewidth} 
		\includegraphics[width=1.1\linewidth]{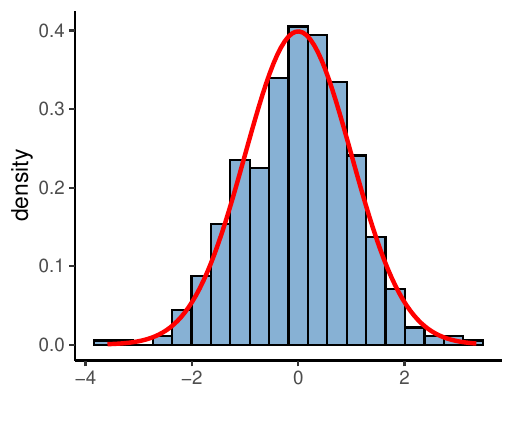}
        \caption{$\inp{M_0}{e_1e_5^{\top}}$}
    \end{subfigure}
    \begin{subfigure} {0.24\linewidth} 
		\includegraphics[width=1.1\linewidth]{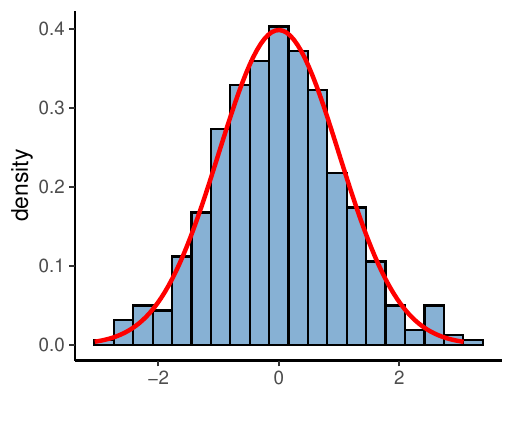}
        \caption{$\inp{M_0-M_1}{e_1e_5^{\top}}$}
    \end{subfigure}
    \begin{subfigure} {0.24\linewidth} 
		\includegraphics[width=1.1\linewidth]{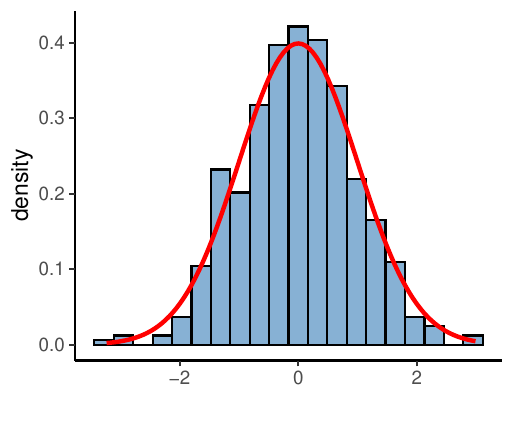}
        \caption{$\inp{M_0}{e_1e_5^{\top} - e_2e_2^{\top}}$}
    \end{subfigure}
	
	\begin{subfigure} {0.24\linewidth} 
		\includegraphics[width=1.1\linewidth]{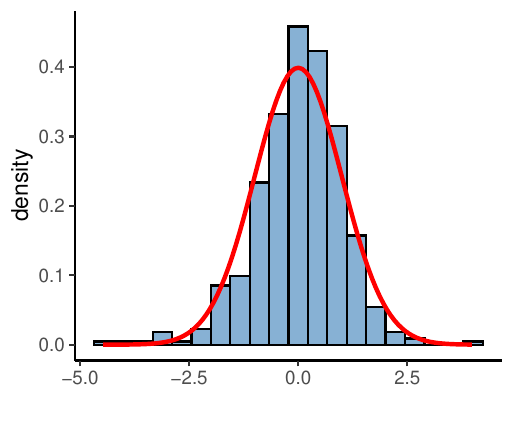}
        \caption{$\inp{M_1}{e_1e_5^{\top}}$}
    \end{subfigure}
    \begin{subfigure} {0.24\linewidth} 
		\includegraphics[width=1.1\linewidth]{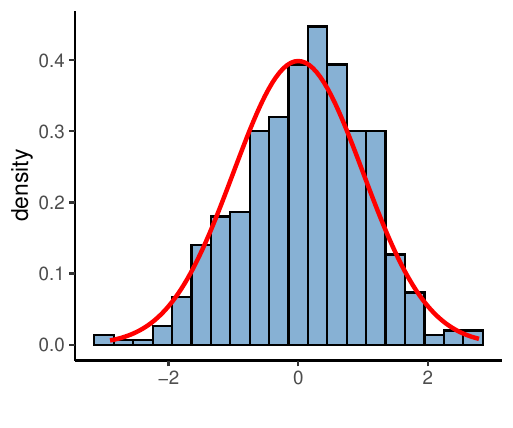}
        \caption{$\inp{M_0}{e_1e_5^{\top}}$}
    \end{subfigure}
    \begin{subfigure} {0.24\linewidth} 
		\includegraphics[width=1.1\linewidth]{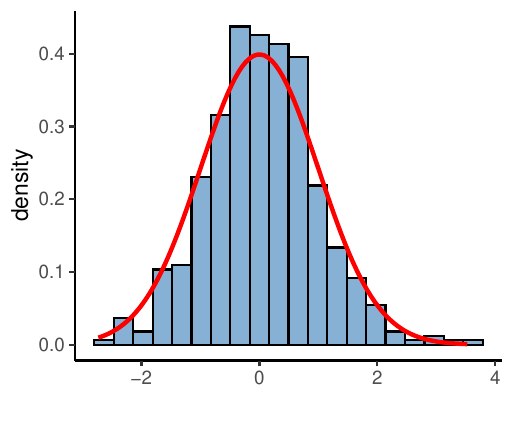}
        \caption{$\inp{M_0-M_1}{e_1e_5^{\top}}$}
    \end{subfigure}
    \begin{subfigure} {0.24\linewidth} 
		\includegraphics[width=1.1\linewidth]{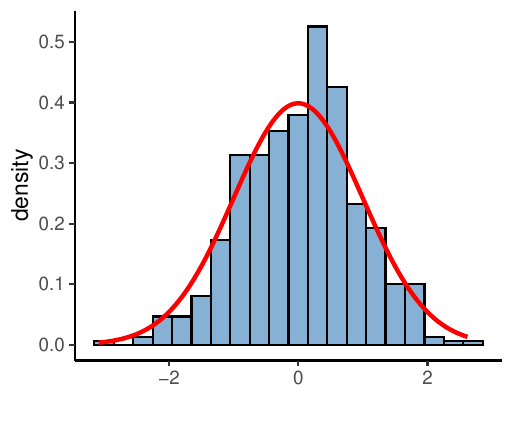}
        \caption{$\inp{M_0}{e_1e_5^{\top} - e_2e_2^{\top}}$}
    \end{subfigure}
	
    \caption{(\textbf{Varying phase-I exploration probability $\varepsilon$}) Empirical distributions under $\varepsilon=0.4$ (top four plots) and $\varepsilon=0.2$ (bottom four plots). The red curve represents the p.d.f. of standard normal distributions.}
    \label{epsilon2} 
\end{figure}

\begin{figure} [H]
	\centering
	\begin{subfigure} {0.24\linewidth} 
		\includegraphics[width=1.1\linewidth]{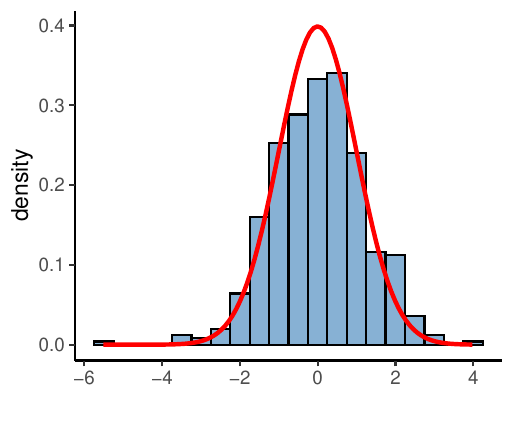}
        \caption{$\inp{M_1}{e_1e_5^{\top}}$}
    \end{subfigure}
    \begin{subfigure} {0.24\linewidth} 
		\includegraphics[width=1.1\linewidth]{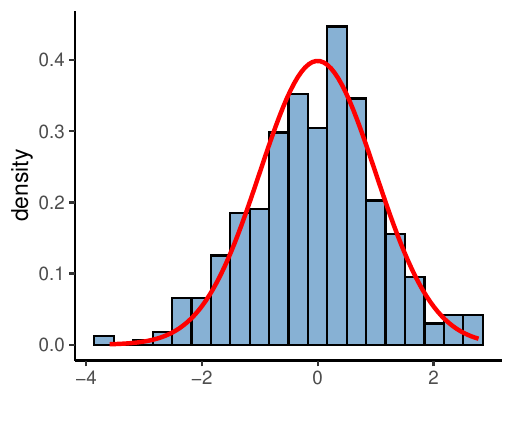}
        \caption{$\inp{M_0}{e_1e_5^{\top}}$}
    \end{subfigure}
    \begin{subfigure} {0.24\linewidth} 
		\includegraphics[width=1.1\linewidth]{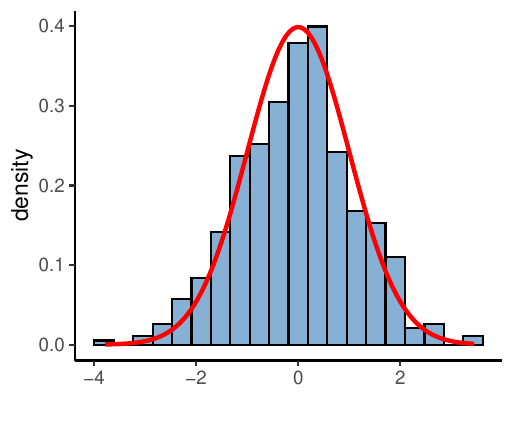}
        \caption{$\inp{M_0-M_1}{e_1e_5^{\top}}$}
    \end{subfigure}
    \begin{subfigure} {0.24\linewidth} 
		\includegraphics[width=1.1\linewidth]{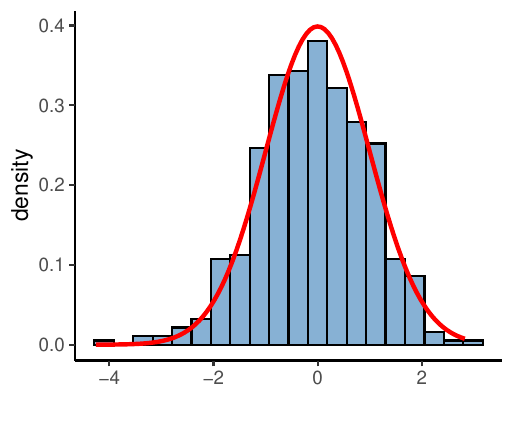}
        \caption{$\inp{M_0}{e_1e_5^{\top} - e_2e_2^{\top}}$}
    \end{subfigure}

	\begin{subfigure} {0.24\linewidth} 
		\includegraphics[width=1.1\linewidth]{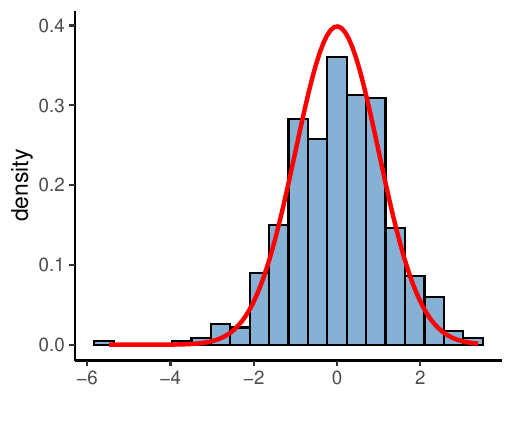}
        \caption{$\inp{M_1}{e_1e_5^{\top}}$}
    \end{subfigure}
    \begin{subfigure} {0.24\linewidth} 
		\includegraphics[width=1.1\linewidth]{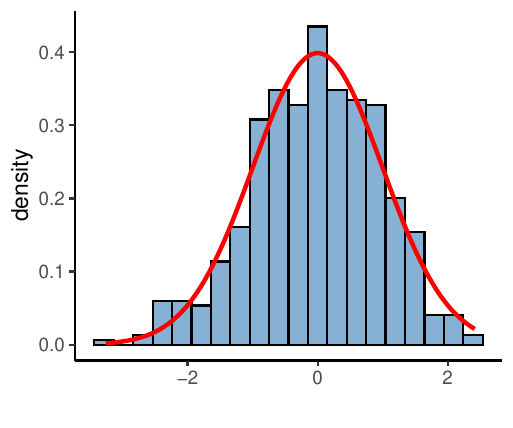}
        \caption{$\inp{M_0}{e_1e_5^{\top}}$}
    \end{subfigure}
    \begin{subfigure} {0.24\linewidth} 
		\includegraphics[width=1.1\linewidth]{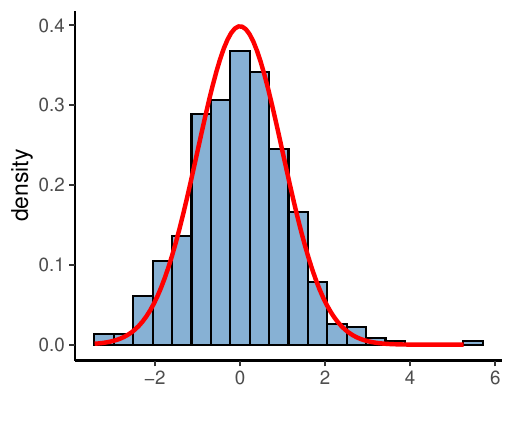}
        \caption{$\inp{M_0-M_1}{e_1e_5^{\top}}$}
    \end{subfigure}
    \begin{subfigure} {0.24\linewidth} 
		\includegraphics[width=1.1\linewidth]{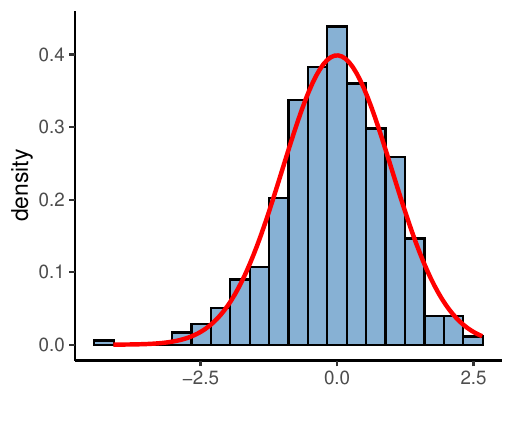}
        \caption{$\inp{M_0}{e_1e_5^{\top} - e_2e_2^{\top}}$}
    \end{subfigure}
	
    \caption{(\textbf{Varying noise levels $\sigma_1^2$ and $\sigma_0^2$}) Empirical distributions under $\sigma_1^2=1$, $\sigma_0^2=4$ (top four plots), and $\sigma_0^2=0.25$ (bottom four plots). The red curve represents the p.d.f. of standard normal distributions.}
    \label{sigma05} 
\end{figure}

\subsection{Misspecified ranks}
Finally, we examine the impact of misspecified matrix ranks $r$. Specifically, we evaluate three scenarios: 1) the true rank $r=3$ incorrectly specified as 5; 2) the true rank $r=5$ incorrectly specified as 7; 3) the true rank $r=8$ in correctly specified as 10.
\begin{figure} [H]
	\centering
	\begin{subfigure} {0.24\linewidth} 
		\includegraphics[width=1.1\linewidth]{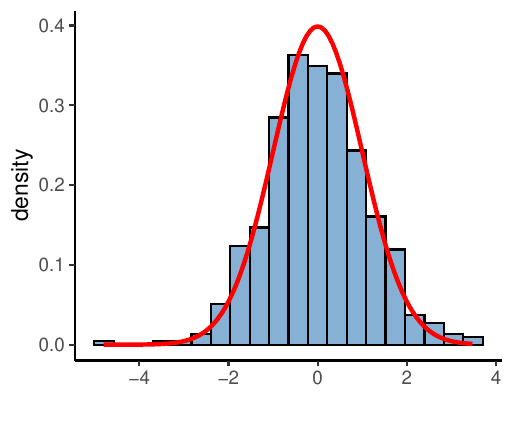}
        \caption{$\inp{M_1}{e_1e_5^{\top}}$}
    \end{subfigure}
    \begin{subfigure} {0.24\linewidth} 
		\includegraphics[width=1.1\linewidth]{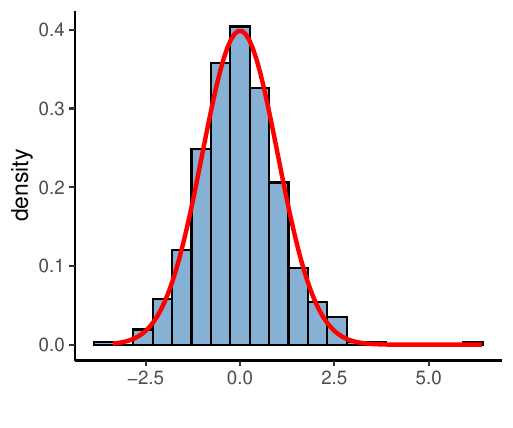}
        \caption{$\inp{M_0}{e_1e_5^{\top}}$}
    \end{subfigure}
    \begin{subfigure} {0.24\linewidth} 
		\includegraphics[width=1.1\linewidth]{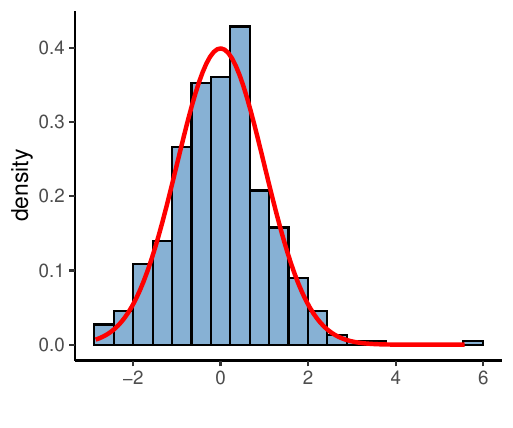}
        \caption{$\inp{M_0-M_1}{e_1e_5^{\top}}$}
    \end{subfigure}
    \begin{subfigure} {0.24\linewidth} 
		\includegraphics[width=1.1\linewidth]{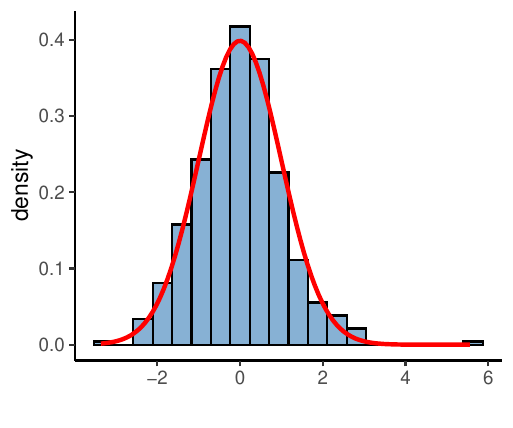}
        \caption{$\inp{M_0}{e_1e_5^{\top} - e_2e_2^{\top}}$}
    \end{subfigure}

	\begin{subfigure} {0.24\linewidth} 
		\includegraphics[width=1.1\linewidth]{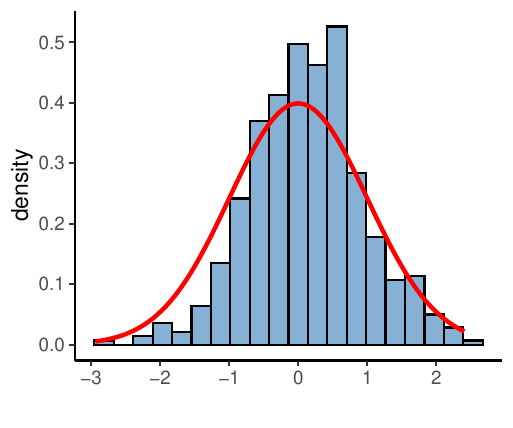}
        \caption{$\inp{M_1}{e_1e_5^{\top}}$}
    \end{subfigure}
    \begin{subfigure} {0.24\linewidth} 
		\includegraphics[width=1.1\linewidth]{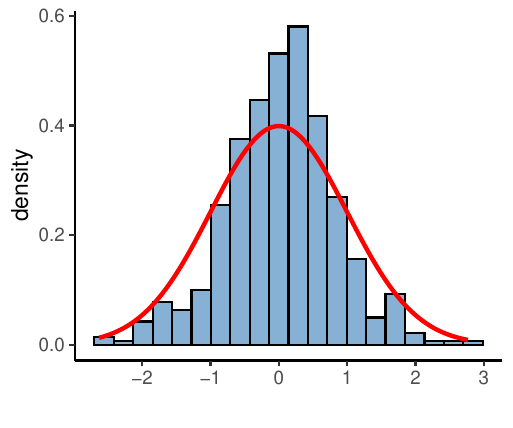}
        \caption{$\inp{M_0}{e_1e_5^{\top}}$}
    \end{subfigure}
    \begin{subfigure} {0.24\linewidth} 
		\includegraphics[width=1.1\linewidth]{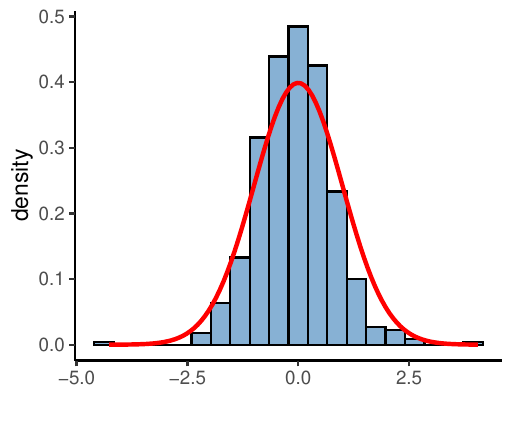}
        \caption{$\inp{M_0-M_1}{e_1e_5^{\top}}$}
    \end{subfigure}
    \begin{subfigure} {0.24\linewidth} 
		\includegraphics[width=1.1\linewidth]{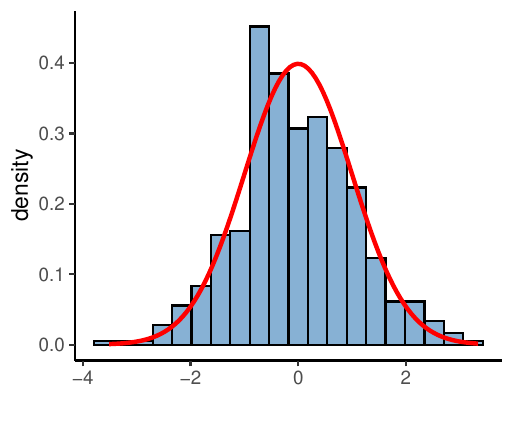}
        \caption{$\inp{M_0}{e_1e_5^{\top} - e_2e_2^{\top}}$}
    \end{subfigure}
	
	\begin{subfigure} {0.24\linewidth} 
		\includegraphics[width=1.1\linewidth]{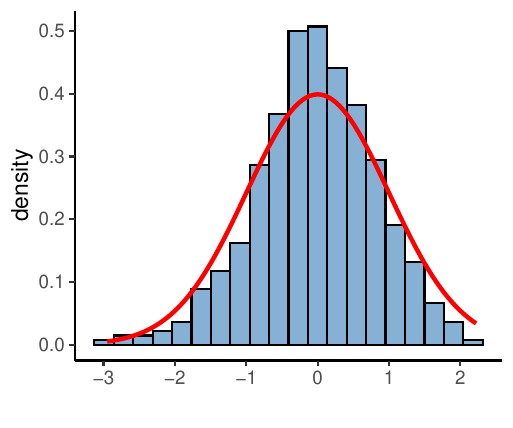}
        \caption{$\inp{M_1}{e_1e_5^{\top}}$}
    \end{subfigure}
    \begin{subfigure} {0.24\linewidth} 
		\includegraphics[width=1.1\linewidth]{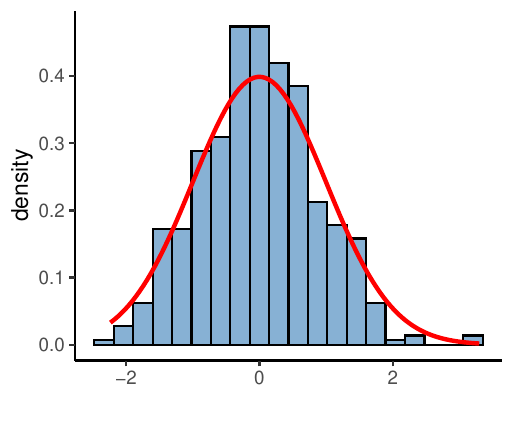}
        \caption{$\inp{M_0}{e_1e_5^{\top}}$}
    \end{subfigure}
    \begin{subfigure} {0.24\linewidth} 
		\includegraphics[width=1.1\linewidth]{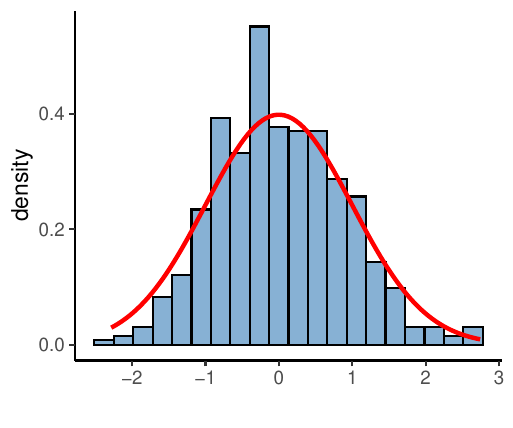}
        \caption{$\inp{M_0-M_1}{e_1e_5^{\top}}$}
    \end{subfigure}
    \begin{subfigure} {0.24\linewidth} 
		\includegraphics[width=1.1\linewidth]{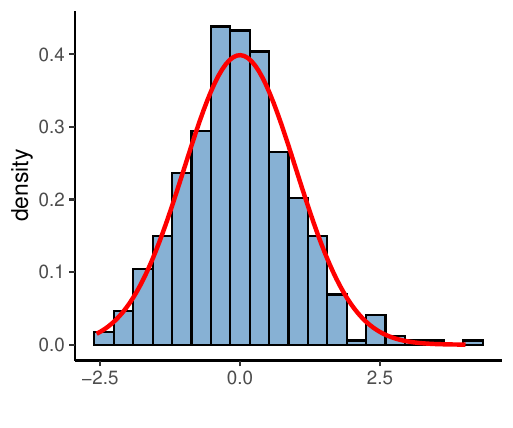}
        \caption{$\inp{M_0}{e_1e_5^{\top} - e_2e_2^{\top}}$}
    \end{subfigure}
	
    \caption{(\textbf{Misspecified ranks}) Empirical distributions under true rank $r=3$ and misspecified as 5 (top four plots), true rank $r=5$ and misspecified as 7 (middle four plots),  true rank $r=8$ and misspecified as 10 (bottom four plots). The red curve represents the p.d.f. of standard normal distributions.}
    \label{r8to10} 
\end{figure}

The comprehensive experimental results demonstrate that our method consistently delivers reliable statistical inference across varying tuning parameters and even when the matrix rank $r$ is misspecified.

\section{Extensions and Discussions}

\subsection{K-armed matrix completion bandit}\label{sec:karm}
\setcounter{Theorem}{0}
\newtheorem{thm}{Theorem}[section]
\newtheorem{assump}{Assumption}[section]
\newtheorem{cor}{Corollary}[section]

In real-world applications, decision makers often face multiple action choices. Accordingly, this section extends our results to the general $K$-armed bandit setting. We begin by introducing the $\varepsilon$-greedy 
$K$-arm matrix completion bandit algorithm. Next, we derive the convergence properties of the online estimators and establish an upper bound on the cumulative regret. Finally, we present the online debiasing procedure and establish a central limit theorem for the IPW-based debiased estimator.

Let $M_a$ denote the bandit parameter for arm $a\in[K]$, and let its SVD be $M_a=L_a\Lambda_aR_a^{\top}$. Here, $\Lambda_a = \text{diag}(\lambda_{a,1},\cdots,\lambda_{a,r})$ is a diagonal matrix containing the singular values of $M_a$ in non-increasing order. The signal strength is defined by $\lambda_{\min} := \min_{a\in [K]}\lambda_{a,r}$, and we set $\bar\sigma:=\max_{a\in [K]} \sigma_a$. Moreover, the incoherence parameter of $M_a$ is defined by
$$
\mu(M_a):=\max\bigg\{\sqrt{\frac{d_1}{r}}\|L_a\|_{2,\max},\ \sqrt{\frac{d_2}{r}}\|R_a\|_{2,\max} \bigg\}.
$$
We assume that the maximum incoherence across all arms satisfies  $\max_a\{\mu(M_a)\}\leq \mu$. 

For the policy learning algorithm, the primary modification involves updating the propensity scores. Instead of using $\pi_t=\PP(a_t=1|\mathcal{F}_{t-1})$ and $1-\pi_t$ in binary action selection, we replace them with $\pi_{a,t}=\PP(a_t=a|\mathcal{F}_{t-1})=(1-\varepsilon_t)\mathbbm{1}(a=a^{*})+\frac{\varepsilon_t}{K}$ for each $a\in [K]$, where $a^{*}=\argmax_{a\in [K]} \inp{\hat M_{a,t-1}}{X_t}$.

\begin{algorithm}
		\caption{$\eps$-greedy K-arm MCB with online gradient descent}\label{alg:mcb3}
\begin{algorithmic}
\STATE{\textbf{Input}: exploration probabilities $\{\varepsilon_t\}_{t\geq 1}$; step sizes $\{\eta_t\}_{t\geq 1}$; initializations with balanced factorization ${\hat M}_{k,0}=\hat U_{k,0}\hat V_{k,0}^{\top}, k\in[K]$.}
\STATE{\textbf{Output}: $\hat{M}_{k,T}, k\in[K]$.}
			\FOR{$t= 1,2,\cdots, T$}  
            \STATE Observe a new request $X_t$ \\
            \STATE $a^{*}=\argmax_{a\in [K]} \inp{\hat M_{a,t-1}}{X_t}$ \\
            \STATE Calculate $\pi_{a,t}=(1-\varepsilon_t)\mathbbm{1}(a=a^{*})+\frac{\varepsilon_t}{K}$ for $a\in [K]$ \\
            \STATE Sample an action $a_t\sim \text{Categorical}(1,\cdots,K; \pi_{1,t},\cdots,\pi_{K,t})$ and get a reward $r_t$ \\
            \FOR{$a= 1,2,\cdots, K$}
            \STATE Update by
            \begin{align*}
             \begin{pmatrix}
        \tilde U_{a,t} \\
        \tilde V_{a,t}
        \end{pmatrix} = \begin{pmatrix}
        \hat U_{a,t-1} \\
        \hat V_{a,t-1} 
        \end{pmatrix} - \frac{\mathbbm{1}(a_t=a)\eta_t}{\pi_{a,t}}\cdot\begin{pmatrix}
        \big(\inp{\hat U_{a,t-1}\hat V_{a,t-1}^{\top}}{X_t}-r_t\big)X_t\hat V_{a,t-1} \\
        \big(\inp{\hat U_{a,t-1}\hat V_{a,t-1}^{\top}}{X_t}-r_t)X_t^{\top}\hat U_{a,t-1}
        \end{pmatrix}.
        \end{align*}	
        Set $\hat{U}_{a,t}=\hat L_{a,t}\hat \Lambda_{a,t}^{1/2}$ and $\hat{V}_{a,t}=\hat R_{a,t}\hat \Lambda_{a,t}^{1/2}$, where $\hat L_{a,t}\hat \Lambda_{a,t}\hat R_{a,t}^{\top}$ is the thin SVD of $\hat {M}_{a,t}=\tilde U_{a,t}\tilde V_{a,t}^{\top}$. 
        \ENDFOR
        \ENDFOR  
\end{algorithmic}
\end{algorithm}

\begin{thm}\label{thm:MCB-convK}
Suppose that the horizon $T\leq d_1^{100}$ and the initializations are incoherent satisfying $\max_a{\|\hat M_{a,0}-M_a\|_{\rm F}}\leq c_0\lambda_{\min}$ for some small constant $c_0>0$.
        Fix some $\gamma\in [0,1)$, $\eps\in(0, 1/2)$, and set $T_0:=C_0T^{1-\gamma}$ for a large constant $C_0>0$.  When $t\leq T_0$, set $\eps_t\equiv\eps$ and $\eta_t\equiv \eta:=c_1d_1d_2\log(d_1)/(T^{1-\gamma}\lambda_{\max})$; when $T_0<t\leq T$, set $\eps_t=c_2t^{-\gamma}$ and $\eta_t=\eps_t \eta$, where $c_1, c_2>0$ are numerical constants. Suppose that the horizon and signal-to-noise ratio satisfy
        $$
        T\geq C_1r^3(Kd_1)^{1/(1-\gamma)} \log^2 d_1\quad {\rm and}\quad  \frac{\lambda_{\min}^2}{\max_a{\sigma_a^2}}\geq C_2K\frac{rd_1^2d_2\log^2d_1}{T^{1-\gamma}},
        $$
        for some large constants $C_1, C_2>0$ depending on $C_0, c_0,  c_1,c_2$ only. There exist constant $C_3,C_4,C_5 >0$ such that, for any $a\in [K]$, with probability at least $1-8td_1^{-200}$, the output of Algorithm 2 satisfies 
        \begin{align*}
            &\big\|\hat M_{a, T} - M_a \big\|_{\rm F}^2\leq C_3\fro{\widehat{M}_{a,0}-M_a}^2 \left(1-\frac{c_1\log d_1}{4\kappa T^{1-\gamma}}\right)^t + C_4t\cdot \sigma_a^2K\frac{rd_1^2d_2\log^4 d_1}{T^{2-2\gamma}}, \\
            &\big\|\hat M_{a,T} - M_a\big\|_{\max}^2\leq  C_3\frac{\lambda_{\min}^2r^3}{d_1d_2} \left(1-\frac{c_1\log d_1}{4\kappa T^{1-\gamma}}\right)^t +   C_4t\cdot \sigma_a^2K\frac{rd_1\log^4 d_1}{T^{2-2\gamma}},
        \end{align*}
for all $t\leq T_0$. Moreover, for all $t\geq T_0$, with probability at least $1-8d_1^{-100}$, we have
        \begin{align*}
            \big\|\hat M_{a, t} - M_a \big\|_{\rm F}^2\leq C_5\sigma_a^2K\frac{rd_1^2d_2\log^4 d_1}{T^{1-\gamma}} \quad {\rm and}\quad 
            \big\|\hat M_{a,t} - M_a\big\|_{\max}^2\leq C_5\sigma_a^2K\frac{rd_1\log^4 d_1}{T^{1-\gamma}}.
        \end{align*}
\end{thm}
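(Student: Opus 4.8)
The plan is to mirror the argument underlying Theorem~\ref{thm:MCB-conv}, since the $K$-arm recursion in Algorithm~\ref{alg:mcb3} is structurally identical to the two-arm one; the only substantive changes are that each sub-optimal arm is now explored with propensity $\varepsilon_t/K$ rather than $\varepsilon_t/2$, and that the stated bound need only hold for each \emph{fixed} $a\in[K]$. Fixing an arm $a$, I would first expand the online update by substituting $r_t=\inp{M_a}{X_t}+\xi_t$ on the event $\{a_t=a\}$, splitting the stochastic gradient into a population (signal) part and a noise part carrying $\xi_t$. The key structural fact is that the inverse-propensity weight $\mathbbm{1}(a_t=a)/\pi_{a,t}$ renders each update conditionally unbiased: since $X_t$ is uniform and independent of $\calF_{t-1}$ while $a_t$ is drawn from $\pi_{\cdot,t}$, one has $\EE[\mathbbm{1}(a_t=a)\pi_{a,t}^{-1}g(X_t)\mid\calF_{t-1}]=\EE[g(X_t)\mid\calF_{t-1}]$ for any $\calF_{t-1}$-measurable matrix map $g$. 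This reduces the conditional drift to the offline population gradient of the factored least-squares loss at $M_a$, for which restricted strong convexity and smoothness hold under incoherence.

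For phase one ($t\le T_0$, constant $\varepsilon$ and $\eta$) I would run the standard non-convex descent bookkeeping: writing the error in the balanced factors, aligning them by the optimal orthogonal rotation, and showing the conditional one-step drift contracts the Frobenius error by $1-c_1\log d_1/(4\kappa T^{1-\gamma})$ while injecting a variance term. Here is precisely where the factor $K$ surfaces: on exploration steps the pulled arm has propensity $\varepsilon/K$, so the martingale second moment of the IPW-weighted noise scales like $1/\pi_{a,t}\asymp K/\varepsilon$, propagating a single factor $K$ into both the $\fro{\cdot}^2$ and $\|\cdot\|_{\max}^2$ variance terms and into the SNR condition. The deviation of the realized gradient from its conditional mean is controlled by a matrix martingale (Freedman-type) inequality, whose predictable quadratic variation carries the same $K/\varepsilon$ inflation. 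Incoherence of the iterates must be preserved along the recursion, so I would carry an inductive hypothesis bounding $\max_a\sqrt{d_1/r}\,\|\hat L_{a,t}\|_{2,\max}$ and the analogous right-factor quantity, showing the update keeps each iterate in the incoherent region with high probability.

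The entry-wise bound is obtained by a leave-one-out construction: for each row (or entry) index I would build auxiliary sequences that ignore that coordinate's contribution, prove they stay close to the true iterates in Frobenius and spectral norm, and exploit their conditional independence from the omitted coordinate to upgrade spectral control into a $\|\cdot\|_{\max}$ bound; the incoherence above makes $\hat L_{a,t}\hat L_{a,t}^{\top} X_t \hat R_{a,t}\hat R_{a,t}^{\top}$ well spread, so the accumulated noise concentrates entrywise. For phase two ($T_0<t\le T$, $\varepsilon_t=c_2 t^{-\gamma}$, $\eta_t=\varepsilon_t\eta$) the decaying step size offsets the decaying exploration, so the contraction-plus-noise recursion stabilizes at the fixed-point level $C_5\sigma_a^2 K\, r d_1^2 d_2\log^4 d_1/T^{1-\gamma}$; since the effective sample size allocated to arm $a$ is $\sum_{t}\varepsilon_t/K\asymp T^{1-\gamma}/K$, this is the anticipated rate. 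Stitching the two phases uses that $\hat M_{a,T_0}$ already lies well inside the basin of attraction, so phase two can be initialized from the phase-one guarantee.

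The hard part will be maintaining incoherence of all $K$ factor iterates simultaneously under martingale dependence. Unlike the offline case, the event $\{a_t=a\}$ is coupled to \emph{every} arm's current estimate through the greedy comparison $a^*=\argmax_a\inp{\hat M_{a,t-1}}{X_t}$, so the leave-one-out perturbation and the incoherence induction do not decouple across arms as cleanly as for $K=2$; keeping this coupling under control while ensuring the IPW variance inflation stays at order $K$ rather than $K^2$ (which a naive bound on $1/\pi_{a,t}^2$ would produce) is the crux, and is what forces the $K$-dependent horizon and SNR conditions.
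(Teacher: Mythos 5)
Your overall architecture agrees with the paper's: the paper explicitly proves the $K$-arm theorem by repeating the two-arm argument (Theorem~\ref{thm:MCB-conv}, proved via the general-schedule result in Appendix C.1), and your phase-one drift-plus-martingale recursion, the inductive maintenance of iterate incoherence, the two-phase stitching with the accumulated contraction $(1-c_1\log d_1/(4\kappa T^{1-\gamma}))^{T_0}=O(d_1^{-C_0c_1/4\kappa})$ killing the initialization error, and the source of the factor $K$ all match. On the $K$-versus-$K^2$ point you flag as the crux: in the paper this is automatic and requires no cross-arm decoupling at all. Every second-moment computation takes the form $\EE[\mathbbm{1}(a_t=a)\pi_{a,t}^{-2}(\cdot)\mid \calF_{t-1}, X_t]$, and since $\EE[\mathbbm{1}(a_t=a)\mid\calF_{t-1},X_t]=\pi_{a,t}$ one power of the propensity cancels, leaving $\pi_{a,t}^{-1}\leq K/\varepsilon_t$. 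The analysis for arm $a$ never needs to know which arm is greedy or how the argmax couples the $K$ estimates; only the lower bound $\pi_{a,t}\geq \varepsilon_t/K$ is used. This is precisely why the paper can dispatch the $K$-arm convergence proof as ``almost the same'' as the two-arm one, and why the coupling you identify as the hard part is a non-issue in their route.

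The genuine gap is your entry-wise bound. The paper does \emph{not} use a leave-one-out construction anywhere in this proof. Instead, it directly telescopes the row-wise errors $\|e_l^{\top}(\hat U_t\hat V_t^{\top}-M)\|^2$ and the incoherence quantities $\|e_l^{\top}\hat U_t\hat V_t^{\top}\|^2$ with scalar martingale Bernstein inequalities (the analogues of Lemmas \ref{lemma2max} and \ref{incoherence}), and then converts row-wise control into a $\|\cdot\|_{\max}$ bound through the deterministic decomposition $\hat M_t-M=\hat L_t\hat L_t^{\top}(\hat M_t-M)\hat R_t\hat R_t^{\top}+(\hat L_t\hat L_t^{\top}-LL^{\top})M\hat R_t\hat R_t^{\top}+LL^{\top}M(\hat R_t\hat R_t^{\top}-RR^{\top})$, bounded via Wedin's $\sin\Theta$ theorem and a two-to-infinity perturbation bound. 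This is not merely a stylistic difference: under adaptive collection the standard leave-one-out exchange argument is in jeopardy, because the auxiliary sequence that ignores coordinate $l$'s observations produces different iterates, hence a different greedy action sequence $\{a_t\}$ through $\argmax_a\inp{\hat M_{a,t-1}}{X_t}$, hence a \emph{different collected dataset} — the policy feedback loop breaks the independence between the auxiliary estimator and the omitted coordinate's randomness that leave-one-out relies on in the i.i.d.\ setting. You would need a substantial additional coupling argument to repair this, whereas the paper's direct martingale-plus-spectral-perturbation route sidesteps it entirely. If you replace your leave-one-out step with that direct row-wise argument, the rest of your proposal goes through as the paper's proof does.
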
 
We also derive the regret upper bound for Algorithm \ref{alg:mcb3}.
\begin{thm} \label{thm:regretK}
  Suppose that  the conditions in Theorem \ref{thm:MCB-convK} hold, and denote by $\bar{m}:=\max_{k,j \in [K]}\|M_j-M_k\|_{\max}$. 
  Then there exists a numerical constant $C_6>0$ such that  the regret is upper bounded by 
\begin{align*}
        R_T\leq & C_6K\bigg(\bar m r\cdot T^{1-\gamma}+\bar\sigma\cdot T^{(1+\gamma)/2}\sqrt{rd_1}\log^2 d_1 \bigg).
 \end{align*}
\end{thm}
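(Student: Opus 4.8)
The plan is to follow the structure of the proof of Theorem~\ref{thm:regret}, replacing the binary propensities by the $K$-arm propensities $\pi_{a,t}=(1-\varepsilon_t)\mathbbm{1}(a=\hat k_t)+\varepsilon_t/K$ and invoking the $K$-arm convergence rates of Theorem~\ref{thm:MCB-convK} in place of the two-arm ones. Write $k_t^\star:=\argmax_{a\in[K]}\inp{M_a}{X_t}$ for the truly optimal arm and $\hat k_t:=\argmax_{a\in[K]}\inp{\hat M_{a,t-1}}{X_t}$ for the greedy arm. Since $a_t$ is drawn from the categorical law with parameters $\{\pi_{a,t}\}$ and $X_t$ is independent of $\calF_{t-1}$, conditioning on $(\calF_{t-1},X_t)$ gives the per-step decomposition
\[
\EE\big[\inp{M_{k_t^\star}-M_{a_t}}{X_t}\,\big|\,\calF_{t-1},X_t\big]
=\underbrace{(1-\varepsilon_t)\inp{M_{k_t^\star}-M_{\hat k_t}}{X_t}}_{\text{exploitation}}
+\underbrace{\tfrac{\varepsilon_t}{K}\textstyle\sum_{a}\inp{M_{k_t^\star}-M_a}{X_t}}_{\text{exploration}} .
\]
Taking expectations over the uniform $X_t$ and over $\calF_{t-1}$ and then summing over $t$ splits $R_T$ into an exploration/burn-in part and a phase-two exploitation part, which I bound separately.

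For the exploration term, each summand $\inp{M_{k_t^\star}-M_a}{X_t}$ equals a single entry of $M_{k_t^\star}-M_a$ and is therefore at most $\bar m$; hence the exploration contribution at step $t$ is at most $\varepsilon_t\bar m$. With the two-phase schedule $\varepsilon_t\equiv\varepsilon$ for $t\le T_0=C_0T^{1-\gamma}$ and $\varepsilon_t=c_2t^{-\gamma}$ afterwards, $\sum_{t\le T}\varepsilon_t\lesssim T_0+\int_{T_0}^T t^{-\gamma}\,dt\lesssim T^{1-\gamma}$, so the exploration regret is $O(\bar m T^{1-\gamma})$. The phase-one exploitation regret is folded into this term: during phase one the estimation error contracts geometrically from the incoherent initialization at rate $(1-c_1\log d_1/(4\kappa T^{1-\gamma}))$, and summing the per-step max-norm error (whose geometric part scales like $\lambda_{\min}^2 r^3/(d_1d_2)$ in Theorem~\ref{thm:MCB-convK}) against this rate over the $\asymp T^{1-\gamma}$ rounds of phase one produces a $\bar m\cdot\mathrm{poly}(r)\cdot T^{1-\gamma}$ contribution. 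Together these give the first term $C_6K\bar m r\,T^{1-\gamma}$, the $K$ being a loose bookkeeping factor and $r$ arising from the low-rank/incoherence conversion of the burn-in error.

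The phase-two exploitation term is the heart of the argument. The key observation is that $\hat k_t$ maximizes $\inp{\hat M_{a,t-1}}{X_t}$, so inserting $\pm\hat M_{\cdot,t-1}$ and discarding the nonpositive middle term yields $\inp{M_{k_t^\star}-M_{\hat k_t}}{X_t}\le 2\max_a\big|\inp{\hat M_{a,t-1}-M_a}{X_t}\big|$, valid whether or not the greedy arm coincides with the optimal one. Averaging over the uniform $X_t$ bounds the right-hand side by $2\max_a\|\hat M_{a,t-1}-M_a\|_{\max}$. By the phase-two max-norm rate of Theorem~\ref{thm:MCB-convK}, holding uniformly over $t\ge T_0$ with probability at least $1-8d_1^{-100}$, this is $\lesssim\bar\sigma\sqrt{K r d_1}\,\log^2 d_1\,/\,T^{(1-\gamma)/2}$ at every step. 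Summing over the at most $T$ steps of phase two gives an exploitation regret of order $\bar\sigma\sqrt{K}\sqrt{rd_1}\,\log^2 d_1\,T^{(1+\gamma)/2}$, absorbed into the second term $C_6K\bar\sigma\,T^{(1+\gamma)/2}\sqrt{rd_1}\log^2 d_1$ via $\sqrt K\le K$.

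The main obstacle is making this exploitation bound rigorous, and it has two facets. First, the max-norm rate must hold simultaneously for all $t$ in phase two; I would take a union bound over $t\le T\le d_1^{100}$ (already built into Theorem~\ref{thm:MCB-convK}) and dispose of the low-probability failure event by bounding its regret contribution crudely by $\bar m T$ times $O(d_1^{-100})$, which is negligible. Second, one must move the expectation inside legitimately: because $\hat M_{a,t-1}$ is $\calF_{t-1}$-measurable while $X_t$ is freshly and uniformly drawn, the conditional averaging $\EE_{X_t}\big[\max_a|\inp{\hat M_{a,t-1}-M_a}{X_t}|\big]\le\max_a\|\hat M_{a,t-1}-M_a\|_{\max}$ is exact, so no margin or arm-optimality condition is needed here (in contrast to the inference results). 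Combining the exploration/burn-in bound with the phase-two exploitation bound and collecting constants into $C_6$ yields the stated regret.
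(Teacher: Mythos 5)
Your proposal is correct, and it reaches the stated bound by a somewhat cleaner route than the paper at the one step that matters. The overall skeleton coincides: both arguments split the per-step regret into an exploration piece of size at most $\varepsilon_t\bar m$ (summing to $O(\bar m T^{1-\gamma})$ under the two-phase schedule), an exploitation piece controlled by the sup-norm rates of Theorem \ref{thm:MCB-convK}, and failure events disposed of via the $O(td_1^{-200})$ probabilities together with $T\le d_1^{100}$. The difference lies in how the exploitation piece is bounded. The paper, for each suboptimal arm $k$, introduces the gap event $B_{k,t}=\{\inp{M_{k_t^\star}-M_k}{X_t}>2\delta_t\}$ and argues by cases: on $B_{k,t}^c$ the instantaneous regret is at most $2\delta_t$; on $B_{k,t}$ a reversed greedy ordering forces one of the sup-norm estimation errors to exceed $\delta_t$, a low-probability event. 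Summing over arms yields the per-step bound $\varepsilon_t\bar m+2(K-1)\delta_t+\text{negligible}$. You instead use the deterministic sandwich $\inp{M_{k_t^\star}-M_{\hat k_t}}{X_t}\le 2\max_a\big|\inp{\hat M_{a,t-1}-M_a}{X_t}\big|$, valid because the inserted middle term $\inp{\hat M_{k_t^\star,t-1}-\hat M_{\hat k_t,t-1}}{X_t}$ is nonpositive by the definition of $\hat k_t$, and then average over the fresh uniform $X_t$ using the $\calF_{t-1}$-measurability of the estimates. This dispenses with the gap events altogether, saves the paper's factor $K-1$ on the $\delta_t$ term (immaterial in the end, since $\delta_t$ itself carries $\sqrt K$ and you give back $\sqrt K\le K$), and makes explicit, as you note, that no margin or arm-optimality assumption is needed for regret, only for inference; in the paper's proof the gap device in fact buys nothing, since both branches are ultimately charged either $2\delta_t$ or a negligible probability.

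Two bookkeeping points, neither fatal. In phase one you track the geometric burn-in term but not the accumulating statistical term $C_4\, t\,\sigma_a^2 K\, r d_1\log^4 d_1/T^{2-2\gamma}$ of Theorem \ref{thm:MCB-convK}; its square root summed over $t\le T_0\asymp T^{1-\gamma}$ contributes $O\big(\bar\sigma\sqrt{Krd_1}\log^2 d_1\, T^{(1-\gamma)/2}\big)$, which is dominated by your phase-two term but should be stated, as it is in the paper's sum $\sum_{t\le T_0}\sqrt{t\cdot rd_1\log^4 d_1\,\bar\sigma^2/T^{2(1-\gamma)}}$. Also, your conversion of the burn-in scale, $\lambda_{\min}r^{3/2}/\sqrt{d_1d_2}\le r\,\|M_a\|_{\max}$ via $\sqrt r\,\lambda_{\min}\le\fro{M_a}\le\sqrt{d_1d_2}\,\|M_a\|_{\max}$, produces $\|M_a\|_{\max}$ rather than the pairwise quantity $\bar m=\max_{k,j}\|M_j-M_k\|_{\max}$ appearing in the $K$-arm statement; the paper's own proof makes the same elision (it silently drops this middle term in its final display), so this does not distinguish your argument from theirs.
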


For policy inference, we construct the debiased estimators by replacing the propensity scores $\pi_t$ or $1-\pi_t$ with $\pi_{a,t}$ as follows:
\begin{align}\label{eq:debias-awK}
\hat M_a^{\uipw}:&= \frac{1}{T-T_0}\sum_{t=T_0+1}^ T\hat M_{a, t-1}+\frac{d_1d_2}{T-T_0}\sum_{t=T_0+1}^T \frac{\mathbbm{1}(a_t=a)}{\pi_{a,t}} \big(r_t-\langle \hat M_{a, t-1}, X_t\rangle\big)X_t,\quad a\in[K].
\end{align}

We extend Assumption \ref{assump:arm-opt} to the $K$-armed case. For any constant $\delta>0$, we define $\Omega_a(\delta):= \{X\in \mathcal{X}: \min_{k\in [k]\backslash \{a\}} \inp{M_a-M_k}{X}>\delta\}$ and $\Omega_{\emptyset}(\delta):=(\cup_{a\in [K]} \Omega_a(\delta))^c$. Let $\mathcal{P}_{\Omega_a}(M)$ denote the operator which zeros out the entries of $M$ except those in the set $\Omega_a$. We also require the variance contributed by samples in $\Omega_{\emptyset}$ to be negligible compared to others. 
\begin{assump}
\label{assump:arm-optK}(Arm Optimality)
There exists a gap $\delta>0$ such that the sets $\Omega_a$ for $a\in [K]$ and $\Omega_{\emptyset}$ ensure 
$\fro{\calP_{\Omega_{\emptyset}}\calP_{M_a}(Q)}^2/\min_k{\fro{\calP_{\Omega_{k}}\calP_{M_a}(Q)}^2}=o(1)$ as $d_1, d_2\to\infty$ for $a\in [K]$.
\end{assump}
The gap specified in Assumption \ref{assump:arm-optK} is allowed to diminish as $d_1,d_2\rightarrow \infty$. Finally, the best rank-$r$ approximation of $\widehat{M}_a^{\uipw}$ is $\hat M_a=\widehat{L}_a\widehat{L}_a^{\top}\widehat{M}_a^{\uipw}\widehat{R}_a\widehat{R}_a^{\top}$, where $\widehat{L}_a$ and $\widehat{R}_a$ consist of the left and right top-$r$ singular vectors of $\widehat{M}_a^{\uipw}$, respectively. Then the asymptotic normality for $\inp{\widehat{M}_a}{Q}$ is described by
\begin{thm}\label{thm:CLTK}
Suppose that the conditions in Theorem \ref{thm:MCB-convK} and Assumption~\ref{assump:arm-optK} hold with $\delta^2:=\delta_T^2:=C_1(\sigma_0^2+\sigma_1^2)(rd_1/T^{1-\gamma})\log^4d_1$ for some large constant $C_1>0$ in Assumption \ref{assump:arm-optK}. Define $S_a^2=T^{-\gamma}\fro{\calP_{\Omega_a}\calP_{M_a}(Q)}^2+\sum_{k\neq a, k\in [K]} C_{\gamma}\fro{\calP_{\Omega_k}\calP_{M_a}(Q)}^2$ for $a\in [K]$, where $C_{\gamma}=K/(c_2(1+\gamma))$ and $c_2$ is defined in Theorem \ref{thm:MCB-convK}. Assume that  
\begin{align}
\frac{Kr^2d_1\log^5d_1}{T^{1-\gamma}} + \frac{\sigma_aK}{\lambda_{\min}}\sqrt{\frac{rd_1^2\log d_1}{T^{1-\gamma}}} \frac{\|Q\|_{\ell_1}}{S_a} + \frac{\sigma_a}{\lambda_{\min}}\sqrt{\frac{Krd_1^2d_2\log d_1}{T^{1-\gamma}}} \rightarrow 0
\end{align}
as $d_1, d_2, T\to\infty$. Then, 
\begin{align}
    \frac{\langle \hat M_a, Q\rangle - \langle M_a, Q\rangle}{\sigma_aS_a \sqrt{d_1d_2/T^{1-\gamma}}}\rightarrow N(0,1)
\end{align}
as $d_1,d_2, T\to\infty$. 
\end{thm}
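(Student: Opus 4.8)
The plan is to reduce the claim to a martingale central limit theorem for the noise-driven part of $\inp{\hat M_a}{Q}-\inp{M_a}{Q}$, mirroring the proof of Theorem~\ref{thm:CLT} with the changes forced by the $K$-arm propensities. First I would substitute $r_t=\inp{M_a}{X_t}+\xi_t$ on $\{a_t=a\}$ into the debiased estimator \eqref{eq:debias-awK} and write, with $\Delta_{a,t-1}:=\hat M_{a,t-1}-M_a$,
\begin{align*}
\widehat M_a^{\uipw}-M_a=\frac{1}{T-T_0}\sum_{t=T_0+1}^T Z_{a,t}+\frac{d_1d_2}{T-T_0}\sum_{t=T_0+1}^T\frac{\mathbbm{1}(a_t=a)}{\pi_{a,t}}\xi_t X_t,
\end{align*}
where $Z_{a,t}:=\Delta_{a,t-1}-d_1d_2\,\pi_{a,t}^{-1}\mathbbm{1}(a_t=a)\inp{\Delta_{a,t-1}}{X_t}X_t$. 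Because $\EE[\mathbbm{1}(a_t=a)/\pi_{a,t}\mid X_t,\calF_{t-1}]=1$ and $X_t$ is uniform on $\calX$, one verifies $\EE[Z_{a,t}\mid\calF_{t-1}]=0$, so both sums are martingale transforms; this is the cancellation that renders \eqref{eq:debias-awK} conditionally unbiased. After the rank-$r$ projection $\hat M_a=\hat L_a\hat L_a^\top\widehat M_a^{\uipw}\hat R_a\hat R_a^\top$, I would organize $\inp{\hat M_a}{Q}-\inp{M_a}{Q}$ into the dominant noise term
\begin{align*}
\widehat V_a:=\frac{d_1d_2}{T-T_0}\sum_{t=T_0+1}^T\frac{\mathbbm{1}(a_t=a)}{\pi_{a,t}}\xi_t\inp{\hat L_a\hat L_a^\top X_t\hat R_a\hat R_a^\top}{Q},
\end{align*}
plus the $Z_{a,t}$ martingale and the higher-order spectral-projection remainder.

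Next I would compute the predictable quadratic variation of $\widehat V_a$. Conditioning on $X_t,\calF_{t-1}$ and using $\EE[\mathbbm{1}(a_t=a)/\pi_{a,t}^2\mid X_t,\calF_{t-1}]=\pi_{a,t}^{-1}$ together with $\EE[\xi_t^2\mid a_t=a]=\sigma_a^2$, and then replacing $\hat L_a,\hat R_a$ by $L_a,R_a$ and the projected linear form by $\calP_{M_a}(Q)$, the variation is approximately
\begin{align*}
\frac{\sigma_a^2 d_1d_2}{(T-T_0)^2}\sum_{t=T_0+1}^T\Big[\|\calP_{\Omega_a}\calP_{M_a}(Q)\|_{\rm F}^2+\frac{K}{\varepsilon_t}\sum_{k\neq a}\|\calP_{\Omega_k}\calP_{M_a}(Q)\|_{\rm F}^2\Big].
\end{align*}
Here Theorem~\ref{thm:MCB-convK} guarantees, through the choice $\delta_T$ matching the sup-norm rate, that the online estimators identify the optimal arm on each $\Omega_k$, so $\pi_{a,t}\approx1$ for $X_t\in\Omega_a$ and $\pi_{a,t}=\varepsilon_t/K$ for $X_t\in\Omega_k$, $k\neq a$. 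Substituting $\varepsilon_t=c_2t^{-\gamma}$, whence $\sum_t\varepsilon_t^{-1}\approx c_2^{-1}T^{1+\gamma}/(1+\gamma)$, and using $T-T_0\approx T$, this collapses to exactly $\sigma_a^2 S_a^2 d_1d_2/T^{1-\gamma}$ with $C_\gamma=K/(c_2(1+\gamma))$, which pins down the normalization. I would then invoke a martingale central limit theorem, checking the conditional Lindeberg condition from the sub-Gaussian tails of $\xi_t$, the propensity floor $\pi_{a,t}\ge\varepsilon_t/K$, and a uniform entrywise bound on $\calP_{M_a}(Q)$.

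The third step is to show each remaining piece is $o_p(\sigma_a S_a\sqrt{d_1d_2/T^{1-\gamma}})$. The linear form of the $Z_{a,t}$ martingale is dominated using the Frobenius and sup-norm rates of Theorem~\ref{thm:MCB-convK}, while the spectral-projection bias is second order in $\|\widehat M_a^{\uipw}-M_a\|$ and is annihilated by the three-term condition assumed in the statement of Theorem~\ref{thm:CLTK}, where the $\|Q\|_{\ell_1}/S_a$ factor controls the coupling between $Q$ and the estimation error and the last term controls the projection error. A final application of Slutsky's theorem assembles $\widehat V_a$ and the negligible remainders into the stated normal limit.

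The main obstacle is twofold. First, the weights $\mathbbm{1}(a_t=a)/\pi_{a,t}$ are random and data-dependent, so $\widehat V_a$ cannot be treated as a sum with deterministic coefficients; Assumption~\ref{assump:arm-optK} is exactly what forces the boundary set $\Omega_\emptyset$---on which $\pi_{a,t}$ could be either $\approx1$ or $\approx\varepsilon_t/K$ and hence destabilize the variance---to contribute negligibly, so the quadratic variation concentrates at the claimed limit rather than merely matching it in mean. Second, $\hat L_a,\hat R_a$ are built from the whole trajectory and are not independent of the $\xi_t$ inside $\widehat V_a$; decoupling them so that they may be replaced by $L_a,R_a$ at negligible cost is what genuinely requires the uniform entrywise (incoherence-type) guarantees of Theorem~\ref{thm:MCB-convK}, not merely its Frobenius-norm rate.
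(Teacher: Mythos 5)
Your proposal is correct and follows essentially the same route as the paper's proof: the identical martingale decomposition of the IPW-debiased estimator into the noise-driven main term $\widehat V_a$ plus the estimation-error martingale and spectral-projection remainders, a martingale CLT with the Lindeberg condition checked via sub-Gaussian tails and the propensity floor $\varepsilon_t/K$, and the conditional variance computed by splitting $\calX$ into $\Omega_a$, $\Omega_k$, and $\Omega_\emptyset$, where arm identification from the entrywise rates of Theorem~\ref{thm:MCB-convK} with gap $\delta_T$, the schedule $\varepsilon_t=c_2t^{-\gamma}$ giving $C_\gamma=K/(c_2(1+\gamma))$, and Assumption~\ref{assump:arm-optK} neutralizing the boundary set yield exactly $\sigma_a^2S_a^2\,d_1d_2/T^{1-\gamma}$. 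Your two flagged obstacles (stabilizing the random IPW weights and decoupling $\hat L_a,\hat R_a$ from the noise via two-to-max projector bounds) are precisely the issues the paper resolves through its arm-optimality assumption and its martingale adaptation of the leave-one-out-style spectral lemmas.
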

We propose the following estimators for $\sigma_a^2$ and $S_a^2$, which enable us to derive the studentized central limit theorem.
\begin{align*}
    &\hat\sigma_a^2:=\frac{1}{T-T_0}\sum_{t=T_0+1}^T \frac{\mathbbm{1}(a_t=a)}{\pi_{a,t}} \big(r_t-\langle \hat M_{0, t-1}, X_t\rangle\big)^2 \\
    &\widehat{S}_a^2:= \left(1/T^{\gamma}\fro{\calP_{\widehat{\Omega}_{a,T}}\calP_{\hat M_a}(Q)}^2+\sum_{k\neq a, k\in [K]} C_{\gamma}\fro{\calP_{\widehat{\Omega}_{k,T}}\calP_{\hat M_a}(Q)}^2\right)b_T.
\end{align*}
where $\widehat{\Omega}_{a,T}=\{X\in \mathcal{X}: \argmax_{k\in [K]}\inp{\widehat{M}_{k,T}}{X}=a\}$ for $a\in [K]$ and $b_T=T/(T-T_0)$. 
\begin{thm}
\label{thm:studentized-CLTK}
Suppose that the conditions in Theorem~\ref{thm:CLTK} hold. Then for $a\in [K]$, 
\begin{align*}
    \frac{\langle \hat M_a, Q\rangle - \langle M_a, Q\rangle}{\hat\sigma_a \widehat{S}_a \sqrt{d_1d_2/T^{1-\gamma}}}\rightarrow N(0,1)
\end{align*}
as $d_1,d _2, T\to\infty$.  
\end{thm}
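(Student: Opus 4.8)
The plan is to reduce Theorem~\ref{thm:studentized-CLTK} to the non-studentized central limit theorem in Theorem~\ref{thm:CLTK} via Slutsky's theorem. Since Theorem~\ref{thm:CLTK} already yields
\[
\frac{\langle \hat M_a, Q\rangle - \langle M_a, Q\rangle}{\sigma_a S_a \sqrt{d_1d_2/T^{1-\gamma}}} \rightarrow N(0,1),
\]
and the numerator of the studentized statistic is unchanged, it suffices to prove the two ratio-consistency statements $\hat\sigma_a^2/\sigma_a^2 \xrightarrow{p} 1$ and $\widehat{S}_a^2/S_a^2 \xrightarrow{p} 1$ as $d_1,d_2,T\to\infty$: the studentized statistic then equals the standardized one multiplied by $(\sigma_a S_a)/(\hat\sigma_a \widehat{S}_a)\xrightarrow{p}1$, and the conclusion follows. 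This parallels the two-arm argument behind Theorem~\ref{thm:studentized-CLT}, with the propensities $\pi_t,1-\pi_t$ replaced by $\pi_{a,t}$ and the sets $\Omega_0,\Omega_1$ by $\{\Omega_a\}_{a\in[K]}$.

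For the consistency of $\hat\sigma_a^2$, I would decompose the residual on $\{a_t=a\}$ as $r_t - \langle\hat M_{a,t-1},X_t\rangle = \xi_t + \langle M_a - \hat M_{a,t-1}, X_t\rangle$. Squaring and forming the inverse-propensity-weighted average over $t\in(T_0,T]$, the leading term is $(T-T_0)^{-1}\sum_t \mathbbm{1}(a_t=a)\pi_{a,t}^{-1}\xi_t^2$. Because $\EE[\mathbbm{1}(a_t=a)\pi_{a,t}^{-1}\xi_t^2 \mid \calF_{t-1}, X_t] = \sigma_a^2$, this is an unbiased martingale-type estimator of $\sigma_a^2$; a martingale law of large numbers gives concentration, where for sub-optimal arms the inverse weights $\pi_{a,t}^{-1}\asymp K t^{\gamma}$ inflate the per-term conditional variance but the normalized sum still has variance of order $K T^{\gamma-1}\to 0$. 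The cross term $2\xi_t\langle M_a-\hat M_{a,t-1},X_t\rangle$ and the quadratic term $\langle M_a-\hat M_{a,t-1},X_t\rangle^2$ are negligible since the sup-norm bound in Theorem~\ref{thm:MCB-convK} forces $\|\hat M_{a,t-1}-M_a\|_{\max}^2 = o_p(\sigma_a^2)$ throughout phase two.

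The consistency of $\widehat{S}_a^2$ is the main obstacle, requiring both that the estimated optimality sets $\widehat{\Omega}_{a,T}=\{X:\argmax_k\langle\hat M_{k,T},X\rangle=a\}$ recover $\Omega_a$ and that the spectral projectors converge, $\calP_{\hat M_a}(Q)\to\calP_{M_a}(Q)$. For the latter I would invoke Davis--Kahan-type bounds for the top-$r$ singular subspaces of $\widehat M_a^{\uipw}$, whose proximity to those of $M_a$ follows from the estimation guarantees underlying Theorem~\ref{thm:CLTK}, to control $\|\hat L_a\hat L_a^\top - L_aL_a^\top\|$ and $\|\hat R_a\hat R_a^\top - R_aR_a^\top\|$, so each block $\fro{\calP_{\widehat\Omega_{k,T}}\calP_{\hat M_a}(Q)}^2$ is close to $\fro{\calP_{\Omega_k}\calP_{M_a}(Q)}^2$ on correctly classified entries. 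For set recovery, any $X$ with $\min_{k\ne a}\langle M_a-M_k,X\rangle>\delta_T$ is correctly assigned once $\max_k\|\hat M_{k,T}-M_k\|_{\max}<\delta_T/2$, which holds by Theorem~\ref{thm:MCB-convK} under the stated choice of $\delta_T$; hence the symmetric difference $\widehat\Omega_{a,T}\triangle\Omega_a$ is contained, up to negligible events, in $\Omega_\emptyset(\delta_T)$. Assumption~\ref{assump:arm-optK} then makes the Frobenius mass carried by these boundary entries asymptotically negligible relative to each $\fro{\calP_{\Omega_k}\calP_{M_a}(Q)}^2$, and $b_T=T/(T-T_0)\to 1$ since $T_0=C_0T^{1-\gamma}=o(T)$. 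The delicate point is that inverse-propensity weighting can amplify boundary entries (each contributing $\asymp t^\gamma/T^2$ rather than $\asymp 1/T^2$ to the variance), so the bound must be proved with the worst-case weight; Assumption~\ref{assump:arm-optK} is precisely what neutralizes this amplification.
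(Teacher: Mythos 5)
Your high-level plan is exactly the paper's: the paper proves Theorem~\ref{thm:studentized-CLTK} by observing it is verbatim the two-arm argument for Theorem~\ref{thm:studentized-CLT}, namely Slutsky's theorem applied to Theorem~\ref{thm:CLTK} once $\hat\sigma_a^2\overset{p}{\rightarrow}\sigma_a^2$ and $\widehat{S}_a^2\overset{p}{\rightarrow}S_a^2$. Your treatment of $\hat\sigma_a^2$ (the residual decomposition $\xi_t+\inp{M_a-\widehat{M}_{a,t-1}}{X_t}$, martingale Bernstein concentration, the $t^{\gamma}$-inflated inverse weights with aggregate variance of order $KT^{\gamma-1}$, and the sup-norm bound from Theorem~\ref{thm:MCB-convK} killing the cross and quadratic terms) and your set-recovery argument ($\max_k\|\widehat{M}_{k,T}-M_k\|_{\max}$ small relative to $\delta_T$, so that $\widehat{\Omega}_{a,T}\triangle\Omega_a$ is absorbed into $\Omega_{\emptyset}(\delta_T)$ and neutralized by Assumption~\ref{assump:arm-optK}, together with $b_T\to1$) are precisely the paper's steps (its Lemma~\ref{indicator}, extended to $K$ arms by a union bound over pairwise comparisons).

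The one genuine gap is the spectral-projector consistency step. You propose operator-norm Davis--Kahan control of $\|\hat L_a\hat L_a^{\top}-L_aL_a^{\top}\|$ and $\|\hat R_a\hat R_a^{\top}-R_aR_a^{\top}\|$, but under the conditions stated in Theorem~\ref{thm:CLTK} this is too weak to conclude $\widehat{S}_a^2/S_a^2\overset{p}{\rightarrow}1$. Consider the flagship case $Q=e_ie_j^{\top}$: even in the favorable balanced regime one has $S_a^2\lesssim \fro{\calP_{M_a}(Q)}^2\asymp \mu r/d_2$, while Davis--Kahan combined with $\|\widehat{Z}\|\lesssim \sigma_a\sqrt{d_1^2d_2\log d_1/T^{1-\gamma}}$ and the pairing $\fro{(\hat L_a\hat L_a^{\top}-L_aL_a^{\top})Q}\le \|\hat L_a\hat L_a^{\top}-L_aL_a^{\top}\|\fro{Q}$ only bounds the perturbation of $\fro{\calP_{\hat M_a}(Q)}^2$ by a quantity of order $(\sigma_a^2/\lambda_{\min}^2)\,d_1^2d_2\log d_1/T^{1-\gamma}$; demanding this be $o(\mu r/d_2)$ requires $(\sigma_a^2/\lambda_{\min}^2)\,d_1^2d_2^2\log d_1/T^{1-\gamma}\to 0$, which is stronger than the SNR-type condition in Theorem~\ref{thm:CLTK} by a factor of order $d_2/r^2$ (treating $K,\mu$ as constants). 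What the paper actually uses (Lemma~\ref{induction:new}) is a row-wise two-to-infinity bound, $\|\hat L\hat L^{\top}-LL^{\top}\|_{2,\max}\lesssim (\sigma/\lambda_{\min})\sqrt{d_1^2d_2\log d_1/T^{1-\gamma}}\cdot\sqrt{\mu r/d_1}$, paired with the $\ell_1$-norm of $Q$ via $\fro{(\hat L\hat L^{\top}-LL^{\top})Q}\le \|Q\|_{\ell_1}\|\hat L\hat L^{\top}-LL^{\top}\|_{2,\max}$; the extra factor $\sqrt{\mu r/d_1}$ is exactly what makes the error $o(S_a^2)$ under condition (5) of Theorem~\ref{thm:CLTK}. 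Moreover this $2{,}\max$ bound is not a corollary of Davis--Kahan: because $\widehat{M}_a^{\uipw}-M_a$ is an inverse-propensity-weighted martingale sum under adaptive sampling rather than an i.i.d.\ noise matrix, the paper proves it through the spectral representation formula for $\widehat{\Theta}\widehat{\Theta}^{\top}-\Theta\Theta^{\top}$ combined with martingale (matrix) Bernstein inequalities. Your plan needs this lemma, or an equivalent row-wise perturbation bound, inserted at that step; everything else goes through as you describe.
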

The following Corollary enables the evaluation in outcomes between any two actions. 
\begin{cor}
    \label{cor:m1-m0K}
     Suppose that the conditions in Theorem \ref{thm:CLTK} hold. Then, for any $g,h\in [K]$, 
        \begin{align*}
        \frac{\langle \hat M_g -\hat M_h, Q\rangle - \langle M_g-M_h, Q\rangle}{\sqrt{\big(\hat\sigma_g^2\widehat{S}_g^2+\hat\sigma_h^2\widehat{S}_h^2 \big)d_1d_2/T^{1-\gamma}}} \longrightarrow N(0,1),
        \end{align*}
        as $d_1,d _2, T\to\infty$. 
\end{cor}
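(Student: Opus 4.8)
The plan is to reduce Corollary~\ref{cor:m1-m0K} to a \emph{joint} central limit theorem for the pair $\big(\inp{\hat M_g}{Q},\inp{\hat M_h}{Q}\big)$ and then read off the distribution of their difference. The only ingredient beyond Theorem~\ref{thm:studentized-CLTK} is that these two statistics are \emph{asymptotically uncorrelated}, so that the limiting variance of $\inp{\hat M_g-\hat M_h}{Q}$ is the sum of the two marginal variances. From the proof of Theorem~\ref{thm:CLTK} I would import, for each $a\in\{g,h\}$, the decomposition
\begin{align*}
\inp{\hat M_a}{Q}-\inp{M_a}{Q}=\widehat V_a+\text{rem}_a,\qquad \widehat V_a:=\frac{d_1d_2}{T-T_0}\sum_{t=T_0+1}^T\frac{\mathbbm{1}(a_t=a)}{\pi_{a,t}}\,\xi_t\,\inp{\hat L_a\hat L_a^{\top}X_t\hat R_a\hat R_a^{\top}}{Q},
\end{align*}
where $\text{rem}_a$ is $o_p$ of the marginal standard deviation $\sigma_aS_a\sqrt{d_1d_2/T^{1-\gamma}}$ under the $K$-arm analogue of condition~(\ref{eq:CLT-cond}). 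This holds for both arms simultaneously on the common high-probability event supplied by Theorem~\ref{thm:MCB-convK}.

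Next I would establish joint asymptotic normality by the Cramér--Wold device. Fix $(\alpha,\beta)\in\RR^2$ and, after decoupling the estimated projectors from the current sample (replacing $\hat L_a,\hat R_a$ by their population counterparts up to negligible error, exactly as in the single-arm proof), form the scalar martingale difference sequence
\begin{align*}
D_t:=\alpha\, d_1d_2\frac{\mathbbm{1}(a_t=g)}{\pi_{g,t}}\xi_t\inp{L_gL_g^{\top}X_tR_gR_g^{\top}}{Q}+\beta\, d_1d_2\frac{\mathbbm{1}(a_t=h)}{\pi_{h,t}}\xi_t\inp{L_hL_h^{\top}X_tR_hR_h^{\top}}{Q},
\end{align*}
adapted to $\calF_t$; that $\EE[D_t\mid\calF_{t-1}]=0$ follows since $\xi_t$ is mean-zero given $(\calF_{t-1},X_t,a_t)$. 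The crucial observation is that $\mathbbm{1}(a_t=g)\mathbbm{1}(a_t=h)=0$ for $g\neq h$, because only one arm is pulled at each time. Hence the $\alpha\beta$ cross term in $D_t^2$ vanishes \emph{identically}, and the predictable quadratic variation splits exactly,
\begin{align*}
\sum_{t=T_0+1}^T\EE\big[D_t^2\mid\calF_{t-1}\big]=\alpha^2A_{g,T}+\beta^2A_{h,T},
\end{align*}
where $A_{a,T}$ is precisely the single-arm predictable variation shown in the proof of Theorem~\ref{thm:CLTK} to converge in probability to $\sigma_a^2S_a^2\,d_1d_2/T^{1-\gamma}$. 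Since $|D_t|$ is dominated by the sum of the two single-arm summands, the Lindeberg condition follows from the single-arm argument, and the martingale CLT yields joint normality with a \emph{diagonal} limiting covariance.

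Taking $(\alpha,\beta)=(1,-1)$ in the Cramér--Wold limit then gives
\begin{align*}
\frac{\inp{\hat M_g-\hat M_h}{Q}-\inp{M_g-M_h}{Q}}{\sqrt{(\sigma_g^2S_g^2+\sigma_h^2S_h^2)\,d_1d_2/T^{1-\gamma}}}\longrightarrow N(0,1),
\end{align*}
after absorbing $\text{rem}_g,\text{rem}_h$ into the limit via their joint negligibility. Finally, the consistency of $\hat\sigma_a^2$ and $\widehat S_a^2$ established in the proof of Theorem~\ref{thm:studentized-CLTK}, combined with Slutsky's theorem, allows the unknown variance to be replaced by $\hat\sigma_g^2\widehat S_g^2+\hat\sigma_h^2\widehat S_h^2$, which is the studentization claimed in the statement.

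The point requiring the most care---more a matter of bookkeeping than a genuine obstacle---is that the limiting covariance is \emph{exactly} diagonal; this is the clean consequence of the mutual exclusivity $\mathbbm{1}(a_t=g)\mathbbm{1}(a_t=h)=0$, which makes the two leading martingale terms supported on disjoint time indices and forces their predictable covariation to zero. What remains is to verify that the remainders $\text{rem}_g,\text{rem}_h$ are \emph{jointly} negligible relative to $\sqrt{(\sigma_g^2S_g^2+\sigma_h^2S_h^2)\,d_1d_2/T^{1-\gamma}}$ and that the joint Lindeberg condition holds; both follow from the uniform sup-norm and incoherence bounds of Theorem~\ref{thm:MCB-convK} together with the arm-optimality Assumption~\ref{assump:arm-optK}, now applied to the two arms on a common high-probability event rather than arm-by-arm.
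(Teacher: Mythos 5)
Your proposal is correct and follows essentially the same route as the paper's proof: both reduce the difference to the martingale main terms from the proof of Theorem \ref{thm:CLTK}, use the mutual exclusivity $\mathbbm{1}(a_t=g)\mathbbm{1}(a_t=h)=0$ (together with the mean-zero noise across distinct times) to kill the cross term so the limiting variance is $\sigma_g^2S_g^2+\sigma_h^2S_h^2$, absorb the same negligible remainders, and studentize via the consistency argument of Theorem \ref{thm:studentized-CLTK} and Slutsky. If anything, your Cram\'er--Wold formulation---running the martingale CLT on the single combined difference sequence whose predictable quadratic variation splits exactly---is a slightly tighter justification than the paper's ``uncorrelated, hence variances add'' phrasing, but the substance is identical.
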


The proof of Theorem \ref{thm:MCB-convK}, \ref{thm:studentized-CLTK} and Corollary \ref{cor:m1-m0K} is  almost the same with Theorem \ref{thm:MCB-conv}, \ref{thm:studentized-CLT} and Corollary \ref{cor:m1-m0}. We provide the proof of Theorem \ref{thm:regretK} and \ref{thm:CLTK} in Appendix \ref{sec:proofregretK} and \ref{sec:proofCLTK}.

\subsection{Efficient SVD implementation} \label{sec:practical}
\newtheorem{lem}{Lemma}[section]

Although Algorithm~\ref{alg:mcb1} enables straightforward theoretical analysis of convergence, it is computationally intensive due to the necessity of performing SVD on $d_1\times d_2$ matrices at each time step. To address this, we introduce an alternative algorithm that offers greater computational efficiency for practical implementation.

\begin{algorithm}
		\caption{$\eps$-greedy two-arm MCB with online gradient descent (fast SVD)}\label{alg:mcb2}
\begin{algorithmic}
\STATE{\textbf{Input}: exploration probabilities $\{\varepsilon_t\}_{t\geq 1}$; step sizes $\{\eta_t\}_{t\geq 1}$; initializations with balanced factorization ${\hat M}_{0,0}=\hat U_{0,0}\hat V_{0,0}^{\top}$, ${\hat M}_{1,0}=\hat U_{1,0}\hat V_{1,0}^{\top}$.}
\STATE{\textbf{Output}: $\hat{M}_{0,T}$, $\hat{M}_{1,T}$.}
			\FOR{$t= 1,2,\cdots, T$}  
        \STATE Observe a new request $X_t$; \\
        \STATE Calculate $\pi_t=(1-\varepsilon_t)\mathbbm{1}\big(\inp{{\hat M}_{1,t-1} - {\hat M}_{0,t-1}}{X_t}>0\big)+\frac{\varepsilon_t}{2}$; \\
        \STATE Sample an action $a_t\sim \text{Bernoulli}(\pi_t)$ and get a reward $r_t$; \\
         \FOR{$i= 0,1$}
            \STATE $R_UD_UR_U^{\top} \leftarrow \text{SVD}(\widehat{U}_{i,t-1}^{\top}\widehat{U}_{i,t-1})$,
            \STATE $R_VD_VR_V^{\top} \leftarrow \text{SVD}(\widehat{V}_{i,t-1}^{\top}\widehat{V}_{i,t-1})$,
            \STATE $Q_UDQ_V \leftarrow \text{SVD}(D_U^{1/2}R_U^{\top}R_VD_V^{1/2})$,
		      \STATE Update by 
		      \begin{align*}
        \begin{pmatrix}
        \hat U_{i,t} \\
        \hat V_{i,t}
        \end{pmatrix} = \begin{pmatrix}
        \hat U_{i,t-1} \\
        \hat V_{i,t-1} 
        \end{pmatrix} - &\eta_t\frac{\mathbbm{1}(a_t=i)}{i\pi_{t}+(1-i)(1-\pi_t)}\cdot \\ &\begin{pmatrix}
            \big(\inp{\hat U_{i,t-1}\hat V_{i,t-1}^{\top}}{X_t}-r_t\big)X_t\hat V_{a,t-1}R_VD_V^{-1/2}Q_VQ_U^{\top}D_U^{1/2}R_U^{\top} \\
            \big(\inp{\hat U_{i,t-1}\hat V_{i,t-1}^{\top}}{X_t}-r_t)X_t^{\top}\hat U_{a,t-1}R_UD_U^{-1/2}Q_UQ_V^{\top}D_V^{1/2}R_V^{\top} 
        \end{pmatrix},
        \end{align*}
        Set $\hat {M}_{i,t}=\hat U_{i,t}\hat V_{i,t}^{\top}$.
        \ENDFOR
        \ENDFOR  
\end{algorithmic}
\end{algorithm}

Algorithm \ref{alg:mcb2} can be shown equivalent to Algorithm \ref{alg:mcb1} in the following sense:
\begin{lem}[\cite{jin2016provable} Lemma 3.2]
    Given the same observations and algorithmic parameters, the outputs of every time step $t$ in Algorithm \ref{alg:mcb1}, i.e., $\widetilde{U}_{i,t}$, $\widetilde{V}_{i,t}$ for $i=0,1$, and those for Algorithm \ref{alg:mcb2}, i.e., $\widehat{U}_{i,t}$ and $\widehat{V}_{i,t}$, satisfy $\widetilde{U}_{i,t}\widetilde{V}_{i,t}^{\top}=\widehat{U}_{i,t}\widehat{V}_{i,t}^{\top}$.
\end{lem}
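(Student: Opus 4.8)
The plan is to prove the identity $\tilde U_{i,t}\tilde V_{i,t}^\top=\hat U_{i,t}\hat V_{i,t}^\top$ by induction on $t$, tracking equality of the iterate \emph{products} rather than of the factors themselves (which genuinely differ between the two algorithms, since Algorithm~\ref{alg:mcb1} carries a balanced factorization and Algorithm~\ref{alg:mcb2} does not). Write $\hat M_{i,t}^{(1)}:=\tilde U_{i,t}\tilde V_{i,t}^\top$ and $\hat M_{i,t}^{(2)}:=\hat U_{i,t}\hat V_{i,t}^\top$, and take as inductive hypothesis that $\hat M_{i,t-1}^{(1)}=\hat M_{i,t-1}^{(2)}$ for $i=0,1$. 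Since the $\varepsilon$-greedy indicator, the propensity $\pi_t$, the sampled action $a_t$, and the residual $\langle \hat M_{i,t-1},X_t\rangle-r_t$ all depend on the previous iterates only through the \emph{products} $\hat M_{1,t-1},\hat M_{0,t-1}$, this hypothesis guarantees that both algorithms—fed the same $X_t$ and the same random seed—make identical decisions and observe identical rewards. It therefore suffices to show that the two gradient updates yield the same product at time $t$. The base case $t=0$ holds by the shared initialization $\hat M_{i,0}=\hat U_{i,0}\hat V_{i,0}^\top$.

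For the inductive step I would first show that the three small $r\times r$ SVDs in Algorithm~\ref{alg:mcb2} reconstruct the balanced factorization of $\hat M_{i,t-1}$ that Algorithm~\ref{alg:mcb1} computes by a full SVD. Writing thin factorizations $\hat U_{i,t-1}=P_U D_U^{1/2}R_U^\top$ and $\hat V_{i,t-1}=P_V D_V^{1/2}R_V^\top$, where $P_U:=\hat U_{i,t-1}R_U D_U^{-1/2}\in\OO^{d_1\times r}$ and $P_V:=\hat V_{i,t-1}R_V D_V^{-1/2}\in\OO^{d_2\times r}$, gives $\hat M_{i,t-1}=P_U B P_V^\top$ with $B:=D_U^{1/2}R_U^\top R_V D_V^{1/2}$. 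Substituting the SVD $B=Q_U D Q_V^\top$ exhibits $\hat M_{i,t-1}=(P_UQ_U)D(P_VQ_V)^\top$ as the genuine thin SVD, so the balanced factors are $\hat U^{\mathrm{bal}}=\hat U_{i,t-1}G_U$ and $\hat V^{\mathrm{bal}}=\hat V_{i,t-1}G_V$ with $G_U:=R_UD_U^{-1/2}Q_UD^{1/2}$ and $G_V:=R_VD_V^{-1/2}Q_VD^{1/2}$. A short computation using $Q_UDQ_V^\top=D_U^{1/2}R_U^\top R_V D_V^{1/2}$ then gives $G_UG_V^\top=I$, i.e. rebalancing preserves the product.

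The core is a term-by-term comparison of the two updates. Expanding $\hat M_{i,t}^{(1)}$, the gradient step on $\hat U^{\mathrm{bal}},\hat V^{\mathrm{bal}}$ produces four terms governed by $G_UG_U^\top$, $G_VG_V^\top$, and $G_VG_U^\top$; expanding Algorithm~\ref{alg:mcb2}'s update produces the same four-term structure but with the inserted matrices $H:=R_VD_V^{-1/2}Q_VQ_U^\top D_U^{1/2}R_U^\top$ and $H':=R_UD_U^{-1/2}Q_UQ_V^\top D_V^{1/2}R_V^\top$ in place of these Gram matrices. The two products thus coincide once I verify $H=G_VG_V^\top$ and $H'=G_UG_U^\top$, after which the quadratic cross term matches automatically because $G_V^\top G_U=I$ forces $HH'^\top=G_V(G_V^\top G_U)G_U^\top=G_VG_U^\top$. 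After cancelling the outer orthogonal factors, these two Gram identities reduce to the dual relations $R_VD_V^{-1/2}Q_VD=R_UD_U^{1/2}Q_U$ and $R_UD_U^{-1/2}Q_UD=R_VD_V^{1/2}Q_V$, both of which follow from $Q_UD=BQ_V$ and $Q_VD=B^\top Q_U$ together with the definition of $B$. This closes the induction.

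The main obstacle is conceptual rather than computational: the product of the factors after a single gradient step is \emph{not} invariant under reparametrization of $(U,V)$, so one cannot argue ``same product in, same product out'' directly. The role of the inserted matrices $H,H'$ is precisely to convert the gradient taken with respect to Algorithm~\ref{alg:mcb2}'s unbalanced factors into the gradient Algorithm~\ref{alg:mcb1} would take with respect to the balanced factors; making this exact requires the bookkeeping of the two Gram identities above and careful attention to the SVD rotation conventions defining $Q_U,Q_V$, so that $H,H'$ coincide with the matrices written in Algorithm~\ref{alg:mcb2}.
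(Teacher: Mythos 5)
Your proposal is correct. Note first that the paper itself offers no proof of this lemma: it is imported verbatim as Lemma 3.2 of \cite{jin2016provable}, so your argument serves as a self-contained verification rather than an alternative to anything written in the text. Your induction is structured the right way: equality of \emph{products}, not factors, is the invariant, and since $\pi_t$, $a_t$, $r_t$ and the residual $\langle \hat M_{i,t-1},X_t\rangle-r_t$ depend on the iterates only through the products, both algorithms indeed process identical data under the inductive hypothesis. The algebraic core also checks out: with $G_U=R_UD_U^{-1/2}Q_UD^{1/2}$ and $G_V=R_VD_V^{-1/2}Q_VD^{1/2}$, one has $G_UG_V^{\top}=R_U(R_U^{\top}R_V)R_V^{\top}=I$ because $D_U^{-1/2}BD_V^{-1/2}=R_U^{\top}R_V$; your two dual relations follow from $Q_VD=B^{\top}Q_U$ and $Q_UD=BQ_V$ after cancelling $R_VR_V^{\top}$ (resp.\ $R_UR_U^{\top}$), and they yield $H^{\top}=G_VG_V^{\top}$ and $H'^{\top}=G_UG_U^{\top}$, hence $H=G_VG_V^{\top}$ and $H'=G_UG_U^{\top}$ by symmetry of the Gram matrices, and finally $HH'^{\top}=G_V(G_V^{\top}G_U)G_U^{\top}=G_VG_U^{\top}$ since $G_UG_V^{\top}=I$ forces $G_V^{\top}=G_U^{-1}$. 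Matching the three non-leading terms of the two expanded products then closes the induction. Two small points you should state explicitly to make the write-up airtight: (i) the full SVD in Algorithm~\ref{alg:mcb1} determines the balanced factors only up to a joint rotation $U_1\mapsto U_1O$, $V_1\mapsto V_1O$ with $O$ orthogonal (commuting with the singular values when they repeat), but the product update involves only $U_1U_1^{\top}$, $V_1V_1^{\top}$, $V_1U_1^{\top}$, all invariant under this ambiguity, so speaking of ``the'' balanced factorization is harmless; (ii) the computation requires $D_U$ and $D_V$ to be invertible, i.e., the factors retain full column rank $r$ — but this is exactly the condition Algorithm~\ref{alg:mcb2} already needs to form $D_U^{-1/2}$ and $D_V^{-1/2}$, so no assumption beyond the algorithm's own well-definedness is introduced.
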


Algorithm \ref{alg:mcb2} only requires computing the SVD of the $r\times r$ matrices $\widehat{U}_{i,t-1}^{\top}\widehat{U}_{i,t-1}$ and $\widehat{V}_{i,t-1}^{\top}\widehat{V}_{i,t-1}$. This reduces the computational complexity of each SVD step from $O(d_1d_2^2)$ to $O(r^3)$, and the computational complexity for matrix multiplication step is $O(d_1r^2)$. Since the rank $r$ is significantly smaller than the dimensions $d_1$ and $d_2$ and is typically a constant in many applications, the computational complexity remains manageable even as the dimensions scale rapidly. Consequently, the computational burden remains low in high-dimensional settings.

\subsection{Non-uniform sampling} \label{non-uniform}
Our previous results assume that the covariates $X_t$ are uniformly sampled from the orthonormal basis $\mathcal{X}=\{e_{j_1}e_{j_2}^{\top}: j_1\in [d_1],j_2\in[d_2] \}$, ensuring equal opportunity to make decisions for each product or block at every week or hour. While this assumption facilitates theoretical analysis, it may not hold in certain applications. For instance, in online recommendation systems, some products may be infrequently searched or visited by users, violating uniform sampling. To address this, we consider covariates sampled with known probabilities $p_{X}$ for each $X\in \mathcal{X}$. Our methods remain applicable, albeit with more complex formulations for theoretical analysis.

For policy learning, all learning strategies remain the same except that the stochastic gradient descent step in Algorithm \ref{alg:mcb1} should be modified to:
\begin{align*}
        \begin{pmatrix}
            \tilde U_{1,t} \\
            \tilde V_{1,t}
        \end{pmatrix} = \begin{pmatrix}
            \hat U_{1,t-1} \\
            \hat V_{1,t-1} 
        \end{pmatrix} - \frac{\eta_t}{\pi_tp_{X_t}}\cdot\begin{pmatrix}
            \big(\inp{\hat U_{1,t-1}\hat V_{1,t-1}^{\top}}{X_t}-r_t\big)X_t\hat V_{1,t-1} \\
            \big(\inp{\hat U_{1,t-1}\hat V_{1,t-1}^{\top}}{X_t}-r_t)X_t^{\top}\hat U_{1,t-1}
        \end{pmatrix}
\end{align*}	
and
\begin{align*}
        \begin{pmatrix}
            \tilde U_{0,t} \\
            \tilde V_{0,t}
        \end{pmatrix} = \begin{pmatrix}
            \hat U_{0,t-1} \\
            \hat V_{0,t-1} 
        \end{pmatrix} - \frac{\eta_t}{(1-\pi_t)p_{X_t}}\cdot\begin{pmatrix}
            \big(\inp{\hat U_{0,t-1}\hat V_{0,t-1}^{\top}}{X_t}-r_t\big)X_t\hat V_{0,t-1} \\
            \big(\inp{\hat U_{0,t-1}\hat V_{0,t-1}^{\top}}{X_t}-r_t)X_t^{\top}\hat U_{0,t-1}
        \end{pmatrix}.
\end{align*}	
Define $p_{L}=\min_{X\in \mathcal{X}}p_{X}$, and set $T_0=C_0T^{1-\gamma}$ for large $C_0$. When $t\leq T_0$, set a constant exploration probability $\varepsilon_t=\varepsilon$ and the step size $\eta_t:=\eta=\frac{c_1\log d_1}{T^{1-\gamma}\lambda_{\max}}$ for some constant $c_1$; when $T_0<t\leq T$, set $\varepsilon_t\asymp t^{-\gamma}$ and $\eta_t=\varepsilon_t\eta$. Assume that the sample size and SNR satisfy $T^{1-\gamma}\gg r\log (d_1)/(p_Ld_2)$ and $\lambda_{\min}^2/(\sigma_0^2+\sigma_1^2)\gg r\log(d_1)/(d_2p_L^2T^{1-\gamma})$. Under these settings, we can achieve the following convergence results for the online estimators:
\begin{align*}
    &\big\|\hat M_{i, t} - M_i \big\|_{\rm F}^2\leq C_3\fro{\widehat{M}_{i,0}-M_i}^2 \left(1-\frac{c_1\log d_1}{4\kappa T^{1-\gamma}}\right)^t + C_4t\cdot \sigma_i^2\frac{rd_1\log^4 d_1}{p_LT^{2-2\gamma}}, \quad\forall t\leq T_0; \\
    &\big\|\hat M_{i, t} - M_i \big\|_{\rm F}^2\leq C_5\sigma_i^2\frac{rd_1\log^4 d_1}{p_LT^{1-\gamma}}, \quad \forall t>T_0.
\end{align*}
For policy inference, the debiasing procedure would require slight modification:
\begin{align*}
    \hat M_0^{\uipw}:&= \frac{1}{T-T_0}\sum_{t=T_0+1}^T \hat M_{0, t-1}+\frac{1}{T-T_0}\sum_{t=T_0+1}^T \frac{\mathbbm{1}(a_t=0)}{(1-\pi_t)p_{X_t}} \big(r_t-\langle \hat M_{0, t-1}, X_t\rangle\big)X_t \\
    \hat M_1^{\uipw}:&= \frac{1}{T-T_0}\sum_{t=T_0+1}^T \hat M_{1, t-1}+\frac{1}{T-T_0}\sum_{t=T_0+1}^T \frac{\mathbbm{1}(a_t=1)}{\pi_tp_{X_t}} \big(r_t-\langle \hat M_{1, t-1}, X_t\rangle\big)X_t.  
\end{align*}
Moreover, if the arm-optimality sets satisfy 
\begin{align*}
    \frac{\sum_{X\in \Omega_{\emptyset}}1/p_{X}\inp{X}{\calP_{M_i}(Q)}^2}{\min\{\sum_{X\in \Omega_1}1/p_{X}\inp{X}{\calP_{M_i}(Q)}^2, \sum_{X\in \Omega_0}1/p_{X}\inp{X}{\calP_{M_i}(Q)}^2\}}=o(1), \quad i\in\{0,1\}, 
\end{align*}
then the following asymptotic normality holds:
\begin{align*}
 \frac{\langle \hat M_0, Q\rangle - \langle M_0, Q\rangle}{\sigma_0S_0 \sqrt{1/T^{1-\gamma}}}\rightarrow N(0,1)\quad {\rm and}\quad 
 \frac{\langle \hat M_1, Q\rangle - \langle M_1, Q\rangle}{\sigma_1S_1 \sqrt{1/T^{1-\gamma}}}\rightarrow N(0,1),
\end{align*}
as $d_1,d_2, T\to\infty$,  where
\begin{align*}
 &S_1^2=T^{-\gamma}\sum_{X\in \Omega_1}1/p_{X}\inp{X}{\calP_{M_1}(Q)}^2+C_{\gamma}\sum_{X\in \Omega_0}1/p_{X}\inp{X}{\calP_{M_1}(Q)}^2 \\
&S_0^2=T^{-\gamma}\sum_{X\in \Omega_0}1/p_{X}\inp{X}{\calP_{M_0}(Q)}^2+C_{\gamma}\sum_{X\in \Omega_1}1/p_{X}\inp{X}{\calP_{M_0}(Q)}^2.
\end{align*}

\subsection{Other bandit algorithms}
We discuss two common bandit algorithms: upper confidence bound (UCB) (\cite{li2010contextual}) and Thompson sampling (TS) (\cite{agrawal2013thompson}) for matrix completion bandit. 
Although both algorithms are well-established in the broader bandit literature, their applications to high-dimensional linear bandits remain relatively limited, let alone matrix completion bandits.


\noindent \textbf{(1) Upper confidence bound (UCB) }
The UCB algorithm effectively balances the trade-off between exploration and exploitation through the principle of \emph{optimism in the face of uncertainty}. The fundamental idea behind UCB is to maintain an optimistic estimate of each arm's potential reward, thereby encouraging the selection of arms that may yield higher rewards based on both observed data and inherent uncertainty. The uncertainty of estimated bandit parameters can be quantified by the sup-norm convergence rate. Therefore, at time $t$, we define
\begin{align*}
     \widehat{p}_{i,t}:= C_0\hat{\sigma}_i\sqrt{\frac{rd_1\log^4d_1}{T^{1-\gamma}}}, \quad \forall i\in\{0,1\}\ {\rm and}\ t\geq T_0,
\end{align*}
where $C_0>0$ is a large constant. Then the action $a_t$ is selected by
\begin{align*}
    a_t:=\mathbbm{1}\left(\inp{X_t}{\widehat{M}_{1,t-1}}+\widehat{p}_{1,t}>\inp{X_t}{\widehat{M}_{0,t-1}}+\widehat{p}_{0,t}\right),\quad \forall \ t\geq T_0.
\end{align*}
For simplicity, we still adopt a constant exploration probability during phase I when $t<T_0:\asymp T^{1-\gamma}$ for some $\gamma\in(0,1)$. We can show that UCB algorithm achieves a regret of $\widetilde{O}\big(\bar{m}\cdot T^{1-\gamma}+\bar\sigma\cdot T^{(1+\gamma)/2}\sqrt{rd_1}\log^2d_1\big)$. However, policy inference under UCB algorithm is significantly challenging because of the inherent bias in $\hat M_{i,t}$'s and that the action $a_t$ is deterministic conditioned on $\calF_{t-1}$ and $X_t$. In contrast, the $\varepsilon$-greedy algorithm randomizes the action $a_t$ and facilitates the IPW-based debiaing technique.

\noindent \textbf{(2) Thompson sampling (TS) }
Unlike UCB which takes a deterministic approach to balancing exploration and exploitation, Thompson sampling is a probabilistic strategy that uses Bayesian principles to guide decision-making. It works by maintaining a posterior distribution for the reward of each arm, which represents the uncertainty about the true reward. 

We follow the Bayesian matrix completion method in \cite{alquier2014bayesian}. Denote the $h$-th row of $U_i$ as $U_i^{(h)\top}$, the $g$-th row of $V_i$ as $V_i^{(g)\top}$, and the $l$-th column of $U_i$ and $V_i$ as $u_{i,l}$ and $v_{i,l}$. Given some constant vector $\delta=(\delta_1,\cdots, \delta_r)$, we assign the Gaussian priors for $u_{i,l}$ and $v_{i,l}$ as $u_{i,l}\sim N(0, \delta_l I_{d_1})$ and $v_{i,l}\sim N(0, \delta_l I_{d_2})$.

At each iteration $t$, suppose the context is $X_t=e_{i_t}e_{j_t}^{\top}$. We first draw $U_{i,t}^{(i_t)\top}$ and $V_{i,t}^{(j_t)\top}$ from their posteriors at time $t-1$ for $i=0,1$. The action $a_t$ is selected by
\begin{align*}
    a_t:=\mathbbm{1}\left(U_{1,t}^{(i_t)\top}V_{1,t}^{(j_t)} > U_{0,t}^{(i_t)\top}V_{0,t}^{(j_t)}\right).
\end{align*}
Then we can update the posterior distribution of $U_{a_t}^{(i_t)\top}$ and $V_{a_t}^{(j_t)\top}$ at time $t$ as
\begin{align*}
    U_{a_t}^{(i_t)\top}\big| \mathcal{F}_t,\delta \sim N(\bar{U}_{a_t}^{(i_t)\top}, \mathcal{V}_{a_t,i_t}) \quad\quad  V_{a_t}^{(j_t)\top}\big| \mathcal{F}_t,\delta \sim N(\bar{V}_{a_t}^{(j_t)\top}, \mathcal{W}_{a_t,j_t}), 
\end{align*}
where 
\begin{align*}
    &\mathcal{V}_{a_t,i_t}^{-1}= \text{diag}(\delta)^{-1} + 2\beta V_{a_t,t}^{(j_t)}V_{a_t,t}^{(j_t)\top}, \quad \bar{U}_{a_t}^{(i_t)\top}= 2r_t\beta V_{a_t,t}^{(j_t)\top}\mathcal{V}_{a_t,i_t}, \\
    & \mathcal{W}_{a_t,j_t}^{-1}= \text{diag}(\delta)^{-1} + 2\beta U_{a_t,t}^{(i_t)}U_{a_t,t}^{(i_t)\top}, \quad \bar{V}_{a_t}^{(j_t)\top}= 2r_t\beta U_{a_t,t}^{(i_t)\top}\mathcal{W}_{a_t,j_t},
\end{align*}
and $\beta$ is a tuning parameter depending on $\sigma_i$.

\section{Proofs of Main Results}\label{app:proofs_thms}

\subsection{Proof of Theorem \ref{thm:MCB-conv}} \label{proof:completeconv}
We first state the convergence result under general choices of step size $\eta_t$ and $\varepsilon_t$ and provide the proof for this Theorem. 
\begin{thm} \label{thm:genera-thm1}
Suppose that the horizon $T\leq d_1^{50}$ and the initializations are incoherent,  satisfying $\|\hat M_{0,0}-M_0\|_{\rm F}+\|\hat M_{1,0}-M_1\|_{\rm F}\leq c_0\lambda_{\min}$ for some small constant $c_0>0$. Let $\hat M_{0,t}, \hat M_{1,t}, t\in[T]$ denote the output of Algorithm~\ref{alg:mcb1}. 
There exist absolute constants $C_1, C_2,C_3, C_4>0$ such that if, for any $t\in[T]$, 
\begin{align}\label{eq:thm-MCB-eq1}
\min\bigg\{\frac{\lambda_{\min}^2}{\sigma_0^2+\sigma_1^2},\ d_1d_2\log d_1\bigg\}\geq C_1 \sum_{\tau=1}^t \frac{(\eta_{\tau}\lambda_{\max})^2}{\eps_{\tau}}\cdot \frac{r\log^2d_1}{d_2}\quad {\rm and}\quad \max_{\tau\in [t]}\ \frac{\eta_{\tau}^2}{\eps_{\tau}^2}\leq C_2\sum_{\tau=1}^t\frac{\eta_{\tau}^2}{\eps_{\tau}},
\end{align}
then with probability at least $1-8td^{-200}$,
\begin{align*}
&\|\hat M_{i,t}-M_i\|_{\rm F}^2\leq C_3\|\hat M_{i,0}-M_i\|_{\rm F}^2\cdot \prod_{\tau=1}^t \Big(1-\frac{\eta_{\tau}\lambda_{\min}}{4d_1d_2}\Big) +C_4\sigma_i^2\frac{r\log^2 d_1}{d_2}\cdot  \sum_{\tau=1}^t \frac{(\eta_{\tau}\lambda_{\max})^2}{\eps_{\tau}}\\
&\|\hat M_{i, t}-M_i\|_{\max}^2\leq C_3\frac{\lambda_{\min}^2r^3}{d_1d_2}\prod_{\tau=1}^t\Big(1-\frac{\eta_{\tau}\lambda_{\min}}{4d_1d_2}\Big) +C_4\sigma_i^2\frac{r^3\log^2d_1}{d_1d_2^2}\cdot  \sum_{\tau=1}^t \frac{(\eta_{\tau}\lambda_{\max})^2}{\eps_{\tau}},
\end{align*}
for $i=0, 1$. 
\end{thm}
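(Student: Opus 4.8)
The plan is to establish both displayed bounds simultaneously by induction on $t$, carrying a joint hypothesis that controls, at every step, the Frobenius error $\fro{\hat M_{i,t}-M_i}$, the max-norm error $\|\hat M_{i,t}-M_i\|_{\max}$, and the incoherence of the running factors $\hat L_{i,t},\hat R_{i,t}$. The engine of the induction is a one-step error recursion. The key observation is that, conditioned on $\calF_{t-1}$, the IPW-reweighted stochastic gradient is an unbiased estimate of the population gradient of the factored objective $\tfrac{1}{2d_1d_2}\fro{UV^\top-M_i}^2$: indeed $\EE[\mathbbm{1}(a_t=i)/\pi_t\mid\calF_{t-1},X_t]=1$ cancels the propensity, and averaging the rank-one design $X_t$ uniformly over $\calX$ produces the $1/(d_1d_2)$ prefactor. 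I would therefore split each gradient step into (i) a deterministic population descent step with effective step size $\eta_t/(d_1d_2)$ and (ii) a martingale-difference noise term collecting both the sampling fluctuation of $(X_t,a_t)$ and the sub-Gaussian reward noise $\xi_t$.

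For the population part I would invoke the local restricted strong convexity and smoothness of the balanced factored least-squares objective near $M_i$, in the spirit of the offline non-convex analyses cited in the paper. Provided the iterate remains in the basin $\fro{\hat M_{i,t-1}-M_i}\lesssim\lambda_{\min}$ and stays incoherent, one descent step contracts the distance to $M_i$ by the factor $1-\eta_t\lambda_{\min}/(4d_1d_2)$; the balancing constraint $U^\top U=V^\top V$ enforced by the SVD step is exactly what keeps the factor dynamics well-conditioned and yields the $\lambda_{\min}$ rate rather than a degenerate one. Iterating over $\tau=1,\dots,t$ produces the product terms $\prod_\tau\bigl(1-\eta_\tau\lambda_{\min}/(4d_1d_2)\bigr)$ in both displays.

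The noise accumulation is handled by martingale concentration. The per-step noise, projected onto the relevant tangent and top-$r$ subspaces, is a martingale difference indexed by $\tau$, and a matrix Freedman/Bernstein inequality bounds its Frobenius norm by $\sigma_i^2\,(r\log^2 d_1/d_2)\sum_\tau(\eta_\tau\lambda_{\max})^2/\varepsilon_\tau$. Here the weight $1/\varepsilon_\tau$ is precisely the worst-case size of the inverse propensity $1/\pi_\tau$ (since $\pi_\tau\ge\varepsilon_\tau/2$); the $\lambda_{\max}^2$ appears because translating the factor-level noise into the matrix $\hat M_{i,t}=\tilde U_{i,t}\tilde V_{i,t}^\top$ introduces a factor $\hat V\hat V^\top$ (resp. $\hat U\hat U^\top$) of operator norm $\asymp\lambda_{\max}$; and the remaining dimensional factor $r\log^2 d_1/d_2$ reflects the rank-$r$ projection of the incoherent uniform design. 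The second half of condition (\ref{eq:thm-MCB-eq1}), namely $\max_\tau\eta_\tau^2/\varepsilon_\tau^2\lesssim\sum_\tau\eta_\tau^2/\varepsilon_\tau$, is exactly the predictable-quadratic-variation-versus-maximal-jump balance required for Freedman's inequality to deliver the stated rate, while the first half keeps the accumulated noise below $\lambda_{\min}^2$, so the iterate cannot escape the local basin and the contraction argument remains valid throughout.

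The main obstacle, and what makes this genuinely harder than offline matrix completion, is the max-norm bound together with propagating incoherence under adaptively collected data. I would control $\|\hat M_{i,t}-M_i\|_{\max}$ and the row norms $\|\hat L_{i,t}\|_{2,\max}$, $\|\hat R_{i,t}\|_{2,\max}$ by a leave-one-out argument adapted to the martingale filtration: for each row index, construct an auxiliary iterate sequence that ignores the observations touching that row, show it stays incoherent and close to the truth, and bound the gap between the true and auxiliary iterates step by step. Because the data are dependent and the IPW weights are random and possibly as large as $1/\varepsilon_\tau$, the usual i.i.d.\ leave-one-out decoupling must be replaced by conditional martingale concentration at each step, and incoherence must be carried as part of the induction hypothesis so that the one-step contraction and noise bounds above continue to apply. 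By incoherence the entrywise spread of the error is a factor $\asymp r^2/(d_1d_2)$ smaller than its Frobenius counterpart, which accounts for the coefficient $r^3\log^2 d_1/(d_1d_2^2)$ in the max-norm display. Closing this induction and taking a union bound over $\tau\le t$ then yields both inequalities, for $i=0,1$, at the claimed probability level $1-8td_1^{-200}$.
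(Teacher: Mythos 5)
Your one-step architecture for the Frobenius bound is essentially the paper's proof. The paper also runs an induction on an event $\calE_t$ carrying exactly your three quantities (Frobenius error, row-wise/entrywise error, and incoherence of the iterates), splits the update $\hat U_t\hat V_t^{\top}=\hat U_{t-1}\hat V_{t-1}^{\top}-\Delta_t$ into a conditional drift and a martingale fluctuation, uses the IPW unbiasedness $\EE[\mathbbm{1}(a_t=i)/\pi_{i,t}\,|\,\calF_{t-1},X_t]=1$ together with the balanced-factor inequality $\fro{(\hat U\hat V^{\top}-M)\hat V}^2+\fro{\hat U^{\top}(\hat U\hat V^{\top}-M)}^2\geq \tfrac{\lambda_{\min}}{2}\fro{\hat U\hat V^{\top}-M}^2$ to get the contraction $1-f(\eta_t)$, bounds $1/\pi_\tau\leq 2/\varepsilon_\tau$, telescopes, and controls the accumulated martingale terms $A_\tau,B_\tau$ by martingale Bernstein, with your reading of the two halves of condition (\ref{eq:thm-MCB-eq1}) (noise stays below $\lambda_{\min}^2$ so the iterate stays in the basin; maximal jump dominated by predictable variance) matching the roles these conditions play in Lemma \ref{lemmafro}.

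Where you diverge is the entrywise bound, and there your plan has a real obstruction. You propose a leave-one-out construction per row, but in this adaptive setting the decoupling that leave-one-out is designed to deliver is not available: a single observation at $(l,j_2)$ updates row $j_2$ of $\hat V$ (since the $V$-gradient is $c\,e_{j_2}(e_l^{\top}\hat U)$), so the noise from row-$l$ observations propagates within one step into generic rows of $\hat V$, hence into every entry of subsequent iterates and into every propensity $\pi_t$; an auxiliary sequence that "ignores the observations touching row $l$" therefore cannot be made independent of row-$l$ randomness without also changing the actions that generated the actual data, and the usual closeness-plus-independence argument collapses. Your fallback — replacing decoupling with conditional martingale concentration at each step — is sound, but once you invoke it the leave-one-out scaffolding is doing no work: since $\hat U_{\tau-1},\hat V_{\tau-1}$ are $\calF_{\tau-1}$-measurable, one can directly run per-row recursions for $\|e_l^{\top}\hat U_t\hat V_t^{\top}\|^2$ (incoherence propagation, Lemma \ref{incoherence}) and $\|e_l^{\top}(\hat U_t\hat V_t^{\top}-M)\|^2$ (Lemma \ref{lemma2max}) with martingale Bernstein, exactly as the paper does. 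The paper then converts row-wise control to the max-norm bound not by leave-one-out but by the deterministic decomposition $\hat M_t-M=\hat L_t\hat L_t^{\top}(\hat M_t-M)\hat R_t\hat R_t^{\top}+(\hat L_t\hat L_t^{\top}-LL^{\top})M\hat R_t\hat R_t^{\top}+LL^{\top}M(\hat R_t\hat R_t^{\top}-RR^{\top})$, bounding the projector perturbations via Wedin's $\sin\Theta$ theorem and a two-to-infinity norm bound; this is where the $\lambda_{\min}^2 r^3/(d_1d_2)$ and $r^3\log^2 d_1/(d_1d_2^2)$ coefficients actually arise. So: correct skeleton and correct Frobenius argument, but the max-norm step should be rerouted through direct per-row martingale recursions plus spectral perturbation rather than leave-one-out.
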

\begin{proof}[Proof of Theorem~\ref{thm:genera-thm1}]
    Define event $\calE_t$ for $1\leq t\leq T$ as
    \begin{align*}
        \calE_t&=\bigg\{\forall i\in\{0,1\}, \forall \tau<t: \fro{\widehat{M}_{i,\tau}-M_i}^2\lesssim \prod_{k=1}^{\tau} (1-\frac{f(\eta_k)}{2}) \fro{\widehat{M}_{i,0}-M}^2 + \sum_{k=1}^{\tau}\frac{\eta_k^2r\lambda_{\max}^2\sigma^2\log^2d_1}{\varepsilon_kd_2}, \\
        &\hspace{2cm}\|\widehat{M}_{i,\tau} - M_i\|_{\max}^2 \lesssim \prod_{k=1}^{\tau} (1-\frac{f(\eta_k)}{2}) \frac{\text{poly}(\mu,r,\kappa)\lambda_{\min}^2}{d_1d_2}+ \sum_{k=1}^{\tau} \frac{\eta_k^2\text{poly}(\mu,r,\kappa)\lambda_{\max}^2\sigma^2\log^2 d_1}{\varepsilon_k d_1d_2^2}, \\
        &\hspace{2cm} \|e_l^{\top} \widehat{U}_{i,\tau}\widehat{V}_{i,\tau}^{\top}\|^2 \leq \frac{\mu_0r\lambda_{\max}^2}{d_1}, \|e_j^{\top} \widehat{V}_{i,\tau}\widehat{U}_{i,\tau}^{\top}\|^2 \leq \frac{\mu_0r\lambda_{\max}^2}{d_2},\quad \forall l \in [d_1], j \in [d_2] \bigg\},
    \end{align*}
    where $f(\eta_k)=\frac{\eta_k\lambda_{\min}}{d_1d_2}$ and $\mu_0=100\mu$. \\
    Recall in Algorithm \ref{alg:mcb1}, at each step, we update $\widetilde{U}_{i,t}$ and $\widetilde{V}_{i,t}$ by 
    \begin{align*}
        \widetilde{U}_{i,t}&=\widehat{U}_{i,t-1}-\frac{\mathbbm{1}(a_t=i)}{i\pi_t+(1-i)(1-\pi_t)}\eta_t (\inp{\widehat{U}_{i,t-1}\widehat{V}_{i,t-1}^{\top}}{X_t}-r_t)X_t\widehat{V}_{i,t-1}, \\
        \widetilde{V}_{i,t}&=\widehat{V}_{i,t-1}-\frac{\mathbbm{1}(a_t=i)}{i\pi_t+(1-i)(1-\pi_t)}\eta_t (\inp{\widehat{U}_{i,t-1}\widehat{V}_{i,t-1}^{\top}}{X_t}-r_t)X_t^{\top}\widehat{U}_{i,t-1}.
    \end{align*}
    Then
    \begin{align*}
        \widehat{U}_{i,t}\widehat{V}_{i,t}^{\top}=&\widetilde{U}_{i,t}\widetilde{V}_{i,t}^{\top}=\widehat{U}_{i,t-1}\widehat{V}_{i,t-1}^{\top}- \frac{\mathbbm{1}(a_t=i)}{i\pi_t+(1-i)(1-\pi_t)}\eta_t(\inp{U_{i,t-1}V_{i,t-1}^{\top}}{X_t}-y_t)X_t\widehat{V}_{i,t-1}\widehat{V}_{i,t-1}^{\top} \\ &-  \frac{\mathbbm{1}(a_t=i)}{i\pi_t+(1-i)(1-\pi_t)}\eta_t(\inp{\widehat{U}_{i,t-1}\widehat{V}_{i,t-1}^{\top}}{X_t}-r_t)\widehat{U}_{i,t-1}\widehat{U}_{i,t-1}^{\top}X_t \\ &+  \frac{\mathbbm{1}(a_t=i)}{i\pi_t^2+(1-i)(1-\pi_t)^2}\eta_t^2(\inp{\widehat{U}_{i,t-1}\widehat{V}_{i,t-1}^{\top}}{X_t}-r_t)^2X_t\widehat{V}_{i,t-1}\widehat{U}_{i,t-1}^{\top}X_t.
    \end{align*}
    Without loss of generality, in the rest of the proof, we only consider $i=1$ and omit the subscript $i$, then $i=0$ case can be proved similarly. Let
    \begin{align*}
        \Delta_t= &\frac{\mathbbm{1}(a_t=1)}{\pi_t}\eta_t(\inp{\widehat{U}_{t-1}\widehat{V}_{t-1}^{\top} - M}{X_t}-\xi_t)X_t\widehat{V}_{t-1}\widehat{V}_{t-1}^{\top} \\
        &+ \frac{\mathbbm{1}(a_t=1)}{\pi_t}\eta_t(\inp{\widehat{U}_{t-1}\widehat{V}_{t-1}^{\top}- M}{X_t}-\xi_t)\widehat{U}_{t-1}\widehat{U}_{t-1}^{\top}X_t \\
        &- \frac{\mathbbm{1}(a_t=1)}{\pi_t^2}\eta_t^2(\inp{\widehat{U}_{t-1}\widehat{V}_{t-1}^{\top}- M}{X_t}-\xi_t)^2X_t\widehat{V}_{t-1}\widehat{U}_{t-1}^{\top}X_t,
    \end{align*}
    where $\pi_t=\PP(a_t=1|X_t,\mathcal{F}_{t-1})$. In the following steps, we prove that if event $\mathcal{E}_{t-1}$ happens, event $\mathcal{E}_t$ will happen with high probability.\\
    \emph{Step 1: bounding $\fro{\widehat{U}_t\widehat{V}_t^{\top}-M}^2$.} \\
    We have $\widehat{U}_t\widehat{V}_t=\widehat{U}_{t-1}\widehat{V}_{t-1}^{\top}-\Delta_t$ and 
    \begin{align*}
        \EE[\fro{\widehat{U}_t\widehat{V}_t^{\top}-M}^2|\mathcal{F}_{t-1}]=\fro{\widehat{U}_{t-1}\widehat{V}_{t-1}^{\top}-M}^2-2\inp{\EE[\Delta_t|\mathcal{F}_{t-1}]}{\widehat{U}_{t-1}\widehat{V}_{t-1}^{\top}-M}+\EE[\fro{\Delta_t}^2|\mathcal{F}_{t-1}].
    \end{align*}
    We first compute $2\inp{\EE[\Delta_t|\mathcal{F}_{t-1}]}{\widehat{U}_{t-1}\widehat{V}_{t-1}^{\top}-M}$,
    \begin{align*}
        &2\inp{\EE[\Delta_t|\mathcal{F}_{t-1}]}{\widehat{U}_{t-1}\widehat{V}_{t-1}^{\top}-M} =  \frac{2\eta_t}{d_1d_2} \big(\fro{(\widehat{U}_{t-1}\widehat{V}_{t-1}^{\top} - M)\widehat{V}_{t-1}}^2 + \fro{\widehat{U}_{t-1}^{\top}(\widehat{U}_{t-1}\widehat{V}_{t-1}^{\top} - M)}^2\big)\\
        &\quad - 2\eta_t^2\EE\left[\frac{\mathbbm{1}(a_t=1)}{\pi_t^2}\left(\inp{\widehat{U}_{t-1}\widehat{V}_{t-1}^{\top}-M}{X_t}^2 \inp{X_t\widehat{V}_{t-1}\widehat{U}_{t-1}^{\top}X_t}{\widehat{U}_{t-1}\widehat{V}_{t-1}^{\top}-M} \right.\right.\\ 
        &\left.\left. \quad + \xi_t^2\inp{X_t\widehat{V}_{t-1}\widehat{U}_{t-1}^{\top}X_t}{\widehat{U}_{t-1}\widehat{V}_{t-1}^{\top}-M}\right)|\mathcal{F}_{t-1}\right]\\
        &\geq \frac{\eta_t\lambda_{\min}}{d_1d_2}\fro{\widehat{U}_{t-1}\widehat{V}_{t-1}^{\top}-M}^2\\
        &\quad - \frac{4\eta_t^2}{\varepsilon_t}\|\widehat{V}_{t-1}\widehat{U}_{t-1}^{\top}\|_{\max}\|\widehat{U}_{t-1}\widehat{V}_{t-1}^{\top}-M\|_{\max}(\frac{1}{d_1d_2}\fro{\widehat{U}_{t-1}\widehat{V}_{t-1}^{\top}-M}^2+\sigma^2).
    \end{align*}
    It is clear to see that $\fro{(\widehat{U}_{t-1}\widehat{V}_{t-1}^{\top} - M)\widehat{V}_{t-1}}^2 + \fro{\widehat{U}_{t-1}^{\top}(\widehat{U}_{t-1}\widehat{V}_{t-1}^{\top} - M)}^2\geq \frac{\lambda_{\min}}{2}\fro{\widehat{U}_{t-1}\widehat{V}_{t-1}^{\top} - M}^2$, and we apply $\frac{1}{\pi_t}\leq \frac{2}{\varepsilon_t}$ in the latter term. Applying the incoherence property of $\widehat{U}_{t-1}, \widehat{V}_{t-1}$, $U$ and $V$ and Cauchy Schwarz inequality, $\|\widehat{V}_{t-1}\widehat{U}_{t-1}^{\top}\|_{\max}=O(\frac{\mu_0r\lambda_{\max}}{\sqrt{d_1d_2}})$ and $\|\widehat{U}_{t-1}\widehat{V}_{t-1}^{\top} - M\|_{\max}\leq \frac{2\mu_0 r\lambda_{\max}}{\sqrt{d_1d_2}}$ under event $\mathcal{E}_{t-1}$. Therefore,
    \begin{align*}
        2\inp{\EE[\Delta_t|\mathcal{F}_{t-1}]}{\widehat{U}_{t-1}\widehat{V}_{t-1}^{\top}-M}&\geq \frac{\eta_t\lambda_{\min}}{d_1d_2}\fro{\widehat{U}_{t-1}\widehat{V}_{t-1}^{\top}-M}^2 \\ &\quad - \frac{8\eta_t^2}{\varepsilon_t}\frac{\mu_0^2r^2\lambda_{\max}^2}{d_1d_2}(\frac{1}{d_1d_2}\fro{\widehat{U}_{t-1}\widehat{V}_{t-1}^{\top}-M}^2+\sigma^2).
    \end{align*}
    Then we compute $\EE\left[\fro{\Delta_t}^2|\mathcal{F}_{t-1}\right]$. Since $(a+b+c)^2\leq 3(a^2+b^2+c^2)$, 
    \begin{align*}
        &\EE\left[\fro{\Delta_t}^2|\mathcal{F}_{t-1}\right]\leq 3\EE\left[\frac{\mathbbm{1}(a_t=1)}{\pi_t^2}\eta_t^2(\inp{\widehat{U}_{t-1}\widehat{V}_{t-1}^{\top}-M}{X_t}-\xi_t)^2\fro{X_t\widehat{V}_{t-1}\widehat{V}_{t-1}^{\top}}^2 |\mathcal{F}_{t-1}\right] \\ &\quad +3\EE\left[\frac{\mathbbm{1}(a_t=1)}{\pi_t^2}\eta_t^2(\inp{\widehat{U}_{t-1}\widehat{V}_{t-1}^{\top} - M}{X_t}-\xi_t)^2\fro{\widehat{U}_{t-1}\widehat{U}_{t-1}^{\top}X_t}^2 |\mathcal{F}_{t-1}\right] \\ &\quad +3\EE\left[\frac{\mathbbm{1}(a_t=1)}{\pi_t^4}\eta_t^4(\inp{\widehat{U}_{t-1}\widehat{V}_{t-1}^{\top} - M}{X_t}-\xi_t)^4\fro{X_t\widehat{V}_{t-1}\widehat{U}_{t-1}^{\top}X_t}^2 |\mathcal{F}_{t-1}\right].
    \end{align*}
    For the first expectation, we have
    \begin{align*}
        &\quad \EE\left[\frac{\mathbbm{1}(a_t=1)}{\pi_t^2}\eta_t^2(\inp{\widehat{U}_{t-1}\widehat{V}_{t-1}^{\top}-M}{X_t}-\xi_t)^2\fro{X_t\widehat{V}_{t-1}\widehat{V}_{t-1}^{\top}}^2 |\mathcal{F}_{t-1}\right] \\
        &\leq \frac{2\eta_t^2}{\varepsilon_t}\left(\EE\left[\inp{\widehat{U}_{t-1}\widehat{V}_{t-1}^{\top}-M}{X_t}^2|\mathcal{F}_{t-1}\right]\|\widehat{V}_{t-1}\widehat{V}_{t-1}^{\top}\|_{2,\max}^2 + \EE[\xi_t^2]\EE\left[\fro{X_t\widehat{V}_{t-1}\widehat{V}_{t-1}^{\top}}^2|\mathcal{F}_{t-1}\right]\right) \\
        &\leq \frac{2\eta_t^2}{\varepsilon_t}\left(\frac{1}{d_1d_2}\fro{\widehat{U}_{t-1}\widehat{V}_{t-1}^{\top}-M}^2\frac{\mu_0r\lambda_{\max}^2}{d_2} + \sigma^2\frac{1}{d_2}\fro{\widehat{V}_{t-1}\widehat{V}_{t-1}^{\top}}^2\right) \\
        &\leq \frac{2\eta_t^2}{\varepsilon_td_1d_2}\fro{\widehat{U}_{t-1}\widehat{V}_{t-1}^{\top}-M}^2\frac{\mu_0r\lambda_{\max}^2}{d_2} + \frac{2\eta_t^2r\sigma^2}{\varepsilon_td_2}O(\lambda_{\max}^2).
    \end{align*}
    The last inequality holds since $\fro{\widehat{V}_{t-1}\widehat{V}_{t-1}^{\top}}^2\leq r\|\widehat{V}_{t-1}\widehat{V}_{t-1}^{\top}\|^2$ and $\|\widehat{V}_{t-1}\widehat{V}_{t-1}^{\top}\|^2=O(\lambda_{\max}^2)$. \\
    By symmetry, $\EE\left[\frac{\mathbbm{1}(a_t=1)}{\pi_t^2}\eta_t^2(\inp{\widehat{U}_{t-1}\widehat{V}_{t-1}^{\top} - M}{X_t}-\xi_t)^2\fro{\widehat{U}_{t-1}\widehat{U}_{t-1}^{\top}X_t}^2 |\mathcal{F}_{t-1}\right]\leq \frac{2\eta_t^2}{\varepsilon_td_1d_2}\fro{\widehat{U}_{t-1}\widehat{V}_{t-1}^{\top}-M}^2\frac{\mu_0r\lambda_{\max}^2}{d_1} + \frac{2\eta_t^2r\sigma^2}{\varepsilon_td_1}O(\lambda_{\max}^2)$. Similarly, the last expectation can be written as
    \begin{align*}
        &\quad \EE\left[\frac{\mathbbm{1}(a_t=1)}{\pi_t^4}\eta_t^4(\inp{\widehat{U}_{t-1}\widehat{V}_{t-1}^{\top} - M}{X_t}-\xi_t)^4\fro{X_t\widehat{V}_{t-1}\widehat{U}_{t-1}^{\top}X_t}^2 |\mathcal{F}_{t-1}\right] \\
        &\leq \frac{8\eta_t^4}{\varepsilon_t^3}\left(\EE\left[\inp{\widehat{U}_{t-1}\widehat{V}_{t-1}^{\top}-M}{X_t}^4|\mathcal{F}_{t-1}\right]\|\widehat{V}_{t-1}\widehat{U}_{t-1}^{\top}\|_{\max}^2 + \EE(\xi_t^4)\|\widehat{V}_{t-1}\widehat{U}_{t-1}^{\top}\|_{\max}^2 \right) \\
        &\lesssim \frac{\eta_t^4}{\varepsilon_t^3}\left(\frac{1}{d_1d_2}\|\widehat{U}_{t-1}\widehat{V}_{t-1}^{\top}-M\|_{\max}^2\sum(\widehat{U}_{t-1}\widehat{V}_{t-1}^{\top}-M)_{ij}^2\right)\frac{\mu_0^2r^2\lambda_{\max}^2}{d_1d_2} + \frac{\eta_t^4\sigma^4}{\varepsilon_t^3}\frac{\mu_0^2r^2\lambda_{\max}^2}{d_1d_2} \\
        &\lesssim \frac{\eta_t^4}{\varepsilon_t^3}\frac{\mu_0^2r^2\lambda_{\max}^4}{d_1^3d_2^3}\fro{\widehat{U}_{t-1}\widehat{V}_{t-1}^{\top}-M}^2 + \frac{\eta_t^4\sigma^4}{\varepsilon_t^3}\frac{\mu_0^2r^2\lambda_{\max}^2}{d_1d_2}.
    \end{align*}
    As long as $\eta_t\lesssim \frac{\varepsilon_td_1d_2^{1/2}}{\sqrt{\mu_0r}\lambda_{\max}}$ and $\frac{\lambda_{\min}^2}{\sigma^2}\gg \frac{\eta_t^2}{\varepsilon_t^2}\frac{\lambda_{\max}^2}{d_1r\kappa^2}$, the second order terms of $\eta_t$ above will dominate the fourth order terms. Above all, for some positive constants $c_1$ and $c_2$, 
    \begin{align*}
        &\quad - 2\EE\left(\inp{\Delta_t}{\widehat{U}_{t-1}\widehat{V}_{t-1}^{\top}-M}|\mathcal{F}_{t-1}\right) + \EE\left(\fro{\Delta_t}^2|\mathcal{F}_{t-1}\right) \\
        &\leq (-\frac{\eta_t\lambda_{\min}}{d_1d_2} + \frac{c_1\eta_t^2\mu_0r\lambda_{\max}^2}{\varepsilon_td_1d_2^2})\fro{\widehat{U}_{t-1}\widehat{V}_{t-1}^{\top}-M}^2+\frac{c_2\eta_t^2r\lambda_{\max}^2\sigma^2}{\varepsilon_td_2} \\
        &\leq -f(\eta_t)\fro{\widehat{U}_{t-1}\widehat{V}_{t-1}^{\top}-M}^2+\frac{c_2\eta_t^2r\lambda_{\max}^2\sigma^2}{\varepsilon_td_2},
    \end{align*}
    where $f(\eta_t)=\frac{\eta_t\lambda_{\min}}{2d_1d_2}$. The second order term $\frac{c_1\eta_t^2\mu_0r\lambda_{\max}^2}{\varepsilon_td_1d_2^2}$ is dominated as long as $\eta_t\lesssim \frac{\varepsilon_t d_2}{\mu_0r\kappa\lambda_{\max}}$.\\
    Next, we use telescoping to derive the relationship between $\fro{\widehat{U}_{t-1}\widehat{V}_{t-1}^{\top}-M}^2$ and $\fro{\widehat{U}_{0}\widehat{V}_{0}^{\top}-M}^2$. Notice that,
    \begin{align*}
        \fro{\widehat{U}_{t}\widehat{V}_{t}^{\top} - M}^2 &= \fro{\widehat{U}_{t-1}\widehat{V}_{t-1}^{\top} - \Delta_t - M}^2 = \fro{\widehat{U}_{t-1}\widehat{V}_{t-1}^{\top} - M}^2 - 2\inp{\Delta_t}{\widehat{U}_{t-1}\widehat{V}_{t-1}^{\top} - M} +\fro{\Delta_t}^2\\
        & = \fro{\widehat{U}_{t-1}\widehat{V}_{t-1}^{\top} - M}^2 - 2(\inp{\Delta_t}{\widehat{U}_{t-1}\widehat{V}_{t-1}^{\top}-M} -\EE(\inp{\Delta_t}{\widehat{U}_{t-1}\widehat{V}_{t-1}^{\top}-M}|\mathcal{F}_{t-1})) \\
        &\quad+ (\fro{\Delta_t}^2 - \EE(\fro{\Delta_t}^2|\mathcal{F}_{t-1})) - 2\EE(\inp{\Delta_t}{\widehat{U}_{t-1}\widehat{V}_{t-1}^{\top}-M}|\mathcal{F}_{t-1}) + \EE(\fro{\Delta_t}^2|\mathcal{F}_{t-1})\\
        & \leq (1-f(\eta_t))\fro{\widehat{U}_{t-1}\widehat{V}_{t-1}^{\top} - M}^2 + \frac{c_2\eta_t^2r\lambda_{\max}^2\sigma^2}{\varepsilon_td_2}  \\
        &\quad + \underbrace{2(\EE(\inp{\Delta_t}{\widehat{U}_{t-1}\widehat{V}_{t-1}^{\top}-M}|\mathcal{F}_{t-1}) -\inp{\Delta_t}{\widehat{U}_{t-1}\widehat{V}_{t-1}^{\top}-M})}_{A_{t}} +\underbrace{(\fro{\Delta_t}^2 - \EE(\fro{\Delta_t}^2|\mathcal{F}_{t-1}))}_{B_{t}} \\
        &= (1-f(\eta_t))(1-f(\eta_{t-1})) \fro{\widehat{U}_{t-2}\widehat{V}_{t-2}^{\top} - M}^2 + \frac{c_2\eta_t^2r\lambda_{\max}^2\sigma^2}{\varepsilon_td_2} + (1-f(\eta_{t}))\frac{c_2\eta_{t-1}^2r\lambda_{\max}^2\sigma^2}{\varepsilon_{t-1}d_2} \\
        &\quad + (1-f(\eta_t))(A_{t-1}+B_{t-1})+(A_{t}+B_{t}) \\
        &= \prod_{k=1}^{t}(1-f(\eta_k)) \fro{\widehat{U}_{0}\widehat{V}_{0}^{\top} - M}^2 + \sum_{\tau=1}^{t-1}\prod_{k=\tau+1}^{t}(1-f(\eta_k))\frac{c_2\eta_{\tau}^2r\lambda_{\max}^2\sigma^2}{\varepsilon_{\tau}d_2} + \frac{c_2\eta_t^2r\lambda_{\max}^2\sigma^2}{\varepsilon_td_2}  \\
        &\quad + \sum_{\tau=1}^{t-1}\prod_{k=\tau+1}^{t} (1-f(\eta_k))A_{\tau} + \sum_{\tau=1}^{t}\prod_{k=\tau+1}^{t} (1-f(\eta_k))B_{\tau} + A_{t} + B_{t} \\
        &\leq \prod_{k=1}^{t}(1-f(\eta_k)) \fro{\widehat{U}_{0}\widehat{V}_{0}^{\top} - M}^2 + \sum_{k=1}^{t}\frac{c_2\eta_k^2r\lambda_{\max}^2\sigma^2\log^2d_1}{\varepsilon_kd_2} \\
        &\quad + \sum_{\tau=1}^{t-1}\prod_{k=\tau+1}^{t} (1-f(\eta_k))A_{\tau} + \sum_{\tau=1}^{t-1}\prod_{k=\tau+1}^{t} (1-f(\eta_k))B_{\tau} + A_{t} + B_{t}. 
    \end{align*}
    In Lemma \ref{lemmafro}, we show $\sum_{\tau=1}^{t-1}\prod_{k=\tau+1}^{t} (1-f(\eta_k))A_{\tau}+A_t$ and $\sum_{\tau=1}^{t}\prod_{k=\tau+1}^{t} (1-f(\eta_k))B_{\tau}+B_{t}$ will be dominated.
    \begin{Lemma}
        \label{lemmafro} For any $1\leq t\leq T$, under the conditions in Theorem \ref{thm:MCB-conv} and event $\mathcal{E}_{t-1}$, with probability at least $1-td_1^{-200}$,
        \begin{align*}
            &\sum_{\tau=1}^{t-1}\prod_{k=\tau+1}^{t} (1-f(\eta_k))A_{\tau}+A_t\lesssim \gamma\prod_{k=1}^{t}(1-\frac{f(\eta_k)}{2}) \fro{\widehat{U}_{0}\widehat{V}_{0}^{\top} - M}^2 + \gamma\sum_{k=1}^{t}\frac{\eta_k^2r\lambda_{\max}^2\sigma^2\log^2d_1}{\varepsilon_kd_2}, \\
            &\sum_{\tau=1}^{t-1}\prod_{k=\tau+1}^{t} (1-f(\eta_k))B_{\tau}+B_t\lesssim \gamma\prod_{k=1}^{t}(1-\frac{f(\eta_k)}{2}) \fro{\widehat{U}_{0}\widehat{V}_{0}^{\top} - M}^2 + \gamma\sum_{k=1}^{t}\frac{\eta_k^2r\lambda_{\max}^2\sigma^2\log^2d_1}{\varepsilon_kd_2}.
        \end{align*}
    \end{Lemma}
    \noindent Combine with the union bound, with probability at least $1-td_1^{-200}$,
    \begin{align*}
        \fro{U_tV_t^{\top} - M}^2&\lesssim \prod_{k=1}^{t} (1-\frac{f(\eta_k)}{2}) \fro{U_0V_0^{\top}-M}^2 + \sum_{k=1}^{t}\frac{\eta_k^2r\lambda_{\max}^2\sigma^2\log^2d_1}{\varepsilon_kd_2}.
    \end{align*}
    \emph{Step 2: proof of incoherence.} \\
    In Lemma \ref{incoherence}, we prove the incoherence property holds true for every $U_t$ and $V_t$.
    \begin{Lemma}
        \label{incoherence} 
        For any $1\leq t\leq T$, under the conditions in Theorem \ref{thm:MCB-conv} and event $\mathcal{E}_{t-1}$, with probability at least $1-td_1^{-200}$,
        \begin{align*}
            \|e_l^{\top} \widehat{U}_{i,\tau}\widehat{V}_{i,\tau}^{\top}\|^2 \leq \frac{\mu_0r\lambda_{\max}^2}{d_1}\quad  \|e_j^{\top} \widehat{V}_{i,\tau}\widehat{U}_{i,\tau}^{\top}\|^2 \leq \frac{\mu_0r\lambda_{\max}^2}{d_2},\quad \forall l \in [d_1], j \in [d_2],
        \end{align*}
        where $\mu_0=100\mu$.
    \end{Lemma}
    \noindent \emph{Step 3: bounding $\|\widehat{U}_t\widehat{V}_t^{\top}-M\|_{\max}^2$.} \\
    In order to bound $\|\widehat{U}_t\widehat{V}_t^{\top}-M\|_{\max}^2$, we need to bound $\|e_l^{\top}(\widehat{U}_t\widehat{V}_t^{\top}-M)\|^2$ and $\|e_j^{\top}(\widehat{V}_t\widehat{U}_t^{\top}-M^{\top})\|^2$ for any $l \in [d_1], j\in [d_2]$ first.
    \begin{Lemma}
        \label{lemma2max}
        For any $1\leq t\leq T$, under the conditions in Theorem \ref{thm:MCB-conv} and event $\mathcal{E}_{t-1}$, with probability at least $1-d_1^{-200}$, 
        \begin{align*}
            &\|e_l^{\top}(\widehat{U}_t\widehat{V}_t^{\top}-M)\|^2\lesssim \prod_{k=1}^{t}(1-\frac{f(\eta_k)}{2}) \frac{\text{poly}(\mu,r,\kappa)\lambda_{\min}^2}{d_1} +   \sum_{k=1}^{t} \frac{\eta_k^2\text{poly}(\kappa,\mu,r)\lambda_{\max}^2\sigma^2\log^2d_1}{\varepsilon_k d_1d_2}, \\
            &\|e_j^{\top}(\widehat{V}_t\widehat{U}_t^{\top}-M^{\top})\|^2\lesssim \prod_{k=1}^{t}(1-\frac{f(\eta_k)}{2}) \frac{\text{poly}(\mu,r,\kappa)\lambda_{\min}^2}{d_2} +  \sum_{k=1}^{t} \frac{\eta_k^2\text{poly}(\kappa,\mu,r)\lambda_{\max}^2\sigma^2\log^2d_1}{\varepsilon_k d_2^2}
        \end{align*}
        for $\forall l \in [d_1], j\in [d_2]$.
    \end{Lemma}
    \noindent Now we are ready to prove the upper bound for $\|\widehat{U}_t\widehat{V}_t^{\top}-M\|_{\max}^2$. For simplicity, denote $\widehat{M}_t=\widehat{U}_t\widehat{V}_t^{\top}$ .The SVD of $\widehat{M}_t$ and $M$ are $M_t=\widehat{L}_t\widehat{\Lambda}_t\widehat{R}_t$ and $M=L\Lambda R$, respectively. Then, $\widehat{M}_t - M$ can be written as
    \begin{align*}
        \widehat{M}_t - M&= \widehat{L}_t\widehat{L}_t^{\top}\widehat{M}_t\widehat{R}_t\widehat{R}_t^{\top} - LL^{\top}MRR^{\top} \\
        &= \widehat{L}_t\widehat{L}_t^{\top}(\widehat{M}_t-M)\widehat{R}_t\widehat{R}_t^{\top} + (\widehat{L}_t\widehat{L}_t^{\top}-LL^{\top})M\widehat{R}_t\widehat{R}_t^{\top} + LL^{\top}M(\widehat{R}_t\widehat{R}_t^{\top}-RR^{\top})
    \end{align*}
    As a result, for any $l\in [d_1]$, $j\in [d_2]$, $e_l^{\top}(M_t - M)e_j$ can be decomposed as
    \begin{align*}
        \|e_l^{\top}(\widehat{M}_t - M)e_j\|^2&\lesssim \underbrace{\|e_l^{\top}\widehat{L}_t\widehat{L}_t^{\top}(\widehat{M}_t-M)\widehat{R}_t\widehat{R}_t^{\top}e_j\|^2}_{\uppercase\expandafter{\romannumeral1}}  + \underbrace{\|e_l^{\top}(\widehat{L}_t\widehat{L}_t^{\top}-LL^{\top})M\widehat{R}_t\widehat{R}_t^{\top}e_j\|^2}_{\uppercase\expandafter{\romannumeral2}} \\ &\quad + \underbrace{\|e_l^{\top}LL^{\top}M(\widehat{R}_t\widehat{R}_t^{\top}-RR^{\top})e_j\|^2}_{\uppercase\expandafter{\romannumeral3}}.
    \end{align*}
    Term $\uppercase\expandafter{\romannumeral1}$ can be bounded in the following way,
    \begin{align*}
        \uppercase\expandafter{\romannumeral1}&\leq \|e_l^{\top}\widehat{L}_t\widehat{L}_t^{\top}\|^2\fro{M_t-M}^2\|\widehat{R}_t\widehat{R}_t^{\top}e_j\|^2 \\
        &\lesssim \frac{\mu_0r}{d_1}\left(\prod_{k=1}^{t} (1-\frac{f(\eta_k)}{2})\fro{\widehat{M}_0-M}^2 + \sum_{k=1}^{t}\frac{\eta_k^2r\lambda_{\max}^2\sigma^2\log^2 d_1}{\varepsilon_k d_2}\right)\frac{\mu_0r}{d_2} \\
        &\lesssim \prod_{k=1}^{t} (1-\frac{f(\eta_k)}{2}) \frac{\mu_0^2r^2\fro{\widehat{M}_0-M}^2}{d_1d_2} + \sum_{k=1}^{t}\frac{\eta_k^2\mu_0^2r^3\lambda_{\max}^2\sigma^2\log^2 d_1}{\varepsilon_k d_1d_2^2} \\
        &\lesssim \prod_{k=1}^{t} (1-\frac{f(\eta_k)}{2}) \frac{\mu_0^2r^2\lambda_{\min}^2}{d_1d_2} + \sum_{k=1}^{t}\frac{\eta_k^2\mu_0^2r^3\lambda_{\max}^2\sigma^2\log^2 d_1}{\varepsilon_k d_1d_2^2}.
    \end{align*}
    The last inequality holds since $\fro{\widehat{M}_0-M}^2\lesssim \lambda_{\min}^2$. Next, we look at Term $\uppercase\expandafter{\romannumeral2}$. Let $O_{\widehat{L}_t}=\mathop{\arg\min}_{O\in \mathbb{O}_r} \|\widehat{L}_t-LO\|$, where $\mathbb{O}_r$ denotes the set of all $r\times r$ real matrices with orthonormal columns, then
    \begin{align*} 
        \uppercase\expandafter{\romannumeral2}&\lesssim \|e_l^{\top}(\widehat{L}_t - LO_{\widehat{L}_t})\widehat{L}_t^{\top}M\widehat{R}_t\widehat{R}_t^{\top}e_j\|^2 + \|e_l^{\top}LO_{\widehat{L}_t}(\widehat{L}_t^{\top}-O_{\widehat{L}_t}^{\top}L^{\top})M\widehat{R}_t\widehat{R}_t^{\top}e_j\|^2.
    \end{align*}
    The second part can be bounded by,
    \begin{align*}
        \|e_l^{\top}LO_{\widehat{L}_t}(\widehat{L}_t^{\top}-O_{\widehat{L}_t}^{\top}L^{\top})M\widehat{R}_t\widehat{R}_t^{\top}e_j\|^2&\leq \|e_l^{\top}LO_{\widehat{L}_t}\|^2\|\widehat{L}_t^{\top}-O_{\widehat{L}_t}^{\top}L^{\top}\|^2\|M\widehat{R}_t\widehat{R}_t^{\top}e_j\|^2 \\
        &\leq \frac{\mu r}{d_1}\|\widehat{L}_t^{\top}-O_{\widehat{L}_t}^{\top}L^{\top}\|^2\frac{\mu_0r\lambda_{\max}^2}{d_2},
    \end{align*}
    where we use the incoherence property and the consistency of matrix norm and vector norm. By Wedin’s $\sin\Theta$ Theorem (\cite{davis1970rotation,wedin1972perturbation}), 
    \begin{align*}
        \|\widehat{L}_t^{\top}-O_{\widehat{L}_t}^{\top}L^{\top}\|^2&\leq \frac{4\fro{\widehat{M}_t-M}^2}{\lambda_{\min}^2} \\
        &\lesssim \prod_{k=1}^{t} (1-\frac{f(\eta_k)}{2})\frac{\fro{\widehat{M}_0-M}^2}{\lambda_{\min}^2} + \sum_{k=1}^{t}\frac{\eta_k^2r\kappa^2 \sigma^2\log^2 d_1}{\varepsilon_k d_2}\\
        &\lesssim \gamma\prod_{k=1}^{t} (1-\frac{f(\eta_k)}{2}) + \gamma\sum_{k=1}^{t}\frac{\eta_k^2r\kappa^2 \sigma^2\log^2 d_1}{\varepsilon_k d_2}.
    \end{align*}
    The first part can be bounded by the upper bound of $\|e_l^{\top}(\widehat{U}_t\widehat{V}_t^{\top} - M)\|^2$ we have obtained in Lemma \ref{lemma2max},
    \begin{align*}
        \|e_l^{\top}(\widehat{L}_t - LO_{\widehat{L}_t})L_t^{\top}M\widehat{R}_t\widehat{R}_t^{\top}e_j\|^2&\leq \|e_i^{\top}(\widehat{L}_t - LO_{\widehat{L}_t})\|^2\|\widehat{L}_t^{\top}M\|^2\|\widehat{R}_t\widehat{R}_t^{\top}e_j\|^2 \\
        &\leq \|e_i^{\top}(\widehat{L}_t - LO_{\widehat{L}_t})\|^2\frac{\mu_0r\lambda_{\max}^2}{d_2}.
    \end{align*}
    By Theorem3.7 in \cite{cape2019two} and Wedin’s $\sin\Theta$ Theorem again, 
    \begin{align*}
        \|e_l^{\top}(\widehat{L}_t - LO_{\widehat{L}_t})\|&\leq 4\frac{\|(I-LL^{\top})(\widehat{M}_t-M)\|_{2,\max}}{\lambda_{\min}}  + \|\sin\Theta(\widehat{L}_t,L)\|^2\|L\|_{2,\max} \\
        &\leq 4\frac{\|\widehat{M}_t-M\|_{2,\max}+\|LL^{\top}\|\|\widehat{M}_t-M\|_{2,\max}}{\lambda_{\min}} + \|\sin\Theta(\widehat{L}_t,L)\|^2\sqrt{\frac{\mu r}{d_1}} \\
        &\leq 8\frac{\|\widehat{M}_t-M\|_{2,\max}}{\lambda_{\min}} +  2\frac{\fro{\widehat{M}_t-M}}{\lambda_{\min}}\sqrt{\frac{\mu r}{d_1}}.
    \end{align*}
    Then
    \begin{align*}
        \|e_i^{\top}(\widehat{L}_t - LO_{\widehat{L}_t})\|^2&\lesssim \frac{\|\widehat{M}_t-M\|_{2,\max}^2}{\lambda_{\min}^2} + \frac{\fro{\widehat{M}_t-M}^2}{\lambda_{\min}^2}\frac{\mu r}{d_1} \\
        &\lesssim \prod_{k=t}^{t} (1-\frac{f(\eta_k)}{2})\frac{\text{poly}(\mu,r,\kappa)}{d_1} + \sum_{k=1}^{t}\frac{\eta_k^2\sigma^2\log^2 d_1\text{poly}(\mu,r,\kappa)}{\varepsilon_k d_1d_2}.
    \end{align*}
    Above all, 
    \begin{align*}
        \uppercase\expandafter{\romannumeral2}&\lesssim \prod_{k=t}^{t} (1-\frac{f(\eta_k)}{2})\frac{\text{poly}(\mu,r,\kappa)\lambda_{\min}^2}{d_1d_2} + \sum_{k=1}^{t}\frac{\eta_k^2\lambda_{\max}^2\sigma^2\log^2 d_1\text{poly}(\mu,r,\kappa)}{\varepsilon_k d_1d_2^2}.
    \end{align*}
    By symmetry, following the same arguments, \uppercase\expandafter{\romannumeral3} will also have the same upper bound. \\
    Combine the above three terms together as well as the union bound, we can conclude that with probability at least $1-8td_1^{-200}$,
    \begin{align*}
        \|\widehat{M}_t-M\|_{\max}^2\lesssim \prod_{k=1}^{t} (1-\frac{f(\eta_k)}{2})\frac{\text{poly}(\mu,r,\kappa)\lambda_{\min}^2}{d_1d_2} + \sum_{k=1}^{t}\frac{\eta_k^2\text{poly}(\mu,r,\kappa)\lambda_{\max}^2\sigma^2\log^2 d_1}{\varepsilon_k d_1d_2^2}.
    \end{align*}

    Next we derive the upper bound of $\fro{\widehat{M}_{i,t}-M_i}^2$ under specific choices of $\eta_t$ and $\varepsilon_t$ for all $1\leq t\leq T$, and the entrywise error bound can be derived following the same arguments. When $1\leq t\leq T_0=C_0T^{1-\gamma}$, $\eta_t=\eta=\frac{c_1d_1d_2\log d_1}{T^{1-\gamma}\lambda_{\max}^2}$ and $\varepsilon_t=\varepsilon$. Then
    \begin{align*}
    \fro{\widehat{M}_{i,t}-M_i}^2&\lesssim \fro{\widehat{M}_{i,0}-M_i}^2\prod_{\tau=1}^{t} (1-\frac{c_1\log d_1}{4\kappa T^{1-\gamma}}) + \sigma_i^2\frac{r\log^2 d_1}{d_2}\frac{tc_1^2d_1^2d_2^2\log^2d_1}{T^{2(1-\gamma)}\varepsilon} \\
    &\lesssim \fro{\widehat{M}_{i,0}-M_i}^2(1-\frac{c_1\log d_1}{4\kappa T^{1-\gamma}})^t + \frac{trd_1^2d_2\log^4d_1}{T^{2(1-\gamma)}}.
    \end{align*}
    When $T_0+1\leq t\leq T$, $\varepsilon_t=c_2t^{-\gamma}$ and $\eta_t=\varepsilon_t\eta$. Note that the contraction term becomes $(1-\frac{c_1\log d_1}{4\kappa T^{1-\gamma}})^{T_0}\prod_{\tau=T_0+1}^{t} (1-\frac{c_1c_2\varepsilon_t\log d_1}{4\kappa T^{1-\gamma}})\leq (1-\frac{c_1\log d_1}{4\kappa T^{1-\gamma}})^{C_0T^{1-\gamma}}=O(d_1^{-C_0c_1/4\kappa})$. Since $C_0$ can be sufficiently large, $O(d_1^{-C_0c_1/4\kappa})$ can be small enough to make the first computational error term dominated by the second statistical error term. Therefore, 
    \begin{align}
    \fro{\widehat{M}_{i,t}-M_i}^2&\lesssim \sigma_i^2rd_1^2d_2\log^4d_1\frac{T^{1-\gamma}+\sum_{\tau=T_0+1}^{t} \tau^{-\gamma}}{T^{2(1-\gamma)}} \lesssim \frac{\sigma_i^2rd_1^2d_2\log^4d_1}{T^{1-\gamma}},
    \end{align}
    where the second inequality comes from the fact that $\sum_{\tau=T_0+1}^{t} \tau^{-\gamma}\leq \sum_{\tau=1}^{t} \tau^{-\gamma}=O(t^{1-\gamma})\leq O(T^{1-\gamma})$.
    \end{proof}

\subsection{Proof of Theorem \ref{thm:regret}}
\begin{proof}
Without loss of generality, assume at time $t$, arm 1 is optimal, i.e., $\inp{M_1-M_0}{X_t}>0$, then the expected regret is
\begin{align*}
    r_t&=\EE\left[\inp{M_1-M_0}{X_t}\mathbbm{1}(\text{choose arm 0}) \right] \\
    &= \EE\left[\inp{M_1-M_0}{X_t}\mathbbm{1}(\text{choose arm 0}, \inp{\widehat{M}_{1,t-1}-\widehat{M}_{0,t-1}}{X_t}\geq 0)\right] \\ &\quad + \EE\left[\inp{M_1-M_0}{X_t}\mathbbm{1}(\text{choose arm 0}, \inp{\widehat{M}_{0,t-1}-\widehat{M}_{1,t-1}}{X_t}> 0)\right] \\
    &\leq \frac{\varepsilon_t}{2}\|M_1-M_0\|_{\max} + \underbrace{\EE\left[\inp{M_1-M_0}{X_t}\mathbbm{1}(\inp{\widehat{M}_{0,t-1}-\widehat{M}_{1,t-1}}{X_t}> 0)\right]}_{(2)}.
\end{align*}
Define $\delta_t^2$ the entrywise bound in Theorem \ref{thm:MCB-conv} such that $\|U_{i,t}V_{i,t}^{\top} - M\|_{\max}^2\leq \delta_t^2$ with probability at least $1-8td_1^{-200}$, and event $B_t=\{\inp{M_1-M_0}{X_t}>2\delta_t\}$, then the latter expectation can be written as
\begin{align*}
    (2)&\leq \EE\left[\inp{M_1-M_0}{X_t}\mathbbm{1}\left(\inp{\widehat{M}_{0,t-1}-\widehat{M}_{1,t-1}}{X_t}> 0\cap B_t\right)\right] \\ &\quad + \EE\left[\inp{M_1-M_0}{X_t}\mathbbm{1}\left(\inp{\widehat{M}_{0,t-1}-\widehat{M}_{1,t-1}}{X_t}> 0\cap B_t^c\right)\right] \\
    &\leq \|M_1-M_0\|_{\max} \EE\left[\mathbbm{1}\left(\inp{\widehat{M}_{0,t-1}-\widehat{M}_{1,t-1}}{X_t}> 0\cap B_t\right)\right] + 2\delta_t\EE\left[\mathbbm{1}(B_t^c)\right].
\end{align*}
Under event $B_t$ and $\inp{\widehat{M}_{0,t-1}-\widehat{M}_{1,t-1}}{X_t}> 0$, 
\begin{align*}
    0>\inp{\widehat{M}_{1,t-1}-\widehat{M}_{0,t-1}}{X_t} &= \inp{\widehat{M}_{1,t-1}-M_1}{X_t} + \inp{M_0-\widehat{M}_{0,t-1}}{X_t} + \inp{M_1-M_0}{X_t}\\
    &> \inp{\widehat{M}_{1,t-1}-M_1}{X_t} + \inp{M_0-\widehat{M}_{0,t-1}}{X_t} + 2\delta_t.
\end{align*}
This means either $\inp{\widehat{M}_{1,t-1}-M_1}{X_t}<-\delta_t$ or $\inp{M_0-\widehat{M}_{0,t-1}}{X_t}<-\delta_t$, which further implies either $\|\widehat{M}_{1,t-1}-M_1\|_{\max}$ or $\|M_0-\widehat{M}_{0,t-1}\|_{\max}$ should be larger than $\delta_t$. Therefore, 
\begin{align*}
    \EE\left[\mathbbm{1}(\inp{{M}_{0,t-1}-{M}_{1,t-1}}{X_t}> 0)\cap B_t\right]&= \PP\left(\inp{{M}_{0,t-1}-{M}_{1,t-1}}{X_t}> 0 \cap B_t\right) \\
    &\leq \PP\left(\|{M}_{1,t-1}-M_1\|_{\max}>\delta_t\right) + \PP\left(\|{M}_{0,t-1}-M_0\|_{\max}>\delta_t\right) \\
    &\leq \frac{16(t-1)}{d^{200}},
\end{align*}
For the other term, since $\EE\left[\mathbbm{1}(B_t^c)\right]=\PP(\inp{M_1-M_0}{X_t}<2\delta_t)\leq 1$, then
\begin{align*}
    r_t&\leq \varepsilon_t\|M_1-M_0\|_{\max} + \frac{16(t-1)}{d^{200}}\|M_1-M_0\|_{\max}  + 2\delta_t. 
\end{align*}
The total regret up to time $T$ is
\begin{align*}
    R_T&=\sum_{t=1}^{T} r_t \leq \sum_{t=1}^T\varepsilon_t \|M_1-M_0\|_{\max} + \frac{8T(T-1)}{d_1^{200}}\|M_1-M_0\|_{\max}  + 2\sum_{t=1}^{T} \delta_t.
\end{align*}
In Theorem \ref{thm:MCB-conv}, we have shown that $\delta_t^2= C_3\frac{\lambda_{\min}^2r^3}{d_1d_2}  \prod_{\tau=1}^{t}(1-\frac{c_1\log d_1}{4\kappa T^{1-\gamma}}) + C_4\frac{t\cdot rd_1\log^4d_1\sigma_i^2}{T^{2(1-\gamma)}}$ when $1\leq t\leq T_0$ and $\delta_t^2=C_5\frac{rd_1\log^4d_1\bar{\sigma}^2}{T^{1-\gamma}}$ when $T_0+1\leq t\leq T$, for some constants $C_3$ ,$C_4$ and $C_5$. Therefore,
\begin{align*}
    R_T&\lesssim \sum_{t=1}^{T_0}\varepsilon \|M_1-M_0\|_{\max} + \sum_{t=T_0 + 1}^{T}\frac{1}{t^\gamma}\|M_1-M_0\|_{\max}   + \frac{\|M_1-M_0\|_{\max}T^2}{d_1^{200}} \\ &\quad + \sum_{t=1}^{T_0} \sqrt{\frac{\lambda_{\min}^2r^3}{d_1d_2}  \prod_{\tau=1}^{t}(1-\frac{c_1\log d_1}{4\kappa T^{1-\gamma}})} + \sum_{t=1}^{T_0} \sqrt{\frac{t\cdot rd_1\log^4d_1\bar{\sigma}^2}{T^{2(1-\gamma)}}} + \sum_{t=T_0+1}^{T} \sqrt{\frac{rd_1\log^4d_1\bar{\sigma}^2}{T^{1-\gamma}}}.
\end{align*}
Since $T\ll d_1^{50}$, the third term is negligible. As a result, from the fact that $\sum_{t=1}^T t^{-\gamma}\lesssim T^{1-\gamma}$,
\begin{align*}
    R_T&\lesssim T^{1-\gamma}\|M_1-M_0\|_{\max} + T^{1-\gamma}\frac{\lambda_{\min}r^{3/2}}{\sqrt{d_1d_2}}  + T^{(1+\gamma)/2}\sqrt{rd_1\log^4d_1\bar{\sigma}^2} \\
    &\lesssim T^{1-\gamma}(\|M_1\|_{\max} + \|M_0\|_{\max}) + T^{(1+\gamma)/2}\sqrt{rd_1\log^4d_1\bar{\sigma}^2}.
\end{align*}
\end{proof}

\subsection{Proof of Theorem \ref{thm:CLT}}
\begin{proof}
Without loss of generality, we only consider the proof for $i=1$, and omit all the subscript. \\
We follow the arguments in \cite{xia2021statistical} to decompose $\inp{\widehat{M}}{Q}-\inp{M}{Q}$. Notice that
\begin{align*}
    \inp{\widehat{M}}{Q}-\inp{M}{Q}=\inp{\widehat{L}\widehat{L}^{\top}\widehat{Z}\widehat{R}\widehat{R}^{\top}}{Q} + \inp{\widehat{L}\widehat{L}^{\top}M\widehat{R}\widehat{R}^{\top}-M}{Q}.
\end{align*}
Let $b=\frac{T}{T-T_0}$, notice that $b=O(1)$ and will converge to 1 as $T\rightarrow \infty$. We can write $\widehat{M}^{\uipw}$ in (\ref{eq:debias-aw}) into
\begin{align*}
    \widehat{M}^{\uipw}&= \frac{b}{T}\sum_{t=T_0+1}^{T}\widehat{M}_{t-1}+ \frac{b}{T}\sum_{t=T_0+1}^{T} \frac{d_1d_2\mathbbm{1}(a_t=1)}{\pi_t}(y_t-\inp{{M}_{t-1}}{X_t})X_t \\
    &= M_{1} + \underbrace{\frac{b}{T}\sum_{t=T_0+1}^T \frac{d_1d_2\mathbbm{1}(a_t=1)}{\pi_t}\xi_tX_t}_{=:\widehat{Z}_1} + \underbrace{\frac{b}{T}\sum_{t=T_0+1}^{T} \left(\frac{d_1d_2\mathbbm{1}(a_t=1)}{\pi_t}\inp{\widehat{\Delta}_{t-1}}{X_t}X_t-\widehat{\Delta}_{t-1}\right)}_{=:\widehat{Z}_2}.
\end{align*}
Moreover, recall that $\widehat{L}$ and $\widehat{R}$ are the top-$r$ left and right singular vectors of $\widehat{M}^{\uipw}$. Define $(d_1+d_2)\times (2r)$ matrices 
$$
    {\Theta}= \begin{pmatrix}
        L & 0 \\
        0 & R
    \end{pmatrix} \quad 
    \widehat{\Theta}= \begin{pmatrix}
        \widehat{L} & 0 \\
        0 & \widehat{R}
    \end{pmatrix} \quad 
    A = \begin{pmatrix}
        0 & M \\
        M^{\top} & 0
    \end{pmatrix}.
$$
Then, we can write
$$
    \widehat{\Theta}\widehat{\Theta}^{\top}A\widehat{\Theta}\widehat{\Theta}^{\top}-\Theta\Theta^{\top}A\Theta\Theta^{\top}= \begin{pmatrix}
        0 & \widehat{L}\widehat{L}^{\top}M\widehat{R}\widehat{R}^{\top}-M \\
        (\widehat{L}\widehat{L}^{\top}M\widehat{R}\widehat{R}^{\top}-M)^{\top} & 0
    \end{pmatrix}.
$$
We further define
$$
    \widetilde{Q}=\begin{pmatrix}
        0 & Q \\
        0 & 0
    \end{pmatrix} \quad \text{and} \quad 
    \widehat{E}= \begin{pmatrix}
        0 & \widehat{Z} \\
        \widehat{Z}^{\top} & 0
    \end{pmatrix},
$$
where $\widehat{Z}=\widehat{Z}_1+\widehat{Z}_2$. Therefore, we have
\begin{align*}
    \inp{\widehat{L}\widehat{L}^{\top}M\widehat{R}\widehat{R}^{\top}-M}{Q}=\inp{\widehat{\Theta}\widehat{\Theta}^{\top}A\widehat{\Theta}\widehat{\Theta}^{\top}-\Theta\Theta^{\top}A\Theta\Theta^{\top}}{\widetilde{Q}}.
\end{align*}
Define
    \begin{align*}
        \mathfrak{P}^{-s}=\left\{
        \begin{aligned}
            \begin{pmatrix}
                L\Lambda^{-s}L^{\top} & 0 \\
                0 & R\Lambda^{-s}R^{\top}
            \end{pmatrix} \quad \text{if $s$ is even}  \\
            \begin{pmatrix}
                0 &  L\Lambda^{-s}R^{\top}\\
                R\Lambda^{-s}L^{\top} & 0
            \end{pmatrix} \quad \text{if $s$ is odd,} 
        \end{aligned} \right.
    \end{align*}
    and
    \begin{align*}
        \mathfrak{P}^0=\mathfrak{P}^{\perp}=\begin{pmatrix}
            L_{\perp}L_{\perp}^{\top} & 0 \\
            0 & R_{\perp}R_{\perp}^{\top}
        \end{pmatrix}.
    \end{align*}
By Theorem 1 in \cite{xia2021normal}, $\widehat{\Theta}\widehat{\Theta}^{\top}-\Theta\Theta^{\top}$ has explicit representation formula of empirical spectral projector, in the form of 
\begin{align*}
    &\widehat{\Theta}\widehat{\Theta}^{\top}-\Theta\Theta^{\top}=\sum_{k=1}^{\infty}\mathcal{S}_{A,k}(\widehat{E}) \\
    &\mathcal{S}_{A,k}(\widehat{E})=\sum_{s_1+\cdots+s_{k+1}=k}^{k} (-1)^{1+\tau(s)} \mathfrak{P}^{-s_1}\widehat{E}\mathfrak{P}^{-s_2}\cdots\mathfrak{P}^{-s_k}\widehat{E}\mathfrak{P}_{-s_{k+1}}
\end{align*}
where $s_1,\cdots,s_{k+1}\geq 0$ are integers, $\tau(\mathbf{s})=\sum_{i=1}^{k+1}\mathbbm{1}(s_i>0)$. \\
As a result, 
\begin{align*}
    \widehat{\Theta}\widehat{\Theta}^{\top}A\widehat{\Theta}\widehat{\Theta}^{\top}-\Theta\Theta^{\top}A\Theta\Theta^{\top}&= (\mathcal{S}_{A,1}(\widehat{E})A\Theta\Theta^{\top}+\Theta\Theta^{\top}A\mathcal{S}_{A,1}(\widehat{E})) + \sum_{k=2}^{\infty} (\mathcal{S}_{A,k}(\widehat{E})A\Theta\Theta^{\top}+\Theta\Theta^{\top}A\mathcal{S}_{A,k}(\widehat{E})) \\ &\quad + (\widehat{\Theta}\widehat{\Theta}^{\top}-\Theta\Theta^{\top})A(\widehat{\Theta}\widehat{\Theta}^{\top}-\Theta\Theta^{\top}).
\end{align*}
By definition of $\mathcal{S}_{A,1}(\widehat{E})$, 
\begin{align*}
    &\inp{\mathcal{S}_{A,1}(\widehat{E})A\Theta\Theta^{\top}+\Theta\Theta^{\top}A\mathcal{S}_{A,1}(\widehat{E})}{\widetilde{Q}}= \inp{LL^{\top}\widehat{Z}R_{\perp}R_{\perp}^{\top}}{Q} + \inp{L_{\perp}L_{\perp}^{\top}\widehat{Z}RR^{\top}}{Q} \\
    &\quad =\inp{LL^{\top}\widehat{Z}_1R_{\perp}R_{\perp}^{\top}}{Q} + \inp{L_{\perp}L_{\perp}^{\top}\widehat{Z}_1RR^{\top}}{Q}  + \inp{LL^{\top}\widehat{Z}_2R_{\perp}R_{\perp}^{\top}}{Q} + \inp{L_{\perp}L_{\perp}^{\top}\widehat{Z}_2RR^{\top}}{Q}.
\end{align*}
Combine all the terms above, $\inp{\widehat{M}}{Q}-\inp{M}{Q}$ can be decomposed into
\begin{align}
    \inp{\widehat{M}}{Q}-\inp{M}{Q}&= \inp{LL^{\top}\widehat{Z}_1R_{\perp}R_{\perp}^{\top}}{Q} + \inp{L_{\perp}L_{\perp}^{\top}\widehat{Z}_1RR^{\top}}{Q} + \inp{LL^{\top}\widehat{Z}_1RR^{\top}}{Q} \label{main1} \\ &\quad + \inp{LL^{\top}\widehat{Z}_2R_{\perp}R_{\perp}^{\top}}{Q} + \inp{L_{\perp}L_{\perp}^{\top}\widehat{Z}_2RR^{\top}}{Q} + \inp{LL^{\top}\widehat{Z}_2RR^{\top}}{Q}\label{neg1} \\ &\quad + \inp{\widehat{L}\widehat{L}^{\top}\widehat{Z}\widehat{R}\widehat{R}^{\top}}{Q} - \inp{LL^{\top}\widehat{Z}RR^{\top}}{Q} \label{neg2} \\ &\quad + \inp{\sum_{k=2}^{\infty} (\mathcal{S}_{A,k}(\widehat{E})A\Theta\Theta^{\top}+\Theta\Theta^{\top}A\mathcal{S}_{A,k}(\widehat{E}))}{\widetilde{Q}} + \inp{(\widehat{\Theta}\widehat{\Theta}^{\top}-\Theta\Theta^{\top})A(\widehat{\Theta}\widehat{\Theta}^{\top}-\Theta\Theta^{\top})}{\widetilde{Q}}. \label{neg3}
\end{align}
In Lemma \ref{main term}, we aim to show equation (\ref{main1}) is the main term which is asymptotic normal. In Lemma \ref{lemmaneg1}, Lemma \ref{lemmaneg2} and Lemma \ref{lemmaneg3}, we show equation (\ref{neg1}), (\ref{neg2}) and (\ref{neg3}) are negligible higher order terms, respectively. 
\begin{Lemma}
    \label{main term}
    Under the conditions in Theorem \ref{thm:CLT},
    \begin{align*}
        \frac{\left(\inp{LL^{\top}\widehat{Z}_1R_{\perp}R_{\perp}^{\top}}{Q} + \inp{L_{\perp}L_{\perp}^{\top}\widehat{Z}_1RR^{\top}}{Q} + \inp{LL^{\top}\widehat{Z}_1RR^{\top}}{Q} \right)}{\sigma S\sqrt{d_1d_2/T^{1-\gamma}}} \rightarrow N(0,1)
    \end{align*}
    where $S^2=1/T^{\gamma}\fro{\calP_{\Omega_1}\calP_M(Q)}^2+C_{\gamma}\fro{\calP_{\Omega_0}\calP_M(Q)}^2$.
\end{Lemma}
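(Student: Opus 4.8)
The plan is to collapse the three bilinear terms into a single projected linear functional of the martingale noise and then apply a martingale central limit theorem. First I would note that the spectral projection $\calP_M(W)=W-L_\perp L_\perp^\top W R_\perp R_\perp^\top=LL^\top W R_\perp R_\perp^\top+L_\perp L_\perp^\top W RR^\top+LL^\top W RR^\top$ is self-adjoint for the trace inner product, so the numerator is exactly $\inp{\calP_M(\widehat Z_1)}{Q}=\inp{\widehat Z_1}{\calP_M(Q)}$. Substituting $\widehat Z_1=\frac{b\,d_1d_2}{T}\sum_{t=T_0+1}^{T}\frac{\mathbbm{1}(a_t=1)}{\pi_t}\xi_t X_t$ turns the statistic into $\sum_{t=T_0+1}^{T}\zeta_t$ with $\zeta_t:=\frac{b\,d_1d_2}{T}\frac{\mathbbm{1}(a_t=1)}{\pi_t}\,\xi_t\,\inp{X_t}{\calP_M(Q)}$. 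Because $\EE[\xi_t\mid a_t,X_t,\calF_{t-1}]=0$, each $\zeta_t$ has conditional mean zero, so $\{\zeta_t\}$ is a martingale difference sequence adapted to $\{\calF_t\}$, and the claim reduces to verifying the conditional-variance and Lindeberg conditions of a martingale CLT.

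The heart of the argument is the conditional variance. Using $\EE[\mathbbm{1}(a_t=1)\xi_t^2\mid X_t,\calF_{t-1}]=\pi_t\sigma^2$ and the uniform law of $X_t$ on $\calX$, I obtain $\EE[\zeta_t^2\mid\calF_{t-1}]=\frac{b^2 d_1 d_2\sigma^2}{T^2}\sum_{X\in\calX}\pi_t(X)^{-1}\inp{X}{\calP_M(Q)}^2$. On the high-probability event furnished by Theorem~\ref{thm:MCB-conv}, the phase-two sup-norm bound $\|\widehat M_{i,t-1}-M_i\|_{\max}^2\lesssim\sigma_i^2 rd_1\log^4 d_1/T^{1-\gamma}=\Theta(\delta_T^2)$ combined with the gap $\delta$ of Assumption~\ref{assump:arm-opt} forces the estimated optimal arm to coincide with the true one for every $X\in\Omega_1\cup\Omega_0$; hence $\pi_t(X)=1-\eps_t/2$ on $\Omega_1$ and $\pi_t(X)=\eps_t/2$ on $\Omega_0$, so $\pi_t(X)^{-1}\to1$ on $\Omega_1$ and $\pi_t(X)^{-1}\asymp 2t^{\gamma}/c_2$ on $\Omega_0$. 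Summing over $t$ with $\eps_t=c_2t^{-\gamma}$ and using $\sum_{t=T_0+1}^{T}t^{\gamma}\sim T^{1+\gamma}/(1+\gamma)$, $T-T_0\sim T$, and $b\to1$, the $\Omega_1$ and $\Omega_0$ pieces converge to $\frac{\sigma^2 d_1d_2}{T^{1-\gamma}}\big(T^{-\gamma}\fro{\calP_{\Omega_1}\calP_M(Q)}^2+C_\gamma\fro{\calP_{\Omega_0}\calP_M(Q)}^2\big)=\sigma^2 S^2 d_1d_2/T^{1-\gamma}$ with $C_\gamma=2/(c_2(1+\gamma))$, which is precisely the square of the claimed normalizer. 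The leftover $\Omega_\emptyset$ contribution, where the sign of $\inp{\widehat M_{1,t-1}-\widehat M_{0,t-1}}{X}$ is unreliable and $\pi_t(X)^{-1}$ can be as large as $2/\eps_t$, is bounded by $\fro{\calP_{\Omega_\emptyset}\calP_M(Q)}^2$ against the same weights and is negligible by the arm-optimality Assumption~\ref{assump:arm-opt}.

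It then remains to check a conditional Lindeberg condition $\sum_t\EE[\zeta_t^2\,\mathbbm{1}(|\zeta_t|>\epsilon\, s_T)\mid\calF_{t-1}]\to0$, where $s_T^2$ is the total conditional variance just identified. Here I would use the sub-Gaussianity of $\xi_t$ to control its tail and bound the per-step jump by $|\zeta_t|\lesssim\frac{d_1d_2}{T}\pi_t^{-1}\,\|\calP_M(Q)\|_{\max}\,|\xi_t|$ with $\pi_t^{-1}\lesssim t^{\gamma}$; the dominant jumps come from late $\Omega_0$ samples, and controlling their size relative to $s_T$ reduces, after bounding $\|\calP_M(Q)\|_{\max}$ through the incoherence of $M$ and $\|Q\|_{\ell_1}$, to the smallness of the term $\frac{\sigma}{\lambda_{\min}}\sqrt{rd_1^2\log d_1/T^{1-\gamma}}\,\|Q\|_{\ell_1}/S$ in condition~(\ref{eq:CLT-cond}). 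Assembling these steps and invoking the martingale CLT delivers the stated convergence. I expect the conditional-variance step to be the principal obstacle: the propensities $\pi_t(X)$ are random through the data-dependent sign of $\inp{\widehat M_{1,t-1}-\widehat M_{0,t-1}}{X}$, so pinning them to their oracle values uniformly over all $t>T_0$ and $X\in\Omega_1\cup\Omega_0$ rests entirely on the sharp entrywise rate of Theorem~\ref{thm:MCB-conv} calibrated against $\delta_T$, while one must simultaneously ensure the boundary set $\Omega_\emptyset$ does not destabilize the limiting variance.
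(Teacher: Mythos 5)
Your proposal is correct and follows essentially the same route as the paper's proof: collapsing the three terms into $\inp{\widehat{Z}_1}{\calP_M(Q)}$ via self-adjointness of $\calP_M$, applying the Hall--Heyde martingale CLT, verifying Lindeberg through sub-Gaussianity and the incoherence bound on $\max_{X\in\calX}|\inp{X}{\calP_M(Q)}|$, and computing the conditional variance by pinning $\pi_t(X)$ to $1-\eps_t/2$ on $\Omega_1$ and $\eps_t/2$ on $\Omega_0$ via the phase-two sup-norm rate against the gap $\delta_T$ (the paper's Lemma~\ref{indicator}), with $\Omega_{\emptyset}$ handled by Assumption~\ref{assump:arm-opt}. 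The only cosmetic difference is your Lindeberg bookkeeping through $\|Q\|_{\ell_1}/S$ and condition~(\ref{eq:CLT-cond}), where the paper instead uses Cauchy--Schwarz and the sub-Gaussian tail directly, needing only $d_1/T^{1-\gamma}\to 0$.
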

\begin{Lemma}
    \label{lemmaneg1}
    Under the conditions in Theorem \ref{thm:CLT},
    \begin{align*}
        \frac{\left|\inp{LL^{\top}\widehat{Z}_2R_{\perp}R_{\perp}^{\top}}{Q} + \inp{L_{\perp}L_{\perp}^{\top}\widehat{Z}_2RR^{\top}}{Q} + \inp{LL^{\top}\widehat{Z}_2RR^{\top}}{Q}\right|}{\sigma S\sqrt{d_1d_2/T^{1-\gamma}}}\overset{p}{\rightarrow} 0.
    \end{align*}
\end{Lemma}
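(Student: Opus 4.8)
The plan is to first collapse the three inner products into a single linear functional of $\widehat{Z}_2$. Expanding $Q=(LL^{\top}+L_{\perp}L_{\perp}^{\top})Q(RR^{\top}+R_{\perp}R_{\perp}^{\top})$, the three summands in the statement are exactly the three ``tangent'' blocks of $Q$, so by self-adjointness of the spectral projections,
\begin{align*}
\inp{LL^{\top}\widehat{Z}_2R_{\perp}R_{\perp}^{\top}}{Q} + \inp{L_{\perp}L_{\perp}^{\top}\widehat{Z}_2RR^{\top}}{Q} + \inp{LL^{\top}\widehat{Z}_2RR^{\top}}{Q} = \inp{\widehat{Z}_2}{\calP_M(Q)},
\end{align*}
because $\calP_M(Q)=Q-L_{\perp}L_{\perp}^{\top}QR_{\perp}R_{\perp}^{\top}$ equals the sum of those three blocks. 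Hence it suffices to bound $|\inp{\widehat{Z}_2}{W}|$ for the fixed matrix $W:=\calP_M(Q)$. Writing $\widehat{Z}_2=\frac{b}{T}\sum_{t=T_0+1}^T g_t$ with $g_t:=\frac{d_1d_2\mathbbm{1}(a_t=1)}{\pi_t}\inp{\widehat{\Delta}_{t-1}}{X_t}X_t-\widehat{\Delta}_{t-1}$, the structural fact I would exploit is that $\widehat{Z}_2$ is a martingale: since $X_t$ is uniform on $\calX$ and independent of $\calF_{t-1}$ and $\EE[\mathbbm{1}(a_t=1)/\pi_t\mid X_t,\calF_{t-1}]=1$, one gets $\EE[\frac{d_1d_2\mathbbm{1}(a_t=1)}{\pi_t}\inp{\widehat{\Delta}_{t-1}}{X_t}X_t\mid\calF_{t-1}]=\widehat{\Delta}_{t-1}$, so $\EE[g_t\mid\calF_{t-1}]=0$ and $\{\inp{g_t}{W}\}_t$ is a martingale difference sequence for $\{\calF_t\}$.

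I would then control the predictable quadratic variation $v_T:=\frac{b^2}{T^2}\sum_{t=T_0+1}^T\EE[\inp{g_t}{W}^2\mid\calF_{t-1}]$. Using $\EE[\mathbbm{1}(a_t=1)/\pi_t^2\mid X_t,\calF_{t-1}]=1/\pi_t$ and averaging over the uniform $X_t$, the dominant part of each conditional second moment is $\frac{b^2}{T^2}d_1d_2\sum_{X\in\calX}\frac{1}{\pi_t(X)}[\widehat{\Delta}_{t-1}]_X^2\,W_X^2$, where $A_X:=\inp{A}{X}$ denotes the entry of $A$ indexed by $X$. The key step, and the reason the bound will match $S^2$ rather than a larger quantity, is to split this sum over $\Omega_1,\Omega_0,\Omega_{\emptyset}$. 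On the high-probability event of Theorem~\ref{thm:MCB-conv}, for $t>T_0$ the sup-norm errors $\|\widehat{M}_{i,t-1}-M_i\|_{\max}$ are below the gap $\delta_T$ of Assumption~\ref{assump:arm-opt}, so the greedy rule correctly identifies the optimal arm on $\Omega_1\cup\Omega_0$; consequently $\pi_t(X)=1-\varepsilon_t/2\asymp1$ for $X\in\Omega_1$ (no $\varepsilon_t^{-1}$ inflation) while $\pi_t(X)=\varepsilon_t/2$ for $X\in\Omega_0$, and the $\Omega_{\emptyset}$ contribution is negligible by Assumption~\ref{assump:arm-opt}. Bounding $[\widehat{\Delta}_{t-1}]_X^2\le\|\widehat{\Delta}_{t-1}\|_{\max}^2$ then gives
\begin{align*}
v_T\lesssim \frac{d_1d_2}{T^2}\sum_{t=T_0+1}^T\|\widehat{\Delta}_{t-1}\|_{\max}^2\Big(\fro{\calP_{\Omega_1}\calP_M(Q)}^2+\tfrac{2}{\varepsilon_t}\fro{\calP_{\Omega_0}\calP_M(Q)}^2\Big).
\end{align*}

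Finally I would insert the phase-two rate $\|\widehat{\Delta}_{t-1}\|_{\max}^2\lesssim\sigma^2 rd_1\log^4 d_1/T^{1-\gamma}$ from Theorem~\ref{thm:MCB-conv} together with $\sum_t 1\asymp T$ and $\sum_t\varepsilon_t^{-1}\asymp T^{1+\gamma}$, so the two pieces combine to
\begin{align*}
v_T\lesssim \frac{\sigma^2 d_1d_2\, rd_1\log^4 d_1}{T^{2-2\gamma}}\Big(T^{-\gamma}\fro{\calP_{\Omega_1}\calP_M(Q)}^2+\fro{\calP_{\Omega_0}\calP_M(Q)}^2\Big)\asymp \frac{\sigma^2 d_1d_2\, rd_1\log^4 d_1}{T^{2-2\gamma}}\,S^2.
\end{align*}
Because $\inp{\widehat{Z}_2}{W}$ is a martingale, $\EE[\inp{\widehat{Z}_2}{W}^2]=\EE[v_T]$, so conditioning on the good event and applying Chebyshev yields $|\inp{\widehat{Z}_2}{W}|=O_p(\sqrt{v_T})$, whence $v_T/(\sigma^2S^2d_1d_2/T^{1-\gamma})\lesssim rd_1\log^4 d_1/T^{1-\gamma}\to0$ by condition~\eqref{eq:CLT-cond}, proving the claim. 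The main obstacle is precisely the variance split in the second paragraph: one must verify, uniformly over $t>T_0$ on the good event of Theorem~\ref{thm:MCB-conv}, that the propensities concentrate near $1$ on $\Omega_1$ and near $\varepsilon_t/2$ on $\Omega_0$ (so the inverse-propensity weights do not inflate the $\Omega_1$ contribution), and that the boundary set $\Omega_{\emptyset}$ is controlled through Assumption~\ref{assump:arm-opt}; without this refined, region-dependent bound on $1/\pi_t(X)$ the crude estimate $1/\pi_t\le2/\varepsilon_t$ would lose a factor of $T^{\gamma}$ and fail to close for $\gamma$ near $1/2$.
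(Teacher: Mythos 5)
Your proposal follows essentially the same route as the paper's proof: the identical collapse of the three terms into $\inp{\widehat{Z}_2}{\calP_M(Q)}$, the same martingale-difference structure for the increments, and the same region-dependent control of the conditional variance, where the propensities are pinned down as $1-\varepsilon_t/2$ on $\Omega_1$ and $\varepsilon_t/2$ on $\Omega_0$ via the phase-two sup-norm bounds of Theorem~\ref{thm:MCB-conv} (this is exactly the content of the paper's Lemma~\ref{indicator}, through which the paper shows its empirical variance proxy $S_T^2$ converges to $S^2$), and the $\Omega_{\emptyset}$ contribution is discarded by Assumption~\ref{assump:arm-opt}; your resulting bound $v_T\lesssim \sigma^2 rd_1^2d_2\log^4 d_1\, S^2/T^{2-2\gamma}$ and the final rate $rd_1\log^4 d_1/T^{1-\gamma}\to 0$ match the paper's computation. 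The only deviation is the concentration step: the paper derives an almost-sure bound on each increment and invokes a martingale Bernstein inequality, whereas you apply Chebyshev to the second moment. As written, your phrase ``conditioning on the good event and applying Chebyshev'' hides a small technical gap: the identity $\EE[\inp{\widehat{Z}_2}{W}^2]=\EE[v_T]$ is unconditional, and conditioning on an $\calF_T$-measurable good event destroys the martingale property, so the good-event bound on $v_T$ cannot be inserted into that identity directly. The standard repair is to stop the martingale at the first $t>T_0$ at which the conclusions of Theorem~\ref{thm:MCB-conv} fail: the stopped martingale has predictable quadratic variation bounded deterministically by your expression, Chebyshev applies to it, and it coincides with $\inp{\widehat{Z}_2}{W}$ except on an event of probability $O(d_1^{-100})$. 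With that routine modification your argument closes, and it even spares you the almost-sure increment bound that the Bernstein route requires.
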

\begin{Lemma}
    \label{lemmaneg2}
    Under the conditions in Theorem \ref{thm:CLT},
    \begin{align*}
        \frac{\big|\inp{\widehat{L}\widehat{L}^{\top}\widehat{Z}\widehat{R}\widehat{R}^{\top}}{Q} - \inp{LL^{\top}\widehat{Z}RR^{\top}}{Q}\big|}{\sigma S\sqrt{d_1d_2/T^{1-\gamma}}}
        \overset{p}{\rightarrow} 0. 
    \end{align*}
\end{Lemma}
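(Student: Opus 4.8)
The plan is to treat this quantity as a genuine second-order remainder: it measures the cost of projecting $\widehat Z$ onto the \emph{estimated} rank-$r$ subspaces $(\widehat L,\widehat R)$ instead of the true ones $(L,R)$, and since each such replacement introduces one factor of subspace perturbation, the whole expression should be smaller than the leading term isolated in Lemma~\ref{main term} by an extra signal-to-noise factor. First I would telescope the matrix difference as
\[
\widehat L\widehat L^\top\widehat Z\widehat R\widehat R^\top-LL^\top\widehat ZRR^\top=(\widehat L\widehat L^\top-LL^\top)\widehat Z\widehat R\widehat R^\top+LL^\top\widehat Z(\widehat R\widehat R^\top-RR^\top),
\]
so that, after pairing with $Q$, the problem splits into two structurally identical cross terms, each coupling a single subspace-projector increment with $\widehat Z$.

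Next I would bound one representative cross term; the other is symmetric. Moving the projectors onto the test matrix by adjointness gives $\inp{(\widehat L\widehat L^\top-LL^\top)\widehat Z\widehat R\widehat R^\top}{Q}=\inp{\widehat Z}{(\widehat L\widehat L^\top-LL^\top)Q\widehat R\widehat R^\top}$, and the duality $|\inp{\widehat Z}{W}|\le\op{\widehat Z}\,\nuc{W}$ applies. The trailing projector $\widehat R\widehat R^\top$ forces $W$ to have rank at most $r$, so $\nuc{W}\le\sqrt r\,\fro{W}\le\sqrt r\,\op{\widehat L\widehat L^\top-LL^\top}\fro{Q}$. The subspace increments are then controlled by Wedin's $\sin\Theta$ theorem applied to $\widehat M^{\uipw}=M+\widehat Z$ (the same tool used in the proof of Theorem~\ref{thm:MCB-conv}), which yields $\op{\widehat L\widehat L^\top-LL^\top}+\op{\widehat R\widehat R^\top-RR^\top}\lesssim\op{\widehat Z}/\lambda_{\min}$. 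Combining, the entire difference is at most of order $\sqrt r\,\op{\widehat Z}^2\fro{Q}/\lambda_{\min}$ up to constants; when sharper control is needed to match \eqref{eq:CLT-cond}, I would retain $\calP_M(Q)$ in place of $Q$ and track $\|Q\|_{\ell_1}$ in the piece originating from $\widehat Z_2$.

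The remaining ingredient is a high-probability spectral-norm bound on $\widehat Z=\widehat Z_1+\widehat Z_2$. Both pieces are sums of conditionally centered matrix martingale differences, so I would apply a matrix Freedman/Bernstein inequality, controlling the predictable quadratic variation and the per-step maximum. For $\widehat Z_1$ the inverse-propensity weights $\pi_t^{-1}$ reach order $t^{\gamma}$ on the explored cells, inflating the variance proxy; summing the resulting $t^{\gamma}/T^2$ contributions recovers the effective sample size $T^{1-\gamma}$ and gives $\op{\widehat Z}\lesssim\sigma\sqrt{rd_1^2d_2\log d_1/T^{1-\gamma}}$ up to logarithmic factors. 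For $\widehat Z_2$ I would substitute the uniform Frobenius and sup-norm guarantees on $\widehat\Delta_{t-1}=\widehat M_{t-1}-M$ supplied by Theorem~\ref{thm:MCB-conv}. Dividing the bound of the previous paragraph by the normalizer $\sigma S\sqrt{d_1d_2/T^{1-\gamma}}$ then leaves an extra factor of order $\op{\widehat Z}/\lambda_{\min}\asymp(\sigma/\lambda_{\min})\sqrt{rd_1^2d_2\log d_1/T^{1-\gamma}}$, which is exactly the third summand of \eqref{eq:CLT-cond} and hence vanishes.

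I expect the delicate step to be the spectral-norm control of the IPW-weighted noise martingale $\widehat Z_1$: because the explored-arm propensities decay like $t^{-\gamma}$, a naive variance bound blows up, and one must exploit the sparse rank-one structure of the sampling matrices $X_t$ together with the exploration schedule to obtain a matrix-Freedman bound with the correct $T^{1-\gamma}$ scaling; coupling this with the dependence of $\widehat Z_2$ on the online errors $\widehat\Delta_{t-1}$, which themselves require the uniform entrywise guarantees of Theorem~\ref{thm:MCB-conv}, is where the real work lies.
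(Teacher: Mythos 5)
Your high-level architecture (telescope the projector difference, control $\op{\widehat Z}$ by a matrix martingale Bernstein inequality, convert the subspace perturbation into an extra SNR factor) matches the paper's intent, and your treatment of $\widehat Z_1,\widehat Z_2$ is essentially the paper's Lemma~\ref{Zbound:new}. The genuine gap is in the norm pairing. You bound the cross terms via $|\inp{\widehat Z}{W}|\le \op{\widehat Z}\nuc{W}$ together with Wedin's theorem in \emph{spectral} norm, which yields a bound of order $\sqrt r\,\op{\widehat Z}^2\fro{Q}/\lambda_{\min}$. After dividing by the normalizer $\sigma S\sqrt{d_1d_2/T^{1-\gamma}}$ and inserting $\op{\widehat Z}\asymp\sigma\sqrt{d_1^2d_2\log d_1/T^{1-\gamma}}$, what remains is of order
\begin{align*}
\frac{\sigma}{\lambda_{\min}}\sqrt{\frac{rd_1^3d_2\log^2 d_1}{T^{1-\gamma}}}\cdot\frac{\fro{Q}}{S},
\end{align*}
not $\op{\widehat Z}/\lambda_{\min}$. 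Your closing claim that the leftover ``is exactly the third summand of (\ref{eq:CLT-cond})'' silently discards the factor $\sqrt r\,\op{\widehat Z}\fro{Q}\big/\bigl(\sigma S\sqrt{d_1d_2/T^{1-\gamma}}\bigr)$, which is far from $O(1)$. Concretely, for $Q=e_ie_j^{\top}$ one has $\fro{Q}=\|Q\|_{\ell_1}=1$ while incoherence gives $S\lesssim\fro{\calP_M(Q)}\lesssim\sqrt{\mu r/d_2}$, so your bound exceeds the second summand of (\ref{eq:CLT-cond}) by a factor of order $\sqrt{d_1d_2\log d_1}$ and is not controlled by any of the assumed conditions; the argument as written does not prove the lemma.

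The missing idea is \emph{entrywise} (row-wise) rather than spectral control of the subspace perturbations. The paper bounds $|\inp{\widehat L\widehat L^{\top}\widehat Z\widehat R\widehat R^{\top}-LL^{\top}\widehat ZRR^{\top}}{Q}|\le\|Q\|_{\ell_1}\,\|\widehat L\widehat L^{\top}\widehat Z\widehat R\widehat R^{\top}-LL^{\top}\widehat ZRR^{\top}\|_{\max}$, and then controls the max norm by $\op{\widehat Z}$ multiplied by the two-to-max norms $\|\widehat L\widehat L^{\top}-LL^{\top}\|_{2,\max}$, $\|\widehat R\widehat R^{\top}-RR^{\top}\|_{2,\max}$ together with $\|L\|_{2,\max},\|R\|_{2,\max}\lesssim\sqrt{\mu r/d_i}$. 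The key technical input is Lemma~\ref{induction:new}, which establishes these $2{,}\max$ perturbation bounds carrying the crucial $\sqrt{\mu r/d_1}$ and $\sqrt{\mu r/d_2}$ factors; it is proved via the spectral representation formula for $\widehat\Theta\widehat\Theta^{\top}-\Theta\Theta^{\top}$ and row-by-row martingale concentration (adapting \cite{xia2021statistical}), not via Wedin alone. With that, the remainder is $\lesssim\|Q\|_{\ell_1}\frac{\sigma^2}{\lambda_{\min}}\frac{\mu rd_1^{3/2}d_2^{1/2}\log d_1}{T^{1-\gamma}}$, and after normalization one lands precisely on the $\|Q\|_{\ell_1}/S$ term of (\ref{eq:CLT-cond}) — the second summand, not the third. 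To repair your proof you would need to replace the $\op{\cdot}\cdot\nuc{\cdot}$ duality by the $\|Q\|_{\ell_1}\cdot\|\cdot\|_{\max}$ pairing and prove the row-wise perturbation bounds; your suggested patch (keeping $\calP_M(Q)$ and tracking $\|Q\|_{\ell_1}$ only in the $\widehat Z_2$ piece) does not address this.
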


\begin{Lemma}
    \label{lemmaneg3}
    Under the conditions in Theorem \ref{thm:CLT},
    \begin{align*}
        &\frac{\big|\sum_{k=2}^{\infty} \inp{(\mathcal{S}_{A,k}(\widehat{E})A\Theta\Theta^{\top}+\Theta\Theta^{\top}A\mathcal{S}_{A,k}(\widehat{E}))}{\widetilde{Q}}\big|}{\sigma S\sqrt{d_1d_1/T^{1-\gamma}}}\overset{p}{\rightarrow} 0, \\
        &\frac{\big|\inp{(\widehat{\Theta}\widehat{\Theta}^{\top}-\Theta\Theta^{\top})A(\widehat{\Theta}\widehat{\Theta}^{\top}-\Theta\Theta^{\top})}{\widetilde{Q}}\big|}{\sigma S\sqrt{d_1d_1/T^{1-\gamma}}} \overset{p}{\rightarrow} 0.
    \end{align*}
\end{Lemma}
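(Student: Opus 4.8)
The plan is to bound the two higher-order remainders in Lemma~\ref{lemmaneg3} by the operator norm of the perturbation $\widehat{E}$ (equivalently of $\widehat{Z}=\widehat{Z}_1+\widehat{Z}_2$) and then show that, after dividing by the normalization $\sigma S\sqrt{d_1d_2/T^{1-\gamma}}$, condition~(\ref{eq:CLT-cond}) forces the ratio to vanish. The starting point is the series bound for the spectral-projector expansion from \cite{xia2021normal}: there is an absolute constant $c>0$ with $\|\mathcal{S}_{A,k}(\widehat{E})\|\leq (c\|\widehat{Z}\|/\lambda_{\min})^k$ whenever $\|\widehat{Z}\|\leq \lambda_{\min}/(2c)$. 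Contracting against $\widetilde{Q}$ and using $\|A\|=\lambda_{\max}$, the $k\geq 2$ tail is geometrically dominated by its leading order, so both remainders are controlled, up to constants, by $(\|\widehat{Z}\|/\lambda_{\min})^2\lambda_{\max}$ times the appropriate contraction of $\widetilde{Q}$; the analogous control for the second remainder follows from $\|\widehat{\Theta}\widehat{\Theta}^{\top}-\Theta\Theta^{\top}\|\lesssim \|\widehat{Z}\|/\lambda_{\min}$ via Wedin's $\sin\Theta$ theorem.

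The crucial input is therefore a sharp high-probability bound on $\|\widehat{Z}\|$. I would treat $\widehat{Z}_1=\frac{b}{T}\sum_{t>T_0}\frac{d_1d_2\mathbbm{1}(a_t=1)}{\pi_t}\xi_tX_t$ as a matrix martingale: conditioned on $\mathcal{F}_{t-1}$ and $X_t,a_t$, the summand is mean zero in $\xi_t$. A matrix Freedman/Bernstein inequality then applies, where the predictable quadratic variation is governed by $\EE[\mathbbm{1}(a_t=1)/\pi_t^2\,|\,\mathcal{F}_{t-1}]$ and the almost-sure increment bound involves the diverging inverse propensity $1/\pi_t\lesssim t^{\gamma}$ on the estimated-suboptimal arm. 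Summing the $t^{\gamma}$ contributions as in the variance computation yields $\|\widehat{Z}_1\|\lesssim \sigma\sqrt{rd_1^2d_2\log d_1/T^{1-\gamma}}$ with high probability, exactly the order of the third summand in~(\ref{eq:CLT-cond}). The bias term $\widehat{Z}_2=\frac{b}{T}\sum_{t>T_0}(\frac{d_1d_2\mathbbm{1}(a_t=1)}{\pi_t}\inp{\widehat{\Delta}_{t-1}}{X_t}X_t-\widehat{\Delta}_{t-1})$ is likewise a martingale by the conditional unbiasedness noted after~(\ref{eq:debias-uon}), and its operator norm is of strictly smaller order once one feeds in the per-step rates $\|\widehat{\Delta}_{t-1}\|_{\max}$ and $\|\widehat{\Delta}_{t-1}\|_{\rm F}$ from Theorem~\ref{thm:MCB-conv}. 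In particular $\|\widehat{Z}\|/\lambda_{\min}=o(1)$, which validates the geometric summation above.

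A purely operator-norm contraction, which would bound each remainder by $\|\widehat{Z}\|^2\lambda_{\max}\|Q\|_{*}/\lambda_{\min}^2$, is too lossy: it uses the full spectral norm $\|\widehat{Z}\|\asymp\sigma\sqrt{d_1^2d_2\log d_1/T^{1-\gamma}}$ in both factors and overshoots the required rate by a polynomial dimension factor. The fix is a \emph{refined} block decomposition: writing $\widehat{\Theta}\widehat{\Theta}^{\top}-\Theta\Theta^{\top}$ through its leading spectral-projector term and splitting according to the row/column spaces $L,R$ of $M$ and the support of $Q$, one operator-norm factor of $\widehat{Z}$ is replaced by a one-sided projected norm such as $\|L^{\top}\widehat{Z}\|$ or $\|\widehat{Z}R\|$ (of order $\sigma\sqrt{rd_1^2\log d_1/T^{1-\gamma}}$), while the contraction with $Q$ introduces the sharper factors $\|Q\|_{\ell_1}$ and $\|\calP_M(Q)\|_{\rm F}$ in place of $\|Q\|_{*}$. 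Dividing by $\sigma S\sqrt{d_1d_2/T^{1-\gamma}}$ then reproduces exactly the first two terms of~(\ref{eq:CLT-cond}), the first squared and the second carrying the factor $\|Q\|_{\ell_1}/S$, both of which vanish by hypothesis.

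The main obstacle is the sharp operator-norm concentration of $\widehat{Z}$ under adaptively collected data with unbounded inverse-propensity weights: the $1/\pi_t\asymp t^{\gamma}$ weights on the estimated-suboptimal arm make the martingale increments heavy, so the matrix concentration must be carried out carefully, for instance through a truncation/peeling argument over the magnitude of $1/\pi_t$ combined with the incoherence and sup-norm control of $\widehat{M}_{t-1}$ from Theorem~\ref{thm:MCB-conv}. A secondary difficulty is the refined contraction of the previous paragraph, which is needed to avoid a spurious polynomial dimension loss and to land precisely on the hypotheses of~(\ref{eq:CLT-cond}); this is what ties the negligibility of the remainders to the same smallness conditions that govern the leading term.
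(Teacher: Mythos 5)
Your overall architecture matches the paper's: a high-probability operator-norm bound $\|\widehat{Z}\|\lesssim \sigma\sqrt{d_1^2d_2\log d_1/T^{1-\gamma}}=:\delta$ obtained from a martingale matrix Bernstein inequality (this is exactly Lemma \ref{Zbound:new}; the heavy increments from $1/\pi_t\lesssim t^{\gamma}$ are absorbed directly by the predictable-variation computation, with no truncation or peeling needed, and $\widehat{Z}_2$ is handled by the same inequality using the per-step rates from Theorem \ref{thm:MCB-conv}), the spectral-projector series of \cite{xia2021normal}, and a contraction against $\|Q\|_{\ell_1}$ instead of a nuclear-norm pairing. One bookkeeping point: after dividing by $\sigma S\sqrt{d_1d_2/T^{1-\gamma}}$ the paper closes with the single condition $(\|Q\|_{\ell_1}/S)(\sigma/\lambda_{\min})\sqrt{r^2d_1^2\log d_1/T^{1-\gamma}}\to 0$, i.e.\ the second summand of (\ref{eq:CLT-cond}), not ``the first term squared'' as you assert.

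The genuine gap sits precisely where you place the ``refined block decomposition.'' Pairing with $\|Q\|_{\ell_1}$ requires \emph{entrywise} (sup-norm) control of both remainders, and the paper obtains this from Lemmas 5 and 6 of \cite{xia2021statistical} under the additional event $\calE_1$ consisting of the two-to-max bounds $\|\widehat{L}\widehat{L}^{\top}-LL^{\top}\|_{2,\max}\lesssim (\sigma/\lambda_{\min})\sqrt{d_1^2d_2\log d_1/T^{1-\gamma}}\sqrt{\mu r/d_1}$ and its analogue for $R$; establishing $\calE_1$ under adaptive sampling is the crux and is the content of Lemma \ref{induction:new}, which proves the row-wise bounds $\max_{j}\|e_j^{\top}\mathfrak{P}^{\perp}(\mathfrak{P}^{\perp}\widehat{E}\mathfrak{P}^{\perp})^k\widehat{E}\Theta\|\leq C_1(C_2\delta)^{k+1}\sqrt{\mu r/d_1}$ \emph{uniformly over all orders $k$ of the series}, by induction with a fresh martingale Bernstein bound at each order, applied separately to the noise part $\widehat{Z}_1$ and the drift part $\widehat{Z}_2$. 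Your one-sided projected norms $\|L^{\top}\widehat{Z}\|$, $\|\widehat{Z}R\|$ capture only the $k=0$ ingredient $\max_j\|e_j^{\top}\widehat{Z}R\|\lesssim\sigma\sqrt{rd_1d_2\log d_1/T^{1-\gamma}}$ (your stated order writes $d_1^2$ where $d_1d_2$ is the true scale), and they do not propagate to $k\geq 1$ nor to the projector differences themselves. The failure is most visible in the quadratic remainder, which you propose to handle by Wedin's operator-norm $\sin\Theta$ bound: the paper instead uses
\begin{align*}
\big|\inp{(\widehat{\Theta}\widehat{\Theta}^{\top}-\Theta\Theta^{\top})A(\widehat{\Theta}\widehat{\Theta}^{\top}-\Theta\Theta^{\top})}{\widetilde{Q}}\big|\lesssim \|Q\|_{\ell_1}\|\Lambda\|\,\|\widehat{L}\widehat{L}^{\top}-LL^{\top}\|_{2,\max}\|\widehat{R}\widehat{R}^{\top}-RR^{\top}\|_{2,\max},
\end{align*}
where each two-to-max factor carries an incoherence gain ($\sqrt{\mu r/d_1}$ and $\sqrt{\mu r/d_2}$ respectively); replacing them by the operator-norm Wedin bound loses the factor $\sqrt{d_1d_2}/(\mu r)$ and the resulting ratio does not vanish under (\ref{eq:CLT-cond}). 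So while you correctly diagnose both the adaptive-weights concentration and the dimension-loss danger, the decisive step — the martingale adaptation of the incoherence-preserving series bounds (Lemma \ref{induction:new}) — is not supplied by the mechanism you name, and as written the argument does not close.
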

\noindent Combine Lemma \ref{main term}-\ref{lemmaneg3}, we can conclude that
\begin{align*}
    \frac{\inp{\widehat{M}}{Q}-\inp{M}{Q}}{\sigma S\sqrt{d_1d_2/T^{1-\gamma}}}\rightarrow N(0,1).
\end{align*}   
\end{proof}   

\subsection{Proof of Theorem \ref{thm:studentized-CLT}}
\begin{proof}
    Without loss of generality, we only prove the $i=1$ case. As long as we show $\widehat{\sigma}^2\overset{p}{\rightarrow} \sigma^2$ and $\widehat{S}^2\overset{p}{\rightarrow} S^2$, then by Slutsky's Theorem, we can obtain the desired result. \\
    $\mathbf{\widehat{\sigma}^2\overset{p}{\rightarrow} \sigma^2}$ \quad Notice that, $\widehat{\sigma}^2$ can be written as
    \begin{align*}
        \widehat{\sigma}^2&= \underbrace{\frac{b}{T}\sum_{t=T_0+1}^{T} \frac{\mathbbm{1}(a_t=1)}{\pi_t}\inp{M-\widehat{M}_{t-1}}{X_t}^2}_{\uppercase\expandafter{\romannumeral1}} + \underbrace{\frac{2b}{T}\sum_{t=T_0+1}^{T} \frac{\mathbbm{1}(a_t=1)}{\pi_t} \inp{M-\widehat{M}_{t-1}}{X_t}\xi_t}_{\uppercase\expandafter{\romannumeral2}} \\ &\quad + \underbrace{\frac{b}{T}\sum_{t=T_0+1}^{T} \frac{\mathbbm{1}(a_t=1)}{\pi_t} \xi_t^2}_{\uppercase\expandafter{\romannumeral3}}.
    \end{align*}
    For term \uppercase\expandafter{\romannumeral1}, 
    \begin{align*}
        \big|\frac{b}{T} \frac{\mathbbm{1}(a_t=1)}{\pi_t}\inp{M-\widehat{M}_{t-1}}{X_t}^2\big|&\lesssim \frac{t^\gamma}{T}\|\widehat{\Delta}_{t-1}\|_{\max}^2 \lesssim \frac{1}{T^{1-\gamma}}\sigma^2, 
    \end{align*}
    and 
    \begin{align*}
        \EE\left[\frac{b^2}{T^2} \frac{\mathbbm{1}(a_t=1)}{\pi_t^2}\inp{M-\widehat{M}_{t-1}}{X_t}^4\big|\mathcal{F}_{t-1}\right]&\leq \frac{b^2}{T^2}\frac{2}{\varepsilon_t}\|\widehat{\Delta}_{t-1}\|_{\max}^2\frac{1}{d_1d_2}\fro{\widehat{\Delta}_{t-1}}^2 \\
        &\lesssim \frac{t^\gamma}{T^2}\sigma^4.
    \end{align*}
    By martingale Bernstein inequality, with probability at least $1-d_1^{-2}$,
    \begin{align*}
        \uppercase\expandafter{\romannumeral1}&\lesssim \bigg|\frac{b}{T}\sum_{t=T_0+1}^{T} \EE\left[\frac{\mathbbm{1}(a_t=1)}{\pi_t}\inp{M-\widehat{M}_{t-1}}{X_t}^2\big|\mathcal{F}_{t-1}\right]\bigg| + \sqrt{\frac{\log d_1}{T^{1-\gamma}}}\sigma^2 \\
        &\lesssim \frac{d_1r\sigma^2\log^4 d_1\text{poly}(\mu,r,\kappa)}{T^{1-\gamma}} + \sqrt{\frac{\log d_1}{T^{1-\gamma}}}\sigma^2,
    \end{align*}
    which indicates $\uppercase\expandafter{\romannumeral1}\overset{p}{\rightarrow} 0$ when $\frac{d_1\log^4 d_1}{T^{1-\gamma}}\rightarrow 0$. 
    
    Similarly, for term $\uppercase\expandafter{\romannumeral2}$, note that $|\xi_t|$ is bounded with $\sigma\log^{1/2} d_1$ with high probability, 
    \begin{align*}
        \bigg|\frac{2b}{T} \frac{\mathbbm{1}(a_t=1)}{\pi_t} \inp{M-\widehat{M}_{t-1}}{X_t}\xi_t\bigg|\leq \frac{2b}{T}\frac{2}{\varepsilon_t}\|\widehat{\Delta}_{t-1}\|_{\max}\sigma\log^{1/2} d_1 \lesssim \frac{\sigma^2\log^{1/2} d_1}{T^{1-\gamma}},
    \end{align*}
    and
    \begin{align*}
        \EE\left[\frac{4b^2}{T^2} \frac{\mathbbm{1}(a_t=1)}{\pi_t^2} \inp{M-\widehat{M}_{t-1}}{X_t}^2\xi_t^2|\mathcal{F}_{t-1}\right] \lesssim \frac{t^\gamma\sigma^2}{T^2}\|\widehat{\Delta}_{t-1}\|_{\max}^2 \lesssim  \frac{t^\gamma\sigma^4}{T^2}.
    \end{align*}
    By martingale Bernstein inequality, with probability at least $1-d_1^{-2}$, 
    \begin{align*}
        \bigg|\uppercase\expandafter{\romannumeral2} - \frac{2b}{T} \sum_{t=T_0+1}^{T} \EE\left[\frac{\mathbbm{1}(a_t=1)}{\pi_t}\inp{M-\widehat{M}_{t-1}}{X_t} \xi_t|\mathcal{F}_{t-1}\right]\bigg| \lesssim \sqrt{\frac{\log d_1}{T^{1-\gamma}}}\sigma^2,
    \end{align*}
    which indicates $\uppercase\expandafter{\romannumeral2}\overset{p}{\rightarrow} 0$ as long as $\frac{\log d_1}{T^{1-\gamma}} \rightarrow 0$. 
    
    Lastly, for term $\uppercase\expandafter{\romannumeral3}$,
    \begin{align*}
        \|\frac{b}{T} \frac{\mathbbm{1}(a_t=1)}{\pi_t} \xi_t^2\|_{\psi_1}\lesssim \frac{t^\gamma\sigma^2}{T},
    \end{align*}
    and
    \begin{align*}
        \EE\left[\frac{b^2}{T^2} \frac{\mathbbm{1}(a_t=1)}{\pi_t^2} \xi_t^4|\mathcal{F}_{t-1}\right]  \lesssim  \frac{t^\gamma\sigma^4}{T}.
    \end{align*}
    By martingale Bernstein inequality, with probability at least $1-d_1^{-2}$, 
    \begin{align*}
        \bigg|\uppercase\expandafter{\romannumeral3} - \frac{b}{T} \sum_{t=T_0+1}^{T} \EE\left[\frac{\mathbbm{1}(a_t=1)}{\pi_t} \xi_t^2|\mathcal{F}_{t-1}\right]\bigg| \lesssim \sqrt{\frac{\log d_1}{T^{1-\gamma}}}\sigma^2,
    \end{align*}
    which indicates $\uppercase\expandafter{\romannumeral3}\overset{p}{\rightarrow} \sigma^2$ as long as $\frac{\log d_1}{T^{1-\gamma}} \rightarrow 0$. 
    
    {\it Showing $\widehat{S}^2\overset{p}{\rightarrow} S^2$.} \quad  Notice that
    \begin{align*}
        \left|\fro{\mathcal{P}_M(Q)}^2 - \fro{\mathcal{P}_{\widehat{M}}(Q)}^2\right|&=\left|\fro{L^{\top}Q}^2 + \fro{L_{\perp}^{\top}Q R}^2 - \fro{\widehat{L}^{\top}Q}^2 - \fro{\widehat{L}_{\perp}^{\top}Q \widehat{R}}^2\right| \\
        &\leq \left|\fro{L^{\top}Q}^2 - \fro{\widehat{L}^{\top}Q}^2\right| + \left|\fro{L_{\perp}^{\top}Q R}^2 - \fro{\widehat{L}_{\perp}^{\top}Q \widehat{R}}^2\right|. 
    \end{align*}
    Observing that $L$ and $\widehat{L}$ both have orthonormal columns, then
    \begin{align*}
        \left|\fro{L^{\top}Q}^2 - \fro{\widehat{L}^{\top}Q}^2\right|&=\left|\fro{LL^{\top}Q}^2 - \fro{\widehat{L}\widehat{L}^{\top}Q}^2\right| \\
        &\leq \fro{(LL^{\top} - \widehat{L}\widehat{L}^{\top})Q}^2 + 2\left|\inp{(LL^{\top} - \widehat{L}\widehat{L}^{\top})Q}{LL^{\top}Q}\right|.
    \end{align*}
    In the following we shall consider the upper bound under the event $\calE$ defined in the proof of Lemma \ref{induction:new}. 
    The probability of the following event
    \begin{align*}
        &\fro{(LL^{\top} - \widehat{L}\widehat{L}^{\top})Q}^2 \leq \left(\sum_{j_1,j_2} |Q(j_1,j_2)| \|(LL^{\top} - \widehat{L}\widehat{L}^{\top})e_{j_1}\|\right)^2 \\
        &\quad \leq \|Q\|_{\ell_1}^2 \|LL^{\top} - \widehat{L}\widehat{L}^{\top}\|_{2,\max}^2 \lesssim \|Q\|_{\ell_1}^2\frac{\sigma^2}{\lambda_{\min}^2}\frac{\mu r d_1d_2\log d_1}{T^{1-\gamma}}.
    \end{align*}
    approaches 1 as $\|Q\|_{\ell_1}^2\frac{\sigma^2}{\lambda_{\min}^2}\frac{d_1d_2\log d_1}{T^{1-\gamma}}\rightarrow 0$, which can be easily satisfied under the conditions in Theorem \ref{thm:CLT}. \\ 
    Similarly,
    \begin{align*}
        &\left|\inp{(LL^{\top} - \widehat{L}\widehat{L}^{\top})Q}{LL^{\top}Q}\right|\leq \fro{L^{\top}Q}\fro{L^{\top}(LL^{\top} - \widehat{L}\widehat{L}^{\top})Q} \\
        &\quad \leq \fro{L^{\top}Q}\|Q\|_{\ell_1}\|\|LL^{\top} - \widehat{L}\widehat{L}^{\top}\|_{2,\max} \lesssim \fro{Q}\|Q\|_{\ell_1}\frac{\sigma}{\lambda_{\min}}\sqrt{\frac{\mu r d_1d_2\log d_1}{T^{1-\gamma}}}.
    \end{align*}
    Therefore, as long as $\fro{Q}\|Q\|_{\ell_1}\frac{\sigma}{\lambda_{\min}}\sqrt{\frac{\mu r d_1d_2\log d_1}{T^{1-\gamma}}}\rightarrow 0$, the above term converges to 0 in probability.
    
    Next we bound $\left|\fro{L_{\perp}^{\top}Q R}^2 - \fro{\widehat{L}_{\perp}^{\top}Q \widehat{R}}^2\right|$. It can be decomposed into
    \begin{align*}
        \left|\fro{L_{\perp}^{\top}Q R}^2 - \fro{\widehat{L}_{\perp}^{\top}Q \widehat{R}}^2\right|\leq \left|\fro{L_{\perp}^{\top}Q R}^2 - \fro{{L}_{\perp}^{\top}Q \widehat{R}}^2\right| + \left|\fro{L_{\perp}^{\top}Q \widehat{R}}^2 - \fro{\widehat{L}_{\perp}^{\top}Q \widehat{R}}^2\right|.
    \end{align*}
    It follows that
    \begin{align*}
        &\left|\fro{L_{\perp}^{\top}Q R}^2 - \fro{{L}_{\perp}^{\top}Q \widehat{R}}^2\right|\leq \fro{{L}_{\perp}^{\top}Q(RR^{\top}-\widehat{R}\widehat{R}^{\top})}^2 + 2\left|\inp{{L}_{\perp}^{\top}Q(RR^{\top}-\widehat{R}\widehat{R}^{\top})}{L_{\perp}^{\top}Q R}\right| \\
        &\quad \lesssim  \|Q\|_{\ell_1}^2 \frac{\sigma^2}{\lambda_{\min}^2}\frac{\mu rd_1^2\log d_1}{T} + \fro{L_{\perp}^{\top}Q R}\|Q\|_{\ell_1}\frac{\sigma}{\lambda_{\min}}\sqrt{\frac{\mu rd_1^2\log d_1}{T^{1-\gamma}}},
    \end{align*}
    and
    \begin{align*}
        &\quad\left|\fro{L_{\perp}^{\top}Q \widehat{R}}^2 - \fro{\widehat{L}_{\perp}^{\top}Q \widehat{R}}^2\right|\leq \fro{(\widehat{L}\widehat{L}^{\top}-LL^{\top})Q\widehat{R}\widehat{R}^{\top}}^2 + 2\left|\inp{{L}_{\perp}^{\top}Q(RR^{\top}-\widehat{R}\widehat{R}^{\top})}{L_{\perp}^{\top}Q \widehat{R}}\right| \\
        & \lesssim \|Q\|_{\ell_1}^2 \frac{\sigma^2}{\lambda_{\min}^2}\frac{\mu r d_1d_2\log d_1}{T^{1-\gamma}}  + (\fro{L_{\perp}^{\top}Q {R}}+\fro{{L}_{\perp}^{\top}Q(RR^{\top}-\widehat{R}\widehat{R}^{\top})})\|Q\|_{\ell_1} \frac{\sigma}{\lambda_{\min}}\sqrt{\frac{\mu r d_1^2\log d_1}{T^{1-\gamma}}} \\
        & \lesssim \|Q\|_{\ell_1}^2 \frac{\sigma^2}{\lambda_{\min}^2}\frac{\mu r d_1d_2\log d_1}{T^{1-\gamma}} + \fro{L_{\perp}^{\top}Q {R}}\|Q\|_{\ell_1} \frac{\sigma}{\lambda_{\min}}\sqrt{\frac{\mu r d_1^2\log d_1}{T^{1-\gamma}}}  + \|Q\|_{\ell_1}^2 \frac{\sigma^2}{\lambda_{\min}^2}\frac{\mu r d_1^2\log d_1}{T^{1-\gamma}}.
    \end{align*}
    We can show $\left|\fro{L_{\perp}^{\top}Q \widehat{R}}^2 - \fro{\widehat{L}_{\perp}^{\top}Q \widehat{R}}^2\right|$ converges to 0 in probability as long as $\|Q\|_{\ell_1}^2 \frac{\sigma^2}{\lambda_{\min}^2}\frac{d_1^2\log d_1}{T^{1-\gamma}}\rightarrow 0$. 
    As a result, $\fro{\mathcal{P}_{\widehat{M}}(Q)}^2\overset{p}{\rightarrow} \fro{\mathcal{P}_M(Q)}^2$. By Lemma \ref{indicator}, it directly follows $\fro{\mathcal{P}_{\widehat{\Omega}_{i,T}}\mathcal{P}_{\widehat{M}}(Q)}^2\overset{p}{\rightarrow} \fro{\mathcal{P}_{\Omega_i}\mathcal{P}_M(Q)}^2$ for $i=0,1$. Moreover, $b_T\rightarrow 1$ when $T\rightarrow \infty$.
    Then by Slutsky's Theorem, $\widehat{S}^2\overset{p}{\rightarrow} S^2$.
\end{proof}

\subsection{Proof of Corollary \ref{cor:m1-m0}}
\begin{proof}
    Note that
    \begin{align*}
        (\inp{\widehat{M}_1}{Q} - \inp{\widehat{M}_0}{Q}) - (\inp{M_1}{Q} - \inp{M_0}{Q})= (\inp{\widehat{M}_1}{Q} - \inp{M_1}{Q}) - (\inp{\widehat{M}_0}{Q} - \inp{M_0}{Q}),
    \end{align*}
    then $\inp{\widehat{M}_1}{Q} - \inp{M_1}{Q}$ and $\inp{\widehat{M}_0}{Q} - \inp{M_0}{Q}$ can be decomposed into main term and negligible terms in the same way as Theorem \ref{thm:CLT}. The upper bound of all the negligible terms for both $i=0$ and 1 still follow the Lemma \ref{lemmaneg1}-\ref{lemmaneg3}. The main CLT term is
    \begin{align*}
        \frac{b}{T}\sum_{t_1=T_0+1}^{T}\frac{d_1d_2\mathbbm{1}(a_{t_1}=1)}{\pi_{t_1}}\xi_{t_1}\inp{X_{t_1}}{\mathcal{P}_{M_1}(Q)} - \frac{b}{T}\sum_{t_2=T_0+1}^{T}\frac{d_1d_2\mathbbm{1}(a_{t_2}=0)}{1-\pi_{t_2}}\xi_{t_2}\inp{X_{t_2}}{\mathcal{P}_{M_0}(Q)}.
    \end{align*}
    As long as we show $\sum_{t_1=T_0+1}^{T}\frac{d_1d_2\mathbbm{1}(a_{t_1}=1)}{\pi_{t_1}}\xi_{t_1}\inp{X_{t_1}}{\mathcal{P}_{M_1}(Q)}$ is uncorrelated with \\ $\sum_{t_2=T_0+1}^{T}\frac{d_1d_2\mathbbm{1}(a_{t_2}=0)}{1-\pi_{t_2}}\xi_{t_2}\inp{X_{t_2}}{\mathcal{P}_{M_0}(Q)}$, then the asymptotic variance is the sum of their individual variance, i.e., $\sigma_1^2S_1^2+\sigma_0^2S_0^2$. Notice that,
    \begin{align*}
        &\quad \frac{b^2}{T^2}\sum_{t_1=T_0+1}^{T}\frac{d_1d_2\mathbbm{1}(a_{t_1}=1)}{\pi_{t_1}}\xi_{t_1}\inp{X_{t_1}}{\mathcal{P}_{M_1}(Q)} \sum_{t_2=T_0+1}^{T}\frac{d_1d_2\mathbbm{1}(a_{t_2}=0)}{1-\pi_{t_2}}\xi_{t_2}\inp{X_{t_2}}{\mathcal{P}_{M_0}(Q)} \\
        &=\frac{b^2d_1^2d_2^2}{T^2}\sum_{t_1=T_0+1}^{T}\sum_{t_2=T_0+1}^{T} \frac{\mathbbm{1}(a_{t_1}=1)\mathbbm{1}(a_{t_2}=0)}{\pi_{t_1}(1-\pi_{t_2})}\xi_{t_1}\xi_{t_2}\inp{X_{t_1}}{\mathcal{P}_{M_1}(Q)}\inp{X_{t_2}}{\mathcal{P}_{M_0}(Q)}.
    \end{align*}
    When $t_1=t_2$, $\mathbbm{1}(a_{t_1}=1)\mathbbm{1}(a_{t_2}=0)=0$. When $t_1\neq t_2$, $\EE[\frac{\mathbbm{1}(a_{t_1}=1)\mathbbm{1}(a_{t_2}=0)}{\pi_{t_1}(1-\pi_{t_2})}\xi_{t_1}\xi_{t_2}X_{t_1}X_{t_2}]=0$ due to the i.i.d. distributed $\xi_t$. As a result, the two terms are uncorrelated. \\
    Together with Lemma \ref{lemmaneg1}-\ref{lemmaneg3}, all the negligible terms converge to 0 when divided by \\ $\sqrt{(\sigma_1^2S_1^2+\sigma_0^2S_0^2)d_1d_2/T^{1-\gamma}}$. Then similar as Theorem \ref{thm:studentized-CLT}, we can replace $\sigma_1^2S_1^2+\sigma_0^2S_0^2$ with $\widehat{\sigma}_1^2\widehat{S}_1^2+\widehat{\sigma}_0^2\widehat{S}_0^2$ and conclude the proof.
\end{proof}

\section{Proofs of Technical Lemmas}

\subsection{Proof of Lemma \ref{lemmafro}}
\begin{proof}
    \emph{Step 1: uniform bound for $\prod_{k=\tau+1}^{t} (1-f(\eta_k))A_{\tau}$ and $\prod_{k=\tau+1}^{t} (1-f(\eta_k))B_{\tau}$.} \\
    For $\forall$ $1\leq\tau\leq t-1$, under $\calE_{\tau-1}$, by inequality $(a+b+c)^2\leq 3(a^2+b^2+c^3)$,
    \begin{align*}
        \fro{\Delta_\tau}^2 &\leq \frac{24}{\varepsilon_\tau^2}\eta_\tau^2 (\inp{\widehat{U}_{\tau-1}\widehat{V}_{\tau-1}^{\top} - M}{X_\tau}-\xi_t)^2\max(\|\widehat{V}_{\tau-1}\widehat{V}_{\tau-1}^{\top}\|_{2,\max}^2,\|\widehat{U}_{\tau-1}\widehat{U}_{\tau-1}^{\top}\|_{2,\max}^2) \\ &+ \frac{48}{\varepsilon_t^4}\eta_\tau^4 (\inp{\widehat{U}_{\tau-1}\widehat{V}_{\tau-1}^{\top} - M}{X_\tau}-\xi_\tau)^4\|\widehat{V}_{\tau-1}\widehat{U}_{\tau-1}^{\top}\|_{\max}^2. 
    \end{align*}
    Since $\|\widehat{U}_{\tau-1}\widehat{V}_{\tau-1}^{\top} - M\|_{\max}^2\lesssim \frac{\mu_0 r \kappa^2}{d_2}\fro{\widehat{U}_{\tau-1}\widehat{V}_{\tau-1}^{\top} - M}^2\lesssim \mu_0 r \kappa^2 \prod_{k=1}^{\tau-1} (1-\frac{f(\eta_k)}{2})\frac{\fro{\widehat{U}_0\widehat{V}_0^{\top}-M}^2}{d_2} + \\ \sum_{k=1}^{\tau-1}\frac{\mu_0\kappa^2 \eta_k^2r^2\lambda_{\max}^2\sigma^2\log^2d_1}{\varepsilon_k d_2^2}$. And by the property of subGaussian random variable, $\max|\xi_\tau|<C\sigma\log^{1/2}d_1$ for some constant $C$. Therefore,
    \begin{align*}
        &\fro{\Delta_\tau}^2 \lesssim \frac{\eta_\tau^2}{\varepsilon_\tau^2}\frac{\mu_0r\lambda_{\max}^2}{d_2}\left(\mu_0r\kappa^2 \prod_{k=1}^{\tau-1} (1-\frac{f(\eta_k)}{2})\frac{\fro{\widehat{U}_0\widehat{V}_0^{\top}-M}^2}{d_2} + \sum_{k=1}^{\tau-1}\frac{\mu_0\kappa^2 \eta_k^2r^2\lambda_{\max}^2\sigma^2\log^2d_1}{\varepsilon_k d_2^2} + \sigma^2\log d_1 \right)\\ &\quad + \frac{\eta_\tau^4}{\varepsilon_\tau^4}\frac{\mu_0^3r^3\kappa^2\lambda_{\max}^2}{d_1d_2}\left(\frac{\mu_0^2r^2\lambda_{\max}^2}{d_1d_2}\left(\prod_{k=1}^{\tau-1} (1-\frac{f(\eta_k)}{2})\frac{\fro{\widehat{U}_0\widehat{V}_0^{\top}-M}^2}{d_2} + \sum_{k=1}^{\tau-1}\frac{\eta_k^2r\lambda_{\max}^2\sigma^2\log^2d_1}{\varepsilon_k d_2^2}\right) + \sigma^4\log^2 d_1\right),
    \end{align*}
    where we use $\|\widehat{U}_{\tau-1}\widehat{V}_{\tau-1}^{\top}-M\|_{\max}^4=\|\widehat{U}_{\tau-1}\widehat{V}_{\tau-1}^{\top}-M\|_{\max}^2 \|\widehat{U}_{\tau-1}\widehat{V}_{\tau-1}^{\top}-M\|_{\max}^2\lesssim \frac{2\mu_0^3 r^3\kappa^2 \lambda_{\max}^2}{d_1d_2}(\prod_{k=1}^{\tau-1} (1-\frac{f(\eta_k)}{2})\frac{\fro{\widehat{U}_0\widehat{V}_0^{\top}-M}^2}{d_2} + \sum_{k=1}^{\tau-1}\frac{\eta_k^2r\lambda_{\max}^2\sigma^2\log^2d_1}{\varepsilon_k d_2^2})$ in the second line. The fourth order terms will be dominated as long as $\eta_\tau\lesssim \frac{\varepsilon_\tau d_1d_2^{1/2}}{\mu_0r \lambda_{\max}}$ and $\frac{\lambda_{\min}^2}{\sigma^2}\gg \frac{\eta_{\tau}^2}{\varepsilon_{\tau}^2}\frac{\mu_0r\lambda_{\max}^2\log d_1}{d_1\kappa^2}$. So $\fro{\Delta_\tau}^2\lesssim \frac{\eta_\tau^2}{\varepsilon_\tau^2}\frac{\mu_0r\lambda_{\max}^2}{d_2}(\mu_0r\kappa^2 \prod_{k=1}^{\tau-1} (1-\frac{f(\eta_k)}{2})\frac{\fro{U_0V_0^{\top}-M}^2}{d_2} + \sum_{k=1}^{\tau-1}\frac{\mu_0\kappa^2 \eta_k^2r^2\lambda_{\max}^2\sigma^2\log^2d_1}{\varepsilon_k d_2^2} + \sigma^2\log d_1)$. \\
    Similarly, 
    \begin{align*}
        \left|\inp{\Delta_\tau}{\widehat{U}_{\tau-1}\widehat{V}_{\tau-1}^{\top}-M}\right|\leq &\left|\frac{\mathbbm{1}(a_\tau=1)}{\pi_\tau}\eta_\tau (\inp{\widehat{U}_{\tau-1}\widehat{V}_{\tau-1}^{\top} - M}{X_\tau}-\xi_\tau)\inp{X_\tau}{(\widehat{U}_{\tau-1}\widehat{V}_{\tau-1}^{\top} - M)\widehat{V}_{\tau-1}\widehat{V}_{\tau-1}^{\top}}\right|\\ &+ \left|\frac{\mathbbm{1}(a_\tau=1)}{\pi_\tau}\eta_t (\inp{\widehat{U}_{\tau-1}\widehat{V}_{\tau-1}^{\top} - M}{X_\tau}-\xi_\tau)\inp{X_\tau}{\widehat{U}_{\tau-1}\widehat{U}_{\tau-1}^{\top}(\widehat{U}_{\tau-1}\widehat{V}_{\tau-1}^{\top} - M)}\right| \\ &+  \left|\frac{\mathbbm{1}(a_\tau=1)}{\pi_\tau^2}\eta_\tau^2 (\inp{\widehat{U}_{\tau-1}\widehat{V}_{\tau-1}^{\top} - M}{X_\tau}-\xi_\tau)^2\inp{X_\tau \widehat{V}_{\tau-1}\widehat{U}_{\tau-1}^{\top}X_\tau}{\widehat{U}_{\tau-1}\widehat{V}_{\tau-1}^{\top} - M}\right|.
    \end{align*}
    For term $\inp{X_\tau}{(U_{\tau-1}V_{\tau-1}^{\top} - M)\widehat{V}_{\tau-1}\widehat{V}_{\tau-1}^{\top}}$, we have
    \begin{align*}
        &\inp{X_\tau}{(\widehat{U}_{\tau-1}\widehat{V}_{\tau-1}^{\top} - M)\widehat{V}_{\tau-1}\widehat{V}_{\tau-1}^{\top}}= \inp{X_\tau\widehat{V}_{\tau-1}\widehat{V}_{\tau-1}^{\top}}{\widehat{U}_{\tau-1}\widehat{V}_{\tau-1}^{\top} - M} \\
        &\quad \leq \|\widehat{V}_{\tau-1}\widehat{V}_{\tau-1}^{\top}\|_{2,\max}\fro{\widehat{U}_{\tau-1}\widehat{V}_{\tau-1}^{\top} - M} \leq \sqrt{\frac{\mu_0r\lambda_{\max}^2}{d_2}}\fro{\widehat{U}_{\tau-1}\widehat{V}_{\tau-1}^{\top} - M}.
    \end{align*}
    By symmetry, $\inp{X_\tau}{\widehat{U}_{\tau-1}\widehat{U}_{\tau-1}^{\top}(\widehat{U}_{\tau-1}\widehat{V}_{\tau-1}^{\top} - M)}\leq \sqrt{\frac{\mu_0r\lambda_{\max}^2}{d_1}}\fro{\widehat{U}_{\tau-1}\widehat{V}_{\tau-1}^{\top} - M}$. In addition, 
    \begin{align*}
        \inp{X_\tau \widehat{V}_{\tau-1}\widehat{U}_{\tau-1}^{\top}X_\tau}{\widehat{U}_{\tau-1}\widehat{V}_{\tau-1}^{\top} - M}&\leq  \|\widehat{V}_{\tau-1}\widehat{U}_{\tau-1}^{\top}\|_{\max}\fro{\widehat{U}_{\tau-1}\widehat{V}_{\tau-1}^{\top} - M} \leq \frac{\mu_0r\lambda_{\max}}{\sqrt{d_1d_2}}\fro{\widehat{U}_{\tau-1}\widehat{V}_{\tau-1}^{\top} - M}.
    \end{align*}
    Therefore,  
    \begin{align*}
        &\quad |\inp{\Delta_\tau}{\widehat{U}_{\tau-1}\widehat{V}_{\tau-1}^{\top}-M}| \lesssim  \frac{\eta_\tau}{\varepsilon_\tau} (\|\widehat{U}_{\tau-1}\widehat{V}_{\tau-1}^{\top} - M\|_{\max}+\sigma\log^{1/2}d_1)\sqrt{\frac{\mu_0r\lambda_{\max}^2}{d_2}}\fro{\widehat{U}_{\tau-1}\widehat{V}_{\tau-1}^{\top} - M} \\ &\quad\quad + \frac{\eta_\tau^2}{\varepsilon_\tau^2} (\|\widehat{U}_{\tau-1}\widehat{V}_{\tau-1}^{\top} - M\|_{\max}^2 +\sigma^2\log d_1) \frac{\mu_0r\lambda_{\max}}{\sqrt{d_1d_2}}\fro{\widehat{U}_{\tau-1}\widehat{V}_{\tau-1}^{\top} - M} \\
        &\lesssim \frac{\eta_\tau}{\varepsilon_t}\left(\mu_0^{1/2}r^{1/2}\kappa \prod_{k=1}^{\tau-1} (1-\frac{f(\eta_k)}{2})^{1/2}\frac{\fro{\widehat{U}_0\widehat{V}_0^{\top}-M}}{\sqrt{d_2}} + \sqrt{\sum_{k=1}^{\tau-1}\frac{\mu_0\kappa^2 \eta_k^2r^2\lambda_{\max}^2\sigma^2\log^2d_1}{\varepsilon_k d_2^2}} \right.\\ &\left.\quad\quad + \sigma\log^{1/2}d_1\right)\sqrt{\frac{\mu_0r\lambda_{\max}^2}{d_2}}\fro{\widehat{U}_{\tau-1}\widehat{V}_{\tau-1}^{\top} - M} \\
        &\lesssim \frac{\eta_\tau^2}{\varepsilon_\tau^2}\left(\mu_0 r\kappa^2 \prod_{k=1}^{\tau-1} (1-\frac{f(\eta_k)}{2})\frac{\fro{\widehat{U}_0\widehat{V}_0^{\top}-M}^2}{d_2} + \sum_{k=1}^{\tau-1}\frac{\mu_0\kappa^2 \eta_k^2r^2\lambda_{\max}^2\sigma^2\log^2d_1}{\varepsilon_k d_2^2} + \sigma^2\log^2 d_1\right)\frac{\mu_0r\lambda_{\max}^2}{d_2}\log d_1 \\ &\quad\quad + \left(\prod_{k=1}^{\tau-1} (1-\frac{f(\eta_k)}{2})\fro{\widehat{U}_0\widehat{V}_0^{\top}-M}^2 + \sum_{k=1}^{\tau-1}\frac{\eta_k^2r\lambda_{\max}^2\sigma^2\log^2d_1}{\varepsilon_kd_2}\right)\log^{-1}d_1.
    \end{align*}
    By dealing $\|\widehat{U}_{\tau-1}\widehat{V}_{\tau-1}^{\top}-M\|_{\max}^2\lesssim \frac{\mu_0^{3/2}r^{3/2}\kappa \lambda_{\max}}{\sqrt{d_1d_2}}(\prod_{k=1}^{\tau-1} (1-\frac{f(\eta_k)}{2})^{1/2}\frac{\fro{\widehat{U}_0\widehat{V}_0^{\top}-M}}{\sqrt{d_2}} + \sqrt{\sum_{k=1}^{\tau-1}\frac{\eta_k^2r\lambda_{\max}^2\sigma^2\log^2d_1}{\varepsilon_k d_2^2}})$ as before, the second inequality holds since the second order term will be dominated as long as $\eta_\tau\lesssim \frac{\varepsilon_\tau d_1d_2^{1/2}}{\mu_0 r \kappa \lambda_{\max}}$ and $\frac{\lambda_{\min}^2}{\sigma^2}\gg \frac{\eta_{\tau}^2}{\varepsilon_{\tau}^2}\frac{\mu_0 r \lambda_{\max}^2\log d_1}{d_1\kappa^2}$. And the third inequality holds by $2ab\leq a^2+b^2$.  \\
    As a result, we have the uniform bound of $\prod_{k=\tau+1}^{t} (1-f(\eta_k))A_{\tau}$ for $1\leq \tau\leq t-1$, 
    \begin{align*}
        &\prod_{k=\tau+1}^{t} (1-f(\eta_k))A_{\tau}\lesssim \prod_{k=\tau+1}^{t} (1-f(\eta_k))\left(\mu_0r\kappa^2 \prod_{k=1}^{\tau-1} (1-\frac{f(\eta_k)}{2})\frac{\fro{\widehat{U}_0\widehat{V}_0^{\top}-M}^2}{d_2} \right.\\ &\left.\quad + \sum_{k=1}^{\tau-1}\frac{\mu_0\kappa^2 \eta_k^2r^2\lambda_{\max}^2\sigma^2\log^2d_1}{\varepsilon_k d_2^2}  + \sigma^2\log d_1 \right) \frac{\eta_\tau^2}{\varepsilon_\tau^2}\frac{\mu_0r\lambda_{\max}^2}{d_2}\log d_1 \\ &\quad\quad + \prod_{k=\tau+1}^{t} (1-f(\eta_k))\left(\prod_{k=1}^{\tau-1}(1-\frac{f(\eta_{k})}{2}) \fro{\widehat{U}_0\widehat{V}_0^{\top} - M}^2 + \sum_{k=1}^{\tau-1}\frac{\eta_k^2r\lambda_{\max}^2\sigma^2\log^2 d_1}{\varepsilon_kd_2}\right)\log^{-1}d_1 \\
        &\lesssim  \max_{\tau}\frac{\eta_{\tau}^2}{\varepsilon_{\tau}^2}\frac{\mu_0r\lambda_{\max}^2}{d_2}\log d_1 \left(\mu_0r\kappa^2 \prod_{k=1}^{t}(1-\frac{f(\eta_{k})}{2}) \frac{\fro{\widehat{U}_0\widehat{V}_0^{\top} - M}^2}{d_2} + \sum_{k=1}^{t-1}\frac{\mu_0\kappa^2 \eta_k^2r^2\lambda_{\max}^2\sigma^2\log^2 d_1}{\varepsilon_k d_2^2} + \sigma^2\log d_1\right)  \\ &\quad\quad + \left(\prod_{k=1}^{t}(1-\frac{f(\eta_{k})}{2}) \fro{\widehat{U}_0\widehat{V}_0^{\top} - M}^2 + \sum_{k=1}^{t}\frac{\eta_k^2r\lambda_{\max}^2\sigma^2\log^2 d_1}{\varepsilon_kd_2}\right)\log^{-1}d_1:=R_{A}.
    \end{align*}
    Similarly,
    \begin{align*}
        &\prod_{k=\tau+1}^{t} (1-f(\eta_k))B_{\tau}\lesssim \max_{\tau}\frac{\eta_{\tau}^2}{\varepsilon_{\tau}^2}\frac{\mu_0r\lambda_{\max}^2}{d_2}\left(\mu_0r\kappa^2\prod_{k=1}^{t}(1-\frac{f(\eta_{k})}{2}) \frac{\fro{\widehat{U}_0\widehat{V}_0^{\top} - M}^2}{d_2} \right.\\ &\left.\quad +   \sum_{k=1}^{t}\frac{\mu_0\kappa^2\eta_k^2r^2\lambda_{\max}^2\sigma^2\log^2 d_1}{\varepsilon_k d_2^2} + \sigma^2\log d_1\right) :=R_{B}.
    \end{align*}
    Obviously, the uniform bounds for $A_t$ and $B_t$ are the same as above. \\
    \emph{Step 2: calculating the variance.} \\
    Denote the variance of $\prod_{k=\tau+1}^{t} (1-f(\eta_k)) A_{\tau}$ as $\sigma_{A\tau}^2$,
    \begin{align*}
        &\sigma_{A\tau}^2\lesssim \prod_{k=\tau+1}^{t} (1-f(\eta_k))^2\EE\left[\frac{\mathbbm{1}(a_\tau=1)}{\pi_{\tau}^2}\eta_{\tau}^2 (\inp{\widehat{U}_{\tau-1}\widehat{V}_{\tau-1}^{\top} - M}{X_{\tau}}-\xi_{\tau})^2\inp{X_{\tau}}{(\widehat{U}_{\tau-1}\widehat{V}_{\tau-1}^{\top} - M)\widehat{V}_{\tau-1}\widehat{V}_{\tau-1}^{\top}}^2\big|\mathcal{F}_{\tau-1}\right] \\ &\quad + \prod_{k=\tau+1}^{t} (1-f(\eta_k))^2\EE\left[\frac{\mathbbm{1}(a_\tau=1)}{\pi_{\tau}^2}\eta_{\tau}^2 (\inp{\widehat{U}_{\tau-1}\widehat{V}_{\tau-1}^{\top} - M}{X_{\tau}}-\xi_{\tau})^2\inp{X_{\tau}}{\widehat{U}_{\tau-1}\widehat{U}_{\tau-1}^{\top}(\widehat{U}_{\tau-1}\widehat{V}_{\tau-1}^{\top} - M)}^2\big|\mathcal{F}_{\tau-1}\right] \\ &\quad + \prod_{k=\tau+1}^{t} (1-f(\eta_k))^2 \EE\left[\frac{\mathbbm{1}(a_\tau=1)}{\pi_{\tau}^4}\eta_{\tau}^4 (\inp{\widehat{U}_{\tau-1}\widehat{V}_{\tau-1}^{\top} - M}{X_{\tau}}-\xi_{\tau})^4\inp{X_{\tau}\widehat{V}_{\tau-1}\widehat{U}_{\tau-1}^{\top}X_{\tau}}{\widehat{U}_{\tau-1}\widehat{V}_{\tau-1}^{\top} - M}^2\big|\mathcal{F}_{\tau-1}\right]. 
    \end{align*}
    For the first expectation, we have
    \begin{align*}
        &\quad \EE\left[\frac{\mathbbm{1}(a_{\tau}=1)}{\pi_{\tau}^2}\eta_{\tau}^2 (\inp{\widehat{U}_{\tau-1}\widehat{V}_{\tau-1}^{\top} - M}{X_{\tau}}-\xi_{\tau})^2\inp{X_{\tau}}{(\widehat{U}_{\tau-1}\widehat{V}_{\tau-1}^{\top} - M)\widehat{V}_{\tau-1}\widehat{V}_{\tau-1}^{\top}}^2\big|\mathcal{F}_{\tau-1}\right] \\
        &\leq \frac{2\eta_{\tau}^2}{\varepsilon_{\tau}}\EE\left[\inp{\widehat{U}_{\tau-1}\widehat{V}_{\tau-1}^{\top} - M}{X_{\tau}}^2|\mathcal{F}_{t-1}\right]\left|\inp{X_{\tau}\widehat{V}_{\tau-1}\widehat{V}_{\tau-1}^{\top}}{\widehat{U}_{\tau-1}\widehat{V}_{\tau-1}^{\top} - M}\right|^2 \\ &\quad + \frac{2\eta_{\tau}^2}{\varepsilon_{\tau}}\EE[\xi_{\tau}^2]\EE\left(\inp{X_{\tau}}{(\widehat{U}_{\tau-1}\widehat{V}_{\tau-1}^{\top} - M)\widehat{V}_{\tau-1}\widehat{V}_{\tau-1}^{\top}}^2|\mathcal{F}_{\tau-1}\right) \\
        &\leq \frac{2\eta_{\tau}^2}{\varepsilon_{\tau}}\frac{1}{d_1d_2}\fro{\widehat{U}_{\tau-1}\widehat{V}_{\tau-1}^{\top} - M}^2\fro{\widehat{U}_{\tau-1}\widehat{V}_{\tau-1}^{\top} - M}^2\|\widehat{V}_{\tau-1}\widehat{V}_{\tau-1}^{\top}\|_{2,\max}^2 \\ &\quad + \frac{2\eta_{\tau}^2\sigma^2}{\varepsilon_{\tau}}\frac{1}{d_1d_2}\sum_{ij}\left((\widehat{U}_{\tau-1}\widehat{V}_{\tau-1}^{\top} - M)\widehat{V}_{\tau-1}\widehat{V}_{\tau-1}^{\top}\right)_{ij}^2 \\
        &\leq \frac{2\eta_{\tau}^2}{\varepsilon_{\tau}}\frac{1}{d_1d_2}\fro{\widehat{U}_{\tau-1}\widehat{V}_{\tau-1}^{\top} - M}^4\frac{\mu_0r\lambda_{\max}^2}{d_2} + \frac{2\eta_{\tau}^2\sigma^2}{\varepsilon_{\tau}}\frac{1}{d_1d_2}\fro{(\widehat{U}_{\tau-1}\widehat{V}_{\tau-1}^{\top} - M)\widehat{V}_{\tau-1}\widehat{V}_{\tau-1}^{\top}}^2 \\
        &\leq  \frac{2\eta_{\tau}^2}{\varepsilon_{\tau}}\frac{1}{d_1d_2}\fro{\widehat{U}_{\tau-1}\widehat{V}_{\tau-1}^{\top} - M}^4\frac{\mu_0r\lambda_{\max}^2}{d_2} + \frac{2\eta_{\tau}^2\sigma^2}{\varepsilon_{\tau}}\frac{1}{d_1d_2}\|\widehat{V}_{\tau-1}\widehat{V}_{\tau-1}^{\top}\|^2\fro{\widehat{U}_{\tau-1}\widehat{V}_{\tau-1}^{\top} - M}^2 \\
        &\leq \frac{2\eta_{\tau}^2}{\varepsilon_{\tau}}\frac{1}{d_1d_2}\fro{\widehat{U}_{\tau-1}\widehat{V}_{\tau-1}^{\top} - M}^4\frac{\mu_0r\lambda_{\max}^2}{d_2} + \frac{2\eta_{\tau}^2\sigma^2}{\varepsilon_{\tau}}\frac{O(\lambda_{\max}^2)}{d_1d_2}\fro{\widehat{U}_{\tau-1}\widehat{V}_{\tau-1}^{\top} - M}^2.
    \end{align*}
    By symmetry,
    \begin{align*}
        &\quad \EE\left[\frac{\mathbbm{1}(a_\tau=1)}{\pi_{\tau}^2}\eta_{\tau}^2 (\inp{\widehat{U}_{\tau-1}\widehat{V}_{\tau-1}^{\top} - M}{X_{\tau}}-\xi_{\tau})^2\inp{X_{\tau}}{\widehat{U}_{\tau-1}\widehat{U}_{\tau-1}^{\top}(\widehat{U}_{\tau-1}\widehat{V}_{\tau-1}^{\top} - M)}^2\big|\mathcal{F}_{\tau-1}\right] \\
        &\leq \frac{2\eta_{\tau}^2}{\varepsilon_{\tau}}\frac{1}{d_1d_2}\fro{\widehat{U}_{\tau-1}\widehat{V}_{\tau-1}^{\top} - M}^4O(\frac{\mu_0r\lambda_{\max}^2}{d_1}) + \frac{2\eta_{\tau}^2\sigma^2}{\varepsilon_{\tau}}\frac{O(\lambda_{\max}^2)}{d_1d_2}\fro{\widehat{U}_{\tau-1}\widehat{V}_{\tau-1}^{\top} - M}^2.
    \end{align*}
    And the last expectation is
    \begin{align*}
        &\quad \EE\left[\frac{\mathbbm{1}(a_\tau=1)}{\pi_{\tau}^4}\eta_{\tau}^4 (\inp{\widehat{U}_{\tau-1}\widehat{V}_{\tau-1}^{\top} - M}{X_{\tau}}-\xi_{\tau})^4\inp{X_{\tau}\widehat{V}_{\tau-1}\widehat{U}_{\tau-1}^{\top}X_{\tau}}{\widehat{U}_{\tau-1}\widehat{V}_{\tau-1}^{\top} - M}^2\big|\mathcal{F}_{\tau-1}\right] \\
        &\lesssim \frac{\eta_{\tau}^4}{\varepsilon_{\tau}^3}\left(\frac{1}{d_1d_2}\sum_{ij}(\widehat{U}_{\tau-1}\widehat{V}_{\tau-1}^{\top} - M)^4 + \sigma^4\right)\|\widehat{V}_{\tau-1}\widehat{U}_{\tau-1}^{\top}\|_{\max}^2\fro{\widehat{U}_{\tau-1}\widehat{V}_{\tau-1}^{\top} - M}^2 \\
        &\lesssim \frac{\eta_{\tau}^4}{\varepsilon_{\tau}^3}\left(\frac{1}{d_1d_2}\sum_{ij}(\widehat{U}_{\tau-1}\widehat{V}_{\tau-1}^{\top} - M)^4 + \sigma^4\right)\fro{\widehat{U}_{\tau-1}\widehat{V}_{\tau-1}^{\top} - M}^2\frac{\mu_0^2r^2\lambda_{\max}^2}{d_1d_2}.
    \end{align*}
    This gives,
    \begin{align*}
        \sigma_{A\tau}^2&\lesssim \prod_{k=\tau+1}^{t} (1-f(\eta_k))^2\frac{\eta_{\tau}^2}{\varepsilon_{\tau}}\frac{1}{d_1d_2}\fro{\widehat{U}_{\tau-1}\widehat{V}_{\tau-1}^{\top} - M}^4\frac{\mu_0r\lambda_{\max}^2}{d_2} + \prod_{k=\tau+1}^{t} (1-f(\eta_k))^2\frac{\eta_{\tau}^2\lambda_{\max}^2\sigma^2}{\varepsilon_{\tau} d_1d_2}\fro{\widehat{U}_{\tau-1}\widehat{V}_{\tau-1}^{\top} - M}^2.
    \end{align*}
    The fourth order terms is dominated as long as $\eta_{\tau}\lesssim \frac{\varepsilon_{\tau}d_1d_2^{1/2}}{\sqrt{d_1d_2}\lambda_{\max}}$ and $\frac{\lambda_{\min}^2}{\sigma^2}\gg \frac{\eta_{\tau}^2}{\varepsilon_{\tau}^2}\frac{\mu_0r\lambda_{\max}^2}{d_1\kappa^2}$. Since $\fro{\widehat{U}_{\tau-1}\widehat{V}_{\tau-1}^{\top} - M}^2\leq \prod_{k=1}^{\tau-1}(1-\frac{f(\eta_{k})}{2}) \fro{\widehat{U}_0\widehat{V}_0^{\top} - M}^2 + \sum_{k=1}^{\tau-1}\frac{c_2\eta_k^2r\lambda_{\max}^2\sigma^2\log^2d_1}{\varepsilon_kd_2}$, we can further obtain
    \begin{align*}
        \sigma_{A\tau}^2&\lesssim \frac{\eta_{\tau}^2}{\varepsilon_{\tau}}\frac{1}{d_1d_2}\left(\prod_{k=1}^{t}(1-\frac{f(\eta_{k})}{2})^2 \fro{\widehat{U}_{0}\widehat{V}_{0}^{\top} - M}^4 + \left(\sum_{k=1}^{t}\frac{c_2\eta_k^2r\lambda_{\max}^2\sigma^2\log^2d_1}{\varepsilon_kd_2}\right)^2\right)\frac{\mu_0r\lambda_{\max}^2}{d_2} \\ &\quad +  \frac{\eta_{\tau}^2\lambda_{\max}^2\sigma^2}{\varepsilon_{\tau} d_1d_2}  \prod_{k=1}^{t}(1-\frac{f(\eta_{k})}{2})\fro{\widehat{U}_{0}\widehat{V}_{0}^{\top} - M}^2 + \frac{\eta_{\tau}^2\lambda_{\max}^2\sigma^2}{\varepsilon_{\tau} d_1d_2}\sum_{k=1}^{t}\frac{c_2\eta_k^2r\lambda_{\max}^2\sigma^2\log^2d_1}{\varepsilon_kd_2}.
    \end{align*}
    As a result, 
    \begin{align*}
        \sqrt{\sum_{\tau=1}^t \sigma_{A\tau}^2}&\lesssim \sqrt{\sum_{\tau=1}^t \frac{\eta_{\tau}^2}{\varepsilon_{\tau}}}\frac{1}{\sqrt{d_1d_2}}\prod_{k=1}^{t}(1-\frac{f(\eta_{k})}{2}) \fro{\widehat{U}_{0}\widehat{V}_{0}^{\top} - M}^2O(\sqrt{\frac{\mu_0r\lambda_{\max}^2}{d_2}}) \\ &\quad + \sqrt{\sum_{\tau=1}^t \frac{\eta_{\tau}^2}{\varepsilon_{\tau}}}\frac{1}{\sqrt{d_1d_2}}\sum_{k=1}^{t}\frac{c_2\eta_k^2r\lambda_{\max}^2\sigma^2\log^2d_1}{\varepsilon_kd_2}O(\sqrt{\frac{\mu_0r\lambda_{\max}^2}{d_2}}) \\ &\quad + \prod_{k=1}^{t}(1-\frac{f(\eta_{k})}{2})\fro{\widehat{U}_{0}\widehat{V}_{0}^{\top} - M}^2\log^{-1/2}d_1 + \sum_{\tau=1}^t \frac{\eta_{\tau}^2\lambda_{\max}^2\sigma^2\log^{3/2}d_1}{\varepsilon_{\tau} d_1d_2},
    \end{align*}
    where we apply inequality $2ab\leq a^2+b^2$ to term $\sqrt{\sum_{\tau=1}^t \frac{\eta_{\tau}^2}{\varepsilon_{\tau}}}\sqrt{\frac{\lambda_{\max}^2\sigma^2}{d_1d_2}}\prod_{k=1}^{t}(1-\frac{f(\eta_{k})}{2})^{1/2}\fro{\widehat{U}_{0}\widehat{V}_{0}^{\top} - M}$. \\
    Similarly, denote the variance of $\prod_{k=\tau+1}^{t} (1-f(\eta_k)) B_{\tau}$ as $\sigma_{B\tau}^2$, it can be shown that,
    \begin{align*}
        \sigma_{B\tau}^2&\lesssim \prod_{k=\tau+1}^{t} (1-f(\eta_k))^2 \frac{\eta_{\tau}^4}{\varepsilon_{\tau}^3}\left(\EE\left[\inp{\widehat{U}_{{\tau}-1}\widehat{V}_{{\tau}-1}^{\top}-M}{X_{\tau}}^4|\mathcal{F}_{\tau-1}\right]\max\left(\|\widehat{V}_{\tau-1}\widehat{V}_{\tau-1}^{\top}\|_{2,\max}^4, \|\widehat{U}_{\tau-1}\widehat{U}_{\tau-1}^{\top}\|_{2,\max}^4\right) \right.\\ &\left.\quad + \EE[\xi_{\tau}^4]\max\left(\EE\left[\fro{X_{\tau}\widehat{V}_{\tau-1}\widehat{V}_{\tau-1}^{\top}}^4|\mathcal{F}_{\tau-1}\right], \EE\left[\fro{\widehat{U}_{\tau-1}\widehat{U}_{\tau-1}^{\top}X_{\tau}}^4|\mathcal{F}_{\tau-1}\right]\right)\right) \\ 
        &\lesssim \prod_{k=\tau+1}^{t} (1-f(\eta_k))^2 \frac{\eta_{\tau}^4}{\varepsilon_{\tau}^3}\left(\frac{1}{d_1d_2}\fro{\widehat{U}_{{\tau}-1}\widehat{V}_{{\tau}-1}^{\top}-M}^2\|\widehat{U}_{{\tau}-1}\widehat{V}_{{\tau}-1}^{\top}-M\|_{\max}^2\frac{\mu_0^2r^2\lambda_{\max}^4}{d_2^2}  + \sigma^4\frac{\mu_0r \lambda_{\max}^4}{d_2^2}\right) \\
        &\lesssim \frac{\eta_{\tau}^4}{\varepsilon_{\tau}^3}\frac{1}{d_1d_2^2} \left(\prod_{k=1}^{t} (1-\frac{f(\eta_k)}{2})\fro{\widehat{U}_0\widehat{V}_0^{\top}-M} + \sum_{k=1}^{t}\frac{\eta_k^2r\lambda_{\max}^2\sigma^2\log^2d_1}{\varepsilon_kd_2}\right)^2\frac{\mu_0^3r^3\kappa^2\lambda_{\max}^4}{d_2^2} + \frac{\eta_{\tau}^4}{\varepsilon_{\tau}^3}\sigma^4\frac{\mu_0r\lambda_{\max}^4}{d_2^2},
    \end{align*}
    where the second term in the second last inequality comes from
    $$
    \EE\left(\fro{X_{\tau}\widehat{V}_{\tau-1}\widehat{V}_{\tau-1}^{\top}}^4|\mathcal{F}_{\tau-1}\right)\leq \EE\left(\fro{X_{\tau}\widehat{V}_{\tau-1}\widehat{V}_{\tau-1}^{\top}}^2|\mathcal{F}_{\tau-1}\right)\|\widehat{V}_{\tau-1}\widehat{V}_{\tau-1}^{\top}\|_{2,\max}^2\leq \frac{rO(\lambda_{\max}^2)}{d_2}O(\frac{\mu_0r\lambda_{\max}^2}{d_2}).
    $$ 
    And in the last inequality we again use 
    \begin{align*}
    \|\widehat{U}_{\tau-1}\widehat{V}_{\tau-1}^{\top} - M\|_{\max}^2&\lesssim \frac{\mu_0r\kappa^2}{d_2}\fro{\widehat{U}_{\tau-1}\widehat{V}_{\tau-1}^{\top} - M}^2\\
    &\lesssim \mu_0r\kappa^2\prod_{k=1}^{\tau-1} (1-\frac{f(\eta_k)}{2})\frac{\fro{\widehat{U}_0\widehat{V}_0^{\top}-M}}{d_2} + \sum_{k=1}^{\tau-1}\frac{\mu_0\kappa^2\eta_k^2r^2\lambda_{\max}^2\sigma^2\log^2d_1}{\varepsilon_k d_2^2}.
    \end{align*}
    Therefore, 
    \begin{align*}
    \sqrt{\sum_{\tau=1}^t \sigma_{B\tau}^2} \lesssim& \sqrt{\sum_{\tau=1}^t \frac{\eta_{\tau}^4}{\varepsilon_{\tau}^3}}\frac{1}{d_1d_2^{1/2}}(\prod_{k=1}^{t} (1-\frac{f(\eta_k)}{2})\fro{\widehat{U}_0\widehat{V}_0^{\top}-M} \\
    &+ \sum_{k=1}^{t}\frac{\eta_k^2r\lambda_{\max}^2\sigma^2\log^2d_1}{\varepsilon_kd_2})\frac{\mu_0r\lambda_{\max}^2}{d_2} + \sqrt{\sum_{\tau=1}^t \frac{\eta_{\tau}^4}{\varepsilon_{\tau}^3}}\frac{\mu_0r\lambda_{\max}^2\sigma^2}{d_2}.
    \end{align*}
    \emph{Step 3: Proof of concentration.} \\
    From the calculation in \emph{Step 2}, 
    $$
    R_A\log d_1 + \sqrt{\sum_{\tau=1}^t \sigma_{A\tau}^2\log d_1} \lesssim \prod_{k=1}^{t} (1-\frac{f(\eta_k)}{2}) \fro{\widehat{U}_0\widehat{V}_0^{\top}-M}^2 + \sum_{k=1}^{t}\frac{c_2\eta_k^2r\lambda_{\max}^2\sigma^2\log^2d_1}{\varepsilon_kd_2}
    $$
    and 
    $$
    R_B\log d_1 + \sqrt{\sum_{\tau=1}^t \sigma_{B\tau}^2\log d_1} \lesssim \prod_{k=1}^{t} (1-\frac{f(\eta_k)}{2}) \fro{\widehat{U}_0\widehat{V}_0^{\top}-M}^2 + \sum_{k=1}^{t}\frac{c_2\eta_k^2r\lambda_{\max}^2\sigma^2\log^2d_1}{\varepsilon_kd_2},
    $$ 
    as long as: (1). $\max_{\tau\leq t}\frac{\eta_{\tau}^2}{\varepsilon_{\tau}^2}\leq \sum_{\tau=1}^t \frac{\eta_{\tau}^2}{\varepsilon_{\tau}}$; (2). $\eta_{\tau}\lesssim \frac{\varepsilon_{\tau}d_2}{\sqrt{\mu_0r}\lambda_{\max}\log^{1/2}d_1}$; (3). $\sum_{\tau=1}^t \frac{\eta_{\tau}^2}{\varepsilon_{\tau}} \lesssim \frac{d_1d_2^2}{\mu_0r\lambda_{\max}^2\log d_1}$; (4). $\sum_{\tau=1}^t \frac{\eta_{\tau}^4}{\varepsilon_{\tau}^3}\lesssim \frac{d_1^2d_2^3}{\mu_0^2r^2\lambda_{\max}^4\log d_1}$; (5). $\sqrt{\sum_{\tau=1}^t \frac{\eta_{\tau}^4}{\varepsilon_{\tau}^3}} \lesssim \sum_{\tau=1}^t\frac{\eta_{\tau}^2}{\varepsilon_{\tau}}$. 
    Under the stepsize and SNR conditions in Equation \ref{eq:thm-MCB-eq1}, all of them can be satisfied.
    
    By martingale Bernstein inequality, with probability at least $1-d_1^{-200}$,  
    \begin{align*}
        &\sum_{\tau=1}^{t-1}\prod_{k=\tau+1}^{t} (1-f(\eta_k))A_{\tau}+A_t \lesssim \prod_{k=1}^{t} (1-\frac{f(\eta_k)}{2}) \fro{\widehat{U}_0\widehat{V}_0^{\top}-M}^2 + \sum_{k=1}^{t}\frac{\eta_k^2r\lambda_{\max}^2\sigma^2\log^2d_1}{\varepsilon_kd_2}, \\
        &\sum_{\tau=1}^{t-1}\prod_{k=\tau+1}^{t} (1-f(\eta_k))B_{\tau}+B_t \lesssim \prod_{k=1}^{t} (1-\frac{f(\eta_k)}{2}) \fro{\widehat{U}_0\widehat{V}_0^{\top}-M}^2 + \sum_{k=1}^{t}\frac{\eta_k^2r\lambda_{\max}^2\sigma^2\log^2d_1}{\varepsilon_kd_2}.
    \end{align*}
\end{proof}

\subsection{Proof of Lemma \ref{incoherence}}
\begin{proof}
    For any $l$ in $[d_1]$,
\begin{align*}
    \|e_l^{\top}\widehat{U}_t\widehat{V}_t^{\top}\|^2 &= e_l^{\top} (\widehat{U}_{t-1}\widehat{V}_{t-1}^{\top}-\Delta_{t})(\widehat{V}_{t-1}\widehat{U}_{t-1}^{\top} - \Delta_{t}^{\top})e_l = \|e_l^{\top}\widehat{U}_{t-1}\widehat{V}_{t-1}^{\top}\|^2 - 2e_l^{\top}\Delta_{t}\widehat{V}_{t-1}\widehat{U}_{t-1}^{\top}e_l + \|e_l^{\top}\Delta_{t}\|^2.
\end{align*}
This leads to
\begin{align*}
    \EE[\|e_l^{\top}\widehat{U}_t\widehat{V}_t^{\top}\|^2|\mathcal{F}_{t-1}]= \|e_l^{\top}\widehat{U}_{t-1}\widehat{V}_{t-1}^{\top}\|^2 - 2\EE[e_l^{\top}\Delta_{t}\widehat{V}_{t-1}\widehat{U}_{t-1}^{\top}e_l|\mathcal{F}_{t-1}] + \EE[\|e_l^{\top}\Delta_{t}\|^2|\mathcal{F}_{t-1}].
\end{align*}
We first compute $2\EE[e_l^{\top}\Delta_{t}\widehat{V}_{t-1}\widehat{U}_{t-1}^{\top}e_l|\mathcal{F}_{t-1}]$. Denote $\delta_{ij}=\mathbbm{1}(i=j)$, then 
\begin{align*}
    &2\EE(e_l^{\top}\Delta_{t}\widehat{V}_{t-1}\widehat{U}_{t-1}^{\top}e_l|\mathcal{F}_{t-1})= 2\EE\left[e_l^{\top} \frac{\mathbbm{1}(a_t=1)}{\pi_t}\eta_t(\inp{\widehat{U}_{t-1}\widehat{V}_{t-1}^{\top} - M}{X_t}-\xi_t)X_t\widehat{V}_{t-1}\widehat{V}_{t-1}^{\top} \widehat{V}_{t-1}\widehat{U}_{t-1}^{\top}e_l|\mathcal{F}_{t-1}\right] \\ &\quad + 2\EE\left[e_l^{\top} \frac{\mathbbm{1}(a_t=1)}{\pi_t}\eta_t(\inp{\widehat{U}_{t-1}\widehat{V}_{t-1}^{\top}- M}{X_t}-\xi_t)\widehat{U}_{t-1}\widehat{U}_{t-1}^{\top}X_t V_{t-1}U_{t-1}^{\top}e_l|\mathcal{F}_{t-1}\right] \\ &\quad - 2\EE\left[e_l^{\top} \frac{\mathbbm{1}(a_t=1)}{\pi_t^2}\eta_t^2(\inp{U_{t-1}V_{t-1}^{\top}- M}{X_t}-\xi_t)^2X_tV_{t-1}U_{t-1}^{\top}X_t V_{t-1}U_{t-1}^{\top}e_l|\mathcal{F}_{t-1}\right] \\
    &\geq \frac{2\eta_t}{d_1d_2}e_l^{\top}(U_{t-1}V_{t-1}^{\top} - M)\widehat{V}_{t-1}\widehat{V}_{t-1}^{\top}\widehat{V}_{t-1}\widehat{U}_{t-1}^{\top}e_l  + \frac{2\eta_t}{d_1d_2}e_l^{\top}\widehat{U}_{t-1}\widehat{U}_{t-1}^{\top}(\widehat{U}_{t-1}\widehat{V}_{t-1}^{\top} - M)\widehat{V}_{t-1}\widehat{U}_{t-1}^{\top}e_l \\ &\quad - \frac{4\eta_t^2}{\varepsilon_t d_1d_2}\sum_{ij}\delta_{il}(\widehat{U}_{t-1}\widehat{V}_{t-1}^{\top} - M)_{ij}^2(\widehat{U}_{t-1}\widehat{V}_{t-1}^{\top})_{ij}^2 - \frac{4\eta_t^2\sigma^2}{\varepsilon_t d_1d_2}\sum_{ij}\delta_{il}(\widehat{U}_{t-1}\widehat{V}_{t-1}^{\top})_{ij}^2 \\
    &\geq \frac{\eta_t\lambda_{\min}}{d_1d_2}e_l^{\top}\widehat{U}_{t-1}\widehat{V}_{t-1}^{\top}\widehat{V}_{t-1}\widehat{U}_{t-1}^{\top}e_l - \frac{4\eta_t\lambda_{\max}}{d_1d_2}e_l^{\top}M\widehat{V}_{t-1}\widehat{U}_{t-1}^{\top}e_l + \frac{\eta_t\lambda_{\min}}{d_1d_2}e_l^{\top}\widehat{U}_{t-1}\widehat{V}_{t-1}^{\top}\widehat{V}_{t-1}\widehat{U}_{t-1}^{\top}e_l \\ &\quad - \frac{4\eta_t\lambda_{\max}}{d_1d_2}e_l^{\top}M\widehat{V}_{t-1}\widehat{U}_{t-1}^{\top}e_l  - \frac{4\eta_t^2}{\varepsilon_t d_1d_2}\frac{\mu_0^2r^2\lambda_{\max}^4}{d_1^2} - \frac{4\eta_t^2\sigma^2}{\varepsilon_t d_1d_2}\frac{\mu_0r\lambda_{\max}^2}{d_1}\\
    &\geq \frac{2\eta_t\lambda_{\min}}{d_1d_2}\|e_l^{\top}\widehat{U}_{t-1}\widehat{V}_{t-1}^{\top}\|^2 - \frac{8\eta_t\lambda_{\max}}{d_1d_2}\frac{\sqrt{\mu\mu_0} r\lambda_{\max}^2}{d_1}  - \frac{4\eta_t^2}{\varepsilon_t d_1d_2}\frac{\mu_0^2r^2\lambda_{\max}^4}{d_1^2} - \frac{4\eta_t^2\sigma^2}{\varepsilon_t d_1d_2}\frac{\mu_0r\lambda_{\max}^2}{d_1}.
\end{align*}
The second order term in the second last inequality comes from $\sum_{ij}\delta_{il}(\widehat{U}_{t-1}\widehat{V}_{t-1}^{\top} - M)_{ij}^2(\widehat{U}_{t-1}\widehat{V}_{t-1}^{\top})_{ij}^2\leq \|\widehat{U}_{t-1}\widehat{V}_{t-1}^{\top} - M\|_{2,\max}^2\|\widehat{U}_{t-1}\widehat{V}_{t-1}^{\top}\|_{2,\max}^2$. And the other term is similar. The last inequality holds since $e_l^{\top}M\widehat{V}_{t-1}\widehat{U}_{t-1}^{\top}e_l\leq \|e_l^{\top}M\|\|\widehat{V}_{t-1}\widehat{U}_{t-1}^{\top}e_l\|$ and $\|e_l^{\top}M\|\leq \sqrt{\frac{\mu r\lambda_{\max}^2}{d_1}}$ while $\|\widehat{V}_{t-1}\widehat{U}_{t-1}^{\top}e_l\|\leq \sqrt{\frac{\mu_0r\lambda_{\max}^2}{d_1}}$.\\
Then we compute $\EE(\|e_l^{\top}\Delta_{t}\|^2|\mathcal{F}_{t-1})$. Since $(a+b+c)^2\leq 3a^2+3b^2+3c^2$,
\begin{align*}
    \EE(\|e_l^{\top}\Delta_{t}\|^2|\mathcal{F}_{t-1})&\leq 3\EE\left[e_l^{\top} \frac{\mathbbm{1}(a_t=1)}{\pi_t^2}\eta_t^2(\inp{\widehat{U}_{t-1}\widehat{V}_{t-1}^{\top} - M}{X_t}-\xi_t)^2X_t\widehat{V}_{t-1}\widehat{V}_{t-1}^{\top}\widehat{V}_{t-1}\widehat{V}_{t-1}^{\top}X_t^{\top} e_l|\mathcal{F}_{t-1}\right] \\ &\quad + 3\EE\left[e_l^{\top} \frac{\mathbbm{1}(a_t=1)}{\pi_t^2}\eta_t^2(\inp{\widehat{U}_{t-1}\widehat{V}_{t-1}^{\top}- M}{X_t}-\xi_t)^2\widehat{U}_{t-1}\widehat{U}_{t-1}^{\top}X_tX_t^{\top}\widehat{U}_{t-1}\widehat{U}_{t-1}^{\top} e_l|\mathcal{F}_{t-1}\right] \\ &\quad + 3\EE\left[e_l^{\top} \frac{\mathbbm{1}(a_t=1)}{\pi_t^4}\eta_t^4(\inp{\widehat{U}_{t-1}\widehat{V}_{t-1}^{\top}- M}{X_t}-\xi_t)^4X_t\widehat{V}_{t-1}\widehat{U}_{t-1}^{\top}X_tX_t^{\top}\widehat{U}_{t-1}\widehat{V}_{t-1}^{\top}X_t^{\top}  e_l|\mathcal{F}_{t-1} \right].                                       
\end{align*}
The first term can be written as, 
\begin{align*}
    &3\EE\left[e_l^{\top} \frac{\mathbbm{1}(a_t=1)}{\pi_t^2}\eta_t^2(\inp{\widehat{U}_{t-1}\widehat{V}_{t-1}^{\top} - M}{X_t}-\xi_t)^2X_t\widehat{V}_{t-1}\widehat{V}_{t-1}^{\top}\widehat{V}_{t-1}\widehat{V}_{t-1}^{\top}X_t^{\top} e_l|\mathcal{F}_{t-1}\right] \\
    &\leq \frac{6\eta_t^2}{\varepsilon_t d_1d_2}\sum_{ij}(\widehat{U}_{t-1}\widehat{V}_{t-1}^{\top} - M)_{ij}^2\delta_{il}\|e_j^{\top}\widehat{V}_{t-1}\widehat{V}_{t-1}^{\top}\|^2 + \frac{6\eta_t^2}{\varepsilon_t d_1d_2}\sigma^2\sum_{ij} \delta_{il}\|e_j^{\top}\widehat{V}_{t-1}\widehat{V}_{t-1}^{\top}\|^2 \\
    &\leq \frac{6\eta_t^2}{\varepsilon_t d_1d_2}\frac{\mu_0r\lambda_{\max}^2}{d_2}\|e_l^{\top}(\widehat{U}_{t-1}\widehat{V}_{t-1}^{\top} - M)\|^2 + \frac{6\eta_t^2}{\varepsilon_t d_1}\frac{\mu_0r\lambda_{\max}^2\sigma^2}{d_2} \\
    &\leq \frac{6\eta_t^2}{\varepsilon_t d_1d_2}\frac{\mu_0^2r^2\lambda_{\max}^4}{d_1d_2} + \frac{6\eta_t^2}{\varepsilon_t d_1}\frac{\mu_0r\lambda_{\max}^2\sigma^2}{d_2}.
\end{align*}
Similarly, 
\begin{align*}
    &3\EE\left[e_l^{\top} \frac{\mathbbm{1}(a_t=1)}{\pi_t^2}\eta_t^2(\inp{\widehat{U}_{t-1}\widehat{V}_{t-1}^{\top}- M}{X_t}-\xi_t)^2\widehat{U}_{t-1}\widehat{U}_{t-1}^{\top}X_tX_t^{\top}\widehat{U}_{t-1}\widehat{U}_{t-1}^{\top} e_l|\mathcal{F}_{t-1}\right] \\
    &\leq \frac{6\eta_t^2}{\varepsilon_t d_1d_2}\sum_{ij} (\widehat{U}_{t-1}\widehat{V}_{t-1}^{\top} - M)_{ij}^2(\widehat{U}_{t-1}\widehat{U}_{t-1}^{\top})_{li}^2 + \frac{6\eta_t^2\sigma^2}{\varepsilon_t d_1d_2}\sum_{ij} (\widehat{U}_{t-1}\widehat{U}_{t-1}^{\top})_{li}^2 \\
    &\leq \frac{6\eta_t^2}{\varepsilon_t d_1d_2}\frac{\mu_0^2r^2\lambda_{\max}^4}{d_1^2} + \frac{6\eta_t^2\sigma^2}{\varepsilon_t d_1}\frac{\mu_0r\lambda_{\max}^2}{d_1}.
\end{align*}
And the last fourth order term can be bounded in the same way. Moreover, the first order term $\frac{8\eta_t\lambda_{\max}}{d_1d_2}\frac{\sqrt{\mu\mu_0}r\lambda_{\max}^2}{d_1}$will dominate all the higher order terms as long as $\eta_t\lesssim \frac{\varepsilon_t d_2}{\mu_0r\lambda_{\max}}$ and $\frac{\lambda_{\min}^2}{\sigma^2}\gg \frac{\eta_t}{\varepsilon_t}\frac{\lambda_{\max}d_1}{\kappa}$. So the sum of all the higher order terms will be less than $\frac{2\eta_t\lambda_{\max}}{d_1d_2}\frac{\sqrt{\mu\mu_0} r\lambda_{\max}^2}{d_1}$. Recall $f(\eta_t)=\frac{\eta_t\lambda_{\min}}{d_1d_2}$. Overall, we have
\begin{align*}
    \|e_l^{\top}U_tV_t^{\top}\|^2 &= \|e_l^{\top}\widehat{U}_{t-1}\widehat{V}_{t-1}^{\top}\|^2 - 2e_l^{\top}\Delta_{t}\widehat{V}_{t-1}\widehat{U}_{t-1}^{\top}e_l +\|e_l^{\top}\Delta_{t}\|^2\\
	& = \|e_l^{\top}\widehat{U}_{t-1}\widehat{V}_{t-1}^{\top}\|^2 - 2(e_l^{\top}\Delta_{t}\widehat{V}_{t-1}\widehat{U}_{t-1}^{\top}e_l -\EE(e_l^{\top}\Delta_{t}\widehat{V}_{t-1}\widehat{U}_{t-1}^{\top}e_l|\mathcal{F}_{t-1})) \\
	&\quad +(\|e_l^{\top}\Delta_{t}\|^2 - \EE(\|e_l^{\top}\Delta_{t}\|^2|\mathcal{F}_{t-1})) - 2\EE(e_l^{\top}\Delta_{t}\widehat{V}_{t-1}\widehat{U}_{t-1}^{\top}e_l|\mathcal{F}_{t-1}) + \EE( \|e_l^{\top}\Delta_{t}\|^2|\mathcal{F}_{t-1})\\
	&\leq (1-f(\eta_t))\|e_l^{\top}\widehat{U}_{t-1}\widehat{V}_{t-1}^{\top}\|^2 + \frac{10\eta_t\sqrt{\mu\mu_0} r\lambda_{\max}^3}{d_1^2d_2}  \\
	&\quad + \underbrace{2(\EE(e_l^{\top}\Delta_{t}\widehat{V}_{t-1}\widehat{U}_{t-1}^{\top}e_l|\mathcal{F}_{t-1}) -e_l^{\top}\Delta_{t}\widehat{V}_{t-1}\widehat{U}_{t-1}^{\top}e_l)}_{A_{t}} +\underbrace{(\|e_l^{\top}\Delta_{t}\|^2 - \EE(\|e_l^{\top}\Delta_{t}\|^2|\mathcal{F}_{t-1}))}_{B_{t}} \\
    &\leq (1-f(\eta_t))(1-f(\eta_{t-1}))\|e_l^{\top}\widehat{U}_{t-2}\widehat{V}_{t-2}^{\top}\|^2 + (1-f(\eta_{t-1}))\frac{10\eta_{t-1}\sqrt{\mu\mu_0} r\lambda_{\max}^3}{ d_1^2d_2} + \frac{10\eta_t\sqrt{\mu\mu_0} r\lambda_{\max}^3}{d_1^2d_2} \\
    &\quad + (1-f(\eta_t))(A_{t-1}+B_{t-1})+(A_{t}+B_{t}) \\
    &\leq  \prod_{\tau=1}^{t}(1-f(\eta_{\tau})) \|e_l^{\top}\widehat{U}_{0}\widehat{V}_0^{\top}\|^2 +  \sum_{\tau=1}^{t}\frac{10\eta_{\tau}\sqrt{\mu\mu_0} r\lambda_{\max}^3}{d_1^2d_2}  + \sum_{\tau=1}^{t}A_{\tau} + \sum_{\tau=1}^{t}B_{\tau}.
\end{align*}
It can be shown that the first term is less than $\frac{1}{6}\frac{\mu_0r\lambda_{\max}^2}{d_1}$. The second term will be less than $\frac{1}{2}\frac{\mu_0r\lambda_{\max}^2}{d_1}$ as long as $\mu_0=100\mu$ and $\sum_{\tau=1}^{t}\eta_{\tau}\leq \frac{d_1d_2}{2\lambda_{\max}}$, which is satisfied under the stepsize condition in Theorem \ref{thm:MCB-conv}. Then we need to show $\sum_{\tau=1}^{t}A_{\tau}\leq \frac{1}{6}\frac{\mu_0r\lambda_{\max}^2}{d_1}$ and $\sum_{\tau=1}^{t}B_{\tau}\leq \frac{1}{6}\frac{\mu_0r\lambda_{\max}^2}{d_1}$.\\
Similar as the proof of Lemma \ref{lemmafro}, we first derive the uniform bound for $A_{\tau}$ and $B_{\tau}$. Under $\mathcal{E}_{\tau-1}$, obviously, $\|\widehat{U}_{\tau-1}\widehat{V}_{\tau-1}^{\top}-M\|_{\max}^2\lesssim \frac{\lambda_{\max}^2}{d_1d_2}$. We use this trivial bound in the following calculation. 
\begin{align*}
    |e_l^{\top}\Delta_{\tau}\widehat{V}_{\tau-1}\widehat{U}_{\tau-1}^{\top}e_l|&\leq |e_l^{\top} \frac{\mathbbm{1}(a_\tau=1)}{\pi_\tau}\eta_\tau(\inp{\widehat{U}_{\tau-1}\widehat{V}_{\tau-1}^{\top} - M}{X_\tau}-\xi_\tau)X_\tau\widehat{V}_{\tau-1}\widehat{V}_{\tau-1}^{\top} \widehat{V}_{\tau-1}\widehat{U}_{\tau-1}^{\top}e_l| \\ &\quad + |e_l^{\top} \frac{\mathbbm{1}(a_\tau=1)}{\pi_\tau}\eta_\tau(\inp{\widehat{U}_{\tau-1}\widehat{V}_{\tau-1}^{\top}- M}{X_\tau}-\xi_\tau)\widehat{U}_{\tau-1}\widehat{U}_{\tau-1}^{\top}X_\tau \widehat{V}_{\tau-1}\widehat{U}_{\tau-1}^{\top}e_l| \\ &\quad + |e_l^{\top} \frac{\mathbbm{1}(a_\tau=1)}{\pi_\tau^2}\eta_\tau^2(\inp{\widehat{U}_{\tau-1}\widehat{V}_{\tau-1}^{\top}- M}{X_\tau}-\xi_\tau)^2X_\tau \widehat{V}_{\tau-1}\widehat{U}_{\tau-1}^{\top}X_\tau \widehat{V}_{\tau-1}\widehat{U}_{\tau-1}^{\top}e_l| \\
    &\leq \frac{2\eta_\tau}{\varepsilon_\tau}|\inp{\widehat{U}_{\tau-1}\widehat{V}_{\tau-1}^{\top} - M}{X_\tau}-\xi_\tau|\max_{j}|e_j^{\top}\widehat{V}_{\tau-1}\widehat{V}_{\tau-1}^{\top} \widehat{V}_{\tau-1}\widehat{U}_{\tau-1}^{\top}e_l| \\ &\quad + \frac{2\eta_\tau}{\varepsilon_\tau}|\inp{\widehat{U}_{\tau-1}\widehat{V}_{\tau-1}^{\top} - M}{X_\tau}-\xi_\tau|\|e_l^{\top}\widehat{U}_{\tau-1}\widehat{U}_{\tau-1}^{\top}X_\tau\| \|\widehat{V}_{\tau-1}\widehat{U}_{\tau-1}^{\top}e_l\| \\ &\quad + \frac{4\eta_\tau^2}{\varepsilon_\tau^2}|\inp{\widehat{U}_{\tau-1}\widehat{V}_{\tau-1}^{\top} - M}{X_\tau}-\xi_\tau|^2\max_j\|e_j^{\top}\widehat{V}_{\tau-1}\widehat{U}_{\tau-1}^{\top}X_\tau\| \|\widehat{V}_{\tau-1}\widehat{U}_{\tau-1}^{\top}e_l\| \\  
    &\leq \frac{2\eta_\tau}{\varepsilon_\tau}(\|\widehat{U}_{\tau-1}\widehat{V}_{\tau-1}^{\top} - M\|_{\max} + \sigma\log^{1/2}d_1)\frac{\mu_0^2r^2\lambda_{\max}^2}{d_1d_2} \\ &\quad + \frac{2\eta_\tau}{\varepsilon_\tau}(\|\widehat{U}_{\tau-1}\widehat{V}_{\tau-1}^{\top} - M\|_{\max} + \sigma\log^{1/2}d_1)\sqrt{\frac{\mu_0^2r^2\lambda_{\max}^2}{d_1d_2}}\sqrt{\frac
    {\mu_0r\lambda_{\max}^2}{d_1}} \\ &\quad + \frac{4\eta_\tau^2}{\varepsilon_\tau^2}(\|\widehat{U}_{\tau-1}\widehat{V}_{\tau-1}^{\top} - M\|_{\max}^2+\sigma^2\log d_1)\sqrt{\frac{\mu_0^2r^2\lambda_{\max}^2}{d_1d_2}}\sqrt{\frac
    {\mu_0r\lambda_{\max}^2}{d_1}} \\
    &\lesssim \frac{\eta_\tau}{\varepsilon_\tau}(\frac{\lambda_{\max}}{\sqrt{d_1d_2}}+\sigma\log^{1/2}d_1)\sqrt{\frac{\mu_0^2r^2\lambda_{\max}^2}{d_1d_2}}\sqrt{\frac
    {\mu_0r\lambda_{\max}^2}{d_1}}.
\end{align*}
The first order term dominates as long as $\eta_\tau\lesssim \frac{\varepsilon_\tau\sqrt{d_1d_2}}{\lambda_{\max}}$ and $\frac{\lambda_{\min}^2}{\sigma^2}\gg \frac{\eta_\tau^2}{\varepsilon_\tau^2} \frac{\lambda_{\max}^2\log d_1}{\kappa}$. Then
\begin{align*}
    A_{\tau}&\lesssim \max_{\tau}\frac{\eta_{\tau}}{\varepsilon_{\tau}}(\frac{\lambda_{\max}}{\sqrt{d_1d_2}}+\sigma\log^{1/2}d_1)O(\sqrt{\frac{\mu_0^2r^2\lambda_{\max}^2}{d_1d_2}})O(\sqrt{\frac
    {\mu_0r\lambda_{\max}^2}{d_1}}):=R_A.
\end{align*}
For $\|e_l^{\top}\Delta_{\tau}\|^2$, we also have
\begin{align*}
    &\|e_l^{\top}\Delta_{\tau}\|^2\lesssim |e_l^{\top} \frac{\mathbbm{1}(a_\tau=1)}{\pi_\tau^2}\eta_\tau^2(\inp{\widehat{U}_{\tau-1}\widehat{V}_{\tau-1}^{\top} - M}{X_\tau}-\xi_\tau)^2X_\tau\widehat{V}_{\tau-1}\widehat{V}_{\tau-1}^{\top}\widehat{V}_{\tau-1}\widehat{V}_{\tau-1}^{\top}X_\tau^{\top} e_l| \\ &\quad + |e_l^{\top} \frac{\mathbbm{1}(a_\tau=1)}{\pi_\tau^2}\eta_\tau^2(\inp{\widehat{U}_{\tau-1}\widehat{V}_{\tau-1}^{\top}- M}{X_\tau}-\xi_\tau)^2\widehat{U}_{\tau-1}\widehat{U}_{\tau-1}^{\top}X_\tau X_\tau^{\top}\widehat{U}_{\tau-1}\widehat{U}_{\tau-1}^{\top} e_l| \\ &\quad + |e_l^{\top} \frac{\mathbbm{1}(a_\tau=1)}{\pi_\tau^4}\eta_\tau^4(\inp{\widehat{U}_{\tau-1}\widehat{V}_{\tau-1}^{\top}- M}{X_\tau}-\xi_\tau)^4X_\tau \widehat{V}_{\tau-1}\widehat{U}_{\tau-1}^{\top}X_\tau X_\tau^{\top}\widehat{U}_{\tau-1}\widehat{V}_{\tau-1}^{\top}X_\tau^{\top}  e_l| \\
    &\lesssim \frac{\eta_\tau^2}{\varepsilon_\tau^2}|\inp{\widehat{U}_{\tau-1}\widehat{V}_{\tau-1}^{\top} - M}{X_t}-\xi_t|^2\max_j|e_j^{\top}\widehat{V}_{\tau-1}\widehat{V}_{\tau-1}^{\top}\widehat{V}_{\tau-1}\widehat{V}_{\tau-1}^{\top}e_j| \\ &\quad + \frac{\eta_\tau^2}{\varepsilon_\tau^2}|\inp{\widehat{U}_{\tau-1}\widehat{V}_{\tau-1}^{\top} - M}{X_\tau}-\xi_\tau|^2\|e_l^{\top}\widehat{U}_{\tau-1}\widehat{U}_{\tau-1}^{\top}X_\tau\|^2 \\ &\quad + \frac{\eta_\tau^4}{\varepsilon_\tau^4}|\inp{\widehat{U}_{\tau-1}\widehat{V}_{\tau-1}^{\top} - M}{X_\tau}-\xi_\tau|^4\max_j\|e_j^{\top}\widehat{V}_{\tau-1}\widehat{U}_{\tau-1}^{\top}X_\tau\|^2\\
    &\lesssim \frac{\eta_\tau^2}{\varepsilon_\tau^2}(\|\widehat{U}_{\tau-1}\widehat{V}_{\tau-1}^{\top} - M\|_{\max}^2+\sigma^2\log d_1)\frac{\mu_0^2r^2\lambda_{\max}^2}{d_2^2} + \frac{\eta_\tau^2}{\varepsilon_\tau^2}(\|\widehat{U}_{\tau-1}\widehat{V}_{\tau-1}^{\top} - M\|_{\max}^2+\sigma^2\log d_1)\frac{\mu_0^2r^2\lambda_{\max}^2}{d_1^2} \\ &\quad  + \frac{\eta_\tau^4}{\varepsilon_\tau^4}(\|\widehat{U}_{\tau-1}\widehat{V}_{\tau-1}^{\top} - M\|_{\max}^4+\sigma^4\log^2d_1)\frac{\mu_0^2r^2\lambda_{\max}^2}{d_1d_2} \\
    &\lesssim \frac{\eta_\tau^2}{\varepsilon_\tau^2}(\frac{\lambda_{\max}^2}{d_1d_2} +\sigma^2\log d_1)\frac{\mu_0^2r^2\lambda_{\max}^2}{d_2^2}.
\end{align*}
The second order term dominates as long as $\eta_\tau\lesssim \frac{\varepsilon_\tau d_1}{\lambda_{\max}}$ and $\frac{\lambda_{\min}^2}{\sigma^2}\gg \frac{\eta_{\tau}^2}{\varepsilon_\tau^2}\frac{\lambda_{\max}^2\log d_1}{\kappa^2}$. Then 
\begin{align*}
    B_{\tau}&\lesssim \max_{\tau} \frac{\eta_{\tau}^2}{\varepsilon_{\tau}^2}(\frac{\lambda_{\max}^2}{d_1d_2} +\sigma^2\log d_1)O(\frac{\mu_0^2r^2\lambda_{\max}^2}{d_2^2}):= R_{B}.
\end{align*}
The variance of $A_{\tau}$ and $B_{\tau}$, denoted as $\sigma_{A\tau}^2$ and $\sigma_{B\tau}^2$, can be calculated as before. 
\begin{align*}
    \sigma_{A\tau}^2&\lesssim \frac{\eta_{\tau}^2}{\varepsilon_{\tau}d_1d_2}\sum_{ij}(\widehat{U}_{\tau-1}\widehat{V}_{\tau-1}^{\top} - M)_{ij}^2\delta_{il}(\widehat{V}_{\tau-1}\widehat{V}_{\tau-1}^{\top}\widehat{V}_{\tau-1}\widehat{U}_{\tau-1}^{\top})_{jl}^2 \\&\quad + \frac{\eta_{\tau}^2}{\varepsilon_{\tau} d_1d_2}\sum_{ij}(\widehat{U}_{\tau-1}\widehat{V}_{\tau-1}^{\top} - M)_{ij}^2(\widehat{U}_{\tau-1}\widehat{U}_{\tau-1}^{\top})_{li}^2(\widehat{V}_{\tau-1}\widehat{U}_{\tau-1}^{\top})_{jl}^2 \\ &\quad + \frac{\eta_{\tau}^2\sigma^2}{\varepsilon_{\tau}d_1d_2}\sum_{ij}\delta_{il}(\widehat{V}_{\tau-1}\widehat{V}_{\tau-1}^{\top}\widehat{V}_{\tau-1}\widehat{U}_{\tau-1}^{\top})_{jl}^2 + \frac{\eta_{\tau}^2\sigma^2}{\varepsilon_{\tau}d_1d_2}\sum_{ij}(\widehat{U}_{\tau-1}\widehat{U}_{\tau-1}^{\top})_{li}^2(\widehat{V}_{\tau-1}\widehat{U}_{\tau-1}^{\top})_{jl}^2 \\
    &\lesssim \frac{\eta_{\tau}^2}{\varepsilon_{\tau}d_1d_2}\|\widehat{U}_{\tau-1}\widehat{V}_{\tau-1}^{\top} - M\|_{\max}^2\sum_{ij}\delta_{il}(\widehat{V}_{\tau-1}\widehat{V}_{\tau-1}^{\top}\widehat{V}_{\tau-1}\widehat{U}_{\tau-1}^{\top})_{jl}^2 \\&\quad + \frac{\eta_{\tau}^2}{\varepsilon_{\tau} d_1d_2}\|\widehat{U}_{\tau-1}\widehat{V}_{\tau-1}^{\top} - M\|_{\max}^2\|\widehat{U}_{\tau-1}\widehat{U}_{\tau-1}^{\top}\|_{\max}^2\sum_{ij}(\widehat{V}_{\tau-1}\widehat{U}_{\tau-1}^{\top})_{jl}^2 \\ &\quad + \frac{\eta_{\tau}^2\sigma^2}{\varepsilon_{\tau}d_1d_2}\sum_{ij}\delta_{il}(\widehat{V}_{\tau-1}\widehat{V}_{\tau-1}^{\top}\widehat{V}_{\tau-1}\widehat{U}_{\tau-1}^{\top})_{jl}^2 + \frac{\eta_{\tau}^2\sigma^2}{\varepsilon_{\tau}d_1d_2}\|\widehat{U}_{\tau-1}\widehat{U}_{\tau-1}^{\top}\|_{\max}^2\sum_{ij}(\widehat{V}_{\tau-1}\widehat{U}_{\tau-1}^{\top})_{jl}^2 \\
    &\lesssim \frac{\eta_{\tau}^2}{\varepsilon_{\tau}d_1d_2}\frac{\lambda_{\max}^2}{d_1d_2}\frac{\mu_0r\lambda_{\max}^4}{d_1} + \frac{\eta_{\tau}^2}{\varepsilon_{\tau}d_2}\frac{\lambda_{\max}^2}{d_1d_2}\frac{\mu_0^2r^2\lambda_{\max}^2}{d_1^2}\frac{\mu_0r\lambda_{\max}^2}{d_1} \\ &\quad + \frac{\eta_{\tau}^2\sigma^2}{\varepsilon_{\tau}d_1d_2}\frac{\mu_0r\lambda_{\max}^4}{d_1} + \frac{\eta_{\tau}^2\sigma^2}{\varepsilon_{\tau}d_2}\frac{\mu_0^2r^2\lambda_{\max}^2}{d_1^2}\frac{\mu_0r\lambda_{\max}^2}{d_1},
\end{align*}
where we apply $\sum_{ij}\delta_{il}(\widehat{V}_{\tau-1}\widehat{V}_{\tau-1}^{\top}\widehat{V}_{\tau-1}\widehat{U}_{\tau-1}^{\top})_{jl}^2=\|e_l\widehat{U}_{\tau-1}\widehat{V}_{\tau-1}^{\top}\widehat{V}_{\tau-1}\widehat{V}_{\tau-1}^{\top}\|^2\leq \|e_l\widehat{U}_{\tau-1}\widehat{V}_{\tau-1}^{\top}\|^2\|\widehat{V}_{\tau-1}\widehat{V}_{\tau-1}^{\top}\|^2\\ \lesssim \frac{\mu_0r\lambda_{\max}^2}{d_1}\lambda_{\max}^2$ in the last inequality. \\
Similarly,
\begin{align*}
    \sigma_{B\tau}^2&\lesssim \frac{\eta_{\tau}^4}{\varepsilon_{\tau}^3 d_1d_2}\sum_{ij}(\widehat{U}_{\tau-1}\widehat{V}_{\tau-1}^{\top} - M)_{ij}^4\delta_{il}\|e_j^{\top}\widehat{V}_{\tau-1}\widehat{V}_{\tau-1}^{\top}\|^4 + \frac{\eta_{\tau}^4}{\varepsilon_{\tau}^3 d_1d_2}\sigma^4\sum_{ij} \delta_{il}\|e_j^{\top}\widehat{V}_{\tau-1}\widehat{V}_{\tau-1}^{\top}\|^4 \\ &\quad + \frac{\eta_{\tau}^4}{\varepsilon_{\tau}^3 d_1d_2}\sum_{ij} (\widehat{U}_{\tau-1}\widehat{V}_{\tau-1}^{\top} - M)_{ij}^4(\widehat{U}_{\tau-1}\widehat{U}_{\tau-1}^{\top})_{li}^4 + \frac{\eta_{\tau}^4\sigma^4}{\varepsilon_{\tau}^3 d_1d_2}\sum_{ij} (\widehat{U}_{\tau-1}\widehat{U}_{\tau-1}^{\top})_{li}^4 \\
    &\lesssim \frac{\eta_{\tau}^4}{\varepsilon_{\tau}^3 d_1d_2}\|\widehat{U}_{\tau-1}\widehat{V}_{\tau-1}^{\top} - M\|_{\max}^4\sum_{ij}\delta_{il}\frac{\mu_0^2r^2\lambda_{\max}^4}{d_2^2} + \frac{\eta_{\tau}^4\sigma^4}{\varepsilon_{\tau}^3 d_1d_2}\sum_{ij}\delta_{il}\frac{\mu_0^2r^2\lambda_{\max}^4}{d_2^2} \\ &\quad + \frac{\eta_{\tau}^4}{\varepsilon_{\tau}^3 d_1d_2}\|\widehat{U}_{\tau-1}\widehat{V}_{\tau-1}^{\top} - M\|_{\max}^4\|\widehat{U}_{\tau-1}\widehat{U}_{\tau-1}^{\top}\|_{\max}^2\sum_{ij}(\widehat{U}_{\tau-1}\widehat{U}_{\tau-1}^{\top})_{li}^2 \\ &\quad + \frac{\eta_{\tau}^4\sigma^4}{\varepsilon_{\tau}^3 d_1d_2}\|\widehat{U}_{\tau-1}\widehat{U}_{\tau-1}^{\top}\|_{\max}^2\sum_{ij}(\widehat{U}_{\tau-1}\widehat{U}_{\tau-1}^{\top})_{li}^2 \\
    &\lesssim \frac{\eta_{\tau}^4}{\varepsilon_{\tau}^3 d_1}\frac{\lambda_{\max}^4}{d_1^2d_2^2}\frac{\mu_0^2r^2\lambda_{\max}^4}{d_2^2} + \frac{\eta_{\tau}^4\sigma^4}{\varepsilon_{\tau}^3 d_1}\frac{\mu_0^2r^2\lambda_{\max}^4}{d_2^2} \\ &\quad + \frac{\eta_{\tau}^4}{\varepsilon_{\tau}^3 d_1}\frac{\lambda_{\max}^4}{d_1^2d_2^2}\frac{\mu_0^2r^2\lambda_{\max}^2}{d_1^2}\frac{\mu_0r\lambda_{\max}^2}{d_1} + \frac{\eta_{\tau}^4\sigma^4}{\varepsilon_{\tau}^3 d_1}\frac{\mu_0^2r^2\lambda_{\max}^2}{d_1^2}\frac{\mu_0r\lambda_{\max}^2}{d_1}.
\end{align*}
From the calculation above, $R_A\log d + \sqrt{\sum_{\tau=1}^t \sigma_{A\tau}^2\log d_1}\lesssim \frac{\mu_0r\lambda_{\max}^2}{d_1}$ and $R_B\log d + \sqrt{\sum_{\tau=1}^t \sigma_{B\tau}^2\log d_1}\lesssim \frac{\mu_0r\lambda_{\max}^2}{d_1}$ as long as 1.$\max_{\tau}\frac{\eta_{\tau}}{\varepsilon_{\tau}}\lesssim \frac{d_2^{3/2}}{\sqrt{\mu_0r}\log d_1\lambda_{\max}}$, 2.$\frac{\lambda_{\min}^2}{\sigma^2}\gg\max_{\tau} \frac{\eta_{\tau}^2}{\varepsilon_{\tau}^2}\frac{\mu_0r\lambda_{\max}^2\log^3 d_1}{d_2^3\kappa^2}$, 3.$\sum_{\tau=1}^t \frac{\eta_{\tau}^2}{\varepsilon_{\tau}^1}\lesssim \frac{d_1d_2^2}{\mu_0r\lambda_{\max}^4}$, 4.$\frac{\lambda_{\min}^2}{\sigma^2}\gg \sum_{\tau=1}^{t} \frac{\eta_{\tau}^2}{\varepsilon_{\tau}}\frac{\lambda_{\max}^2}{d_2\kappa^2}$, 5.$\sum_{\tau=1}^t \frac{\eta_{\tau}^4}{\varepsilon_{\tau}^3}\lesssim \frac{d_1d_2^4}{\mu_0^2r^2\lambda_{\max}^4}$, 6.$\frac{\lambda_{\min}^2}{\sigma^2}\gg \sqrt{\sum_{\tau=1}^{t} \frac{\eta_{\tau}^4}{\varepsilon_{\tau}^3}\frac{\lambda_{\max}^4}{d_2\kappa^4}}$. All of them can be satisfied under the stepsize and SNR conditions in Equation \ref{eq:thm-MCB-eq1}.\\
As a result, by Bernstein inequality, with probability at least $1-d_1^{-200}$, 
\begin{align*}
    \sum_{\tau=1}^{t}A_{\tau}\leq \frac{1}{6}\frac{\mu_0r\lambda_{\max}^2}{d_1}.
\end{align*} 
And with  probability at least $1-d_1^{-200}$,
\begin{align*}
    \sum_{\tau=1}^{t}B_{\tau}\leq \frac{1}{6}\frac{\mu_0r\lambda_{\max}^2}{d_1}.
\end{align*} 
Finally, combine all the results above and the union bound, we can obtain that with probability at least $1-d_1^{-200}$, 
\begin{align*}
    \|e_l^{\top}\widehat{U}_{t}V_t^{\top}\|^2\leq \frac{\mu_0r\lambda_{\max}^2}{d_1}.
\end{align*}
By symmetry, following the same arguments, we have for $\forall j \in [d_2]$, with the same probability,
\begin{align*}
    \|e_j^{\top}\widehat{V}_{t}\widehat{U}_t^{\top}\|^2\leq \frac{\mu_0r\lambda_{\max}^2}{d_2}.
\end{align*}
\end{proof}

\subsection{Proof of Lemma \ref{lemma2max}}
\begin{proof}
    For any $l$ in $[d_1]$,
\begin{align*}
    \|e_l^{\top}(\widehat{U}_t\widehat{V}_t^{\top}-M)\|^2 &= e_l^{\top} (\widehat{U}_{t-1}\widehat{V}_{t-1}^{\top}-M-\Delta_{t})(\widehat{U}_{t-1}\widehat{V}_{t-1}^{\top} - M^{\top} - \Delta_{t}^{\top})e_l \\
    &= \|e_l^{\top}(\widehat{U}_{t-1}\widehat{V}_{t-1}^{\top} - M)\|^2 - 2e_l^{\top}\Delta_{t}(\widehat{V}_{t-1}\widehat{U}_{t-1}^{\top}-M^{\top})e_l + \|e_l^{\top}\Delta_{t}\|^2.
\end{align*}
This leads to
\begin{align*}
    \EE(\|e_l^{\top}\widehat{U}_t\widehat{V}_t^{\top}\|^2|\mathcal{F}_{t-1})= \|e_l^{\top}(\widehat{U}_{t-1}\widehat{V}_{t-1}^{\top}-M)\|^2 - 2\EE[e_l^{\top}\Delta_{t}(\widehat{V}_{t-1}\widehat{U}_{t-1}^{\top}-M^{\top})e_l|\mathcal{F}_{t-1}] + \EE[\|e_l^{\top}\Delta_{t}\|^2|\mathcal{F}_{t-1}].
\end{align*}
We first compute $2\EE[e_l^{\top}\Delta_{t}(\widehat{V}_{t-1}\widehat{U}_{t-1}^{\top}-M^{\top})e_l|\mathcal{F}_{t-1}]$.
\begin{align*}
    &\quad \quad 2\EE[e_l^{\top}\Delta_{t}(\widehat{V}_{t-1}\widehat{U}_{t-1}^{\top} - M^{\top})e_l|\mathcal{F}_{t-1}] \\
    &= 2\EE\left[e_l^{\top} \frac{\mathbbm{1}(a_t=1)}{\pi_t}\eta_t(\inp{\widehat{U}_{t-1}\widehat{V}_{t-1}^{\top} - M}{X_t}-\xi_t)X_t\widehat{V}_{t-1}\widehat{V}_{t-1}^{\top} (\widehat{V}_{t-1}\widehat{U}_{t-1}^{\top}-M^{\top})e_l|\mathcal{F}_{t-1}\right] \\ &\quad + 2\EE\left[e_l^{\top} \frac{\mathbbm{1}(a_t=1)}{\pi_t}\eta_t(\inp{\widehat{U}_{t-1}\widehat{V}_{t-1}^{\top}- M}{X_t}-\xi_t)\widehat{U}_{t-1}\widehat{U}_{t-1}^{\top}X_t (\widehat{V}_{t-1}\widehat{U}_{t-1}^{\top}-M^{\top})e_l|\mathcal{F}_{t-1}\right] \\ &\quad - 2\EE\left[e_l^{\top} \frac{\mathbbm{1}(a_t=1)}{\pi_t^2}\eta_t^2(\inp{\widehat{U}_{t-1}\widehat{V}_{t-1}^{\top}}{X_t}-\xi_t)^2X_t\widehat{V}_{t-1}\widehat{U}_{t-1}^{\top}X_t (\widehat{V}_{t-1}\widehat{U}_{t-1}^{\top}-M^{\top})e_l|\mathcal{F}_{t-1}\right] \\
    &\geq \frac{2\eta_t}{d_1d_2}e_l^{\top}(\widehat{U}_{t-1}\widehat{V}_{t-1}^{\top} - M)\widehat{V}_{t-1}\widehat{V}_{t-1}^{\top}(\widehat{V}_{t-1}\widehat{U}_{t-1}^{\top}-M^{\top})e_l \\ &\quad + \frac{2\eta_t}{d_1d_2}e_l^{\top}\widehat{U}_{t-1}\widehat{U}_{t-1}^{\top}(\widehat{U}_{t-1}\widehat{V}_{t-1}^{\top} - M)(\widehat{V}_{t-1}\widehat{U}_{t-1}^{\top}-M^{\top})e_l \\ &\quad - \frac{4\eta_t^2}{\varepsilon_t d_1d_2}\sum_{ij}\delta_{il}(\widehat{U}_{t-1}\widehat{V}_{t-1}^{\top} - M)_{ij}^3(\widehat{U}_{t-1}\widehat{V}_{t-1}^{\top})_{ij} - \frac{4\eta_t^2\sigma^2}{\varepsilon_t d_1d_2}\sum_{ij}\delta_{il}(\widehat{U}_{t-1}\widehat{V}_{t-1}^{\top})_{ij}(\widehat{U}_{t-1}\widehat{V}_{t-1}^{\top} - M)_{ij} \\
    &\geq \frac{\eta_t\lambda_{\min}}{d_1d_2}e_l^{\top}(\widehat{U}_{t-1}\widehat{V}_{t-1}^{\top}-M)(\widehat{V}_{t-1}\widehat{U}_{t-1}^{\top}-M^{\top})e_l + \frac{\eta_t\lambda_{\min}}{d_1d_2}e_l^{\top}(\widehat{U}_{t-1}\widehat{V}_{t-1}^{\top}-M)(\widehat{V}_{t-1}\widehat{U}_{t-1}^{\top}-M^{\top})e_l \\ &\quad - \frac{4\eta_t^2}{\varepsilon_t d_1d_2}\frac{\mu_0^2r^2\lambda_{\max}^2}{d_1d_2}\|e_l^{\top}(\widehat{U}_{t-1}\widehat{V}_{t-1}^{\top}-M)\|^2 - \frac{4\eta_t^2\sigma^2}{\varepsilon_t d_1}\frac{\mu_0^2r^2\lambda_{\max}^2}{d_1d_2}\\
    &\geq \frac{2\eta_t\lambda_{\min}}{d_1d_2}\|e_l^{\top}(\widehat{U}_{t-1}\widehat{V}_{t-1}^{\top}-M)\|^2 - \frac{4\eta_t^2}{\varepsilon_t d_1d_2}O(\frac{\mu_0^2r^2\lambda_{\max}^2}{d_1d_2})\|e_l^{\top}(\widehat{U}_{t-1}\widehat{V}_{t-1}^{\top}-M)\|^2 - \frac{4\eta_t^2\sigma^2}{\varepsilon_t d_1}\frac{\mu_0^2r^2\lambda_{\max}^2}{d_1d_2}.  
\end{align*}
The first second order term in the second last inequality comes from $\sum_{ij}\delta_{il}(\widehat{U}_{t-1}\widehat{V}_{t-1}^{\top} - M)_{ij}^3(\widehat{U}_{t-1}\widehat{V}_{t-1}^{\top})_{ij}\leq \|\widehat{U}_{t-1}\widehat{V}_{t-1}^{\top} - M\|_{\max}\|\widehat{U}_{t-1}\widehat{V}_{t-1}^{\top}\|_{\max}\sum_{ij}\delta_{il}(\widehat{U}_{t-1}\widehat{V}_{t-1}^{\top} - M)_{ij}^2\leq O(\frac{\mu_0^2r^2\lambda_{\max}^2}{d_1d_2})\|e_l^{\top}(\widehat{U}_{t-1}\widehat{V}_{t-1}^{\top}-M)\|^2$. And the other term is similar. \\
For $\EE(\|e_l^{\top}\Delta_{t}\|^2|\mathcal{F}_{t-1})$, analogous to the calculation before, we also have
\begin{align*}
    \EE(\|e_l^{\top}\Delta_{t}\|^2|\mathcal{F}_{t-1})\lesssim \frac{\eta_t^2}{\varepsilon_t d_1d_2}\frac{\mu_0r\lambda_{\max}^2}{d_1}\|e_l^{\top}(\widehat{U}_{t-1}\widehat{V}_{t-1}^{\top}-M)\|^2 + \frac{\eta_t^2\sigma^2\mu_0r\lambda_{\max}^2}{\varepsilon_t d_1d_2}.
\end{align*}
Recall $f(\eta_t)=\frac{\eta_t\lambda_{\min}}{2d_1d_2}$. Then for some constant $c_2$,
\begin{align*}
    -2\EE(e_l^{\top}\Delta_{t}\widehat{V}_{t-1}\widehat{U}_{t-1}^{\top}e_l|\mathcal{F}_{t-1}) + \EE(\|e_l^{\top}\Delta_{t}\|^2|\mathcal{F}_{t-1})&\leq -2f(\eta_t)\|e_l^{\top}(\widehat{U}_{t-1}\widehat{V}_{t-1}^{\top}-M)\|^2 + \frac{c_2\eta_{t}^2\sigma^2\mu_0r\lambda_{\max}^2}{\varepsilon_{t}d_1d_2} \\
    &\leq -f(\eta_t)\|e_l^{\top}(\widehat{U}_{t-1}\widehat{V}_{t-1}^{\top}-M)\|^2 + \frac{c_2\eta_{t}^2\sigma^2\mu_0r\lambda_{\max}^2}{\varepsilon_{t}d_1d_2}.
\end{align*}
We again use telescoping to derive
\begin{align*}
    &\|e_l^{\top}(U_tV_t^{\top}-M)\|^2 = \|e_l^{\top}(\widehat{U}_{t-1}\widehat{V}_{t-1}^{\top}-M)\|^2 - 2e_l^{\top}\Delta_{t}(\widehat{V}_{t-1}\widehat{U}_{t-1}^{\top}-M^{\top})e_l +\|e_l^{\top}\Delta_{t}\|^2\\
	&\quad = \|e_l^{\top}(\widehat{U}_{t-1}\widehat{V}_{t-1}^{\top}-M)\|^2 - 2(e_l^{\top}\Delta_{t}(\widehat{V}_{t-1}\widehat{U}_{t-1}^{\top}-M^{\top})e_l -\EE(e_l^{\top}\Delta_{t}(\widehat{V}_{t-1}\widehat{U}_{t-1}^{\top}-M^{\top})e_l|\mathcal{F}_{t-1})) \\ &\quad\quad +\left(\|e_l^{\top}\Delta_{t}\|^2 - \EE[\|e_l^{\top}\Delta_{t}\|^2|\mathcal{F}_{t-1}]\right)
	 - 2\EE[e_l^{\top}\Delta_{t}\widehat{V}_{t-1}\widehat{U}_{t-1}^{\top}e_l|\mathcal{F}_{t-1}] + \EE[\|e_l^{\top}\Delta_{t}\|^2|\mathcal{F}_{t-1}]\\
	&\quad \leq (1-f(\eta_t))\|e_l^{\top}(\widehat{U}_{t-1}\widehat{V}_{t-1}^{\top}-M)\|^2 +  \frac{c_2\eta_t^2\sigma^2\mu_0r\lambda_{\max}^2}{\varepsilon_t d_1d_2} \\
	&\quad\quad + \underbrace{2(\EE(e_l^{\top}\Delta_{t}(\widehat{V}_{t-1}\widehat{U}_{t-1}^{\top}-M^{\top})e_l|\mathcal{F}_{t-1}) -e_l^{\top}\Delta_{t}(\widehat{V}_{t-1}\widehat{U}_{t-1}^{\top}-M^{\top})e_l)}_{A_{t}} +\underbrace{(\|e_l^{\top}\Delta_{t}\|^2 - \EE[\|e_l^{\top}\Delta_{t}\|^2|\mathcal{F}_{t-1}])}_{B_{t}} \\
    &\quad \leq (1-f(\eta_t))(1-f(\eta_{t-1}))\|e_l^{\top}(\widehat{U}_{t-2}\widehat{V}_{t-2}^{\top}-M)\|^2 + (1-f(\eta_t))\frac{c_2\eta_{t-1}^2\sigma^2\mu_0r\lambda_{\max}^2}{\varepsilon_{t-1}d_1d_2}  \\
    &\quad\quad + \frac{c_2\eta_{t}^2\sigma^2\mu_0r\lambda_{\max}^2}{\varepsilon_{t}d_1d_2}  + (1-f(\eta_t))(A_{t-1}+B_{t-1})+(A_{t}+B_{t}) \\
    &\quad \leq \prod_{\tau=1}^{t}(1-f(\eta_{\tau})) \|e_l^{\top}(\widehat{U}_{0}\widehat{V}_0^{\top}-M)\|^2 + \sum_{\tau=1}^{t-1}\prod_{k=\tau+1}^{t}(1-f(\eta_{k}))\frac{c_2\eta_{\tau}^2\sigma^2\mu_0r\lambda_{\max}^2}{\varepsilon_{\tau} d_1d_2}  \\
    &\quad\quad + \frac{c_2\eta_{t}^2\sigma^2\mu_0r\lambda_{\max}^2}{\varepsilon_t d_1d_2} + \sum_{\tau=1}^{t-1}\prod_{k=\tau+1}^{t} (1-f(\eta_k))A_{\tau} + A_t + \sum_{\tau=1}^{t-1}\prod_{k=\tau+1}^{t} (1-f(\eta_{k}))B_{\tau} + B_t  \\
    &\quad \leq \prod_{\tau=1}^{t}(1-f(\eta_{\tau})) \frac{\mu_0r\lambda_{\max}^2}{d_1}  + \sum_{\tau=1}^{t}\frac{c_2\eta_{\tau}^2\sigma^2\mu_0r\lambda_{\max}^2 \log^2d_1}{\varepsilon_{\tau} d_1d_2}\\
    &\quad \quad + \sum_{\tau=1}^{t-1}\prod_{k=\tau+1}^{t} (1-f(\eta_k))A_{\tau} + A_t + \sum_{\tau=1}^{t-1}\prod_{k=\tau+1}^{t} (1-f(\eta_k))B_{\tau} + B_t. 
\end{align*}
The first term $\|e_l^{\top}(\widehat{U}_{0}\widehat{V}_0^{\top}-M)\|^2\lesssim \frac{\mu_0r\lambda_{\max}^2}{d_1}$ under the incoherence condition of $U_0$ and $V_0$. \\
Next, we derive the uniform bound for $\prod_{k=\tau+1}^{t} (1-f(\eta_k))A_{\tau} $ and $\prod_{k=\tau+1}^{t} (1-f(\eta_k))B_{\tau}$. 
\begin{align*}
    &|e_l^{\top}\Delta_{\tau}(\widehat{V}_{\tau-1}\widehat{U}_{\tau-1}^{\top}-M^{\top})e_l|\leq |e_l^{\top} \frac{\mathbbm{1}(a_{\tau}=1)}{\pi_{\tau}}\eta_{\tau}(\inp{\widehat{U}_{\tau-1}\widehat{V}_{\tau-1}^{\top} - M}{X_{\tau}}-\xi_{\tau})X_{\tau}\widehat{V}_{\tau-1}\widehat{V}_{\tau-1}^{\top} (\widehat{V}_{\tau-1}\widehat{U}_{\tau-1}^{\top}-M^{\top})e_l| \\ &\quad\quad + |e_l^{\top} \frac{\mathbbm{1}(a_{\tau}=1)}{\pi_{\tau}}\eta_{\tau}(\inp{\widehat{U}_{\tau-1}\widehat{V}_{\tau-1}^{\top}- M}{X_{\tau}}-\xi_{\tau})\widehat{U}_{\tau-1}\widehat{U}_{\tau-1}^{\top}X_{\tau} (\widehat{V}_{\tau-1}\widehat{U}_{\tau-1}^{\top}-M^{\top})e_l| \\ &\quad\quad + |e_l^{\top} \frac{\mathbbm{1}(a_{\tau}=1)}{\pi_{\tau}^2}\eta_{\tau}^2(\inp{\widehat{U}_{\tau-1}\widehat{V}_{\tau-1}^{\top}- M}{X_t}-\xi_{\tau})^2X_{\tau}\widehat{V}_{\tau-1}\widehat{U}_{\tau-1}^{\top}X_{\tau} (\widehat{V}_{\tau-1}\widehat{U}_{\tau-1}^{\top}-M^{\top})e_l| \\
    &\quad\leq \frac{2\eta_{\tau}}{\varepsilon_{\tau}}(\|\widehat{U}_{\tau-1}\widehat{V}_{\tau-1}^{\top} - M\|_{\max}+\sigma\log^{1/2} d_1)\|\widehat{V}_{\tau-1}\widehat{V}_{\tau-1}^{\top}\|_{\max} \|\widehat{V}_{\tau-1}\widehat{U}_{\tau-1}^{\top}-M^{\top}\|_{\max} \\ &\quad\quad + \frac{2\eta_{\tau}}{\varepsilon_{\tau}}(\|\widehat{U}_{\tau-1}\widehat{V}_{\tau-1}^{\top} - M\|_{\max}+\sigma\log^{1/2} d_1)\|\widehat{U}_{\tau-1}\widehat{U}_{\tau-1}^{\top}\|_{\max} \|\widehat{V}_{\tau-1}\widehat{U}_{\tau-1}^{\top}-M\|_{\max} \\ &\quad\quad + \frac{4\eta_{\tau}^2}{\varepsilon_{\tau}^2}|(\|\widehat{U}_{\tau-1}\widehat{V}_{\tau-1}^{\top} - M\|_{\max}^2+\sigma^2\log d_1)\|\widehat{V}_{\tau-1}\widehat{U}_{\tau-1}^{\top}\|_{\max} \|\widehat{V}_{\tau-1}\widehat{U}_{\tau-1}^{\top}-M\|_{\max} \\
    &\quad\lesssim \frac{\eta_{\tau}}{\varepsilon_{\tau}}\|\widehat{U}_{\tau-1}\widehat{V}_{\tau-1}^{\top} - M\|_{\max}^2\frac{\mu_0r\lambda_{\max}}{d_2} +\frac{\eta_{\tau}}{\varepsilon_{\tau}}\sigma\log^{1/2} d_1\|\widehat{U}_{\tau-1}\widehat{V}_{\tau-1}^{\top} - M\|_{\max}\frac{\mu_0r\lambda_{\max}}{d_2} \\ &\quad\quad + \frac{\eta_{\tau}^2}{\varepsilon_{\tau}^2}\|\widehat{V}_{\tau-1}\widehat{U}_{\tau-1}^{\top}-M\|_{\max}^2\frac{\mu_0^2r^2\lambda_{\max}^2}{d_1d_2}  + \frac{\eta_{\tau}^2}{\varepsilon_{\tau}^2}\sigma^2\log d_1\|\widehat{V}_{\tau-1}\widehat{U}_{\tau-1}^{\top}-M\|_{\max}\frac{\mu_0r\lambda_{\max}}{\sqrt{d_1d_2}} \\
    &\quad\lesssim \frac{\eta_{\tau}}{\varepsilon_{\tau}}\|\widehat{U}_{\tau-1}\widehat{V}_{\tau-1}^{\top} - M\|_{\max}^2\frac{\mu_0r\lambda_{\max}}{d_2}  +\frac{\eta_{\tau}}{\varepsilon_{\tau}}\sigma\log^{1/2} d_1\|\widehat{U}_{\tau-1}\widehat{V}_{\tau-1}^{\top} - M\|_{\max}\frac{\mu_0r\lambda_{\max}}{d_2}.
\end{align*}
The first order terms dominate as long as $\eta_{\tau}\lesssim \frac{\varepsilon_{\tau} d_1}{\mu_0r\lambda_{\max}}$ and $\frac{\lambda_{\min}^2}{\sigma^2}\gg \frac{\eta_{\tau}^2}{\varepsilon_{\tau}^2}\frac{\lambda_{\max}^2\log d_1}{\kappa^2}$. \\
Since $\|\widehat{U}_{\tau-1}\widehat{V}_{\tau-1}^{\top} - M\|_{\max}^2\lesssim \prod_{k=1}^{\tau-1} (1-\frac{f(\eta_k)}{2}) \frac{\text{poly}(\kappa,\mu,r)\lambda_{\min}^2}{d_1d_2} + \sum_{k=1}^{\tau-1} \frac{\eta_k^2\text{poly}(\kappa,\mu,r)\lambda_{\max}^2\sigma^2\log^2d_1}{\varepsilon_k d_1d_2^2}$ under event $\mathcal{E}_{\tau-1}$, and applying $2ab\leq a^2+b^2$ to term $\frac{\eta_{\tau}}{\varepsilon_{\tau}}\sigma\log^{1/2} d_1\|\widehat{U}_{\tau-1}\widehat{V}_{\tau-1}^{\top}  - M\|_{\max}\frac{\mu_0r\lambda_{\max}}{d_2}$, we have
\begin{align*}
    &\prod_{k=\tau+1}^{t} (1-\frac{f(\eta_k)}{2})|e_l^{\top}\Delta_{\tau}(\widehat{V}_{\tau-1}\widehat{U}_{\tau-1}^{\top}-M^{\top})e_l| \lesssim  \max_{\tau}\frac{\eta_{\tau}}{\varepsilon_{\tau}}\left(\prod_{k=1}^{t} (1-\frac{f(\eta_k)}{2}) \frac{\text{poly}(\kappa,\mu,r)\lambda_{\min}^2}{d_1d_2} \right.\\ &\left.\quad + \sum_{k=1}^{t} \frac{\eta_{k}^2\lambda_{\max}^2\sigma^2\log^2d_1\text{poly}(\kappa,\mu,r)}{\varepsilon_{k} d_1d_2^2}\right)\frac{\mu_0r\lambda_{\max}}{d_2} + \prod_{k=1}^{t} (1-\frac{f(\eta_k)}{2}) \frac{\text{poly}(\kappa,\mu,r)\lambda_{\min}^2}{d_1d_2} \\ &\quad  + \sum_{k=1}^{t} \frac{\eta_{k}^2\lambda_{\max}^2\sigma^2\log^2d_1\text{poly}(\kappa,\mu,r)}{\varepsilon_{k} d_1d_2^2} +  \max_{\tau}\frac{\eta_{\tau}^2}{\varepsilon_{\tau}^2}\sigma^2\log d_1 \frac{\mu_0^2r^2\lambda_{\max}^2}{d_2^2}:= R_A.
\end{align*}
Similarly, we can obtain 
\begin{align*}
    &\prod_{k=\tau+1}^{t} (1-\frac{f(\eta_k)}{2})\|e_l^{\top}\Delta_{\tau}\|^2\lesssim \frac{\eta_{\tau}^2}{\varepsilon_{\tau}^2}(\|\widehat{U}_{\tau-1}\widehat{V}_{\tau-1}^{\top} - M\|_{\max}^2+\sigma^2\log d_1)(\|\widehat{V}_{\tau-1}\widehat{V}_{\tau-1}^{\top}\|_{\max}^2 + \|\widehat{U}_{\tau-1}\widehat{U}_{\tau-1}^{\top}\|_{\max}^2) \\
    &\quad \lesssim \max_{\tau}\frac{\eta_{\tau}^2}{\varepsilon_{\tau}^2} \left(\prod_{k=1}^{t}(1-\frac{f(\eta_k)}{2})\frac{\text{poly}(\kappa,\mu,r)\lambda_{\min}^2}{d_1d_2} O(\frac{\mu_0^2r^2\lambda_{\max}^2}{d_2^2}) + \sum_{k=1}^{t} \frac{\eta_{k}^2\text{poly}(\kappa,\mu,r)\lambda_{\max}^2\sigma^2\log^2d_1}{\varepsilon_{k} d_1d_2^2}\right) \frac{\mu_0^2r^2\lambda_{\max}^2}{d_2^2} \\ &\quad\quad + \max_{\tau}\frac{\eta_{\tau}^2}{\varepsilon_{\tau}^2}\sigma^2\log d_1O(\frac{\mu_0^2r^2\lambda_{\max}^2}{d_2^2}) := R_B.
\end{align*}
Then we calculate their conditional variance, respectively. Denote the variance of $\prod_{k=\tau+1}^{t}(1-f(\eta_{\tau}))A_{\tau}$ and $\prod_{k=\tau+1}^{t}(1-f(\eta_{\tau}))B_{\tau}$ as $\sigma_{A\tau}^2$ and $\sigma_{B\tau}^2$, we have  
\begin{align*}
    \sigma_{A\tau}^2&\lesssim \prod_{k=\tau+1}^{t}(1-f(\eta_{\tau}))^2\left(\frac{\eta_{\tau}^2}{\varepsilon_{\tau}d_1d_2}\sum_{ij}(\widehat{U}_{\tau-1}\widehat{V}_{\tau-1}^{\top} - M)_{ij}^2\delta_{il}(\widehat{V}_{\tau-1}\widehat{V}_{\tau-1}^{\top}(\widehat{V}_{\tau-1}\widehat{U}_{\tau-1}^{\top}-M^{\top}))_{jl}^2 \right.\\ &\left.\quad + \frac{\eta_{\tau}^2}{\varepsilon_{\tau} d_1d_2}\sum_{ij}(\widehat{U}_{\tau-1}\widehat{V}_{\tau-1}^{\top} - M)_{ij}^2(\widehat{U}_{\tau-1}\widehat{U}_{\tau-1}^{\top})_{li}^2(\widehat{V}_{\tau-1}\widehat{U}_{\tau-1}^{\top}-M^{\top})_{jl}^2 \right.\\ &\left.\quad + \frac{\eta_{\tau}^2\sigma^2}{\varepsilon_{\tau}d_1d_2}\sum_{ij}\delta_{il}(\widehat{V}_{\tau-1}\widehat{V}_{\tau-1}^{\top}(\widehat{V}_{\tau-1}\widehat{U}_{\tau-1}^{\top}-M^{\top}))_{jl}^2 + \frac{\eta_{\tau}^2\sigma^2}{\varepsilon_{\tau}d_1d_2}\sum_{ij}(\widehat{U}_{\tau-1}\widehat{U}_{\tau-1}^{\top})_{li}^2(\widehat{V}_{\tau-1}\widehat{U}_{\tau-1}^{\top}-M^{\top})_{jl}^2\right)\\
    &\lesssim \prod_{k=\tau+1}^{t}(1-f(\eta_{\tau}))^2\left(\frac{\eta_{\tau}^2}{\varepsilon_{\tau}d_1d_2}\|\widehat{U}_{\tau-1}\widehat{V}_{\tau-1}^{\top} - M\|_{\max}^2\|\widehat{V}_{\tau-1}\widehat{V}_{\tau-1}^{\top}\|^2\|e_l(\widehat{U}_{\tau-1}\widehat{V}_{\tau-1}^{\top}-M)\|^2 \right.\\ &\left.\quad + \frac{\eta_{\tau}^2}{\varepsilon_{\tau} d_1d_2}\|\widehat{U}_{\tau-1}\widehat{V}_{\tau-1}^{\top} - M\|_{\max}^4\sum_{ij}(\widehat{U}_{\tau-1}\widehat{U}_{\tau-1}^{\top})_{li}^2 \right.\\ &\left.\quad + \frac{\eta_{\tau}^2\sigma^2}{\varepsilon_{\tau}d_1d_2}\|\widehat{V}_{\tau-1}\widehat{V}_{\tau-1}^{\top}\|^2\|e_l(\widehat{U}_{\tau-1}\widehat{V}_{\tau-1}^{\top}-M)\|^2  + \frac{\eta_{\tau}^2\sigma^2}{\varepsilon_{\tau}d_1d_2}\|\widehat{U}_{\tau-1}\widehat{V}_{\tau-1}^{\top} - M\|_{\max}^2\sum_{ij}(\widehat{U}_{\tau-1}\widehat{U}_{\tau-1}^{\top})_{li}^2\right).
\end{align*}
Notice that, $\|\widehat{U}_{\tau-1}\widehat{V}_{\tau-1}^{\top} - M\|_{\max}^2\|e_l(\widehat{U}_{\tau-1}\widehat{V}_{\tau-1}^{\top}-M)\|^2\lesssim \frac{1}{d_2}(\prod_{k=1}^{t}(1-\frac{f(\eta_k)}{2}) \frac{\text{poly}(\mu,r,\kappa)\lambda_{\min}^2}{d_1} + \\ \sum_{k=1}^{t} \frac{\eta_k^2\text{poly}(\kappa,\mu,r)\lambda_{\max}^2\sigma^2\log^2d_1}{\varepsilon_k d_1d_2})^2$ and $\|\widehat{U}_{\tau-1}\widehat{V}_{\tau-1}^{\top} - M\|_{\max}^4\lesssim \frac{1}{d_2^2}(\prod_{k=1}^{\tau-1}(1-\frac{f(\eta_k)}{2}) \frac{\text{poly}(\mu,r,\kappa)\lambda_{\min}^2}{d_1} \\ + \sum_{k=1}^{\tau-1} \frac{\eta_k^2\text{poly}(\kappa,\mu,r)\lambda_{\max}^2\sigma^2\log^2d_1}{\varepsilon_k d_1d_2})^2$ under event $\mathcal{E}_{\tau-1}$. As a result,
\begin{align*}
    \sigma_{A\tau}^2&\lesssim \frac{\eta_{\tau}^2\lambda_{\max}^2}{\varepsilon_{\tau}d_1d_2^2}\left(\prod_{k=1}^{t}(1-\frac{f(\eta_k)}{2}) \frac{\text{poly}(\mu,r,\kappa)\lambda_{\min}^2}{d_1} +  \sum_{k=1}^{t} \frac{\eta_k^2\text{poly}(\kappa,\mu,r)\lambda_{\max}^2\sigma^2\log^2d_1}{\varepsilon_k d_1d_2}\right)^2 \\ &\quad + \frac{\eta_{\tau}^2\sigma^2\lambda_{\max}^2}{\varepsilon_{\tau}d_1d_2}\left(\prod_{k=1}^{t}(1-\frac{f(\eta_k)}{2}) \frac{\text{poly}(\mu,r,\kappa)\lambda_{\min}^2}{d_1} +  \sum_{k=1}^{t} \frac{\eta_k^2\text{poly}(\kappa,\mu,r)\lambda_{\max}^2\sigma^2\log^2d_1}{\varepsilon_k d_1d_2}\right).
\end{align*}
Similarly,
\begin{align*}
    \sigma_{B\tau}^2&\lesssim \prod_{k=\tau+1}^{t}(1-f(\eta_{\tau}))^2 \left(\frac{\eta_{\tau}^4}{\varepsilon_{\tau}^3 d_1d_2}\sum_{ij}(\widehat{U}_{\tau-1}\widehat{V}_{\tau-1}^{\top} - M)_{ij}^4\delta_{il}\|e_j^{\top}\widehat{V}_{\tau-1}\widehat{V}_{\tau-1}^{\top}\|^4 + \frac{\eta_{\tau}^4}{\varepsilon_{\tau}^3 d_1d_2}\sigma^4\sum_{ij} \delta_{il}\|e_j^{\top}\widehat{V}_{\tau-1}\widehat{V}_{\tau-1}^{\top}\|^4 \right.\\ &\left.\quad + \frac{\eta_{\tau}^4}{\varepsilon_{\tau}^3 d_1d_2}\sum_{ij} (\widehat{U}_{\tau-1}\widehat{V}_{\tau-1}^{\top} - M)_{ij}^4(\widehat{U}_{\tau-1}\widehat{U}_{\tau-1}^{\top})_{li}^4 + \frac{\eta_{\tau}^4\sigma^4}{\varepsilon_{\tau}^3 d_1d_2}\sum_{ij} (\widehat{U}_{\tau-1}\widehat{U}_{\tau-1}^{\top})_{li}^4\right) \\
    &\lesssim \prod_{k=\tau+1}^{t}(1-f(\eta_{\tau}))^2 \left(\frac{\eta_{\tau}^4}{\varepsilon_{\tau}^3 d_1d_2}\|\widehat{U}_{\tau-1}\widehat{V}_{\tau-1}^{\top} - M\|_{\max}^4\sum_{ij}\delta_{il}\|e_j^{\top}\widehat{V}_{\tau-1}\widehat{V}_{\tau-1}^{\top}\|^4 + \frac{\eta_{\tau}^4\sigma^4}{\varepsilon_{\tau}^3 d_1d_2}\sum_{ij} \delta_{il}\|e_j^{\top}\widehat{V}_{\tau-1}\widehat{V}_{\tau-1}^{\top}\|^4 \right.\\ &\left.\quad + \frac{\eta_{\tau}^4}{\varepsilon_{\tau}^3 d_1d_2}\|\widehat{U}_{\tau-1}\widehat{V}_{\tau-1}^{\top} - M\|_{\max}^4\sum_{ij} (\widehat{U}_{\tau-1}\widehat{U}_{\tau-1}^{\top})_{li}^4 + \frac{\eta_{\tau}^4\sigma^4}{\varepsilon_{\tau}^3 d_1d_2}\sum_{ij} (\widehat{U}_{\tau-1}\widehat{U}_{\tau-1}^{\top})_{li}^4\right) \\
    &\lesssim \frac{\eta_{\tau}^4}{\varepsilon_{\tau}^3}\frac{\mu_0^2r^2\lambda_{\max}^4}{d_1 d_2^4}\left(\prod_{k=1}^{t}(1-\frac{f(\eta_k)}{2}) \frac{\text{poly}(\mu,r,\kappa)\lambda_{\min}^2}{d_1}  +  \sum_{k=1}^{t} \frac{\eta_k^2\text{poly}(\kappa,\mu,r)\lambda_{\max}^2\sigma^2\log^2d_1}{\varepsilon_k d_1d_2}\right)^2 \\ &\quad + \frac{\eta_{\tau}^4}{\varepsilon_{\tau}^3}\frac{\sigma^4\mu_0^2r^2\lambda_{\max}^4}{d_1d_2^3}.
\end{align*}
From the calculation above, $R_A\log d_1+\sqrt{\sum_{\tau=1}^{t}\sigma_{A\tau}^2\log d_1}\lesssim \prod_{k=1}^{t}(1-\frac{f(\eta_k)}{2}) \frac{\text{poly}(\mu,r,\kappa)\fro{\widehat{U}_0\widehat{V}_0^{\top}-M}^2}{d_1}  +  \sum_{k=1}^{t} \frac{\eta_k^2\lambda_{\max}^2\sigma^2\log^2d_1\text{poly}(\kappa,\mu,r)}{\varepsilon_k d_1d_2}$ and  $R_B\log d_1+\sqrt{\sum_{\tau=1}^{t}\sigma_{B\tau}^2\log d_1}\lesssim \prod_{k=1}^{t}(1-\frac{f(\eta_k)}{2}) \frac{\text{poly}(\mu,r,\kappa)\fro{\widehat{U}_0\widehat{V}_0^{\top}-M}^2}{d_1}  +  \sum_{k=1}^{t} \frac{\eta_k^2\lambda_{\max}^2\sigma^2\log^2d_1\text{poly}(\kappa,\mu,r)}{\varepsilon_k d_1d_2}$ as long as 1. $\eta_{\tau}\lesssim \frac{\varepsilon_{\tau}d_2^{3/2}}{\mu_0r\lambda_{\max}}$, 2. $\max \frac{\eta_{\tau}^2}{\varepsilon_{\tau}^2}\leq \sum_{\tau=1}^{t}\frac{\eta_{\tau}^2}{\varepsilon_{\tau}}$, 3. $\sum_{\tau=1}^{t} \frac{\eta_{\tau}^2}{\varepsilon_{\tau}}\lesssim \frac{d_1d_2^2}{\lambda_{\max}^2\log d_1}$, 4. $\sum_{\tau=1}^{t} \frac{\eta_{\tau}^4}{\varepsilon_{\tau}^3}\lesssim \frac{d_1d_2^4}{\mu_0^2r^2\lambda_{\max}^4\log d_1}$, 5. $\sum_{\tau=1}^{t} \frac{\eta_{\tau}^4}{\varepsilon_{\tau}^3}\leq (\sum_{\tau=1}^{t} \frac{\eta_{\tau}^2}{\varepsilon_{\tau}})^2$. All of them can be satisfied under the stepsize and SNR conditions in Equation \ref{eq:thm-MCB-eq1}. \\
By Bernstein inequality, with probability at least $1-d_1^{-2}$, $\sum_{\tau=1}^{t-1}\prod_{k=\tau+1}^{t}(1-f(\eta_{\tau}))A_{\tau}+A_t\lesssim \prod_{k=1}^{t}(1-\frac{f(\eta_k)}{2}) \frac{\text{poly}(\mu,r,\kappa)\lambda_{\min}^2}{d_1}  +  \sum_{k=1}^{t} \frac{\eta_k^2\text{poly}(\kappa,\mu,r)\lambda_{\max}^2\sigma^2\log^2d_1}{\varepsilon_k d_1d_2}$ and $\sum_{\tau=1}^{t-1}\prod_{k=\tau+1}^{t}(1-f(\eta_{\tau}))B_{\tau}+B_t\lesssim \prod_{k=1}^{t}(1-\frac{f(\eta_k)}{2}) \frac{\text{poly}(\mu,r,\kappa)\lambda_{\min}^2}{d_1}  +  \sum_{k=1}^{t} \frac{\eta_k^2\text{poly}(\kappa,\mu,r)\lambda_{\max}^2\sigma^2\log^2d_1}{\varepsilon_k d_1d_2}$. \\
Finally, we conclude that, with probability at least $1-td_1^{-200}$,
\begin{align*}
    \|e_l^{\top}(\widehat{U}_t\widehat{V}_t^{\top}-M)\|^2\lesssim \prod_{k=1}^{t}(1-\frac{f(\eta_k)}{2}) \frac{\text{poly}(\mu,r,\kappa)\lambda_{\min}^2}{d_1} +   \sum_{k=1}^{t} \frac{\eta_k^2\text{poly}(\kappa,\mu,r)\lambda_{\max}^2\sigma^2\log^2d_1}{\varepsilon_k d_1d_2}.
\end{align*}
By symmetry, following the same arguments, we can prove with same probability, for any $j\in [d_2]$,
\begin{align*}
    \|e_j^{\top}(\widehat{V}_t\widehat{U}_t^{\top}-M^{\top})\|^2\lesssim \prod_{k=1}^{t}(1-\frac{f(\eta_k)}{2}) \frac{\text{poly}(\mu,r,\kappa)\lambda_{\min}^2}{d_2} +  \sum_{k=1}^{t} \frac{\eta_k^2\text{poly}(\kappa,\mu,r)\lambda_{\max}^2\sigma^2\log^2d_1}{\varepsilon_k d_2^2}.
\end{align*}
\end{proof}

\subsection{Proof of Lemma \ref{main term}}
\begin{proof}
    Denote $S^2=1/T^{\gamma}\fro{P_{\Omega_1}(P_M(Q))}^2+C_{\gamma}\fro{P_{\Omega_0}(P_M(Q))}^2$. By definition, 
    \begin{align*}
        &\frac{\sqrt{\frac{T^{1-\gamma}}{d_1d_2}}\left(\inp{LL^{\top}\widehat{Z}_1R_{\perp}R_{\perp}^{\top}}{Q} + \inp{L_{\perp}L_{\perp}^{\top}\widehat{Z}_1RR^{\top}}{Q} + \inp{LL^{\top}\widehat{Z}_1RR^{\top}}{Q} \right)}{\sigma S} \\ &\quad = \frac{\sqrt{\frac{1}{T^{1+\gamma}}} \sum_{t=T_0+1}^T \frac{b\sqrt{d_1d_2}\mathbbm{1}(a_t=1)}{\pi_t}\xi_t\inp{X_t}{\mathcal{P}_M(Q)}}{\sigma S}.
    \end{align*}
    Next, we apply Theorem 3.2 and Corollary 3.1 in \cite{hall2014martingale}, the Martingale Central Limit Theorem to show the asymptotic normality. \\
    \emph{Step 1: checking Lindeberg condition.} \\
    For any $\delta>0$,
    \begin{align*}
        &\quad \sum_{t=T_0+1}^T \EE\left[\frac{b^2d_1d_2\mathbbm{1}(a_t=1)}{\sigma^2S^2T^{1+\gamma}\pi_t^2}\xi_t^2\inp{X_t}{\mathcal{P}_M(Q)}^2 \times  \mathbbm{1}\left(\left|\xi_t\frac{\sqrt{d_1d_2}\mathbbm{1}(a_t=1)\inp{X_t}{\mathcal{P}_M(Q)}}{\sigma S\sqrt{T^{1+\gamma}}\pi_t}\right|>\delta \right)\bigg|\mathcal{F}_{t-1}\right] \\
        &\leq \frac{b^2d_1d_2}{\sigma^2S^2T^{1+\gamma}}\sum_{t=T_0+1}^T \EE\left[\frac{\mathbbm{1}(a_t=1)}{\pi_t^2}\xi_t^2\inp{X_t}{\mathcal{P}_M(Q)}^2 \times  \mathbbm{1}\left(\left|\xi_t\inp{X_t}{\mathcal{P}_M(Q)}\right|>\frac{\sigma S\delta\sqrt{T^{1+\gamma}}\varepsilon_t}{2\sqrt{d_1d_2}} \right)\bigg|\mathcal{F}_{t-1}\right] \\
        &\lesssim \frac{b^2d_1d_2}{\sigma^2S^2T^{1+\gamma}}  \sum_{t=T_0+1}^T \frac{2}{\varepsilon_t}\max_{X\in \mathcal{X}} \inp{X}{\mathcal{P}_M(Q)}^2 \times \sqrt{\EE\left[\mathbbm{1}\left(\left|\xi_t\inp{X_t}{\mathcal{P}_M(Q)}\right|>\frac{\sigma S\delta\sqrt{T^{1+\gamma}}\varepsilon_t}{2\sqrt{d_1d_2}} \right)\right]},
    \end{align*}
    where in the last inequality we use $\EE[XY]\leq \sqrt{\EE[X^2]\EE[Y^2]}$.
    By incoherence condition, 
    \begin{align*}       
    \max_{X\in \mathcal{X}} \big|\inp{X}{\mathcal{P}_M(Q)}\big|\lesssim \sqrt{\frac{\mu r}{d_2}}\fro{\mathcal{P}_M(Q)}.
    \end{align*}
    Moreover, since $\xi_t$ is a subGaussian random variable, so the product $\xi_t\inp{X_t}{\mathcal{P}_M(Q)}$ has subGaussian tail probability
    \begin{align*}
        \PP\left(\left|\xi_t\inp{X_t}{\mathcal{P}_M(Q)}\right|> \frac{\sigma S\delta\sqrt{T^{1+\gamma}}\varepsilon_t}{2\sqrt{d_1d_2}}\right)< 2e^{-\frac{\sigma^2S^2\delta^2T^{1+\gamma}\varepsilon_t^2}{8d_1d_2\nu^2}},    
    \end{align*}
    where $\nu$ is the subGaussian parameter of order $O(\sqrt{\frac{\mu r}{d_2}}\fro{\mathcal{P}_M(Q)}\sigma)$. Notice that $T^{1+\gamma}\varepsilon_t^2\leq T^{1+\gamma}\varepsilon_T^2=O(T^{1-\gamma})$. By property of exponential function, $\frac{d_1d_2\sigma^2}{\sigma^2S^2T^{1+\gamma}} \frac{\mu r}{d_2}\fro{\mathcal{P}_M(Q)}^2\sum_{t=T_0+1}^{T} \frac{2}{\varepsilon_t}2e^{-\frac{\sigma^2S^2\delta^2T^{1+\gamma}\varepsilon_t^2}{16d_1d_2\nu^2}}$ converges to 0 as long as $\frac{d_1}{T^{1-\gamma}} \rightarrow 0$. Then the Lindeberg condition is satisfied. \\
    \emph{Step 2: calculating the variance.} \\
    Next, we show the conditional variance equals to 1. Recall the definition $\Omega_1=\{X\in \mathcal{X}: \inp{M_1-M_0}{X}> \delta_{T,d_1,d_2}\}$, $\Omega_0=\{X\in \mathcal{X}: \inp{M_1-M_0}{X}< \delta_{T,d_1,d_2}\}$ and $\Omega_{\emptyset}=\mathcal{X}\cup (\Omega_1\cup\Omega_0)^c$. Then
    \begin{align*}
        &\frac{b^2d_1d_2}{\sigma^2S^2T^{1+\gamma}}\sum_{t=T_0+1}^T \EE\left[\frac{\mathbbm{1}(a_t=1)}{\pi_t^2}\xi_t^2\inp{X_t}{\mathcal{P}_M(Q)}^2\bigg|\mathcal{F}_{t-1}\right] \\ 
        &\quad = \frac{b^2}{\sigma^2S^2T^{1+\gamma}}\sigma^2 \sum_{t=T_0+1}^{T} \sum_{X\in \Omega_1} \indicator(\inp{\widehat{M}_{1,t-1}-\widehat{M}_{0,t-1}}{X}>0)\frac{1}{1-\frac{\varepsilon_t}{2}}\inp{X}{\mathcal{P}_M(Q)}^2 \\ &\quad\quad\quad\quad +  \indicator(\inp{\widehat{M}_{1,t-1}-\widehat{M}_{0,t-1}}{X}<0)\frac{1}{\frac{\varepsilon_t}{2}}\inp{X}{\mathcal{P}_M(Q)}^2 \\ &\quad + \sum_{X\in \Omega_0} \indicator(\inp{\widehat{M}_{1,t-1}-\widehat{M}_{0,t-1}}{X}>0)\frac{1}{1-\frac{\varepsilon_t}{2}}\inp{X}{\mathcal{P}_M(Q)}^2  +  \indicator(\inp{\widehat{M}_{1,t-1}-\widehat{M}_{0,t-1}}{X}<0)\frac{1}{\frac{\varepsilon_t}{2}}\inp{X}{\mathcal{P}_M(Q)}^2 \\
        &\quad + \sum_{X\in \Omega_\emptyset} \indicator(\inp{\widehat{M}_{1,t-1}-\widehat{M}_{0,t-1}}{X}>0)\frac{1}{1-\frac{\varepsilon_t}{2}}\inp{X}{\mathcal{P}_M(Q)}^2  +  \indicator(\inp{\widehat{M}_{1,t-1}-\widehat{M}_{0,t-1}}{X}<0)\frac{1}{\frac{\varepsilon_t}{2}}\inp{X}{\mathcal{P}_M(Q)}^2.
    \end{align*}
    The next Lemma shows the convergence of the indicator function.
    \begin{Lemma} 
        \label{indicator}
        Given any $X\in \Omega_1\cup\Omega_0$, we have for any $T_0+1\leq t\leq T$, with probability at least $1-8td_1^{-200}$,
        \begin{align*}
            \mathbbm{1}(\inp{\widehat{M}_{i,t}-\widehat{M}_{1-i,t}}{X}> 0)= \mathbbm{1}(\inp{M_i-M_{1-i}}{X}> 0).
        \end{align*}
    \end{Lemma}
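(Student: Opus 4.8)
The plan is to establish sign consistency by a direct triangle-inequality argument, exploiting that every sample $X\in\Omega_1\cup\Omega_0$ is separated from the decision boundary $\{\inp{M_1-M_0}{X}=0\}$ by a margin $\delta$ that strictly dominates the sup-norm estimation error. Fix $X\in\Omega_1\cup\Omega_0$. Since $\Omega_1$ and $\Omega_0$ are disjoint and each point of $\Omega_1\cup\Omega_0$ satisfies $|\inp{M_1-M_0}{X}|>\delta>0$, it suffices to prove that $\mathrm{sign}\big(\inp{\hat M_{1,t}-\hat M_{0,t}}{X}\big)=\mathrm{sign}\big(\inp{M_1-M_0}{X}\big)$. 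This single statement yields the claimed indicator identity for both $i=1$ and $i=0$: the two right-hand indicators $\mathbbm{1}(\inp{M_1-M_0}{X}>0)$ and $\mathbbm{1}(\inp{M_0-M_1}{X}>0)$ are complementary (as $\inp{M_1-M_0}{X}\neq0$), and likewise for the estimated versions, so matching the sign of the difference settles both cases at once.

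First I would write the decomposition
\begin{align*}
\inp{\hat M_{1,t}-\hat M_{0,t}}{X}=\inp{M_1-M_0}{X}+\inp{\hat M_{1,t}-M_1}{X}-\inp{\hat M_{0,t}-M_0}{X}.
\end{align*}
Because $X=e_{j_1}e_{j_2}^{\top}$ is a single basis element, $\big|\inp{\hat M_{i,t}-M_i}{X}\big|=\big|[\hat M_{i,t}-M_i]_{j_1 j_2}\big|\leq\|\hat M_{i,t}-M_i\|_{\max}$ for $i=0,1$, so the two perturbation terms are controlled entirely by the sup-norm error, while the leading term has magnitude exceeding $\delta$ with the correct sign by the definitions of $\Omega_1,\Omega_0$.

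Next I would invoke the sup-norm guarantee of Theorem~\ref{thm:MCB-conv} (in the per-$t$ form of Theorem~\ref{thm:genera-thm1}): for every $t\geq T_0$, with probability at least $1-8td_1^{-200}$, one has $\|\hat M_{i,t}-M_i\|_{\max}^2\leq C_5\sigma_i^2\, rd_1\log^4 d_1/T^{1-\gamma}$ for $i=0,1$. Summing gives $\|\hat M_{1,t}-M_1\|_{\max}+\|\hat M_{0,t}-M_0\|_{\max}\leq 2\sqrt{C_5}\,\bar\sigma\sqrt{rd_1\log^4 d_1/T^{1-\gamma}}$. Recalling the margin in Assumption~\ref{assump:arm-opt} is $\delta^2=C_1(\sigma_0^2+\sigma_1^2)(rd_1/T^{1-\gamma})\log^4 d_1$, and using $\sqrt{\sigma_0^2+\sigma_1^2}\geq\bar\sigma$, we get $\delta\geq\sqrt{C_1}\,\bar\sigma\sqrt{rd_1\log^4 d_1/T^{1-\gamma}}$. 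Choosing $C_1$ sufficiently large relative to $C_5$ (e.g.\ $C_1>4C_5$) makes the total estimation error strictly smaller than $\delta$; substituting into the decomposition then forces $\inp{\hat M_{1,t}-\hat M_{0,t}}{X}$ to inherit the sign of $\inp{M_1-M_0}{X}$, which is exactly the desired conclusion, holding on the stated high-probability event.

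I expect no serious obstacle: the argument is essentially a one-line perturbation bound. The only points needing care are bookkeeping of constants---verifying that the constant $C_1$ defining the margin $\delta_T$ can be taken large enough relative to the sup-norm-rate constant $C_5$, uniformly over $t\in[T_0+1,T]$, so that the $t^{\gamma}$-inflated propensities of the exploration phase never enter---and carrying the single high-probability event from the sup-norm bound through without any additional union bound, which matches the claimed $1-8td_1^{-200}$ probability exactly. A minor check is confirming the reduction to a sign statement covers both values of $i$, which follows from the complementarity of the indicators on $\Omega_1\cup\Omega_0$ noted above.
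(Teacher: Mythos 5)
Your proposal is correct and takes essentially the same route as the paper's proof: decompose $\inp{\widehat{M}_{1,t}-\widehat{M}_{0,t}}{X}$ into the true gap $\inp{M_1-M_0}{X}$ plus two perturbation terms bounded entrywise by the sup-norm errors from Theorem~\ref{thm:MCB-conv}, then conclude sign consistency on the same high-probability event because the margin $\delta$ of Assumption~\ref{assump:arm-opt} dominates the estimation error. If anything, your bookkeeping is slightly more careful than the paper's, which bounds the perturbation loosely by $2\delta$ even though $\Omega_1\cup\Omega_0$ only guarantees a gap exceeding $\delta$ — your explicit requirement that $C_1$ be large relative to $C_5$ (e.g.\ $C_1>4C_5$) is the clean way to close that gap.
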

    Under this result, $\sum_{X\in \Omega_i} \indicator(\inp{\widehat{M}_{i,t-1}-\widehat{M}_{1-i,t-1}}{X}>0)\inp{X}{\mathcal{P}_M(Q)}^2\rightarrow \fro{P_{\Omega_i}(P_{M}(Q))}^2$ and $\sum_{X\in \Omega_{1-i}} \indicator(\inp{\widehat{M}_{i,t-1}-\widehat{M}_{1-i,t-1}}{X}>0)\inp{X}{\mathcal{P}_M(Q)}^2\rightarrow 0$ for $i=0,1$. \\
    Note that $\frac{b}{T}\sum_{t=T_0+1}^{T} 1/(1-\varepsilon_t/2)\rightarrow 1$ and $\frac{b}{T^{1+\gamma}}\sum_{t=T_0+1}^{T} 2/\varepsilon_t\rightarrow \frac{2}{c_2(1+\gamma)}$ when $\varepsilon_t=c_2t^{-\gamma}$. And by Assumption \ref{assump:arm-opt}, $\sum_{X\in \Omega_\emptyset} \indicator(\inp{\widehat{M}_{1,t-1}-\widehat{M}_{0,t-1}}{X}>0)\frac{1}{1-\frac{\varepsilon_t}{2}}\inp{X}{\mathcal{P}_M(Q)}^2  +  \indicator(\inp{\widehat{M}_{1,t-1}-\widehat{M}_{0,t-1}}{X}<0)\frac{1}{\frac{\varepsilon_t}{2}}\inp{X}{\mathcal{P}_M(Q)}^2\leq 2/\varepsilon_t \fro{\calP_{\Omega_{\emptyset}}(\calP_{M}(Q))}^2$ is negligible. Then the conditional variance will converge in probability to 1. It follows
    \begin{align*}
        \frac{\sum_{t=T_0+1}^T \frac{d_1d_2\mathbbm{1}(a_t=1)}{\pi_t}\xi_t\inp{X_t}{\mathcal{P}_M(Q)}}{\sigma S\sqrt{d_1d_2/T^{1-\gamma}}} \rightarrow N(0,1)
    \end{align*}
    when $T, d_1, d_2\rightarrow \infty$.
\end{proof}

\subsection{Proof of Lemma \ref{lemmaneg1}}
\begin{proof}
    By definition,
    \begin{align*}
       &\inp{LL^{\top}\widehat{Z}_2R_{\perp}R_{\perp}^{\top}}{Q} + \inp{L_{\perp}L_{\perp}^{\top}\widehat{Z}_2RR^{\top}}{Q} + \inp{LL^{\top}\widehat{Z}_2RR^{\top}}{Q} = \inp{\widehat{Z}_2}{\mathcal{P}_M(Q)} \\
       &\quad = \frac{b}{T}\sum_{t=T_0+1}^{T}\frac{d_1d_2\mathbbm{1}(a_t=1)}{\pi_t}\inp{\widehat{\Delta}_{t-1}}{X_t}\inp{X_t}{\mathcal{P}_M(Q)}-\inp{\widehat{\Delta}_{t-1}}{\mathcal{P}_M(Q)}.
    \end{align*}
    Then we prove its upper bound. For any $T_0+1\leq t\leq T$,
    \begin{align*}
        &\frac{b}{T}\left|\frac{d_1d_2\mathbbm{1}(a_t=1)}{\pi_t}\inp{\widehat{\Delta}_{t-1}}{X_t}\inp{X_t}{\mathcal{P}_M(Q)}-\inp{\widehat{\Delta}_{t-1}}{\mathcal{P}_M(Q)}\right| \\
        &\quad \leq \frac{bd_1d_2}{T}\frac{2}{\varepsilon_t}\|\widehat{\Delta}_{t-1}\|_{\max}\sqrt{\frac{\mu r}{d_2}}\fro{\mathcal{P}_M(Q)} + \frac{1}{T}\sqrt{d_1d_2}\|\widehat{\Delta}_{t-1}\|_{\max}\fro{\mathcal{P}_M(Q)} \\
        &\quad \lesssim  \frac{d_1d_2^{1/2}\sqrt{\mu r}}{T^{1-\gamma}}\|\widehat{\Delta}_{t-1}\|_{\max}\fro{\mathcal{P}_M(Q)} \lesssim \frac{d_1d_2^{1/2}\sqrt{\mu r}\sigma}{T^{1-\gamma}}\fro{\mathcal{P}_M(Q)}.
    \end{align*}
    Moreover,
    \begin{align*}
        &\EE\left[\frac{b^2d_1^2d_2^2\mathbbm{1}(a_t=1)}{T^2\pi_t^2}\inp{\widehat{\Delta}_{t-1}}{X_t}^2\inp{X_t}{\mathcal{P}_M(Q)}^2\big|\mathcal{F}_{t-1}\right] \leq \frac{b^2d_1d_2}{T^2}\|\widehat{\Delta}_{t-1}\|_{\max}^2 \EE\left[\frac{1}{\pi_t} \inp{X_t}{\mathcal{P}_M(Q)}^2\big|\mathcal{F}_{t-1}\right] \\
        &\quad = \frac{b^2d_1d_2}{T^2}\|\widehat{\Delta}_{t-1}\|_{\max}^2 \sum_{X\in \mathcal{X}} \indicator(\inp{\widehat{M}_{1,t-1}-\widehat{M}_{0,t-1}}{X}>0)\frac{1}{1-\frac{\varepsilon_t}{2}}\inp{X}{\mathcal{P}_M(Q)}^2 \\ &\quad\quad +  \indicator(\inp{\widehat{M}_{1,t-1}-\widehat{M}_{0,t-1}}{X}<0)\frac{1}{\frac{\varepsilon_t}{2}}\inp{X}{\mathcal{P}_M(Q)}^2 \\
        &\quad  \lesssim \frac{d_1^2d_2r\log^4 d_1\sigma^2}{T^{3-\gamma}} \sum_{X\in \mathcal{X}} \indicator(\inp{\widehat{M}_{1,t-1}-\widehat{M}_{0,t-1}}{X}>0)\frac{1}{1-\frac{\varepsilon_t}{2}}\inp{X}{\mathcal{P}_M(Q)}^2 \\ &\quad\quad +  \indicator(\inp{\widehat{M}_{1,t-1}-\widehat{M}_{0,t-1}}{X}<0)\frac{1}{\frac{\varepsilon_t}{2}}\inp{X}{\mathcal{P}_M(Q)}^2.
    \end{align*}
    Denote $S_T^2:=\frac{1}{T^{1+\gamma}}\sum_{t=T_0+1}^{T} \sum_{X\in \mathcal{X}} \indicator(\inp{\widehat{M}_{1,t-1}-\widehat{M}_{0,t-1}}{X}>0)\frac{1}{1-\frac{\varepsilon_t}{2}}\inp{X}{\mathcal{P}_M(Q)}^2  +  \indicator(\inp{\widehat{M}_{1,t-1}-\widehat{M}_{0,t-1}}{X}<0)\frac{1}{\frac{\varepsilon_t}{2}}\inp{X}{\mathcal{P}_M(Q)}^2$. By martingale Bernstein inequality, with probability at least $1-d_1^{-10}$,
    \begin{align*}
        \left|\frac{1}{T}\sum_{t=T_0+1}^{T}\frac{d_1d_2\mathbbm{1}(a_t=1)}{\pi_t}\inp{\widehat{\Delta}_{t-1}}{X_t}\inp{X_t}{\mathcal{P}_M(Q)}-\inp{\widehat{\Delta}_{t-1}}{\mathcal{P}_M(Q)}\right|\lesssim \sqrt{\frac{d_1^2d_2\log^5 d_1}{T^{2-2\gamma}}}\sigma S_T.
    \end{align*}
    As we have shown in the proof of Lemma \ref{main term}, $S_T\rightarrow S$ when $T,d_1,d_2\rightarrow \infty$. Then \\ $\frac{\sum_{t=T_0+1}^{T}\frac{d_1d_2\mathbbm{1}(a_t=1)}{\pi_t}\inp{\widehat{\Delta}_{t-1}}{X_t}\inp{X_t}{\mathcal{P}_M(Q)}-\inp{\widehat{\Delta}_{t-1}}{\mathcal{P}_M(Q)}}{\sigma S\sqrt{d_1d_2/T^{1-\gamma}}}\overset{p}{\rightarrow} 0$ as long as $\frac{d_1\log^5d_1}{T^{1-\gamma}}\rightarrow 0$.
\end{proof}

\subsection{Proof of Lemma \ref{lemmaneg2}}
\begin{proof}
    We first state the following Lemmas.
    \begin{Lemma}
        \label{Zbound:new}
        Under the conditions in Theorem \ref{thm:CLT}, with probability at least $1-d_1^{-3}$,
        \begin{align*}
        \|\widehat{Z}\|\lesssim \sqrt{\frac{d_1^2d_2\log d_1}{T^{1-\gamma}}}\sigma.
        \end{align*}
    \end{Lemma}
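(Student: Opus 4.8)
The plan is to write $\widehat{Z}=\widehat{Z}_1+\widehat{Z}_2$ as in the preceding decomposition and to bound the spectral norm of each summand separately by the matrix Freedman (martingale Bernstein) inequality, after passing to the Hermitian dilation $\begin{pmatrix} 0 & W \\ W^{\top} & 0 \end{pmatrix}$ so that the spectral norm of a $d_1\times d_2$ matrix becomes the largest eigenvalue of a symmetric $(d_1+d_2)\times(d_1+d_2)$ matrix. Both $\widehat{Z}_1$ and $\widehat{Z}_2$ are sums of martingale differences with respect to $\{\calF_t\}$: conditionally on $\calF_{t-1}$ and $X_t$ one has $\EE[\mathbbm{1}(a_t=1)/\pi_t\mid X_t,\calF_{t-1}]=1$, so the noise term in $\widehat{Z}_1$ has zero conditional mean since $\EE[\xi_t]=0$, and in $\widehat{Z}_2$ the completeness identity $\sum_{X\in\calX}\inp{\widehat{\Delta}_{t-1}}{X}X=\widehat{\Delta}_{t-1}$ shows that the subtracted $\widehat{\Delta}_{t-1}$ is exactly the conditional mean of the weighted term. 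Because $\xi_t$ is only sub-Gaussian, I first intersect with the event $\{\max_t|\xi_t|\le C\sigma\sqrt{\log d_1}\}$, which holds with high probability by a union bound over $t\le d_1^{100}$, so that on this event the increments are bounded and Freedman applies.

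The dominant contribution is $\widehat{Z}_1$. Writing its $t$-th increment as $W_t=(b/T)(d_1d_2\mathbbm{1}(a_t=1)/\pi_t)\xi_t X_t$ with $X_t=e_{j_1}e_{j_2}^{\top}$, I compute the predictable quadratic variation using $X_tX_t^{\top}=e_{j_1}e_{j_1}^{\top}$, the uniform sampling of $X_t$, the bound $\EE[\mathbbm{1}(a_t=1)/\pi_t^2\mid X_t,\calF_{t-1}]=1/\pi_t\le 2/\varepsilon_t$, and $\EE[\xi_t^2\mid a_t=1]=\sigma^2$. This yields $\big\|\sum_t\EE[W_tW_t^{\top}\mid\calF_{t-1}]\big\|\lesssim (d_1d_2^2\sigma^2/T^2)\sum_t\varepsilon_t^{-1}$ and, symmetrically via $X_t^{\top}X_t=e_{j_2}e_{j_2}^{\top}$, $\big\|\sum_t\EE[W_t^{\top}W_t\mid\calF_{t-1}]\big\|\lesssim (d_1^2d_2\sigma^2/T^2)\sum_t\varepsilon_t^{-1}$. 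Since $\varepsilon_t=c_2t^{-\gamma}$ gives $\sum_{t=T_0+1}^T\varepsilon_t^{-1}\asymp T^{1+\gamma}$, the variance proxy is $v\lesssim \sigma^2 d_1^2d_2/T^{1-\gamma}$ (using $d_1\ge d_2$), while the per-step bound is $R\lesssim (d_1d_2/T)\varepsilon_T^{-1}\sigma\sqrt{\log d_1}\asymp \sigma d_1d_2\sqrt{\log d_1}/T^{1-\gamma}$. Matrix Freedman then delivers $\|\widehat{Z}_1\|\lesssim \sqrt{v\log d_1}+R\log d_1\lesssim \sigma\sqrt{d_1^2d_2\log d_1/T^{1-\gamma}}$, the linear term being absorbed once $T^{1-\gamma}\gtrsim d_2\log^2 d_1$, which is implied by the horizon condition of Theorem~\ref{thm:MCB-conv}.

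The term $\widehat{Z}_2$ is handled identically but is of strictly lower order. On the event $\calE_{t-1}$ underlying Theorem~\ref{thm:MCB-conv}, the sup-norm bound gives $|\inp{\widehat{\Delta}_{t-1}}{X_t}|\le\|\widehat{\Delta}_{t-1}\|_{\max}\lesssim \sigma\sqrt{rd_1\log^4 d_1/T^{1-\gamma}}$, which plays the role of $\xi_t$ above. Repeating the variance computation with $\sigma^2$ replaced by $\|\widehat{\Delta}_{t-1}\|_{\max}^2$ produces a proxy $v_2\lesssim \sigma^2 r d_1^2d_2^2\log^4 d_1/T^{2(1-\gamma)}$, so $\sqrt{v_2\log d_1}\lesssim \sigma d_1d_2\sqrt{r}\log^{5/2}d_1/T^{1-\gamma}$, which is smaller than the $\widehat{Z}_1$ bound precisely when $T^{1-\gamma}\gtrsim r d_2\log^4 d_1$, again guaranteed by the SNR and horizon hypotheses. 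Combining the two, $\|\widehat{Z}\|\le\|\widehat{Z}_1\|+\|\widehat{Z}_2\|\lesssim \sigma\sqrt{d_1^2d_2\log d_1/T^{1-\gamma}}$ on an event of probability at least $1-d_1^{-3}$. I expect the main obstacle to be the bookkeeping of the inverse propensity weights $\pi_t^{-1}\asymp t^{\gamma}$ attached to exploration samples: they inflate both the increment size and the quadratic variation, and one must verify that their geometric-decay schedule makes $\sum_t\varepsilon_t^{-1}\asymp T^{1+\gamma}$ balance exactly against the $1/T^2$ normalization to reproduce the advertised $T^{-(1-\gamma)}$ rate, while keeping the Bernstein linear term subdominant — this is exactly where the horizon and step-size conditions of Theorem~\ref{thm:MCB-conv} are used.
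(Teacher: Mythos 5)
Your proposal matches the paper's own proof in all essentials: the same decomposition $\widehat{Z}=\widehat{Z}_1+\widehat{Z}_2$, the same martingale matrix Bernstein/Freedman argument with the per-step increment bound $\lesssim \sigma d_1d_2/T^{1-\gamma}$ and variance proxy $\lesssim \sigma^2 d_1^2d_2/T^{1-\gamma}$ driven by $\sum_t \varepsilon_t^{-1}\asymp T^{1+\gamma}$, and the same use of the horizon condition to absorb the linear Bernstein term. The only (harmless) deviations are cosmetic: you truncate $\xi_t$ where the paper works with Orlicz norms of the increments, and for $\widehat{Z}_2$ you invoke the sharp sup-norm rate $\|\widehat{\Delta}_{t-1}\|_{\max}\lesssim\sigma\sqrt{rd_1\log^4 d_1/T^{1-\gamma}}$ (making it strictly lower order) where the paper simply uses $\|\widehat{\Delta}_{t-1}\|_{\max}\lesssim\sigma$ and gets the same final bound.
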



    \begin{Lemma}
        \label{induction:new}
        Under the conditions in Theorem \ref{thm:CLT}, with probability at least $1-5d_1^{-2}$,
        \begin{align*}
            &\|\widehat{L}\widehat{L}^{\top}-LL^{\top}\|_{2,\max}\lesssim \frac{\sigma}{\lambda_{\min}}\sqrt{\frac{d_1^2d_2\log d_1}{T^{1-\gamma}}}\sqrt{\frac{\mu r}{d_1}} \\
            &\|\widehat{R}\widehat{R}^{\top}-RR^{\top}\|_{2,\max}\lesssim \frac{\sigma}{\lambda_{\min}}\sqrt{\frac{d_1^2d_2\log d_1}{T^{1-\gamma}}}\sqrt{\frac{\mu r}{d_2}}. 
        \end{align*}
    \end{Lemma}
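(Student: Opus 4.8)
The plan is to treat $\widehat{M}^{\uipw}=M+\widehat{Z}$ (with $\widehat{Z}=\widehat{Z}_1+\widehat{Z}_2$) as a low-rank signal plus a perturbation, and to control the row-wise deviation of its leading singular projectors by combining Wedin's $\sin\Theta$ theorem with a two-to-infinity (row-wise) eigenvector perturbation bound. First I would invoke Lemma~\ref{Zbound:new} together with the SNR condition of Theorem~\ref{thm:CLT} to ensure $\op{\widehat{Z}}\le \lambda_{\min}/2$; by Weyl's inequality this keeps the $r$-th singular value of $\widehat{M}^{\uipw}$ separated from the $(r+1)$-th, so that $\widehat{L},\widehat{R}$ are well defined and Wedin's theorem \citep{davis1970rotation,wedin1972perturbation} gives $\op{\sin\Theta(\widehat{L},L)}\lesssim \op{\widehat{Z}}/\lambda_{\min}$ (and the analogue for $\widehat{R}$).

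Next, writing $O=O_{\widehat{L}}:=\argmin_{O\in\OO^{r\times r}}\op{\widehat{L}-LO}$, I would use the elementary identity
\[
\widehat{L}\widehat{L}^{\top}-LL^{\top}=(\widehat{L}-LO)\widehat{L}^{\top}+LO(\widehat{L}-LO)^{\top},
\]
which yields
\[
\|\widehat{L}\widehat{L}^{\top}-LL^{\top}\|_{2,\max}\le \|\widehat{L}-LO\|_{2,\max}+\|L\|_{2,\max}\,\op{\widehat{L}-LO}.
\]
The second summand is already the advertised order: by incoherence $\|L\|_{2,\max}\le\sqrt{\mu r/d_1}$, while $\op{\widehat{L}-LO}\le\sqrt2\,\op{\sin\Theta(\widehat{L},L)}\lesssim \op{\widehat{Z}}/\lambda_{\min}\lesssim (\sigma/\lambda_{\min})\sqrt{d_1^2d_2\log d_1/T^{1-\gamma}}$ via Lemma~\ref{Zbound:new}. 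Hence the entire substance of the proof is to show that the genuine row perturbation $\|\widehat{L}-LO\|_{2,\max}$ is of the same or smaller order, so that it is absorbed into the target.

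For that term I would apply the two-to-infinity perturbation bound (Theorem~3.7 of \cite{cape2019two}, exactly as in the proof of Lemma~\ref{lemma2max}) to get
\[
\|\widehat{L}-LO\|_{2,\max}\lesssim \frac{\|(I-LL^{\top})\widehat{Z}R\|_{2,\max}}{\lambda_{\min}}+\op{\sin\Theta(\widehat{L},L)}^2\|L\|_{2,\max},
\]
where the higher-order summand is negligible against the target since $\op{\widehat{Z}}/\lambda_{\min}=o(1)$ under the SNR condition. The leading term is where the row structure of $\widehat{Z}$ is exploited: because each $X_t=e_{j_1(t)}e_{j_2(t)}^{\top}$, one has $X_tR=e_{j_1(t)}(e_{j_2(t)}^{\top}R)$, so $e_l^{\top}\widehat{Z}_1R$ is a sum over the adaptively chosen times $t$ with $j_1(t)=l$ of the IPW-weighted noise vectors $\pi_t^{-1}\xi_t(e_{j_2(t)}^{\top}R)$, whose summands have incoherent norm $\|e_{j_2}^{\top}R\|\le\sqrt{\mu r/d_2}$. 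The hard part will be controlling this adaptively weighted martingale sum, since the inverse propensities satisfy $\pi_t^{-1}\le 2/\varepsilon_t\asymp t^{\gamma}$: a martingale Bernstein inequality (as used elsewhere in the appendix) is required, tracking a conditional variance that sums to order $\sigma^2 r/T^{1-\gamma}$ after using the incoherence of $R$ and the decaying schedule $\varepsilon_t=c_2t^{-\gamma}$; this gives $\|e_l^{\top}\widehat{Z}_1R\|\lesssim \sigma\sqrt{r\log d_1/T^{1-\gamma}}$. The bias contribution $e_l^{\top}\widehat{Z}_2R$ is handled separately, bounding it by the sup-norm control $\|\widehat{\Delta}_{t-1}\|_{\max}=o_p(\sigma)$ from Theorem~\ref{thm:MCB-conv} and checking it is of lower order. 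Dividing by $\lambda_{\min}$ shows $\|\widehat{L}-LO\|_{2,\max}\lesssim (\sigma/\lambda_{\min})\sqrt{r\log d_1/T^{1-\gamma}}$, which is indeed dominated by the target factor $(\sigma/\lambda_{\min})\sqrt{\mu r d_1 d_2\log d_1/T^{1-\gamma}}$. Finally, I would repeat the argument verbatim with the roles of $d_1,d_2$ and of $L,R$ exchanged (using $\|R\|_{2,\max}\le\sqrt{\mu r/d_2}$ and $X_t^{\top}L=e_{j_2(t)}(e_{j_1(t)}^{\top}L)$) to obtain the bound for $\|\widehat{R}\widehat{R}^{\top}-RR^{\top}\|_{2,\max}$, and conclude with a union bound over $l\in[d_1]$ (resp.\ $[d_2]$) and over the two arms.
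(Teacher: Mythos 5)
Your proposal is correct in substance but takes a genuinely different deterministic route from the paper's own proof of Lemma \ref{induction:new}. The paper adapts the spectral-projector representation series of \cite{xia2021normal} (following Theorem 4 and Lemma 9 of \cite{xia2021statistical}) to the martingale setting: it reduces the lemma to showing, for every $k\geq 0$, the row-wise bounds $\max_{j}\|e_j^{\top}\mathfrak{P}^{\perp}(\mathfrak{P}^{\perp}\widehat{E}\mathfrak{P}^{\perp})^{k}\widehat{E}\Theta\|\leq C_1(C_2\delta)^{k+1}\sqrt{\mu r/d_1}$ (resp.\ $\sqrt{\mu r/d_2}$), establishes the $k=0$ case by martingale Bernstein applied to $e_j^{\top}\widehat{Z}R$, and then invokes the argument of Lemma 4 in \cite{xia2021statistical}. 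You instead combine Wedin's theorem with the two-to-infinity bound of \cite{cape2019two} and the decomposition $\widehat{L}\widehat{L}^{\top}-LL^{\top}=(\widehat{L}-LO)\widehat{L}^{\top}+LO(\widehat{L}-LO)^{\top}$ — exactly the technique the paper deploys for the \emph{online iterates} in the proof of Lemma \ref{lemma2max}, transplanted to the debiased estimator $\widehat{M}^{\uipw}=M+\widehat{Z}$. Both routes share the same probabilistic kernel, which is the genuinely new ingredient here: adaptively-weighted martingale concentration for the rows $e_l^{\top}\widehat{Z}_1R$ and $e_l^{\top}\widehat{Z}_2R$ with inverse propensities $\pi_t^{-1}\lesssim t^{\gamma}$, plus Lemma \ref{Zbound:new} and the SNR condition to guarantee $\op{\widehat{Z}}\lesssim\lambda_{\min}$. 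Your route is more elementary in that it only needs the $k=0$ row estimate plus the quadratic remainder $\op{\sin\Theta(\widehat{L},L)}^2\|L\|_{2,\max}$, which the SNR condition renders negligible; the paper's series route has the advantage of giving row-wise control at every order $k$, in the exact form consumed by Lemmas 5--6 of \cite{xia2021statistical} inside the proof of Lemma \ref{lemmaneg3}.

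One quantitative slip, harmless but worth fixing: you dropped the $d_1d_2$ debiasing factor in $\widehat{Z}_1=\frac{b}{T}\sum_{t}\frac{d_1d_2\mathbbm{1}(a_t=1)}{\pi_t}\xi_tX_t$ when bookkeeping the variance. The conditional variance per step of $e_l^{\top}\widehat{Z}_1R$ is of order $\frac{d_1^2d_2^2}{T^2}\cdot\frac{2}{\varepsilon_t}\,\sigma^2\cdot\frac{1}{d_1d_2}\sum_{j_2}\|e_{j_2}^{\top}R\|^2\asymp \frac{t^{\gamma}r\sigma^2d_1d_2}{T^2}$, which sums to $\asymp r\sigma^2d_1d_2/T^{1-\gamma}$ rather than your claimed $r\sigma^2/T^{1-\gamma}$; hence $\|e_l^{\top}\widehat{Z}_1R\|\lesssim \sigma\sqrt{rd_1d_2\log d_1/T^{1-\gamma}}$ (this is exactly what the paper obtains), not $\sigma\sqrt{r\log d_1/T^{1-\gamma}}$. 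After dividing by $\lambda_{\min}$, the corrected term is of the \emph{same} order as the target $\frac{\sigma}{\lambda_{\min}}\sqrt{\mu rd_1d_2\log d_1/T^{1-\gamma}}$ (up to the factor $\sqrt{\mu}\geq 1$) rather than of strictly lower order, so the first summand matches, rather than is dominated by, the second — and the stated conclusion of the lemma still follows.
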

\noindent Notice that 
    \begin{align*}
        |\inp{\widehat{L}\widehat{L}^\top \widehat{Z} \widehat{R}\widehat{R}^\top - LL^\top \widehat{Z}RR^\top} {Q}| \le \|Q\|_{\ell_1} \|\widehat{L}\widehat{L}^\top \widehat{Z} \widehat{R}\widehat{R}^\top - LL^\top \widehat{Z}RR^\top\|_{\max}.
    \end{align*}
    Now by triangular inequality, 
    \begin{align*}
        \|\widehat{L}\widehat{L}^\top \widehat{Z} \widehat{R}\widehat{R}^\top - LL^\top \widehat{Z}RR^\top\|_{\max} \le  &   \|(\widehat{L}\widehat{L}^\top-LL^\top) \widehat{Z}RR^\top\|_{\max} + \|LL^\top \widehat{Z}( \widehat{R}\widehat{R}^\top -RR^\top)\|_{\max} \\
    &+  \|(\widehat{L}\widehat{L}^\top-LL^\top)  \widehat{Z} ( \widehat{R}\widehat{R}^\top- RR^\top)\|_{\max}\\
    \le & \|\widehat{Z}\| \left(\|\widehat{L} \widehat{L}^{\top}-LL^{\top}\|_{2,  { \max }}\|R\|_{2, { \max }} +\|\widehat{R}\widehat{R}^\top- RR^\top\|_{2,\max} \|L\|_{2, { \max }} \right) \\
    & + \|\widehat{Z}\| \|\widehat{L} \widehat{L}^{\top}-LL^{\top}\|_{2,  { \max }}\|\widehat{R}\widehat{R}^\top- RR^\top\|_{2,\max} \\
    \lesssim & \frac{\sigma^2}{\lambda_{\min}}\frac{\mu rd_1^{3/2}d_2^{1/2}\log d_1}{T^{1-\gamma}}.
    \end{align*}
    Then as long as $\frac{\|Q\|_{\ell 1}}{\sigma S\sqrt{d_1d_2/T^{1-\gamma}}}\cdot  \frac{\sigma^2}{\lambda_{\min}}\frac{\mu rd_1^{3/2}d_2^{1/2}\log d_1}{T^{1-\gamma}}=\frac{\|Q\|_{\ell_1}}{S}\frac{\sigma}{\lambda_{\min}}\sqrt{\frac{r^2d_1^2\log d_1}{T^{1-\gamma}}}\rightarrow 0$, we prove the claimed results. 
\end{proof}

\subsection{Proof of Lemma \ref{lemmaneg3}}
\begin{proof}
    Similarly as the proof of Lemma \ref{lemmaneg2}, we again define events $\mathcal{E}_0=\{\|\widehat{Z}\|\lesssim \sqrt{\frac{d_1^2d_2\log d_1}{T^{1-\gamma}}}\sigma:=\delta\}$, $\mathcal{E}_1=\{\|\widehat{L}\widehat{L}^{\top}-LL^{\top}\|_{2,\max}\lesssim \frac{\sigma}{\lambda_{\min}}\sqrt{\frac{d_1^2d_2\log d_1}{T^{1-\gamma}}}\sqrt{\frac{\mu r}{d_1}}, \|\widehat{R}\widehat{R}^{\top}-RR^{\top}\|_{2,\max}\lesssim \frac{\sigma}{\lambda_{\min}}\sqrt{\frac{d_1^2d_2\log d_1}{T^{1-\gamma}}}\sqrt{\frac{\mu r}{d_2}}\}$. Then by Lemma 5 and Lemma 6 in \cite{xia2021statistical}, under the $\mathcal{E}_0$ and $\mathcal{E}_1$, 
    \begin{align*}
        \big|\sum_{k=2}^{\infty} \inp{(\mathcal{S}_{A,k}(\widehat{E})A\Theta\Theta^{\top}+\Theta\Theta^{\top}A\mathcal{S}_{A,k}(\widehat{E}))}{\widetilde{Q}}\big|&\lesssim \|Q\|_{\ell_1}\frac{\mu r}{\lambda_{\min}\sqrt{d_1d_2}}\delta^2 \\
        &=\|Q\|_{\ell_1}\frac{\sigma^2}{\lambda_{\min}}\frac{\mu rd_1^{3/2}d_2^{1/2}\log d_1}{T^{1-\gamma}},
    \end{align*}
    and
    \begin{align*}
        \big|\inp{(\widehat{\Theta}\widehat{\Theta}^{\top}-\Theta\Theta^{\top})A(\widehat{\Theta}\widehat{\Theta}^{\top}-\Theta\Theta^{\top})}{\widetilde{Q}}\big|&\lesssim \|Q\|_{\ell_1}\|\Lambda\|\|\widehat{L} \widehat{L}^{\top}-LL^{\top}\|_{2,  { \max }}\|\widehat{R}\widehat{R}^\top- RR^\top\|_{2,\max} \\
        &\leq \|Q\|_{\ell_1}\frac{\sigma^2}{\lambda_{\min}}\frac{\mu r\kappa d_1^{3/2}d_2^{1/2}\log d_1}{T^{1-\gamma}}.
    \end{align*}
    Then as long as $\frac{\|Q\|_{\ell_1}}{S}\frac{\sigma}{\lambda_{\min}}\sqrt{\frac{r^2d_1^2\log d_1}{T^{1-\gamma}}}\rightarrow 0$, we prove the claimed results. 
\end{proof}

\subsection{Proof of Lemma \ref{indicator}}
\begin{proof}
    Without loss of generality, we just prove the $i=1$ case and $i=0$ case can be shown following the same arguments. \\
    Recall $\delta$ defined in Theorem \ref{thm:CLT}. Suppose $\inp{M_1-M_0}{X}>0$, then we have for any $T_0+1\leq t\leq T$,
    \begin{align*}
       \inp{\widehat{M}_{1,t}-\widehat{M}_{0,t}}{X}&=\inp{\widehat{M}_{1,t} - M_1}{X} + \inp{M_0-\widehat{M}_{0,t}}{X} + \inp{M_1-M_0}{X} \\
       &\geq \inp{M_1-M_0}{X} - 2\delta > 0
    \end{align*}
    with probability at least $1-8td_1^{-200}$. The second inequality comes from the gap condition in Assumption \ref{assump:arm-opt}. As a result, with the same probability, $\mathbbm{1}(\inp{\widehat{M}_{1,t}-\widehat{M}_{0,T}}{X}>0)=\mathbbm{1}(\inp{M_1-M_0}{X}>0)$.
\end{proof}

\subsection{Proof of Lemma \ref{Zbound:new}}
\begin{proof}
    We first look at $\|\widehat{Z}_1\|=\|\frac{b}{T}\sum_{t=T_0+1}^{T}\frac{d_1d_2\indicator(a_t=1)}{\pi_t}\xi_tX_t\|$.\\
    Since $\xi_t$ is subGaussian random variable, then for any $T_0+1\leq t\leq T$,
    \begin{align*}
        \bigg\|\|\frac{b}{T}\frac{d_1d_2\indicator(a_t=1)}{\pi_t}\xi_tX_t\|\bigg\|_{\psi_2}\leq \frac{2bd_1d_2}{T\varepsilon_t}\|\xi_t\|_{\psi_2}\lesssim \frac{d_1d_2\sigma}{T^{1-\gamma}}.
    \end{align*}
    Meanwhile,
    \begin{align*}
        &\sum_{t=T_0+1}^{T} \EE\left[\frac{b^2}{T^2}\frac{d_1^2d_2^2\indicator(a_t=1)}{\pi_t^2}\xi_t^2X_tX_t^{\top}\bigg|\mathcal{F}_{t-1}\right]\leq \frac{b^2d_1^2d_2^2}{T^2}\sum_{t=T_0+1}^{T}\frac{2}{\varepsilon_t}\sigma^2\EE[X_tX_t^{\top}] \\
        &\quad = \frac{b^2d_1^2d_2^2}{T^2}\sum_{t=1}^{T}\frac{2}{\varepsilon_t}\sigma^2\frac{1}{d_1}\boldsymbol{I}_{d_1} \lesssim \frac{d_1d_2^2\sigma^2}{T^{1-\gamma}}\boldsymbol{I}_{d_1}.
    \end{align*}
    By symmetry, we can have
    \begin{align*}
        &\max\left\{\bigg\|\sum_{t=T_0+1}^{T} \EE\left[\frac{b^2}{T^2}\frac{d_1^2d_2^2\indicator(a_t=1)}{\pi_t^2}\xi_t^2X_tX_t^{\top}\bigg|\mathcal{F}_{t-1}\right]\bigg\|, \right.\\ & \left.\quad\quad\quad \bigg\|\sum_{t=T_0+1}^{T} \EE\left[\frac{b^2}{T^2}\frac{d_1^2d_2^2\indicator(a_t=1)}{\pi_t^2}\xi_t^2X_t^{\top}X_t\bigg|\mathcal{F}_{t-1}\right]\bigg\|\right\} \lesssim \frac{d_1^2d_2\sigma^2}{T^{1-\gamma}}.
    \end{align*}
    Then by matrix Bernstein inequality, with probability at least $1-d_1^{-2}$,
    \begin{align*}
        \|\widehat{Z}_1\|&\lesssim \frac{d_1d_2\sigma}{T^{1-\gamma}}\log d_1 + \sqrt{\frac{d_1^2d_2\sigma^2\log d_1}{T^{1-\gamma}}} \lesssim \sqrt{\frac{d_1^2d_2\log d_1}{T^{1-\gamma}}}\sigma.
    \end{align*}
    The first term will be dominated by the second term since $T\gg d_1^{1/(1-\gamma)}\log^3d_1$. \\
    To prove the upper bound for $\|\widehat{Z}_2\|=\|\frac{b}{T}\sum_{t=T_0+1}^{T}\frac{d_1d_2\indicator(a_t=1)}{\pi_t}\inp{\widehat{\Delta}_{t-1}}{X_t}X_t-\widehat{\Delta}_{t-1}\|$, we follow the same arguments by observing that for any $T_0+1\leq t\leq T$,
    \begin{align*}
        \bigg\|\frac{b}{T}\left(\frac{d_1d_2\indicator(a_t=1)}{\pi_t}\inp{\widehat{\Delta}_{t-1}}{X_t}X_t-\widehat{\Delta}_{t-1}\right)\bigg\|&\leq\frac{bd_1d_2}{T}\frac{2}{\varepsilon_t}\|\widehat{\Delta}_{t-1}\|_{\max}+\frac{b}{T}\|\widehat{\Delta}_{t-1}\| \\
        &\lesssim \frac{d_1d_2}{T^{1-\gamma}}\|\widehat{\Delta}_{t-1}\|_{\max} + \frac{\sqrt{d_1d_2}}{T}\|\widehat{\Delta}_{t-1}\|_{\max} \lesssim \frac{d_1d_2\sigma}{T^{1-\gamma}}.
    \end{align*}
    The second inequality comes from the fact that $\|\widehat{\Delta}_{t-1}\|\leq \sqrt{d_1d_2}\|\widehat{\Delta}_{t-1}\|_{\max}$, and in the last inequality we again use $\|\widehat{\Delta}_{t-1}\|_{\max}\lesssim \sigma$. \\
    Meanwhile,
    \begin{align*}
        &\quad \bigg\|\frac{b^2}{T^2}\sum_{t=T_0+1}^{T}\EE\left[\left(\frac{d_1d_2\indicator(a_t=1)}{\pi_t}\inp{\widehat{\Delta}_{t-1}}{X_t}X_t-\widehat{\Delta}_{t-1}\right)\left(\frac{d_1d_2\indicator(a_t=1)}{\pi_t}\inp{\widehat{\Delta}_{t-1}}{X_t}X_t-\widehat{\Delta}_{t-1}\right)^{\top}\big|\mathcal{F}_{t-1}\right]\bigg\| \\
        &\leq \frac{b^2}{T^2}\sum_{t=T_0+1}^T \bigg\|\EE\left[\frac{d_1^2d_2^2\indicator(a_t=1)}{\pi_t^2}\inp{\widehat{\Delta}_{t-1}}{X_t}^2X_tX_t^{\top}\big|\mathcal{F}_{t-1}\right]\bigg\| + \|\widehat{\Delta}_{t-1}\|^2 \\
        &\leq \frac{b^2}{T^2}\sum_{t=T_0+1}^T \frac{2d_1^2d_2^2}{\varepsilon_t}\|\widehat{\Delta}_{t-1}\|_{\max}^2\|\EE(X_tX_t^{\top})\| + d_1d_2\|\widehat{\Delta}_{t-1}\|_{\max}^2 \lesssim \frac{d_1d_2^2\sigma^2}{T^{1-\gamma}}.
    \end{align*}
    Finally, by matrix Bernstein inequality, with probability $1-d_1^{-2}$,
    \begin{align*}
        \|\widehat{Z}_2\|&\lesssim \sqrt{\frac{d_1^2d_2\log d_1}{T^{1-\gamma}}}\sigma.
    \end{align*}
\end{proof}

\subsection{Proof of Lemma \ref{induction:new}}
\begin{proof}
    Consider the event in Theorem \ref{Zbound:new}
    $$\calE = \bigg\{\|\tilde Z\|\lesssim \sqrt{\frac{d_1^2d_2\log d_1}{T^{1-\gamma}}}\sigma\bigg\}.$$
    The following proof is adapted from Theorem 4 and Lemma 9 in \cite{xia2021statistical}. We adapt it to martingale version.
    Under $\calE$, we have $\|\widehat Z\|\lesssim \sqrt{\frac{d_1^2d_2\log d_1}{T^{1-\gamma}}}\sigma\lesssim \lambda_{\min} =: \delta$. 
    As long as $\sqrt{\frac{d_1^2d_2\log d_1}{T^{1-\gamma}}}\sigma\lesssim \lambda_{\min}$, we have $\|\widehat E\|=\|\widehat Z\| \lesssim \lambda_{\min}$. 
    Recall the $\mathfrak{P}^{-s}$ for $s\geq 0$ and $\mathfrak{P}^{\perp}$ defined in the proof of Theorem \ref{thm:CLT}. Following the proof in \cite{xia2021statistical}, it suffices to show under $\calE$, there exist absolute constants $C_1$, $C_2>0$ so that for all $k\geq 0$, the following bounds hold with probability at least $1-2(k+1)d_1^{-2}$,
    \begin{align*}
        &\max_{j \in [d_1]} \|e_{j}^{\top}\mathfrak{P}^{\perp}(\mathfrak{P}^{\perp}\widehat{E}\mathfrak{P}^{\perp})^k\widehat{E}\Theta\|\leq C_1(C_2\delta)^{k+1}\sqrt{\frac{\mu r}{d_1}}, \\
        &\max_{j \in [d_2]} \|e_{j+d_1}^{\top}\mathfrak{P}^{\perp}(\mathfrak{P}^{\perp}\widehat{E}\mathfrak{P}^{\perp})^k\widehat{E}\Theta\|\leq C_1(C_2\delta)^{k+1}\sqrt{\frac{\mu r}{d_2}}.
    \end{align*}
Here we only present the proof for $k=0$ and the cases $k\geq 1$ can be similarly extended. 
    Clearly, for $j\in [d_1]$,
    \begin{align*}
        \|e_j^{\top}\mathfrak{P}^{\perp}\widehat{E}\Theta\|\leq \|e_j^{\top}\Theta\Theta^{\top}\widehat{E}\Theta\| + \|e_j^{\top}\widehat{E}\Theta\|\leq \delta\sqrt{\frac{\mu r}{d_1}} + \|e_j^{\top}\widehat{E}\Theta\|= \delta\sqrt{\frac{\mu r}{d_1}} + \|e_j^{\top}\widehat{Z}R\|. 
    \end{align*}
    We write
    \begin{align*}
        e_{j}^{\top}\widehat{Z}R &= \frac{b}{T}\left(\sum_{t=T_0+1}^T \frac{d_1d_2\mathbbm{1}(a_t=1)}{\pi_t}\xi_te_j^{\top}X_tR \right.\\ & \left.\quad + \sum_{t=T_0+1}^T \frac{d_1d_2\mathbbm{1}(a_t=1)}{\pi_t}\inp{\widehat{\Delta}_{t-1}}{X_t}e_j^{\top}X_tR - e_j\widehat{\Delta}_{t-1}R\right).
    \end{align*} 
    Obviously, for any $T_0+1\leq t\leq T$,
    \begin{align*}
        \left\|\big\|\frac{bd_1d_2\mathbbm{1}(a_t=1)}{T\pi_t}e_j^{\top}\xi_tX_tR \big\|\right\|_{\psi_2}\lesssim \frac{d_1d_2}{T^{1-\gamma}}\sigma\|R\|_{2,\max}\leq \frac{d_1d_2}{T^{1-\gamma}}\sigma\sqrt{\frac{\mu r}{d_2}}
    \end{align*}
    and
    \begin{align*}
        \EE\left[\frac{b^2d_1^2d_2^2\mathbbm{1}(a_t=1)}{T^2\pi_t^2}\xi_t^2e_j^{\top}X_tRR^{\top}X_t^{\top}e_j \bigg|\mathcal{F}_{t-1}\right]\lesssim \frac{t^{\gamma}\sigma^2d_1d_2}{T^2}\text{tr}(RR^{\top})\leq \frac{t^{\gamma}r\sigma^2d_1d_2}{T^{2}}.
    \end{align*}
    Then by martingale Bernstein inequality, with probability at least $1-d_1^{-3}$,
    \begin{align*}
        \left\|\frac{b}{T}\sum_{t=T_0+1}^T \frac{d_1d_2\mathbbm{1}(a_t=1)}{\pi_t}\xi_te_j^{\top}X_tR \right\|\lesssim \sqrt{\frac{rd_1d_2\log d_1}{T^{1-\gamma}}}\sigma.
    \end{align*}
    Similarly, with the same probability,
    \begin{align*}
        \left\|\frac{b}{T}\sum_{t=T_0+1}^T \frac{d_1d_2\mathbbm{1}(a_t=1)}{\pi_t}\inp{\widehat{\Delta}_{t-1}}{X_t}e_j^{\top}X_tR - e_j\widehat{\Delta}_{t-1}R \right\|\lesssim \sqrt{\frac{rd_1d_2\log d_1}{T^{1-\gamma}}}\sigma.
    \end{align*}
    Then we can conclude that 
    \begin{align*}
        \|e_{j}^{\top}\widehat{Z}R\|\lesssim \sqrt{\frac{rd_1d_2\log d_1}{T^{1-\gamma}}}\sigma
    \end{align*}
    with probability at least $1-3d_1^{-3}$. Taking union bound, 
    \begin{align*}
        \PP\bigg(\max_{j\in [d_1]}\|e_j^{\top}\mathfrak{P}^{\perp}\widehat{E}\Theta\|\gtrsim \delta\sqrt{\frac{\mu r}{d_1}}\bigg)\leq 3d_1^{-2},\\
        \PP\bigg(\max_{j\in [d_2]}\|e_j^{\top}\mathfrak{P}^{\perp}\widehat{E}\Theta\|\gtrsim \delta\sqrt{\frac{\mu r}{d_2}}\bigg)\leq 3d_1^{-2}.
    \end{align*}
Then following the proof of Lemma 4 in \cite{xia2021statistical}, 
    \begin{align*}
        \PP\bigg(\|\widehat{L}\widehat{L}^{\top}-LL^{\top}\|_{2,\max}
        \gtrsim \frac{\sigma}{\lambda_{\min}}\sqrt{\frac{d_1^2d_2\log d_1}{T^{1-\gamma}}}\sqrt{\frac{\mu r}{d_1}}\bigg|\calE\bigg)\leq 5d_1^{-2}\log^2d_1.
    \end{align*}
    And similarly we can prove the statement for $\|\widehat{R}\widehat{R}^{\top}-RR^{\top}\|_{2,\max}$. 

\end{proof}

\subsection{Proof of Theorem \ref{thm:regretK}}\label{sec:proofregretK}
\begin{proof}
Without loss of generality, assume at time $t$, arm 1 is optimal, i.e., for any $k\in [K]\backslash \{1\}$, $\inp{M_1-M_k}{X_t}>0$, then the expected regret is
\begin{align*}
    r_t&=\EE\left[\sum_{k\in [K]\backslash \{1\}} \inp{M_1-M_k}{X_t}\mathbbm{1}(\text{choose arm k}) \right] \\
    &= \EE\left[\sum_{k\in [K]\backslash \{1\}} \inp{M_1-M_k}{X_t}\mathbbm{1}(\text{choose arm k}, \inp{\widehat{M}_{1,t-1}-\widehat{M}_{k,t-1}}{X_t}\geq 0)\right] \\ &\quad + \EE\left[\sum_{k\in [K]\backslash \{1\}}\inp{M_1-M_k}{X_t}\mathbbm{1}(\text{choose arm 0}, \inp{\widehat{M}_{k,t-1}-\widehat{M}_{1,t-1}}{X_t}> 0)\right] \\
    &\leq \frac{(K-1)\varepsilon_t}{K}\bar{m} + \underbrace{\EE\left[\sum_{k\in [K]\backslash \{1\}} \inp{M_1-M_k}{X_t}\mathbbm{1}(\inp{\widehat{M}_{k,t-1}-\widehat{M}_{1,t-1}}{X_t}> 0)\right]}_{(2)}.
\end{align*}
Define $\delta_t^2$ the entrywise bound in Theorem \ref{thm:MCB-convK} such that $\|U_{i,t}V_{i,t}^{\top} - M\|_{\max}^2\leq \delta_t^2$ with probability at least $1-8td_1^{-200}$, and event $B_{k,t}=\{\inp{M_1-M_k}{X_t}>2\delta_t\}$, then the latter expectation can be written as
\begin{align*}
    (2)&\leq \EE\left[\sum_{k\in [K]\backslash \{1\}} \inp{M_1-M_k}{X_t}\mathbbm{1}\left(\inp{\widehat{M}_{k,t-1}-\widehat{M}_{1,t-1}}{X_t}> 0\cap B_{k,t}\right)\right] \\ &\quad + \EE\left[\sum_{k\in [K]\backslash \{1\}} \inp{M_1-M_k}{X_t}\mathbbm{1}\left(\inp{\widehat{M}_{k,t-1}-\widehat{M}_{1,t-1}}{X_t}> 0\cap B_{k,t}^c\right)\right] \\
    &\leq \sum_{k\in [K]\backslash \{1\}}\bar{m} \EE\left[\mathbbm{1}\left(\inp{\widehat{M}_{k,t-1}-\widehat{M}_{1,t-1}}{X_t}> 0\cap B_t\right)\right] + 2\delta_t\EE\left[\sum_{k\in [K]\backslash \{1\}} \mathbbm{1}(B_{k,t}^c)\right].
\end{align*}
For the first term, under event $B_{k,t}$ and $\inp{\widehat{M}_{k,t-1}-\widehat{M}_{1,t-1}}{X_t}> 0$, 
\begin{align*}
    0>\inp{\widehat{M}_{1,t-1}-\widehat{M}_{k,t-1}}{X_t} &= \inp{\widehat{M}_{1,t-1}-M_1}{X_t} + \inp{M_k-\widehat{M}_{k,t-1}}{X_t} + \inp{M_1-M_k}{X_t}\\
    &> \inp{\widehat{M}_{1,t-1}-M_1}{X_t} + \inp{M_k-\widehat{M}_{k,t-1}}{X_t} + 2\delta_t.
\end{align*}
This means either $\inp{\widehat{M}_{1,t-1}-M_1}{X_t}<-\delta_t$ or $\inp{M_k-\widehat{M}_{k,t-1}}{X_t}<-\delta_t$, which further implies either $\|\widehat{M}_{1,t-1}-M_1\|_{\max}$ or $\|M_k-\widehat{M}_{k,t-1}\|_{\max}$ should be larger than $\delta_t$. Therefore, 
\begin{align*}
    \EE\left[\mathbbm{1}(\inp{{M}_{k,t-1}-{M}_{1,t-1}}{X_t}> 0)\cap B_t\right]&= \PP\left(\inp{{M}_{k,t-1}-{M}_{1,t-1}}{X_t}> 0 \cap B_{k,t}\right) \\
    &\leq \PP\left(\|{M}_{1,t-1}-M_1\|_{\max}>\delta_t\right) + \PP\left(\|{M}_{k,t-1}-M_k\|_{\max}>\delta_t\right) \\
    &\leq \frac{16(t-1)}{d^{200}},
\end{align*}
For the other term, since $\EE\left[\mathbbm{1}(B_{k,t}^c)\right]=\PP(\inp{M_1-M_k}{X_t}<2\delta_t)\leq 1$, then
\begin{align*}
    r_t&\leq \varepsilon_t\bar{m} + \frac{16(K-1)(t-1)}{d^{200}}\bar{m}  + 2(K-1)\delta_t. 
\end{align*}
The total regret up to time $T$ is
\begin{align*}
    R_T&=\sum_{t=1}^{T} r_t \leq \sum_{t=1}^T\varepsilon_t \bar{m} + \frac{8KT(T-1)}{d_1^{200}}\bar{m}  + 2K\sum_{t=1}^{T} \delta_t.
\end{align*}
Since $\delta_t^2= C_3\frac{\lambda_{\min}^2r^3}{d_1d_2}  \prod_{\tau=1}^{t}(1-\frac{c_1\log d_1}{4\kappa T^{1-\gamma}}) + C_4\frac{t\cdot rd_1\log^4d_1\bar{\sigma}^2}{T^{2(1-\gamma)}}$ when $1\leq t\leq T_0$ and $\delta_t^2=C_5\frac{rd_1\log^4d_1\bar{\sigma}^2}{T^{1-\gamma}}$ when $T_0+1\leq t\leq T$, for some constants $C_3$ ,$C_4$ and $C_5$. Therefore,
\begin{align*}
    R_T&\lesssim \sum_{t=1}^{T_0}\varepsilon \bar{m} + \sum_{t=T_0 + 1}^{T}\frac{1}{t^\gamma}\bar{m}  + K\frac{\bar{m}T^2}{d_1^{200}} \\ &\quad + K\sum_{t=1}^{T_0} \sqrt{\frac{\lambda_{\min}^2r^3}{d_1d_2}  \prod_{\tau=1}^{t}(1-\frac{c_1\log d_1}{4\kappa T^{1-\gamma}})} + K\sum_{t=1}^{T_0} \sqrt{\frac{t\cdot rd_1\log^4d_1\bar{\sigma}^2}{T^{2(1-\gamma)}}} + K\sum_{t=T_0+1}^{T} \sqrt{\frac{rd_1\log^4d_1\bar{\sigma}^2}{T^{1-\gamma}}}.
\end{align*}
Since $T\ll d_1^{50}$, the third term is negligible. As a result, from the fact that $\sum_{t=1}^T t^{-\gamma}\lesssim T^{1-\gamma}$,
\begin{align*}
    R_T&\lesssim T^{1-\gamma}\bar{m} + KT^{1-\gamma}\frac{\lambda_{\min}r^{3/2}}{\sqrt{d_1d_2}}  + KT^{(1+\gamma)/2}\sqrt{rd_1\log^4d_1\bar{\sigma}^2} \\
    &\lesssim T^{1-\gamma}\bar{m} + KT^{(1+\gamma)/2}\sqrt{rd_1\log^4d_1\bar{\sigma}^2}.
\end{align*}
\end{proof}

\subsection{Proof of Theorem \ref{thm:CLTK}}\label{sec:proofCLTK}
\begin{proof}
    Without loss of generality, we prove $a=1$ case and omit all the subscript. Recall that $\inp{\widehat{M}}{Q}-\inp{M}{Q}$ can be decomposed into
    \begin{align}
    \inp{\widehat{M}}{Q}-\inp{M}{Q}&= \inp{LL^{\top}\widehat{Z}_1R_{\perp}R_{\perp}^{\top}}{Q} + \inp{L_{\perp}L_{\perp}^{\top}\widehat{Z}_1RR^{\top}}{Q} + \inp{LL^{\top}\widehat{Z}_1RR^{\top}}{Q} \label{main1K} \\ &\quad + \inp{LL^{\top}\widehat{Z}_2R_{\perp}R_{\perp}^{\top}}{Q} + \inp{L_{\perp}L_{\perp}^{\top}\widehat{Z}_2RR^{\top}}{Q} + \inp{LL^{\top}\widehat{Z}_2RR^{\top}}{Q}\label{neg1K} \\ &\quad + \inp{\widehat{L}\widehat{L}^{\top}\widehat{Z}\widehat{R}\widehat{R}^{\top}}{Q} - \inp{LL^{\top}\widehat{Z}RR^{\top}}{Q} \label{neg2K} \\ &\quad + \inp{\sum_{k=2}^{\infty} (\mathcal{S}_{A,k}(\widehat{E})A\Theta\Theta^{\top}+\Theta\Theta^{\top}A\mathcal{S}_{A,k}(\widehat{E}))}{\widetilde{Q}} + \inp{(\widehat{\Theta}\widehat{\Theta}^{\top}-\Theta\Theta^{\top})A(\widehat{\Theta}\widehat{\Theta}^{\top}-\Theta\Theta^{\top})}{\widetilde{Q}}, \label{neg3K}
\end{align}
    where $\widehat{Z}_1=\frac{b}{T}\sum_{t=T_0+1}^T \frac{d_1d_2\mathbbm{1}(a_t=1)}{\pi_t}\xi_tX_t$ and $\widehat{Z}_2=\frac{b}{T}\sum_{t=T_0+1}^{T} \left(\frac{d_1d_2\mathbbm{1}(a_t=1)}{\pi_t}\inp{\widehat{\Delta}_{t-1}}{X_t}X_t-\widehat{\Delta}_{t-1}\right)$. The only different analysis is the main term (\ref{main1K}). (\ref{neg1K})-(\ref{neg3K}) can be shown negligible following the same arguments in Lemma \ref{lemmaneg1}-\ref{lemmaneg3}. Next we show the asymptotic normality of (\ref{main1K}).
    
    Denote $S^2=1/T^{\gamma}\fro{\mathcal{P}_{\Omega_1}(\mathcal{P}_M(Q))}^2+C_{\gamma}\fro{\mathcal{P}_{\cup_{k\in [K]\backslash \{1\}} \Omega_k}(\mathcal{P}_{M}(Q))}^2$. By definition, 
    \begin{align*}
        &\frac{\sqrt{\frac{T^{1-\gamma}}{d_1d_2}}\left(\inp{LL^{\top}\widehat{Z}_1R_{\perp}R_{\perp}^{\top}}{Q} + \inp{L_{\perp}L_{\perp}^{\top}\widehat{Z}_1RR^{\top}}{Q} + \inp{LL^{\top}\widehat{Z}_1RR^{\top}}{Q} \right)}{\sigma S} \\ &\quad = \frac{\sqrt{\frac{1}{T^{1+\gamma}}} \sum_{t=T_0+1}^T \frac{b\sqrt{d_1d_2}\mathbbm{1}(a_t=1)}{\pi_t}\xi_t\inp{X_t}{\mathcal{P}_M(Q)}}{\sigma S}.
    \end{align*}
    Similarly, we apply Theorem 3.2 and Corollary 3.1 in \cite{hall2014martingale}, the Martingale Central Limit Theorem to show the asymptotic normality. \\
    \emph{Step 1: checking Lindeberg condition.} \\
    For any $\delta>0$,
    \begin{align*}
        &\quad \sum_{t=T_0+1}^T \EE\left[\frac{b^2d_1d_2\mathbbm{1}(a_t=1)}{\sigma^2S^2T^{1+\gamma}\pi_t^2}\xi_t^2\inp{X_t}{\mathcal{P}_M(Q)}^2 \times  \mathbbm{1}\left(\left|\xi_t\frac{\sqrt{d_1d_2}\mathbbm{1}(a_t=1)\inp{X_t}{\mathcal{P}_M(Q)}}{\sigma S\sqrt{T^{1+\gamma}}\pi_t}\right|>\delta \right)\bigg|\mathcal{F}_{t-1}\right] \\
        &\leq \frac{b^2d_1d_2}{\sigma^2S^2T^{1+\gamma}}\sum_{t=T_0+1}^T \EE\left[\frac{\mathbbm{1}(a_t=1)}{\pi_t^2}\xi_t^2\inp{X_t}{\mathcal{P}_M(Q)}^2 \times  \mathbbm{1}\left(\left|\xi_t\inp{X_t}{\mathcal{P}_M(Q)}\right|>\frac{\sigma S\delta\sqrt{T^{1+\gamma}}\varepsilon_t}{2\sqrt{d_1d_2}} \right)\bigg|\mathcal{F}_{t-1}\right] \\
        &\lesssim \frac{b^2d_1d_2}{\sigma^2S^2T^{1+\gamma}}  \sum_{t=T_0+1}^T \frac{2}{\varepsilon_t}\max_{X\in \mathcal{X}} \inp{X}{\mathcal{P}_M(Q)}^2 \times \sqrt{\EE\left[\mathbbm{1}\left(\left|\xi_t\inp{X_t}{\mathcal{P}_M(Q)}\right|>\frac{\sigma S\delta\sqrt{T^{1+\gamma}}\varepsilon_t}{2\sqrt{d_1d_2}} \right)\right]},
    \end{align*}
    where in the last inequality we use $\EE[XY]\leq \sqrt{\EE[X^2]\EE[Y^2]}$.
    By incoherence condition, 
    \begin{align*}       
    \max_{X\in \mathcal{X}} \big|\inp{X}{\mathcal{P}_M(Q)}\big|\lesssim \sqrt{\frac{\mu r}{d_2}}\fro{\mathcal{P}_M(Q)}.
    \end{align*}
    Moreover, since $\xi_t$ is a subGaussian random variable, so the product $\xi_t\inp{X_t}{\mathcal{P}_M(Q)}$ has subGaussian tail probability
    \begin{align*}
        \PP\left(\left|\xi_t\inp{X_t}{\mathcal{P}_M(Q)}\right|> \frac{\sigma S\delta\sqrt{T^{1+\gamma}}\varepsilon_t}{2\sqrt{d_1d_2}}\right)< 2e^{-\frac{\sigma^2S^2\delta^2T^{1+\gamma}\varepsilon_t^2}{8d_1d_2\nu^2}},    
    \end{align*}
    where $\nu$ is the subGaussian parameter of order $O(\sqrt{\frac{\mu r}{d_2}}\fro{\mathcal{P}_M(Q)}\sigma)$. Notice that $T^{1+\gamma}\varepsilon_t^2\leq T^{1+\gamma}\varepsilon_T^2=O(T^{1-\gamma})$. By property of exponential function, $\frac{d_1d_2\sigma^2}{\sigma^2S^2T^{1+\gamma}} \frac{\mu r}{d_2}\fro{\mathcal{P}_M(Q)}^2\sum_{t=T_0+1}^{T} \frac{2}{\varepsilon_t}2e^{-\frac{\sigma^2S^2\delta^2T^{1+\gamma}\varepsilon_t^2}{16d_1d_2\nu^2}}$ converges to 0 as long as $\frac{d_1}{T^{1-\gamma}} \rightarrow 0$. Then the Lindeberg condition is satisfied. \\
    \emph{Step 2: calculating the variance} \\
    Next, we show the conditional variance equals to 1. Recall the definition $\Omega_1=\{X\in \mathcal{X}: \inp{M_1-M_0}{X}> \delta_{T,d_1,d_2}\}$, $\Omega_0=\{X\in \mathcal{X}: \inp{M_1-M_0}{X}< \delta_{T,d_1,d_2}\}$ and $\Omega_{\emptyset}=\mathcal{X}\cup (\Omega_1\cup\Omega_0)^c$. Then
    \begin{align*}
        &\quad \frac{b^2d_1d_2}{\sigma^2S^2T^{1+\gamma}}\sum_{t=T_0+1}^T \EE\left[\frac{\mathbbm{1}(a_t=1)}{\pi_t^2}\xi_t^2\inp{X_t}{\mathcal{P}_M(Q)}^2\bigg|\mathcal{F}_{t-1}\right] \\ 
        &= \frac{b^2}{\sigma^2S^2T^{1+\gamma}}\sigma^2 \sum_{t=T_0+1}^{T} \sum_{X\in \Omega_1} \indicator\left(\max_{k\in [K]}\inp{\widehat{M}_{k,t-1}}{X}=1\right)\frac{1}{1-\frac{\varepsilon_t}{K}}\inp{X}{\mathcal{P}_M(Q)}^2 \\ &\quad\quad\quad\quad +  \indicator\left(\max_{k\in [K]}\inp{\widehat{M}_{k,t-1}}{X}\neq 1\right)\frac{1}{\frac{\varepsilon_t}{K}}\inp{X}{\mathcal{P}_M(Q)}^2 \\ &\quad + \sum_{k \in [K]\backslash \{1\}}\sum_{X\in \Omega_k} \indicator\left(\max_{k\in [K]}\inp{\widehat{M}_{k,t-1}}{X}=1\right)\frac{1}{1-\frac{\varepsilon_t}{K}}\inp{X}{\mathcal{P}_M(Q)}^2  +  \indicator\left(\max_{k\in [K]}\inp{\widehat{M}_{k,t-1}}{X}\neq 1\right)\frac{1}{\frac{\varepsilon_t}{K}}\inp{X}{\mathcal{P}_M(Q)}^2 \\
        &\quad + \sum_{X\in \Omega_\emptyset} \indicator\left(\max_{k\in [K]}\inp{\widehat{M}_{k,t-1}}{X}=1\right)\frac{1}{1-\frac{\varepsilon_t}{K}}\inp{X}{\mathcal{P}_M(Q)}^2  +  \indicator\left(\max_{k\in [K]}\inp{\widehat{M}_{k,t-1}}{X}\neq 1\right)\frac{1}{\frac{\varepsilon_t}{K}}\inp{X}{\mathcal{P}_M(Q)}^2.
    \end{align*}
    We aim to show the convergence of the indicator function. For any $X\in \cup_{k\in [K]} \Omega_k$ and any $j,w\in [K]$, suppose $\inp{M_j-M_w}{X}>0$, we have for any $T_0+1\leq t\leq T$,
    \begin{align*}
       \inp{\widehat{M}_{j,t}-\widehat{M}_{w,t}}{X}&=\inp{\widehat{M}_{j,t} - M_j}{X} + \inp{M_w-\widehat{M}_{w,t}}{X} + \inp{M_j-M_w}{X} \\
       &\geq \inp{M_j-M_w}{X} - 2\delta >0
    \end{align*}
    with probability at least $1-8td_1^{-200}$, where $\delta$ is defined in Theorem \ref{thm:CLTK}. The second inequality comes from the gap condition in Assumption \ref{assump:arm-optK}. Similarly, if $\inp{M_j-M_w}{X}<0$, we can show $ \inp{\widehat{M}_{j,t}-\widehat{M}_{w,t}}{X}<0$ with the same probability. As a result, with the same probability, $\mathbbm{1}(\inp{\widehat{M}_{j,t}-\widehat{M}_{w,T}}{X}>0)=\mathbbm{1}(\inp{M_j-M_w}{X}>0)$. Combine with a union bound, with probability at least $1-8Ktd_1^{-200}$,
    \begin{align*}
        \indicator\left(\max_{k\in [K]}\inp{\widehat{M}_{k,t-1}}{X}=1\right)&=\prod_{k\in [K]\backslash \{1\}} \indicator\left(\inp{\widehat{M}_{1,t}-\widehat{M}_{k,t}}{X}>0\right) \\
        &= \prod_{k\in [K]\backslash \{1\}} \indicator\left(\inp{{M}_{1,t}-{M}_{k,t}}{X}>0\right) \\
        &= \indicator\left(\max_{k\in [K]}\inp{{M}_{k}}{X}=1\right).
    \end{align*}
    Therefore, $\sum_{X\in \Omega_1} \indicator(\max_{k\in [K]}\inp{\widehat{M}_{k,t-1}}{X}=1)\inp{X}{\mathcal{P}_M(Q)}^2\rightarrow \fro{P_{\Omega_1}(P_{M}(Q))}^2$ and \\ $\sum_{k \in [K]\backslash \{1\}}\sum_{X\in \Omega_k} \indicator(\max_{k\in [K]}\inp{\widehat{M}_{k,t-1}}{X}\neq 1)\inp{X}{\mathcal{P}_M(Q)}^2\rightarrow \fro{P_{\cup_{k\in [K]\backslash \{1\}} \Omega_k}(P_{M}(Q))}^2$. On the other hand, $\sum_{X\in \Omega_1} \indicator(\max_{k\in [K]}\inp{\widehat{M}_{k,t-1}}{X}\neq 1)\inp{X}{\mathcal{P}_M(Q)}^2\rightarrow 0$ and \\ $\sum_{k \in [K]\backslash \{1\}}\sum_{X\in \Omega_k} \indicator(\max_{k\in [K]}\inp{\widehat{M}_{k,t-1}}{X}= 1)\inp{X}{\mathcal{P}_M(Q)}^2\rightarrow 0$.
    Also note that $\frac{b}{T}\sum_{t=T_0+1}^{T} 1/(1-\varepsilon_t/K)\rightarrow 1$ and $\frac{b}{T^{1+\gamma}}\sum_{t=T_0+1}^{T} K/\varepsilon_t\rightarrow \frac{K}{c_2(1+\gamma)}$. And by Assumption \ref{assump:arm-optK}, $\sum_{X\in \Omega_\emptyset} \indicator(\max_{k\in [K]}\inp{\widehat{M}_{k,t-1}}{X}=1)\frac{1}{1-\frac{\varepsilon_t}{K}}\inp{X}{\mathcal{P}_M(Q)}^2  +  \indicator(\max_{k\in [K]}\inp{\widehat{M}_{k,t-1}}{X}\neq 1)\frac{1}{\frac{\varepsilon_t}{K}}\inp{X}{\mathcal{P}_M(Q)}^2\leq K/\varepsilon_t \fro{\calP_{\Omega_{\emptyset}}(\calP_{M}(Q))}^2$ is negligible. Then the conditional variance will converge in probability to 1. It follows
    \begin{align*}
        \frac{\sum_{t=T_0+1}^T \frac{d_1d_2\mathbbm{1}(a_t=1)}{\pi_t}\xi_t\inp{X_t}{\mathcal{P}_M(Q)}}{\sigma S\sqrt{d_1d_2/T^{1-\gamma}}} \rightarrow N(0,1)
    \end{align*}
    when $T, d_1, d_2\rightarrow \infty$.
\end{proof}

\subsection{Proof of Corollary \ref{cor:m1-m0K}}
\begin{proof}
    Note that for any $g,h\in [K]$
    \begin{align*}
        (\inp{\widehat{M}_g}{Q} - \inp{\widehat{M}_h}{Q}) - (\inp{M_g}{Q} - \inp{M_h}{Q})= (\inp{\widehat{M}_g}{Q} - \inp{M_g}{Q}) - (\inp{\widehat{M}_h}{Q} - \inp{M_h}{Q}),
    \end{align*}
    then $\inp{\widehat{M}_g}{Q} - \inp{M_g}{Q}$ and $\inp{\widehat{M}_h}{Q} - \inp{M_h}{Q}$ can be decomposed into main term and negligible terms in the same way as Theorem \ref{thm:CLT}. The upper bound of all the negligible terms follow the Lemma \ref{lemmaneg1}-\ref{lemmaneg3}. The main CLT term is
    \begin{align*}
        \frac{b}{T}\sum_{t_1=T_0+1}^{T}\frac{d_1d_2\mathbbm{1}(a_{t_1}=g)}{\pi_{g,t_1}}\xi_{t_1}\inp{X_{t_1}}{\mathcal{P}_{M_1}(Q)} - \frac{b}{T}\sum_{t_2=T_0+1}^{T}\frac{d_1d_2\mathbbm{1}(a_{t_2}=h)}{\pi_{h,t_2}}\xi_{t_2}\inp{X_{t_2}}{\mathcal{P}_{M_0}(Q)}.
    \end{align*}
    As long as we show $\sum_{t_1=T_0+1}^{T}\frac{d_1d_2\mathbbm{1}(a_{t_1}=g)}{\pi_{g,t_1}}\xi_{t_1}\inp{X_{t_1}}{\mathcal{P}_{M_g}(Q)}$ is uncorrelated with \\ $\sum_{t_2=T_0+1}^{T}\frac{d_1d_2\mathbbm{1}(a_{t_2}=h)}{\pi_{h,t_2}}\xi_{t_2}\inp{X_{t_2}}{\mathcal{P}_{M_h}(Q)}$, then the asymptotic variance is the sum of their individual variance, i.e., $\sigma_g^2S_g^2+\sigma_h^2S_h^2$. Notice that,
    \begin{align*}
        &\quad \frac{b^2}{T^2}\sum_{t_1=T_0+1}^{T}\frac{d_1d_2\mathbbm{1}(a_{t_1}=g)}{\pi_{g,t_1}}\xi_{t_1}\inp{X_{t_1}}{\mathcal{P}_{M_g}(Q)} \sum_{t_2=T_0+1}^{T}\frac{d_1d_2\mathbbm{1}(a_{t_2}=h)}{\pi_{h,t_2}}\xi_{t_2}\inp{X_{t_2}}{\mathcal{P}_{M_h}(Q)} \\
        &=\frac{b^2d_1^2d_2^2}{T^2}\sum_{t_1=T_0+1}^{T}\sum_{t_2=T_0+1}^{T} \frac{\mathbbm{1}(a_{t_1}=g)\mathbbm{1}(a_{t_2}=h)}{\pi_{g,t_1}\pi_{h,t_2}}\xi_{t_1}\xi_{t_2}\inp{X_{t_1}}{\mathcal{P}_{M_g}(Q)}\inp{X_{t_2}}{\mathcal{P}_{M_h}(Q)}.
    \end{align*}
    When $t_1=t_2$, $\mathbbm{1}(a_{t_1}=g)\mathbbm{1}(a_{t_2}=h)=0$. When $t_1\neq t_2$, $\EE[\frac{\mathbbm{1}(a_{t_1}=g)\mathbbm{1}(a_{t_2}=h)}{\pi_{g,t_1}\pi_{h,t_2}}\xi_{t_1}\xi_{t_2}X_{t_1}X_{t_2}]=0$ due to the i.i.d. distributed $\xi_t$. As a result, the two terms are uncorrelated. \\
    Together with Lemma \ref{lemmaneg1}-\ref{lemmaneg3}, all the negligible terms converge to 0 when divided by \\ $\sqrt{(\sigma_g^2S_g^2+\sigma_h^2S_h^2)d_1d_2/T^{1-\gamma}}$. Then similar as Theorem \ref{thm:studentized-CLT}, we can replace $\sigma_g^2S_g^2+\sigma_h^2S_h^2$ with $\widehat{\sigma}_g^2\widehat{S}_g^2+\widehat{\sigma}_h^2\widehat{S}_h^2$ and conclude the proof.
\end{proof}

\end{document}